\newtheorem{proposition}{Proposition}
\newtheorem{definition}{Definition}
\newtheorem{problem}{Problem}
\newtheorem{proof}{Proof}
\begin{document}

%
\title{ICOS: Efficient and Highly Robust Rotation Search and Point Cloud Registration with Correspondences \thanks{Lei Sun is with the School
of Mechanical and Power Engineering, East China University of Science and Technology, Shanghai 200237, P.R.China; e-mail: (leisunjames@126.com).}}

%

\author{Lei Sun}

\maketitle

\begin{abstract}
Rotation search and point cloud registration are two fundamental problems in robotics and computer vision, which aim to estimate the rotation and the transformation between the 3D vector sets and point clouds, respectively. Due to the presence of outliers, probably in very large numbers, among the putative vector or point correspondences in real-world applications, robust estimation is of great importance. In this paper, we present ICOS (Inlier searching using COmpatible Structures), a novel, efficient and highly robust solver for both the correspondence-based rotation search and point cloud registration problems. Specifically, we (i) propose and construct a series of \textit{compatible structures} for the two problems where various \textit{invariants} can be established, and (ii) design three time-efficient frameworks, the first for rotation search, the second for known-scale registration and the third for unknown-scale registration, to filter out outliers and seek inliers from the invariant-constrained random sampling based on the compatible structures proposed. In this manner, even with extreme outlier ratios, inliers can be sifted out and collected for solving the optimal rotation and transformation effectively, leading to our robust solver ICOS. Through plentiful experiments over standard datasets, we demonstrate that: (i) our solver ICOS is fast, accurate, robust against over 95\% outliers with nearly 100\% recall ratio of inliers for rotation search and both known-scale and unknown-scale registration, outperforming other state-of-the-art methods, and (ii) ICOS is practical for use in multiple real-world applications.
\end{abstract}

\begin{IEEEkeywords}
Rotation Search, point cloud registration, robust estimation, invariants, compatibility
\end{IEEEkeywords}

%
\IEEEpeerreviewmaketitle

\section{Introduction}

\IEEEPARstart{R}{otation} search and point cloud registration are two cornerstones in robotics and computer vision. The former has found broad applications in attitude estimation~\cite{wahba1965least}, image stitching~\cite{bazin2014globally}, etc, while the latter has been widely applied in 3D scene reconstruction~\cite{blais1995registering,henry2012rgb,choi2015robust}, object localization and recognition~\cite{drost2010model,papazov2012rigid,guo20143d}, medical imaging~\cite{audette2000algorithmic}, Simultaneous Localization and Mapping (SLAM)~\cite{zhang2014loam}, etc.

\subsection{Problems Formulation}

\textbf{Rotation search} aims to find the best rotation that aligns two 3D vector sets in different coordinate frames, which can be formulated as follows.

Given two 3D vector sets $\mathcal{U}=\{\mathbf{u}_i\}_{i=1}^N$ and $\mathcal{V}=\{\mathbf{v}_i\}_{i=1}^N$ where we consider tuple $(\mathbf{u}_i,\mathbf{v}_i)$ as a vector correspondence, the following relation is satisfied:
\begin{equation}\label{Definition-RS}
\boldsymbol{R}\mathbf{u}_i+\boldsymbol{\epsilon}_i=\mathbf{v}_i,
\end{equation}
where $\boldsymbol{R}\in SO(3)$~\cite{tron2015inclusion,briales2017convex} denotes the rotation matrix, and $\boldsymbol{\epsilon}_i\in\mathbb{R}^3$ represents the measurement of Gaussian noise with isotropic covariance $\sigma^2\mathbf{I}_3$ ($\sigma$ is the standard deviation and $\mathbf{I}_3\in\mathbb{R}^{3\times3}$ is the identity matrix) and mean $\mu=0$ w.r.t. the {$i_{th}$} vector correspondence. Thus, the rotation search problem can be mathematically defined as:

\begin{problem}[Rotation Search]\label{Prob-RS}
The rotation search problem under the perturbance of noise can be equivalently formulated as a minimization problem such that
\begin{equation}\label{Optimization}
\underset{\boldsymbol{R}\in\mathbf{SO}3}{\min} \sum_{i=1}^N||\boldsymbol{R}\mathbf{u}_i-\mathbf{v}_i||^2,
\end{equation}
where globally optimal solution $\boldsymbol{\hat{R}}$ is the best rotation desired.
\end{problem}

\begin{figure}[t]
\centering

\footnotesize{(a) Image Stitching using ICOS}

\subfigure{
\begin{minipage}[t]{1\linewidth}
\centering
\includegraphics[width=0.39\linewidth]{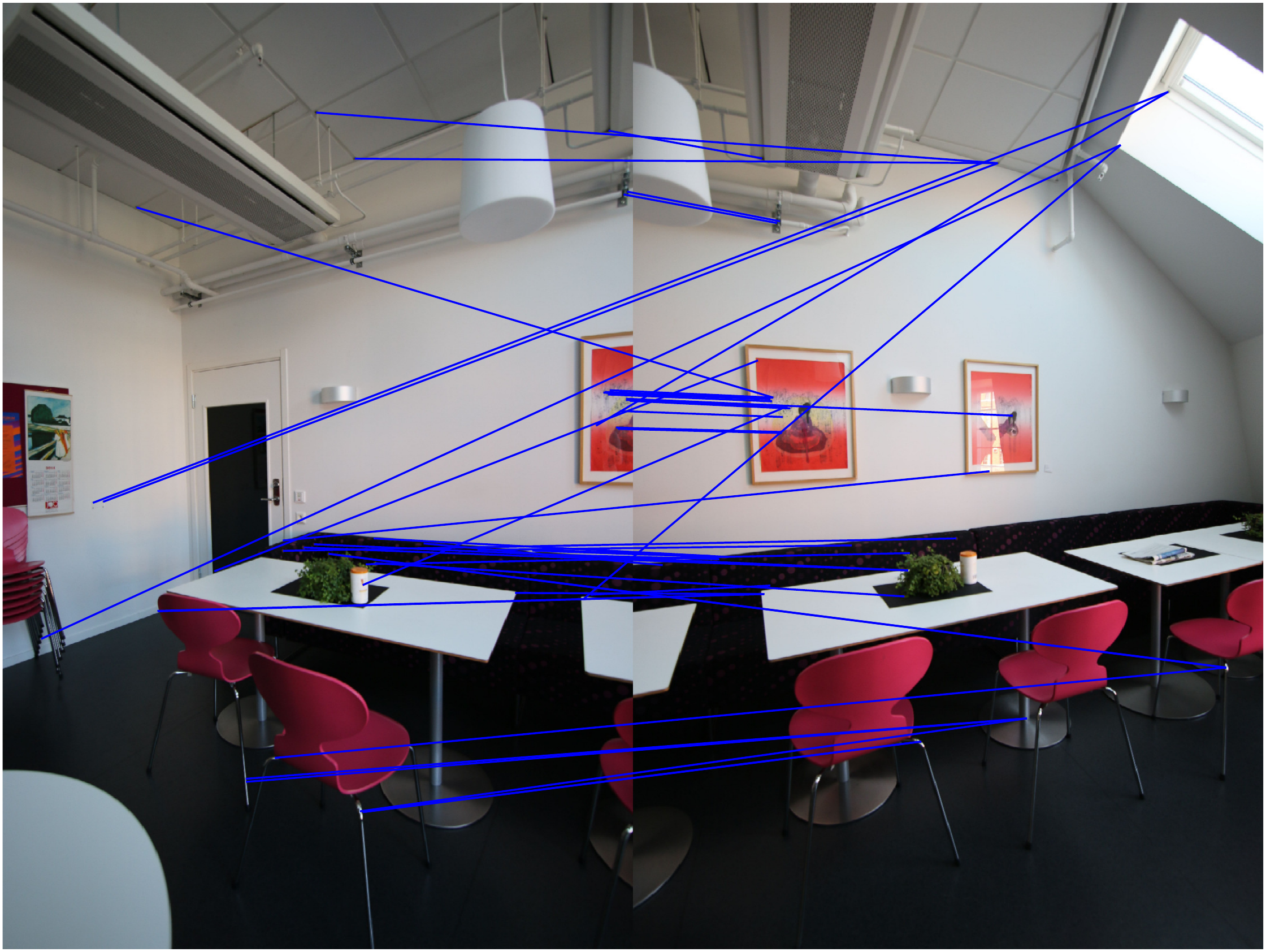}
\includegraphics[width=0.39\linewidth]{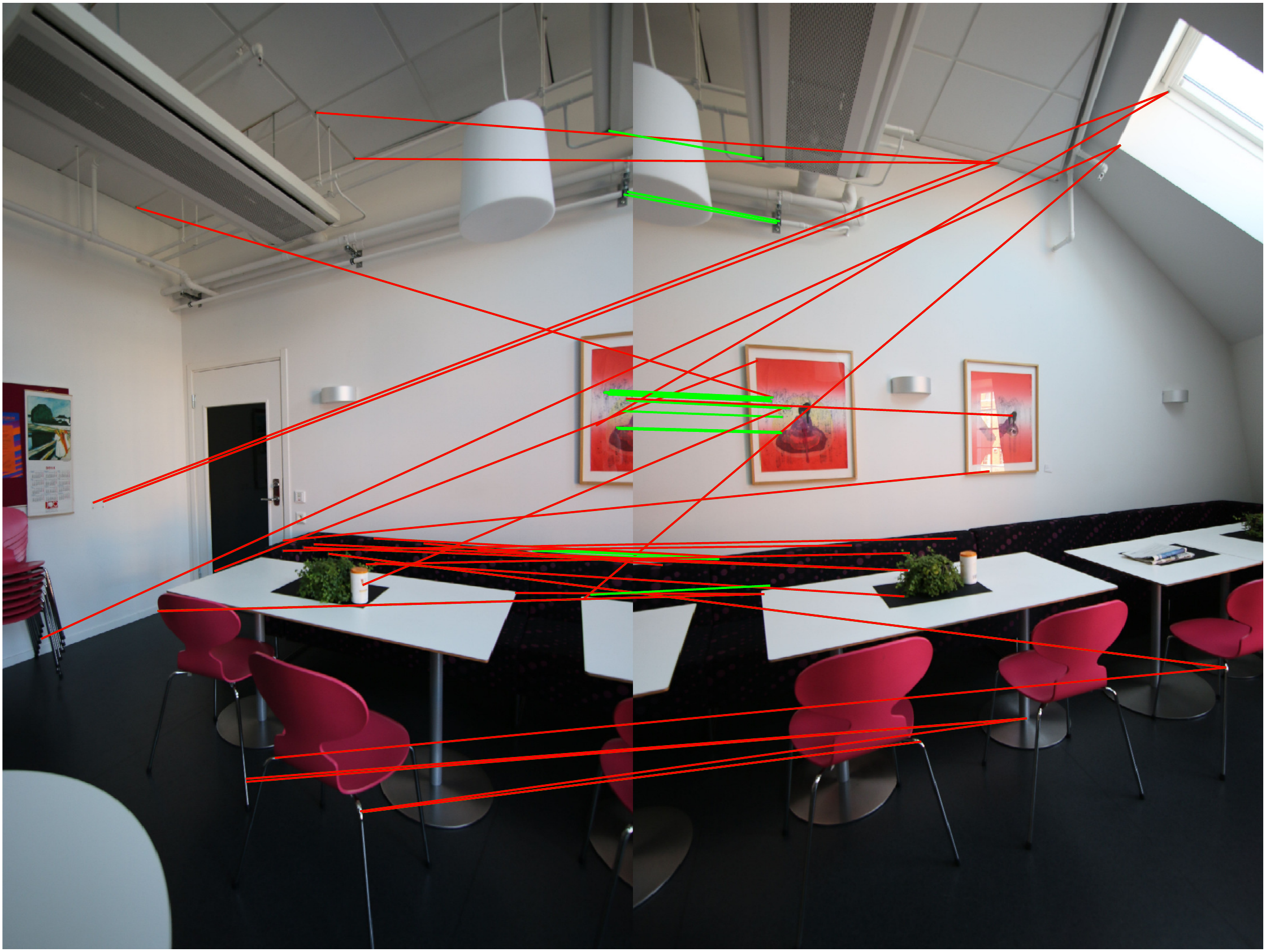}
\includegraphics[width=0.196\linewidth]{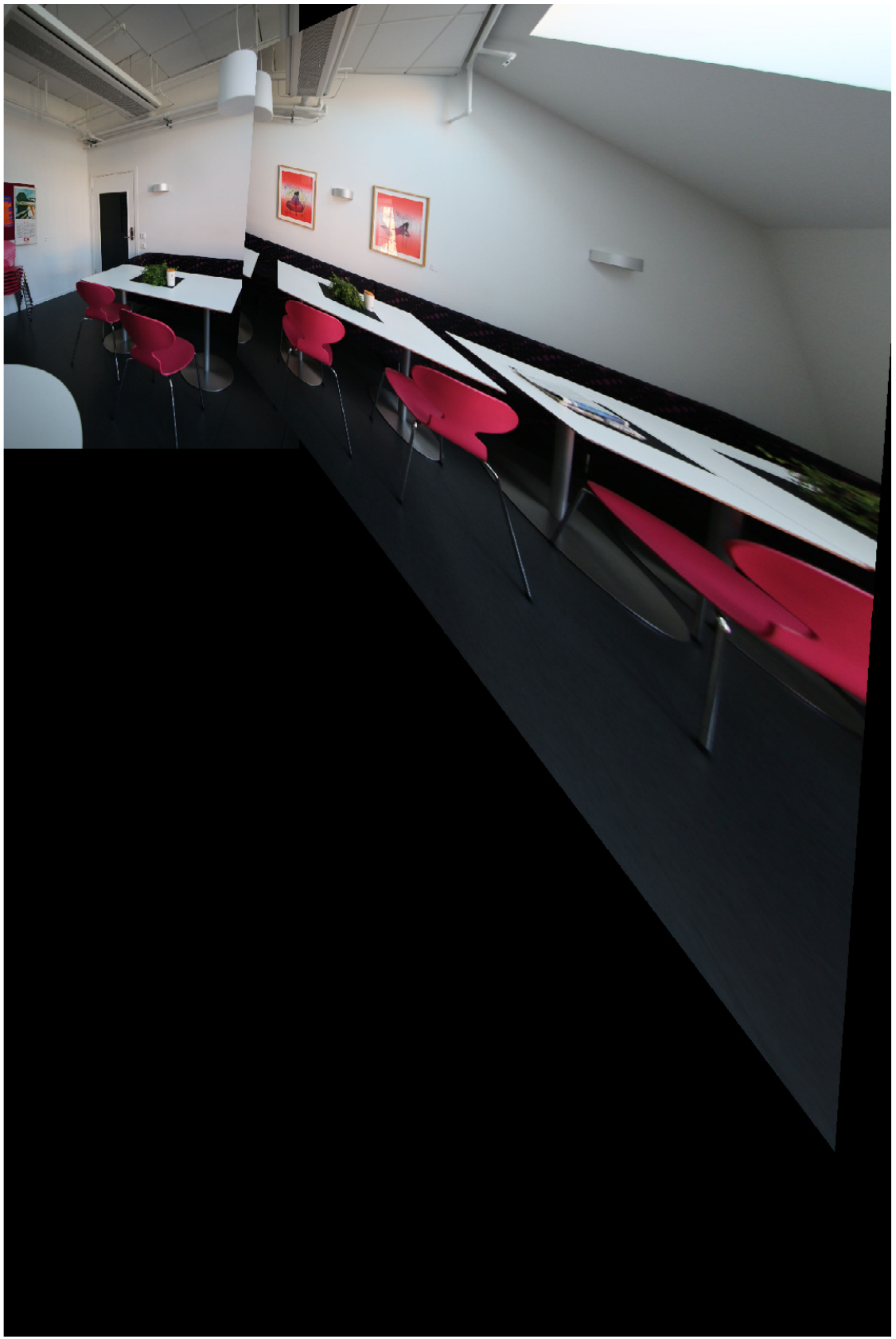}
\end{minipage}
}%

\footnotesize{(b) Unknown-scale Point Cloud Registration using ICOS}

\subfigure{
\begin{minipage}[t]{1\linewidth}
\centering
\includegraphics[width=0.49\linewidth]{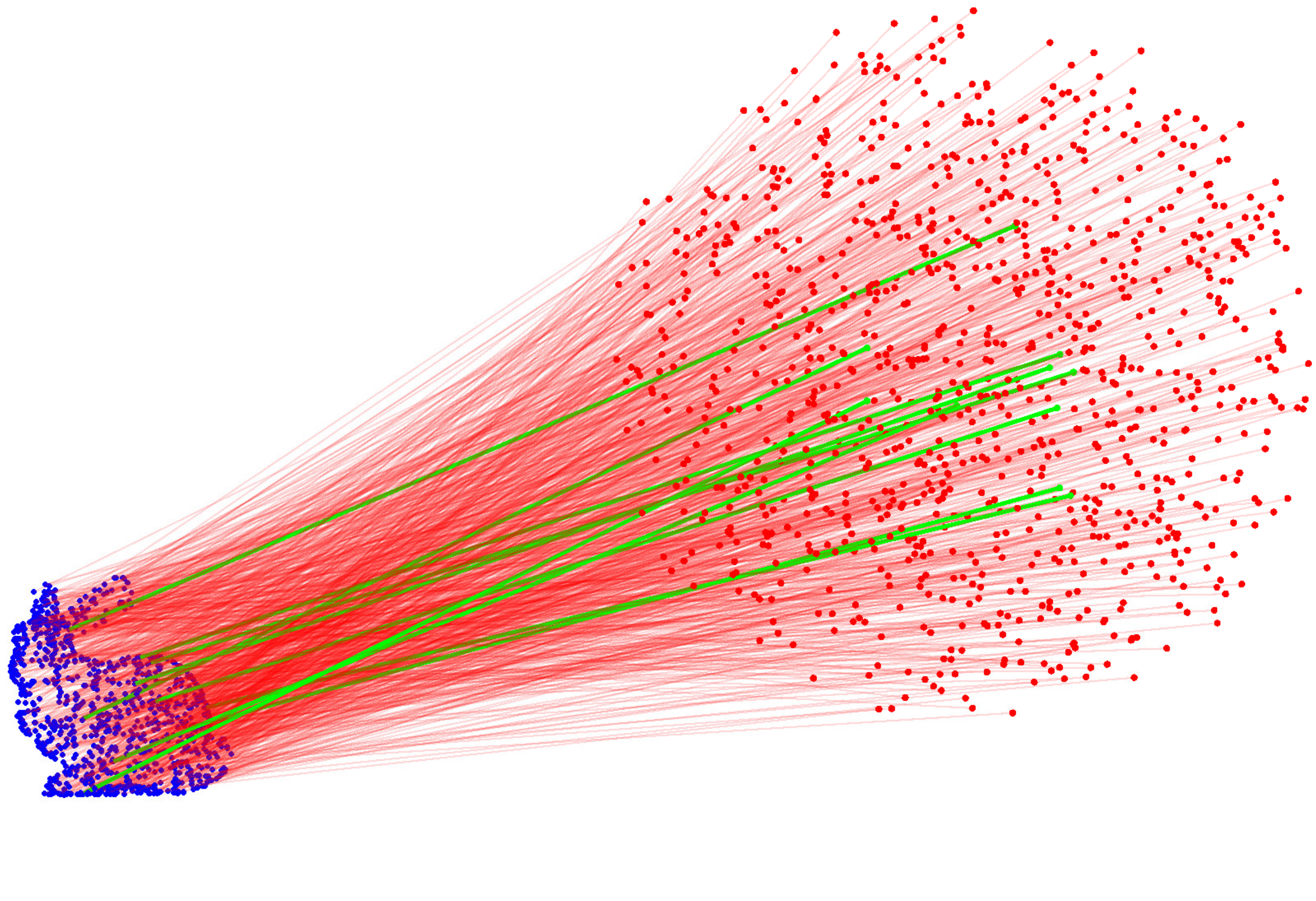}
\includegraphics[width=0.49\linewidth]{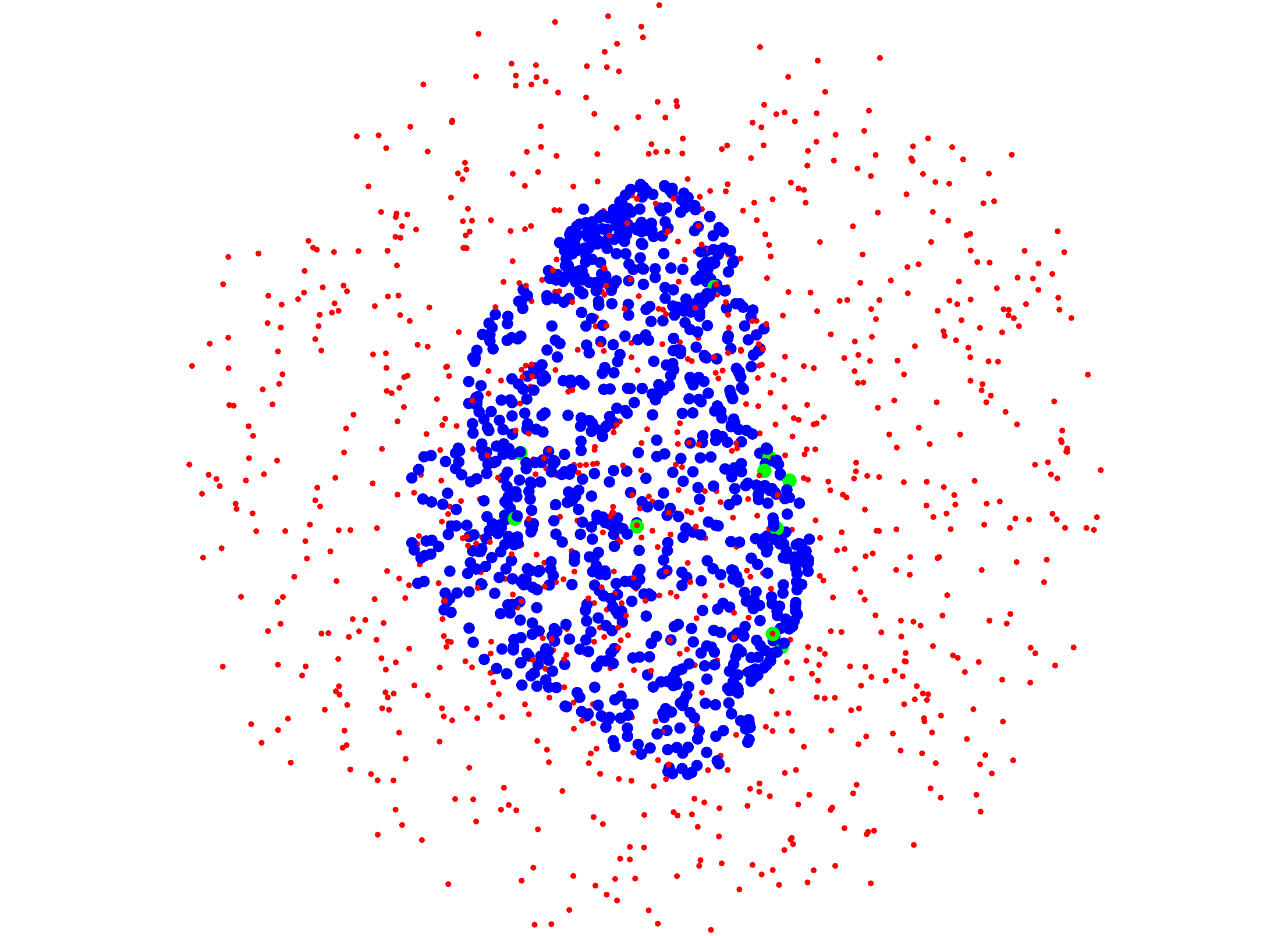}
\end{minipage}
}%

\footnotesize{(c) Object Localization over real RGB-D scene using ICOS}
\subfigure{
\begin{minipage}[t]{1\linewidth}
\centering
\includegraphics[width=0.49\linewidth]{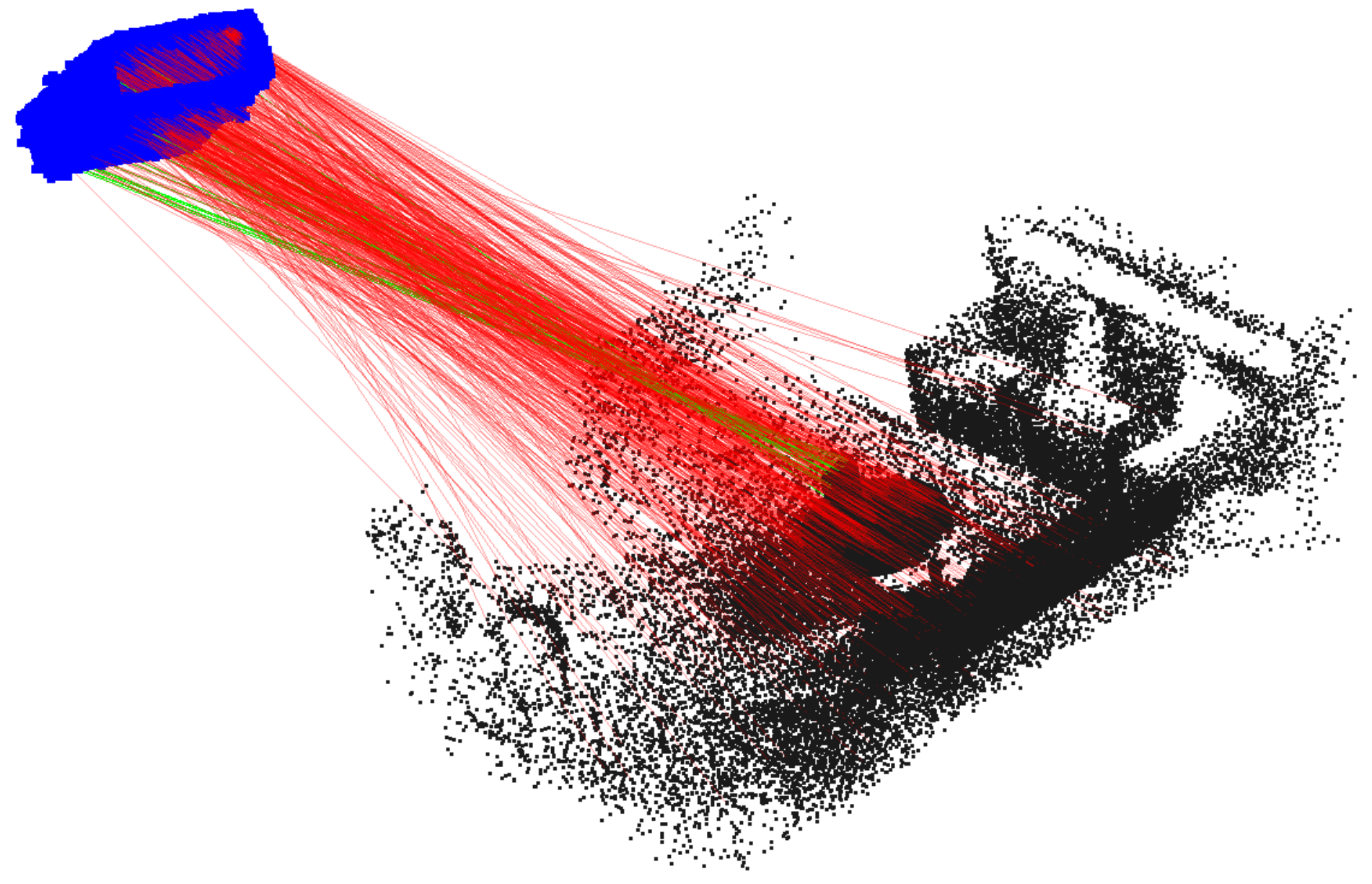}
\includegraphics[width=0.49\linewidth]{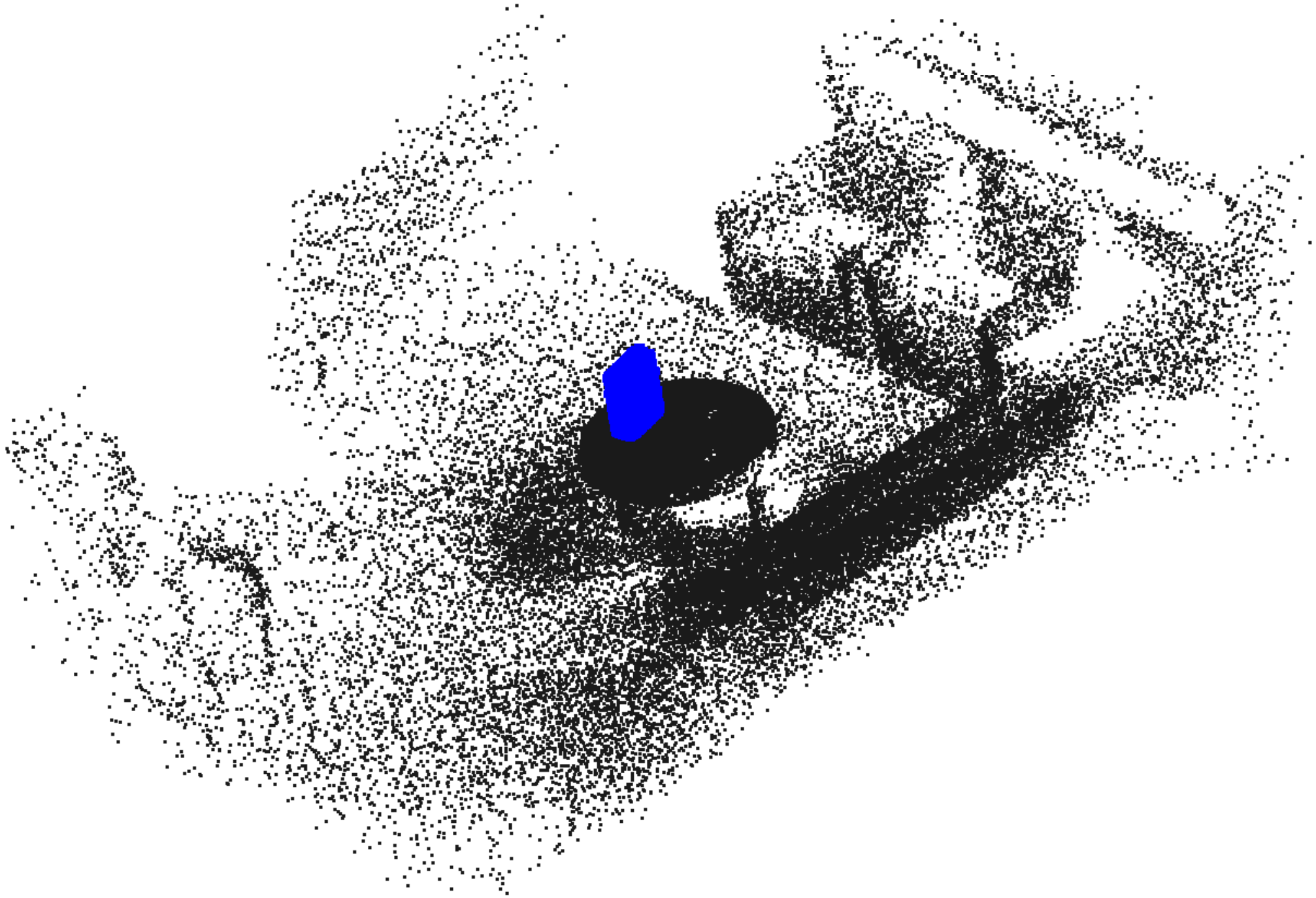}
\end{minipage}
}%

\vspace{-2mm}

\centering
\caption{Rotation Search and Point Cloud Registration using ICOS. (a) Image stitching. Left: Correspondences matched by SURF~\cite{bay2008speeded}. Middle: Inliers found by ICOS (green lines). Right: Stitched images with the rotation estimated by ICOS. The outlier ratio is over 74\% and ICOS can correctly find the inliers. (b) Unknown-scale point cloud registration. Left: Among the 1000 putative correspondences, there exist 990 outliers (red lines) and merely 10 inliers (green lines). Right: ICOS can still estimate the scale, rotation and translation robustly and precisely. (c) Object localization with unknown scale in a realistic RGB-D scene~\cite{zeisl2013automatic}. Left: Correspondences are built between the object and the scene with over 95\% outliers. Right: ICOS can accurately reproject the object back to the scene with the transformation it estimated.}
\label{demo}
\end{figure}

\textbf{Point cloud registration} consists in estimating the transformation (including scale, rotation and translation) between two given 3D point clouds. Now we introduce the correspondence-based point cloud registration problem.

With two 3D point clouds $\mathcal{P}=\{\mathbf{P}_i\}_{i=1}^N$ and $\mathcal{Q}=\{\mathbf{Q}_i\}_{i=1}^N$, where tuple $(\mathbf{P}_i,\mathbf{Q}_i)$ represents a point correspondence, we can have the following relation:
\begin{equation}\label{Definition}
\mathit{s}\boldsymbol{R}\mathbf{P}_i+\boldsymbol{t}+\boldsymbol{\varepsilon}_i=\mathbf{Q}_i,
\end{equation}
where $\mathit{s}>0$ is the scale, $\boldsymbol{R}\in SO(3)$ is the rotation matrix, $\boldsymbol{t}\in\mathbb{R}^3$ is the translation vector, and $\boldsymbol{\varepsilon}_i\in\mathbb{R}^3$ represents the measurement of Gaussian noise similar to that in Problem~\ref{Prob-RS}. The generalized point cloud registration problem can be defined as follows.

\begin{problem}[Generalized Point Cloud Registration]\label{Prob-1}
Solving the generalized point cloud registration problem under the perturbance of noise is equivalent to the following minimization problem:
\begin{equation}\label{Optimization}
\underset{\mathit{s}>0,\boldsymbol{R}\in\mathbf{SO}3, \boldsymbol{t}\in\mathbb{R}^3}{\min} \sum_{i=1}^N||\mathit{s}\boldsymbol{R}\mathbf{P}_i+\boldsymbol{t}-\mathbf{Q}_i||^2,
\end{equation}
where global minimizers $(\mathit{\hat{s}}, \boldsymbol{\hat{R}}, \boldsymbol{\hat{t}})$ are the optimal solutions to this registration problem.
\end{problem}

When the scale aligning the two point clouds is fixed, which means scale $\mathit{s}=1$, we can directly omit $\mathit{s}$ in Problem~\ref{Prob-1}, called the \textit{known-scale} registration problem. However, the scale could vary, which, for instance, exists in the cross-source point cloud registration problems. In this case, the scale needs to be estimated additionally, and these problems are called \textit{unknown-scale} registration.

Our work intends to solve both the known-scale and the unknown-scale registration in the presence of both noise and a great number of outliers among the putative correspondences, providing a highly robust backend for real applications.

\subsection{Motivation and Contributions}

\textbf{Motivation.} The correspondence-based rotation search and point cloud registration approaches are often confronted with one difficulty: in many real-world scenes, the input correspondences are very likely to be corrupted by outliers; for example, the feature matchers (e.g. SIFT~\cite{lowe1999object} and SURF~\cite{bay2008speeded} for 2D images and FPFH~\cite{rusu2008aligning} for 3D point clouds) may generate mismatched or spurious results owing to noise, repetitive patterns, low-textured scenes, etc. Therefore, in real applications, robust estimation solvers should be of paramount significance.

When it comes to robust estimation, RANSAC~\cite{fischler1981random} is a common robust heuristic~\cite{andrew2001multiple}, but it has a critical disadvantage: its runtime increases exponentially with the outlier ratio among the correspondences. Thus, RANSAC could be impractical for use when encountering the extreme outlier ratios (e.g. more than 90\%). Branch-and-Bound (BnB) is able to search in the rotation ($SO(3)$) or transformation ($SE(3)$) group to globally optimize the objective function, whereas its computational cost grows exponentially with the correspondence number; hence, it may be too slow for large-sized problems. 

In addition, the outlier removal methods GORE~\cite{bustos2017guaranteed,parra2015guaranteed} as well as the non-minimal solvers FGR \cite{zhou2016fast}, QUASAR~\cite{yang2019quaternion} and TEASER/TEASER++ \cite{yang2019polynomial,yang2020teaser} have been proposed to deal with the two problems with over 90\% outliers. However, for point cloud registration, these solvers can only succeed when the scale between the two point clouds is known and fixed. When the scale varies (e.g. point clouds formed with different sources), none of these methods could tolerate more than 90\% outliers, which indeed necessitates a general solver capable of solving not only the known-scale problems but the unknown-scale ones. Besides, for handling rotation search problems with high outlier ratios, though GORE~\cite{parra2015guaranteed} is efficient when the problem size is small, it requires more time to remove outliers as the correspondence number increases. Hence, proposing a high-outlier-ratio-tolerant solver that can be efficient regardless of the correspondence number seems rather imperative.

%
%
%
%

\textbf{Contributions.} Motivated by these facts, in this paper, we propose a novel highly robust paradigm, named ICOS (Inlier searching with COmpatible Structures) for addressing rotation search and both known-scale and unknown-scale point cloud registration with extreme outlier ratios efficiently.

To be specific, ICOS aims to seek the correct inliers through the random sample flow (rather than pruning outliers as done in the previous works~\cite{shi2020robin,bustos2017guaranteed}), but significantly different from the RANSAC-family (e.g.~\cite{fischler1981random,chum2003locally,lebeda2012fixing}), the sampling and filtering process of ICOS is built upon and also facilitated by the constraints derived from the \textit{compatible structures} that we define based on the theory of \textit{invariance}.

First, we introduce and construct a series of \textit{compatible structures} for the rotation search and generalized point cloud registration problems, with which we are able to `customize' special invariant-based constraints that can be used for effectively differentiating inliers from outliers with simple Boolean conditions (Section~\ref{COS}).

Moreover, we design three sampling-based frameworks that can fit with the compatible structures to seek and preserve eligible inliers out of the putative correspondences for the two problems, in which different sampling strategies are employed to make the proposed solvers more time-efficient (Section~\ref{ICOS}).

Finally, we validate ICOS in a variety of experiments over different datasets and demonstrate that: (i) ICOS is precise and robust even when over 95\% of the correspondences are outliers in the rotation search, and both the known-scale and unknown-scale registration problems (robust against up to 99\% outliers), (ii) ICOS runs fast, yields accurate estimates and recalls approximately 100\% of the inliers, and (iii) it is also applicable to practical application problems over real datasets (Section~\ref{Experiments}).

\section{Related Work}

In this section, we provide some reviews on: (i) methods for naive and robust rotation search, (ii) methods for naive and robust point cloud registration, (iii) classic robust heuristics for outlier rejection.

\textbf{Rotation Search.} The rotation search (`Wahba') problem was first proposed by Wahba in~\cite{wahba1965least}, in order to estimate the attitude of satellite from the observation of vectors. And it can be solved in closed form based on either rotation matrix~\cite{horn1988closed,markley1988attitude,arun1987least,forbes2015linear} or the quaternion-based parametrization~\cite{horn1987closed,markley2014fundamentals}, although they usually fail in the existence of outliers.

It is known to all that in the process of building the vector correspondences using the feature descriptors~\cite{rusu2008aligning,bay2008speeded} based on 2D or 3D keypoints, it is likely to obtain mismatches, that is, the so-called outliers. Thus, robust solvers are required. 

The BnB framework is effective in finding the optimal rotation from the outlier-corrupted corresponding vectors, and the typical examples include the first globally optimal method searching over the rotation space~\cite{hartley2009global}, the L2-distance-based approach~\cite{bazin2012globally}, and the speeded angular-distance-based solution~\cite{parra2014fast}. In addition to BnB, guaranteed outlier removal~\cite{parra2015guaranteed} is another way to estimate the rotation robustly, and is also much faster than the BnB solvers. Some other robust rotation search solvers are also the subroutines for point cloud registration problems (e.g. \cite{zhou2016fast,bustos2017guaranteed,yang2020teaser}), which will be later discussed in the review on point cloud registration.

\textbf{Point Cloud Registration.} To solve the point cloud registration problem, Iterative Closest Point (ICP)~\cite{besl1992method} is a common local-minimum approach, but it is too dependent on the initial guess of the transformation, thus prone to fail without a good initialization. Therefore, an initialization-free way to address the registration problem\raisebox{0.3mm}{---}to build correspondences between point clouds from 3D keypoints~\cite{tam2012registration} using feature descriptors (e.g. FPFH~\cite{rusu2008aligning}, ISS~\cite{zhong2009intrinsic} and FCGF~\cite{choy2019fully}), has grown increasingly popular. Many correspondence-based methods have been proposed to address the registration problem in the presence of noise. For example, it can be solved using closed-form solutions \cite{horn1987closed,arun1987least} by decoupling the estimation of rotation, scale and translation. Besides, Branch-and-Bound (BnB) has also been applied to solve the registration problem~\cite{olsson2008branch} in a globally optimal way. More recently, the Lagrangian Dual Relaxation is developed to produce a globally optimal generalized registration solver~\cite{briales2017convex} for handling point-to-point, point-to-line and point-to-plane registration with optimality guarantees. 

However, the 3D feature descriptors generally have lower accuracy and robustness compared with 2D ones~\cite{lowe1999object,bay2008speeded}, which could result in relatively high or even extreme outlier ratios in some real-world scenes, as has been discussed in~\cite{bustos2017guaranteed}. And the above-mentioned solutions are fairly sensitive to wrong correspondences and will be fragile once there exist outliers among the correspondences, which consequently demands the following robust registration methods.

The minimal closed-form solutions~\cite{horn1987closed,arun1987least} can be fit into the RANSAC-family frameworks~\cite{fischler1981random,chum2003locally,lebeda2012fixing} to deal with outliers, though may be slow when the outlier ratio is high. Another idea to become robust is that: if outliers can be guaranteed to be removed or eliminated as many as possible, then what still remain should be mostly inliers. GORE~\cite{bustos2017guaranteed} is such a preprocessing outlier removal approach that guarantees that the correspondences removed must not be inliers. In terms of the non-minimal robust solvers, FGR~\cite{zhou2016fast} is the first method using Graduated Non-Convexity (GNC) to reject outliers, which estimates the rigid transformation matrix with a Gauss-Newton method. Though not globally optimal, FGR is efficient for known-scale registration. Currently, TEASER/TEASER++~\cite{yang2019polynomial,yang2020teaser} are the more robust non-minimal certifiably globally optimal solvers that can dispose of 99\% outliers in the known-scale situation. Another recent solution~\cite{li2021point} utilizes RANSAC for scale and translation estimation, and also adopt the GNC framework to solve the rotation. Unfortunately, all these solvers are either incapable of estimating the scale or unable to tolerate over 90\% outliers for unknown-scale point cloud registration (as will be shown in Fig.~\ref{Scale} of Section~\ref{Experiments}).

\textbf{Robust Heuristics.} Apart from the classic hypothesize-and-test robust heuristic RANSAC that runs with minimal solvers to obtain the best consensus set through random sampling, M-estimation is useful for outlier rejection~\cite{huber2004robust,meer2004robust,mittal2012generalized,zhou2016fast}. When the naive solver is non-minimal, the special M-estimator GNC~\cite{yang2020graduated} is a general-purpose heuristic operated by alternating optimization and adopts robust cost functions like Geman-McClure (GM) or Truncated Least Squares (TLS), which can typically handle 70-80\% outliers in many robotic or computer vision problems. In addition, invariants or graph theory can be also applied for robust estimation. The methodologies mainly include: (i) to first prune outliers and then hand the rest of the correspondences over to other heuristics or solvers~(\cite{yang2019polynomial,yang2020teaser,shi2020robin}), and (ii) to first seek a portion of inliers, make an estimate, and then use the optimal solution to find the complete inlier set (our previous work~\cite{sun2021ransic}).

\section{Notation and Preliminaries}

\subsection{Invariants}

We first define the concept of \textit{invariants} in the two problems by adopting the notation and definition in~\cite{shi2020robin}.

\begin{definition}[Invariants]
In one geometric problem, assume we obtain $N$ putative correspondences whose indices are put in set $\mathcal{C}=\{1,2,\dots,N\}$. Let $\mathcal{S}\subset\mathcal{C}$ denote a subset of $\mathcal{C}$ with a fixed size and $\boldsymbol{x}$ denote the vector stacking all variables. Then let $\boldsymbol{y}_{\mathcal{S}}$ represent the measurements w.r.t. subset $\mathcal{S}$, $\boldsymbol{\eta}_{\mathcal{S}}$ represent the corresponding noise measurements which are independent of $\boldsymbol{x}$, and $\boldsymbol{h}_{\mathcal{S}}(\boldsymbol{x},\boldsymbol{\eta}_{\mathcal{S}})$ represent the corresponding model that is used to express the relation between $\boldsymbol{x}$ and $\boldsymbol{y}_{\mathcal{S}}$, given by $\boldsymbol{y}_{\mathcal{S}}=\boldsymbol{h}_{\mathcal{S}}(\boldsymbol{x},\boldsymbol{\eta}_{\mathcal{S}})$.

If we find a function $\boldsymbol{f}(\boldsymbol{y}_{\mathcal{S}})$ where $\boldsymbol{f}(\boldsymbol{y}_{\mathcal{S}})=\boldsymbol{f}\left(\boldsymbol{h}_{\mathcal{S}}(\boldsymbol{x},\boldsymbol{\eta}_{\mathcal{S}})\right)$ is always satisfied and also independent of $\boldsymbol{x}$ no matter which $\mathcal{S}\subset\mathcal{C}$ is selected, we can consider $\boldsymbol{f}$ as an \textit{invariant function}, also abbreviated as an \textit{invariant}.
\end{definition}

\begin{figure*}[t]
\centering
\subfigure{
\begin{minipage}[t]{1\linewidth}
\centering
\includegraphics[width=1\linewidth]{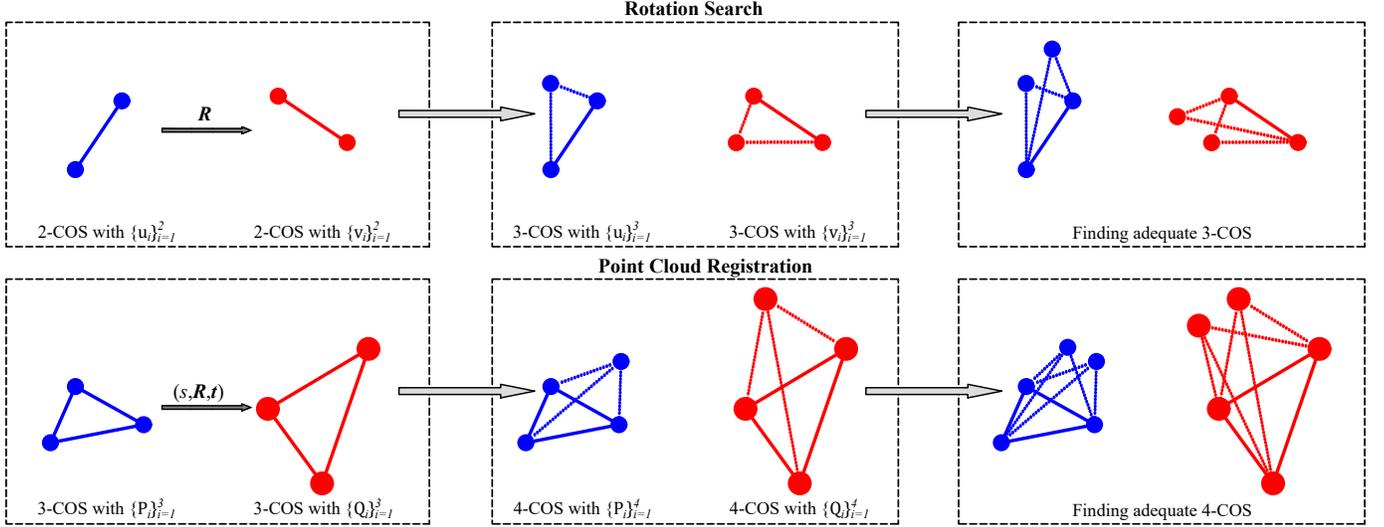}
\end{minipage}
}%

\centering
\caption{Overview of the inlier searching process of ICOS. We start by finding eligible $a$-COS across the two point sets, then seek new correspondences that can establish $b$-COS with the existing $a$-COS, and eventually stop sampling when we collect a sufficient number of eligible $b$-COS.}
\label{Overview}
\end{figure*}

\subsection{Naive Solvers}

In our solver ICOS, the computation of the scale, rotation and translation without outliers is realized by introducing the fast non-minimal naive solvers below.

Assume we have $M$ correspondences (inliers), we first need to acquire corresponding 3D vectors, which, in rotation search, are just the original vector correspondences themselves such that
\begin{equation}
\mathbf{\widetilde{m}}_i=\mathbf{u}_i,
\mathbf{\widetilde{n}}_i=\mathbf{v}_i,
\end{equation}
while in point cloud registration, can be represented with the centroid-based translation-free technique~\cite{arun1987least} such that
\begin{equation}
\mathbf{\widetilde{m}}_i=\mathbf{P}_i-\mathbf{\bar{P}},\,\,
\mathbf{\widetilde{n}}_i=\mathbf{Q}_i-\mathbf{\bar{Q}},
\end{equation}
where $\mathbf{\bar{P}}$ and $\mathbf{\bar{Q}}$ are the centroids in the two frames given by
\begin{equation}
\mathbf{\bar{P}}=\frac{\sum_{i=1}^{M} \mathbf{P}_i}{M},\,\,
\mathbf{\bar{Q}}=\frac{\sum_{i=1}^{M} \mathbf{Q}_i}{M}.
\end{equation}

The optimal rotation can be computed in closed-form using Singular Value Decomposition (SVD) such that

\begin{equation}\label{rot-compute}
\begin{gathered}
\mathbf{{U}}\mathbf{\Sigma}\mathbf{V}^{\top} ={SVD} \left( \sum_{i=1}^{M} \mathbf{\widetilde{m}}_i\mathbf{\widetilde{n}}_i^{\top}\right),\\
\boldsymbol{\hat{R}}=\mathbf{V}\mathbf{{U}}^{\top}.
\end{gathered}
\end{equation}
And in point cloud registration, the best scale can be computed by the weighted representation~\cite{yang2020teaser} such that
\begin{equation}\label{sca-compute}
\mathit{\hat{s}}=\frac{1}{\sum_{i=1}^M \omega_i}\cdot \sum_{i=1}^M \omega_i  \frac{\|\mathbf{\widetilde{n}}_i\|}{\|\mathbf{\widetilde{m}}_i\|},
\end{equation}
where $\omega_i=\frac{||\mathbf{\widetilde{m}}_{i}||^2}{A^2}$. Besides, with the rotation and the scale obtained, the translation can be recovered as:
\begin{equation}\label{tran-compute}
\boldsymbol{\hat{t}}=\mathbf{\bar{Q}}-\mathit{\hat{s}}\boldsymbol{\hat{R}}\mathbf{\bar{P}}.
\end{equation}

\section{Compatible Structures in the Point Cloud Registration Problem}\label{COS}

In this section, we define the \textit{compatible structures} (abbreviated as `COS' for the rest of this paper) for the two problems, and then fully explore the potential invariants among them to establish the constraints for the sampling process.

\subsection{COS in Rotation Search}

The basic structure in rotation search for constrained random sampling is the COS that consists of a pair of correspondences as introduced below.

\begin{definition}[2-COS and Length-based Invariants] Assume we have 2 vector correspondences such that {$(\mathbf{u}_i, \mathbf{v}_i), i=1,2$}, defined as one \textit{2-COS}. We first normalize them all to unit length such that
\begin{equation}
\mathbf{u}^{*}_1=\frac{\mathbf{u}_1}{||\mathbf{u}_1||}, \,
\mathbf{v}^{*}_1=\frac{\mathbf{v}_1}{||\mathbf{v}_1||},\,
\mathbf{u}^{*}_2=\frac{\mathbf{u}_2}{||\mathbf{u}_2||}, \,
\mathbf{v}^{*}_2=\frac{\mathbf{v}_2}{||\mathbf{v}_2||}.
\end{equation}
Then we derive the following pairwise length-based invariants:
\begin{equation}\label{l-in}
{I}^{l}_{12}=\left|\,||\mathbf{{u}}^{*}_1-\mathbf{{u}}^{*}_2||-||\mathbf{{v}}^{*}_1-\mathbf{{v}}^{*}_2||\,\right|=\epsilon^{\mathit{l}}_{12},
\end{equation}
where ${I}^{\mathit{l}}_{12}$ represents the length-based invariant w.r.t. this pair of correspondences, and $\epsilon^{\mathit{l}}_{12}$ denotes the noise measurement (perturbance) on this invariant.
\end{definition}

Intuitively, invariant ${I}^{l}_{12}$ represents the length difference between the third sides of the two triangles formed by the two vector correspondence pairs over the two frames. Without noise perturbance, ${I}^{l}_{12}$ should be strictly equal to $0$ as long as this pair of correspondences are both inliers. But with noise, we can also impose a constraint on ${I}^{l}_{12}$ to examine whether this correspondence pair contain outliers.

\begin{proposition}[Constraints on ${I}^{\mathit{l}}_{12}$]
For the pairwise invariant ${I}^{\mathit{l}}_{ij}$ in the 2-COS defined above, we can derive that
\begin{equation}\label{Constr-l}
{I}^{l}_{12}\leq L,
\end{equation}
where $L$ denotes the bound w.r.t. noise satisfying the inequality such that $ L^*+ ||\boldsymbol{\varepsilon}||\cdot \left(\left\|\frac{1}{\mathbf{v}_1}\right\|+\left\|\frac{1}{\mathbf{v}_2}\right\|\right) \leq L$, in which $L^*=\left\| \frac{\mathbf{u}_1}{||\mathbf{v}_1||}-\frac{\mathbf{u}_2}{||\mathbf{v}_2||}-\frac{\mathbf{u}_1}{||\mathbf{u}_1||}+\frac{\mathbf{u}_2}{||\mathbf{u}_2||}\right\|$.
\end{proposition}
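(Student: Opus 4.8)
The plan is to exploit the fact that $\boldsymbol{R}\in SO(3)$ preserves norms, so that the quantity $\|\mathbf{u}^{*}_1-\mathbf{u}^{*}_2\|$ can be transported into the $\mathbf{v}$-frame without changing its value, and then to compare it term by term against $\|\mathbf{v}^{*}_1-\mathbf{v}^{*}_2\|$ after substituting the generative model $\mathbf{v}_i=\boldsymbol{R}\mathbf{u}_i+\boldsymbol{\epsilon}_i$ of Eq.~\eqref{Definition-RS}. Concretely, I would first rewrite $I^{l}_{12}=\big|\,\|\mathbf{u}^{*}_1-\mathbf{u}^{*}_2\|-\|\mathbf{v}^{*}_1-\mathbf{v}^{*}_2\|\,\big|$ and use rotation invariance to replace $\|\mathbf{u}^{*}_1-\mathbf{u}^{*}_2\|$ with $\|\boldsymbol{R}\mathbf{u}^{*}_1-\boldsymbol{R}\mathbf{u}^{*}_2\|=\big\|\frac{\boldsymbol{R}\mathbf{u}_1}{\|\mathbf{u}_1\|}-\frac{\boldsymbol{R}\mathbf{u}_2}{\|\mathbf{u}_2\|}\big\|$. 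Simultaneously I would substitute the model into $\mathbf{v}^{*}_i=\mathbf{v}_i/\|\mathbf{v}_i\|$, so the second norm reads $\big\|\frac{\boldsymbol{R}\mathbf{u}_1+\boldsymbol{\epsilon}_1}{\|\mathbf{v}_1\|}-\frac{\boldsymbol{R}\mathbf{u}_2+\boldsymbol{\epsilon}_2}{\|\mathbf{v}_2\|}\big\|$. Both norms now live in the same rotated frame, which is what makes the subsequent comparison legitimate.

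Next, applying the reverse triangle inequality $\big|\,\|a\|-\|b\|\,\big|\le\|a-b\|$ to these two vectors collapses $I^{l}_{12}$ to the norm of their difference. I would then regroup that difference into a deterministic part $\frac{\boldsymbol{R}\mathbf{u}_1}{\|\mathbf{u}_1\|}-\frac{\boldsymbol{R}\mathbf{u}_2}{\|\mathbf{u}_2\|}-\frac{\boldsymbol{R}\mathbf{u}_1}{\|\mathbf{v}_1\|}+\frac{\boldsymbol{R}\mathbf{u}_2}{\|\mathbf{v}_2\|}$, in which every term carries the common factor $\boldsymbol{R}$, and a noise part $-\frac{\boldsymbol{\epsilon}_1}{\|\mathbf{v}_1\|}+\frac{\boldsymbol{\epsilon}_2}{\|\mathbf{v}_2\|}$; a further triangle inequality separates these two contributions. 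Using orthogonality of $\boldsymbol{R}$ to pull it out of the deterministic part yields exactly $\big\|\frac{\mathbf{u}_1}{\|\mathbf{u}_1\|}-\frac{\mathbf{u}_2}{\|\mathbf{u}_2\|}-\frac{\mathbf{u}_1}{\|\mathbf{v}_1\|}+\frac{\mathbf{u}_2}{\|\mathbf{v}_2\|}\big\|$, which is $L^{*}$ up to an overall sign inside the norm and hence identical in value. For the noise part I would bound each $\|\boldsymbol{\epsilon}_i\|$ by the uniform noise magnitude $\|\boldsymbol{\varepsilon}\|$ and factor it out to get $\|\boldsymbol{\varepsilon}\|\big(\frac{1}{\|\mathbf{v}_1\|}+\frac{1}{\|\mathbf{v}_2\|}\big)$, where I read $\big\|\frac{1}{\mathbf{v}_i}\big\|$ as $1/\|\mathbf{v}_i\|$. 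Summing gives $I^{l}_{12}\le L^{*}+\|\boldsymbol{\varepsilon}\|\big(\frac{1}{\|\mathbf{v}_1\|}+\frac{1}{\|\mathbf{v}_2\|}\big)$, and $I^{l}_{12}\le L$ then follows immediately from the defining inequality for $L$ in the statement.

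\textbf{Main obstacle.} The difficulty is organizational rather than analytical: the argument only works if $\|\mathbf{u}^{*}_1-\mathbf{u}^{*}_2\|$ is transported through $\boldsymbol{R}$ while the $\mathbf{v}$-side is kept in its generative form, so that the two $\boldsymbol{R}\mathbf{u}_i$ terms pair up and the deterministic remainder factors cleanly through $\boldsymbol{R}$. An unfortunate grouping---substituting the model on both sides, or normalizing the noise terms by $\|\mathbf{u}_i\|$ instead of $\|\mathbf{v}_i\|$---would fail to reproduce the precise $L^{*}$ appearing in the statement. I would also flag the mild implicit hypotheses that $\mathbf{u}_i,\mathbf{v}_i\neq\mathbf{0}$ (so the normalizations are well defined) and that $\|\boldsymbol{\varepsilon}\|$ is a common upper bound on each $\|\boldsymbol{\epsilon}_i\|$, which are exactly the conditions under which the stated bound $L$ is meaningful.
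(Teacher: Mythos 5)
Your proposal is correct and takes essentially the same route as the paper's own proof: the same reverse-triangle-inequality reduction of $I^{l}_{12}$ to the norm of a difference, the same split into the deterministic remainder $L^{*}$ and the noise contribution $\|\boldsymbol{\varepsilon}\|\bigl(\tfrac{1}{\|\mathbf{v}_1\|}+\tfrac{1}{\|\mathbf{v}_2\|}\bigr)$, and the same use of the norm-invariance of $\boldsymbol{R}\in SO(3)$. The only, immaterial, difference is that you transport the $\mathbf{u}$-side into the $\mathbf{v}$-frame by applying $\boldsymbol{R}$, whereas the paper factors $\boldsymbol{R}$ out of the $\mathbf{v}$-side and writes the noise as $\boldsymbol{R}^{\top}\boldsymbol{\epsilon}_i$; the two computations differ by a global rotation and yield the identical bound.
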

\begin{proof}
Based on triangular inequality and the properties of $\boldsymbol{R}\in SO(3)$, we can always have:
\begin{equation}
\begin{gathered}
\left|\,||\mathbf{{u}}^{*}_1-\mathbf{{u}}^{*}_2||-||\mathbf{{v}}^{*}_1-\mathbf{{v}}^{*}_2||\,\right| \\
=\left| \|\boldsymbol{R}\| \cdot \left\| \frac{\mathbf{u}_1+\boldsymbol{R}^{\top}\boldsymbol{\epsilon}_1}{||\mathbf{v}_1||}-\frac{\mathbf{u}_2+\boldsymbol{R}^{\top}\boldsymbol{\epsilon}_2}{||\mathbf{v}_2||}\right\| -\left\| \frac{\mathbf{u}_1}{\|\mathbf{u}_1\|}-\frac{\mathbf{u}_2}{\|\mathbf{u}_2\|}\right\|  \right|,\\
\leq \, L^{*} +\|\boldsymbol{R}^{\top} \|  \cdot\left\| \frac{\boldsymbol{\epsilon}_1}{||\mathbf{v}_1||}- \frac{\boldsymbol{\epsilon}_2}{||\mathbf{v}_2||} \right\| \\ 
\leq L^{*} + \left\|\frac{\boldsymbol{\epsilon}_1}{||\mathbf{v}_1||}\right\| + \left\| \frac{\boldsymbol{\epsilon}_2}{||\mathbf{v}_2||}\right\| \leq L,
\end{gathered}
\end{equation}
where the inequality~\eqref{Constr-l} can be proved.
\end{proof}

If we have obtained an eligible 2-COS, we can compute the rotation w.r.t. this pair of correspondences using the Horn's triad-based closed-form transformation~\cite{horn1987closed} such that
\begin{equation}\label{horn}
\begin{gathered}
\mathbf{O}_{1}=\left[\begin{array}{ccc} \mathbf{u}_1 & (\mathbf{u}_1 \times \mathbf{v}_1) & \mathbf{u}_1 \times (\mathbf{u}_1 \times \mathbf{v}_1) \end{array}\right], \\
\mathbf{O}_{2}=\left[\begin{array}{ccc} \mathbf{u}_2 & (\mathbf{u}_2 \times \mathbf{v}_2) & \mathbf{u}_2 \times (\mathbf{u}_2 \times \mathbf{v}_2) \end{array}\right], \\
\boldsymbol{R}_{12}=\mathbf{O}_{2}\mathbf{O}_{1}^{\top}=\boldsymbol{R}_{gt}{Exp}(\boldsymbol{\rho}_{12}),
\end{gathered}
\end{equation}
where `$Exp$' is the exponential map~\cite{barfoot2017state} to define the effect of noise on this rotation, which can be represented as $Exp(\boldsymbol{\rho}_{12})$ and $\boldsymbol{\rho}_{12}$ denotes the noise measurement, and $\boldsymbol{R}_{gt}$ means the ground-truth rotation.
Then, we can further try to add a new correspondence to this 2-COS for building a 3-COS, based on which new constraints can be imposed.

\begin{proposition}[3-COS, Invariants and Constraints]\label{PropRS2} Assume that now we obtain a new correspondence, say the $3_{rd}$ correspondence, in addition to the 2-COS we have, as shown in Fig.~\ref{Overview}, we are able to construct a 3-COS with the $3_{rd}$ invariant such that
\begin{equation}\label{l-in-new}
{I}^{\mathit{l}}_{i3}=\left|\,||\mathbf{{u}}^{*}_i-\mathbf{{u}}^{*}_3||-||\mathbf{{v}}^{*}_i-\mathbf{{v}}^{*}_3||\,\right|=\epsilon^{\mathit{l}}_{i3},
\end{equation}
and two additional length-based constraints are satisfied:
\begin{equation}\label{Constr-R.1}
{I}^{\mathit{l}}_{i3} \leq L,
\end{equation}
where $i\in\{1,2\}$. Apart from simply constraining the length-based rigidity, we also supplement a residual-based constraint on $(\mathbf{{u}}_3,\mathbf{{v}}_3)$ such that
\begin{equation}\label{Constr-R.2}
\left\|\boldsymbol{R}_{12}\mathbf{{u}}_3-\mathbf{{v}}_3\right\| \leq G,
\end{equation}
where $G$ is the bound for the residual error w.r.t. the $3_{rd}$ correspondence. Besides, for every single 2-COS in this 3-COS, we can compute a raw rotation, represented as $\boldsymbol{R}_{ij}$ with the correspondence tuple $(i,j)\in\{(1,2),(2,3),(1,3)\}$. In this fashion, we are able to impose new constraints on the mutual compatibility between the raw rotations w.r.t. different correspondence tuples selected from this 3-COS such that
\begin{equation}\label{Constr-R.3}
\angle\left(\boldsymbol{R}_{ij}, \boldsymbol{R}_{ab}\right)\leq F,
\end{equation}
where `$\angle$' is the geodesic distance (error)~\cite{hartley2013rotation} between two rotations, written as
\begin{equation}
\angle\left(\boldsymbol{R}_{ij}, \boldsymbol{R}_{ab}\right)=\left| arccos\left(\frac{tr\left(\boldsymbol{R}_{ij}^{\top}\boldsymbol{R}_{ab}\right)-1}{2}\right)\right|,
\end{equation}
where $F$ is the bound of noise for the geodesic error with $2 \angle \left(Exp(\boldsymbol{\rho}),\mathbf{I}_3 \right)\leq F$, and $\mathbf{I}_3$ means the $3\times3$ identity matrix.
\end{proposition}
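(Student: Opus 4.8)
The plan is to prove the three families of constraints one at a time, since each follows from a common pair of algebraic facts. First I would record that for any inlier $(\mathbf{u}_k,\mathbf{v}_k)$ the noise model~\eqref{Definition-RS} gives $\mathbf{v}_k=\boldsymbol{R}_{gt}\mathbf{u}_k+\boldsymbol{\epsilon}_k$, and that every raw rotation produced by Horn's triad construction~\eqref{horn} decomposes as $\boldsymbol{R}_{ij}=\boldsymbol{R}_{gt}\,Exp(\boldsymbol{\rho}_{ij})$ with $\boldsymbol{\rho}_{ij}$ a small noise vector. The length-based constraint~\eqref{Constr-R.1} then needs no fresh argument: the invariant~\eqref{l-in-new} is formally identical to~\eqref{l-in} with the index pair $(1,2)$ replaced by $(i,3)$, so I would simply reapply the triangular-inequality bound of the preceding Proposition (together with $\|\boldsymbol{R}\|=1$ for $\boldsymbol{R}\in SO(3)$) verbatim to the pair $(i,3)$, obtaining $I^l_{i3}\leq L$ for each $i\in\{1,2\}$ with the same bound $L$.

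For the residual-based constraint~\eqref{Constr-R.2}, I would substitute the inlier model for $\mathbf{v}_3$ together with the decomposition of $\boldsymbol{R}_{12}$ to write
\begin{equation}
\boldsymbol{R}_{12}\mathbf{u}_3-\mathbf{v}_3=\boldsymbol{R}_{gt}\left(Exp(\boldsymbol{\rho}_{12})-\mathbf{I}_3\right)\mathbf{u}_3-\boldsymbol{\epsilon}_3.
\end{equation}
Taking norms and using $\|\boldsymbol{R}_{gt}\|=1$ with the triangle inequality yields $\|\boldsymbol{R}_{12}\mathbf{u}_3-\mathbf{v}_3\|\leq\|Exp(\boldsymbol{\rho}_{12})-\mathbf{I}_3\|\cdot\|\mathbf{u}_3\|+\|\boldsymbol{\epsilon}_3\|$. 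Since the spectral norm $\|Exp(\boldsymbol{\rho}_{12})-\mathbf{I}_3\|$ is governed by the rotation angle $\|\boldsymbol{\rho}_{12}\|$ (it equals $2\sin(\|\boldsymbol{\rho}_{12}\|/2)$), the right-hand side is bounded by a constant fixed by the noise magnitudes, which I would take as the definition of $G$.

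The compatibility constraint~\eqref{Constr-R.3} is the heart of the argument, and I would base it on the left-invariance of the geodesic metric on $SO(3)$: for any $\boldsymbol{R}\in SO(3)$ one has $\angle(\boldsymbol{R}\boldsymbol{A},\boldsymbol{R}\boldsymbol{B})=\angle(\boldsymbol{A},\boldsymbol{B})$, which drops straight out of $(\boldsymbol{R}\boldsymbol{A})^{\top}(\boldsymbol{R}\boldsymbol{B})=\boldsymbol{A}^{\top}\boldsymbol{B}$ and the trace formula for $\angle$. Combining this with the triangle inequality satisfied by the geodesic distance, I would bound
\begin{equation}
\angle(\boldsymbol{R}_{ij},\boldsymbol{R}_{ab})\leq\angle(\boldsymbol{R}_{ij},\boldsymbol{R}_{gt})+\angle(\boldsymbol{R}_{gt},\boldsymbol{R}_{ab}),
\end{equation}
and then collapse each summand via left-invariance, e.g. $\angle(\boldsymbol{R}_{ij},\boldsymbol{R}_{gt})=\angle(\boldsymbol{R}_{gt}Exp(\boldsymbol{\rho}_{ij}),\boldsymbol{R}_{gt})=\angle(Exp(\boldsymbol{\rho}_{ij}),\mathbf{I}_3)$, and symmetrically for the $(a,b)$ term. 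Upper-bounding both noise angles by the worst-case geodesic error and invoking the stated defining inequality $2\angle(Exp(\boldsymbol{\rho}),\mathbf{I}_3)\leq F$ then gives $\angle(\boldsymbol{R}_{ij},\boldsymbol{R}_{ab})\leq F$.

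I expect the main obstacle to be in this last constraint, specifically in justifying cleanly that $\angle$ is a genuine (bi-invariant) metric on $SO(3)$ so that both the triangle inequality and the left-invariance step are legitimate, rather than quietly assuming them from the Lie-group structure. A secondary, pervasive subtlety is that the three bounds $L$, $G$ and $F$ must be made uniform over the differing noise realizations $\boldsymbol{\epsilon}_k$ and $\boldsymbol{\rho}_{ij}$ across all eligible triads; I would resolve this by taking worst-case (supremum) noise magnitudes, so that a single triple $(L,G,F)$ certifies every inlier-only 3-COS simultaneously.
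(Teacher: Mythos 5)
Your proposal is correct and follows essentially the same route as the paper: the paper also proves constraint~\eqref{Constr-R.3} by combining the invariance of the geodesic distance under left multiplication (via $\mathrm{tr}\bigl(Exp(\boldsymbol{\rho}_{ij})^{\top}\boldsymbol{R}^{\top}\boldsymbol{R}\,Exp(\boldsymbol{\rho}_{ab})\bigr)$) with the geodesic triangle inequality through $\mathbf{I}_3$, yielding $\angle(Exp(\boldsymbol{\rho}_{ij}),\mathbf{I}_3)+\angle(Exp(\boldsymbol{\rho}_{ab}),\mathbf{I}_3)\leq F$, which is your argument with the two steps merely applied in the opposite order (you split at $\boldsymbol{R}_{gt}$ first, the paper collapses $\boldsymbol{R}_{gt}$ first). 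Your explicit derivations of~\eqref{Constr-R.1} and~\eqref{Constr-R.2} (reusing the earlier length-bound proof, and the decomposition $\boldsymbol{R}_{12}\mathbf{u}_3-\mathbf{v}_3=\boldsymbol{R}_{gt}\bigl(Exp(\boldsymbol{\rho}_{12})-\mathbf{I}_3\bigr)\mathbf{u}_3-\boldsymbol{\epsilon}_3$) simply fill in details the paper declares ``fairly apparent'' and omits.
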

\begin{proof}
The proof of constraints~(\ref{Constr-R.1}-\ref{Constr-R.2}) is fairly apparent, thus omitted here. For constraints~\eqref{Constr-R.3}, on the basis of the triangular inequality of the geodesic distance in~\cite{hartley2013rotation}, similar to~\cite{shi2020robin}, we can derive:
\begin{equation}\label{geo-error}
\begin{gathered}
\angle(\boldsymbol{R}_{ij}, \boldsymbol{R}_{ab})=\angle\left(\boldsymbol{{R}}{Exp}(\boldsymbol{\rho}_{ij}), \boldsymbol{{R}} {Exp}(\boldsymbol{\rho}_{ab})\right)\\
=\left| arccos\left(\frac{tr\left({Exp}(\boldsymbol{\rho}_{ij})^{\top}\boldsymbol{R}^{\top}\boldsymbol{R}{Exp}(\boldsymbol{\rho}_{ab})\right)-1}{2}\right)\right|\\
=\angle \left({Exp}(\boldsymbol{\rho}_{ij}), {Exp}(\boldsymbol{\rho}_{ab}) \right) \\ 
\leq \angle \left({Exp}(\boldsymbol{\rho}_{ij}), \mathbf{I}_3 \right) + 
\angle \left({Exp}(\boldsymbol{\rho}_{ab}), \mathbf{I}_3 \right) \leq F.
\end{gathered}
\end{equation}
\end{proof}

As a result, in rotation search, the constraints over the 2-COS and 3-COS have been established, which will be later transformed as a series of Boolean conditions for the sampling and filtering frameworks of ICOS in Section~\ref{ICOS}.

\subsection{COS in Point Cloud Registration}

The basic structure in point cloud registration starts from the COS with 3 correspondences.

\begin{definition}[3-COS and Pairwise Scale-based Invariants] Assume now we obtain 3 non-colinear point correspondences {$(\mathbf{P}_i, \mathbf{Q}_i), i=1,2,3$}, defined as one \textit{3-COS}. We can derive the following pairwise scale-based invariants:
\begin{equation}\label{s-in}
{I}^{\mathit{s}}_{ij}=\frac{||\mathbf{{Q}}_i-\mathbf{{Q}}_j||}{||\mathbf{{P}}_i-\mathbf{{P}}_j||}=\mathit{s}_{gt}+\epsilon_{ij},
\end{equation}
where $(i,j)\in\{(1,2),(1,3),(2,3)\}$, ${I}^{\mathit{s}}_{ij}$ represents the scale-based invariant w.r.t. the $i_{th}$ and the $j_{th}$ correspondences, and $\mathit{s}_{gt}$ denotes the ground-truth scale, and $\epsilon_{ij}$ denotes the noise measurement on the scale.
\end{definition}

Then we can further endow any pair of scale-based invariants ${I}^{\mathit{s}}_{ij}$ with the constraint given as follows.

\begin{proposition}[Constraints on ${I}^{\mathit{s}}_{ij}$]\label{Prop1} For any two invariants ${I}^{\mathit{s}}_{ij}$ in the 3-COS above, we can derive that
\begin{equation}\label{Constr-1.1}
|{I}^{\mathit{s}}_{ij}-{I}^{\mathit{s}}_{jk}| \leq A\left(\frac{1}{||\mathbf{{P}}_i-\mathbf{{P}}_j||}+\frac{1}{||\mathbf{{P}}_j-\mathbf{{P}}_k||}\right)
\end{equation}
where $(i,j,k)\in\{(1,2,3),(2,3,1),(3,1,2)\}$ (here ${I}^{\mathit{s}}_{ij}$ and ${I}^{\mathit{s}}_{ji}$ denote the same invariant), and $A$ is the noise bound, satisfying that $2||\boldsymbol{\varepsilon}||\leq A$. In addition, when the scale is known ($\mathit{s}_{gt}=1$), we can further have that
\begin{equation}\label{Constr-1.2}
|{I}^{\mathit{s}}_{ij}-1| \leq A\left(\frac{1}{||\mathbf{{P}}_i-\mathbf{{P}}_j||}\right).
\end{equation}
\end{proposition}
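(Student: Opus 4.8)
The plan is to reduce both inequalities to a single per-pair estimate on the scale noise $\epsilon_{ij}$, and then combine these estimates through the triangle inequality. The whole argument parallels the earlier length-based proposition and relies only on the norm-preserving property of $\boldsymbol{R}\in SO(3)$ together with the triangle inequality, so no new machinery is required.

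First I would write out the generative model \eqref{Definition} for two inlier correspondences $i$ and $j$ and subtract them, which cancels the translation $\boldsymbol{t}$ and yields $\mathbf{Q}_i-\mathbf{Q}_j=\mathit{s}_{gt}\boldsymbol{R}(\mathbf{P}_i-\mathbf{P}_j)+(\boldsymbol{\varepsilon}_i-\boldsymbol{\varepsilon}_j)$. Taking norms and using $\|\boldsymbol{R}\mathbf{w}\|=\|\mathbf{w}\|$ together with the reverse triangle inequality gives $\big|\,\|\mathbf{Q}_i-\mathbf{Q}_j\|-\mathit{s}_{gt}\|\mathbf{P}_i-\mathbf{P}_j\|\,\big|\le\|\boldsymbol{\varepsilon}_i-\boldsymbol{\varepsilon}_j\|\le\|\boldsymbol{\varepsilon}_i\|+\|\boldsymbol{\varepsilon}_j\|$. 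Dividing through by $\|\mathbf{P}_i-\mathbf{P}_j\|$, which is nonzero by the non-colinearity assumption on the 3-COS, and recalling the definition \eqref{s-in} isolates the per-pair bound $|\epsilon_{ij}|=|{I}^{\mathit{s}}_{ij}-\mathit{s}_{gt}|\le(\|\boldsymbol{\varepsilon}_i\|+\|\boldsymbol{\varepsilon}_j\|)/\|\mathbf{P}_i-\mathbf{P}_j\|$.

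With this estimate in hand, inequality \eqref{Constr-1.1} follows immediately. Since both ${I}^{\mathit{s}}_{ij}$ and ${I}^{\mathit{s}}_{jk}$ differ from the common ground-truth scale $\mathit{s}_{gt}$ only through their noise terms, I would write $|{I}^{\mathit{s}}_{ij}-{I}^{\mathit{s}}_{jk}|=|\epsilon_{ij}-\epsilon_{jk}|\le|\epsilon_{ij}|+|\epsilon_{jk}|$ and substitute the two per-pair bounds. Reading $\|\boldsymbol{\varepsilon}\|$ as a uniform magnitude bound for the noise on any single correspondence, so that $\|\boldsymbol{\varepsilon}_i\|+\|\boldsymbol{\varepsilon}_j\|\le 2\|\boldsymbol{\varepsilon}\|\le A$, each fraction is dominated by $A/\|\mathbf{P}_i-\mathbf{P}_j\|$ and $A/\|\mathbf{P}_j-\mathbf{P}_k\|$ respectively, which is exactly the right-hand side of \eqref{Constr-1.1}. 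For the known-scale constraint \eqref{Constr-1.2} I would simply set $\mathit{s}_{gt}=1$ in the per-pair estimate; then $|{I}^{\mathit{s}}_{ij}-1|=|\epsilon_{ij}|\le 2\|\boldsymbol{\varepsilon}\|/\|\mathbf{P}_i-\mathbf{P}_j\|\le A/\|\mathbf{P}_i-\mathbf{P}_j\|$, as claimed.

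I do not expect a genuine obstacle here; the only point requiring care is the bookkeeping around the noise bound $A$. Specifically, the hypothesis $2\|\boldsymbol{\varepsilon}\|\le A$ must be interpreted as a bound on the magnitude of each individual $\boldsymbol{\varepsilon}_i$, and one must check that absorbing $\|\boldsymbol{\varepsilon}_i\|+\|\boldsymbol{\varepsilon}_j\|$ into a single factor of $A$ \emph{inside each separate fraction}, rather than factoring $A$ out of the whole sum, is what produces the stated form with the two reciprocal lengths added. Everything else is a direct application of rotational invariance and the triangle inequality.
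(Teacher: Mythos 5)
Your proposal is correct and follows essentially the same route as the paper's proof: subtracting the generative model for two inlier correspondences to cancel $\boldsymbol{t}$, using the norm-preservation of $\boldsymbol{R}_{gt}$ and the triangle inequality to obtain $|{I}^{\mathit{s}}_{ij}-\mathit{s}_{gt}|\leq \|\boldsymbol{\varepsilon}_{ij}\|/\|\mathbf{P}_{ij}\|$ with $\|\boldsymbol{\varepsilon}_{ij}\|\leq\|\boldsymbol{\varepsilon}_i\|+\|\boldsymbol{\varepsilon}_j\|=2\|\boldsymbol{\varepsilon}\|\leq A$, and then combining two such per-pair bounds through the common $\mathit{s}_{gt}$ for~\eqref{Constr-1.1} and specializing to $\mathit{s}_{gt}=1$ for~\eqref{Constr-1.2}. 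Your added bookkeeping remark about absorbing $A$ inside each separate fraction is a correct reading of the paper's implicit final step.
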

\begin{proof}
It is easy to have that
\begin{equation}
\mathit{s}_{gt}\boldsymbol{R}_{gt}\mathbf{{P}}_{ij}+\boldsymbol{\varepsilon}_{ij}=\mathbf{{Q}}_{ij}, \Rightarrow \, ||\mathit{s}_{gt}\boldsymbol{R}_{gt}\mathbf{{P}}_{ij}||=||\mathbf{{Q}}_{ij}-\boldsymbol{\varepsilon}_{ij}||,
\end{equation}
where $\mathbf{{P}}_{ij}=\mathbf{{P}}_i-\mathbf{P}_j$, $\mathbf{{Q}}_{ij}=\mathbf{{Q}}_i-\mathbf{Q}_j$, and $\boldsymbol{\varepsilon}_{ij}=\boldsymbol{\varepsilon}_i-\boldsymbol{\varepsilon}_j$.
And using the triangular inequality, we can derive that
\begin{equation}
\begin{gathered}
||\mathbf{{Q}}_{ij}||-||\boldsymbol{\varepsilon}_{ij}|| \leq ||\mathit{s}_{gt}\boldsymbol{R}_{gt}\mathbf{{P}}_{ij}||= \mathit{s}_{gt}||\mathbf{{P}}_{ij}|| \leq ||\mathbf{{Q}}_{ij}||+||\boldsymbol{\varepsilon}_{ij}||,
\end{gathered}
\end{equation}
so that
\begin{equation}
\begin{gathered}
\mathit{s}_{gt}-\frac{||\boldsymbol{\varepsilon}_{ij}||}{||\mathbf{{P}}_{ij}||} \leq \frac{||\mathbf{{Q}}_{ij}||}{||\mathbf{{P}}_{ij}||} \leq \mathit{s}_{gt}+\frac{||\boldsymbol{\varepsilon}_{ij}||}{||\mathbf{{P}}_{ij}||}.
\end{gathered}
\end{equation}
Thus, inequalities~\eqref{Constr-1.1} and \eqref{Constr-1.2} can be derived as long as we have: $||\boldsymbol{\varepsilon}_{ij}||\leq||\boldsymbol{\varepsilon}_{i}||+||\boldsymbol{\varepsilon}_{j}||=2||\boldsymbol{\varepsilon}||\leq A$.
\end{proof}

Subsequently, we roughly calculate the raw scale $\mathit{s}_{123}$ as
\begin{equation}\label{mean-s}
\mathit{{s}}_{123}=\frac{1}{\sum_{1}^3 \omega_{ij} }\cdot \sum_{1}^3 \omega_{ij}  \frac{\|\mathbf{{Q}}_{ij}\|}{\|\mathbf{{P}}_{ij}\|},
\end{equation}
and rotation $\boldsymbol{R}_{123}$ that aligns the 3-COS w.r.t. the two point sets using Horn's 3-point minimal solver~\cite{horn1987closed} similar to~\eqref{horn}. Then we define new invariants based on the translation.

\begin{definition}[Translation-based Invariants] In addition to~\eqref{s-in}, we can derive the translation-based invariants:
\begin{equation}\label{t-in}
{I}^{\boldsymbol{t}}_{i}=\mathbf{Q}_i-\mathit{s}_{123}\boldsymbol{R}_{123}\mathbf{{P}}_i=\boldsymbol{t}_{gt}+\boldsymbol{\zeta}_i,
\end{equation}
where $i\in\{1,2,3\}$, ${I}^{\boldsymbol{t}}_{i}$ represents the translation-based invariant w.r.t. the $i_{th}$ correspondences, and $\boldsymbol{\zeta}_i$ describes its noise measurement. In known-scale cases, we directly set $\mathit{s}_{123}=1$. 
\end{definition}

\begin{proposition}[Constraints on ${I}^{\boldsymbol{t}}_{i}$]\label{Prop2} For invariants ${I}^{\boldsymbol{t}}_{i}$ in the 3-COS above, we can derive that
\begin{equation}\label{Constr-2.1}
||{I}^{\boldsymbol{t}}_{i}-{I}^{\boldsymbol{t}}_{j}|| \leq B
\end{equation}
where $(i,j)\in\{(1,2),(2,3),(3,1)\}$, and $B$ is the noise bound, satisfying that $2||\boldsymbol{\zeta}||\leq B$.
\end{proposition}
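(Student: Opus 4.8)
The plan is to mirror the proof of Proposition~\ref{Prop1}: exploit the additive decomposition of the translation-based invariant so that the common ground-truth translation cancels, and then bound what remains with the triangle inequality. Recall from the definition that each invariant takes the form $I^{\boldsymbol{t}}_i = \boldsymbol{t}_{gt} + \boldsymbol{\zeta}_i$, where the ground-truth translation $\boldsymbol{t}_{gt}$ is identical for all three correspondences in the 3-COS while $\boldsymbol{\zeta}_i$ is the correspondence-dependent noise measurement. The key observation is that $\boldsymbol{t}_{gt}$ is shared, so any pairwise difference $I^{\boldsymbol{t}}_i - I^{\boldsymbol{t}}_j$ is purely a difference of noise terms and the unknown transformation disappears.

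Concretely, I would first substitute the definition to write
\begin{equation}
I^{\boldsymbol{t}}_i - I^{\boldsymbol{t}}_j = (\boldsymbol{t}_{gt} + \boldsymbol{\zeta}_i) - (\boldsymbol{t}_{gt} + \boldsymbol{\zeta}_j) = \boldsymbol{\zeta}_i - \boldsymbol{\zeta}_j.
\end{equation}
Taking norms and applying the triangle inequality then yields
\begin{equation}
||I^{\boldsymbol{t}}_i - I^{\boldsymbol{t}}_j|| = ||\boldsymbol{\zeta}_i - \boldsymbol{\zeta}_j|| \leq ||\boldsymbol{\zeta}_i|| + ||\boldsymbol{\zeta}_j|| \leq 2||\boldsymbol{\zeta}||.
\end{equation}
Finally, invoking the assumed noise bound $2||\boldsymbol{\zeta}|| \leq B$ closes the chain and gives $||I^{\boldsymbol{t}}_i - I^{\boldsymbol{t}}_j|| \leq B$ for every tuple $(i,j) \in \{(1,2),(2,3),(3,1)\}$. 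This is essentially the same three-line argument used for the scale-based constraints, with $\boldsymbol{t}_{gt}$ playing the role that $\mathit{s}_{gt}$ played there.

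The only genuine subtlety\raisebox{0.3mm}{---}and the step I would scrutinize most\raisebox{0.3mm}{---}lies not in the inequality manipulation but in justifying that each $||\boldsymbol{\zeta}_i||$ is uniformly controlled by the single quantity $||\boldsymbol{\zeta}||$. Unlike the raw measurement noise $\boldsymbol{\varepsilon}_i$ in~\eqref{Definition}, the term $\boldsymbol{\zeta}_i$ here is a \emph{compounded} error: it absorbs not only the Gaussian perturbation on $\mathbf{Q}_i$ and $\mathbf{P}_i$ but also the deviation of the raw scale $\mathit{s}_{123}$ and raw rotation $\boldsymbol{R}_{123}$ (estimated from the 3-COS via~\eqref{mean-s} and Horn's minimal solver) from their ground-truth values. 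Thus the clean decomposition $I^{\boldsymbol{t}}_i = \boldsymbol{t}_{gt} + \boldsymbol{\zeta}_i$ already folds these propagated errors into $\boldsymbol{\zeta}_i$, and the hypothesis $2||\boldsymbol{\zeta}|| \leq B$ should be read as a design choice: $B$ is selected large enough to dominate the worst-case compounded noise over an inlier 3-COS. Once that uniform bound is granted, the remainder of the proof is immediate, so I expect the write-up to be very short and to defer the quantitative calibration of $B$ to the implementation discussion rather than derive a closed-form expression for it.
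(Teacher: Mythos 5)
Your proof is correct and is exactly the argument the paper has in mind: the paper's own ``proof'' of this proposition consists solely of the sentence ``The proof here is trivial,'' and your three-line chain---cancellation of the shared $\boldsymbol{t}_{gt}$ in the pairwise difference, the triangle inequality $\|\boldsymbol{\zeta}_i-\boldsymbol{\zeta}_j\|\leq\|\boldsymbol{\zeta}_i\|+\|\boldsymbol{\zeta}_j\|$, and the hypothesis $2\|\boldsymbol{\zeta}\|\leq B$---is the canonical way to fill it in, mirroring Proposition~\ref{Prop1} with $\boldsymbol{t}_{gt}$ in place of $\mathit{s}_{gt}$. Your closing caveat is also well placed: since $\boldsymbol{\zeta}_i$ compounds the raw Gaussian noise with the propagated error of the raw estimates $\mathit{s}_{123}$ and $\boldsymbol{R}_{123}$, the uniform bound $2\|\boldsymbol{\zeta}\|\leq B$ is a calibrated modeling hypothesis (the paper simply sets $B=5.0\sigma$ in its parameter table) rather than something derived in closed form, a point the paper glosses over but which does not affect the validity of the argument.
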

\begin{proof}
The proof here is trivial.
\end{proof}

Up to now, we have explored all the possible invariants as well as their constraints according to the rigid compatibility of the 3-COS. On the basis of this 3-COS, whenever we obtain a new correspondence as the $4_{th}$ node, defined as a 4-COS, additional constraints can be further imposed as follows.

\begin{algorithm}[t]
\caption{ICOS for Rotation Search}
\label{Algo0-ICOS}
\SetKwInOut{Input}{\textbf{Input}}
\SetKwInOut{Output}{\textbf{Output}}
\Input{correspondences $\mathcal{N}=\{(\mathbf{u}_i,\mathbf{v}_i)\}_{i=1}^N$; \\ minimum number of the inlier set size $\mathit{X}$\;}
\Output{optimal $\boldsymbol{\hat{R}}$; inlier set $\mathcal{N}^{In}$\;}
\BlankLine
\While {true}{
$\mathcal{N}^{In}\leftarrow \emptyset$, $pass\leftarrow0$\;
\For {$itr_1=1:maxItr_1$}{
{Randomly select $\mathcal{R}=\{(\mathbf{u}_i,\mathbf{v}_i)\}_{i=1}^2\subset\mathcal{N}$}\;
\If{$\mathcal{R}$ satisfies constraint~\eqref{Constr-l}}{
Solve $\boldsymbol{R}_{12}$ with $\mathcal{R}$ using~\eqref{horn}\;
$pass\leftarrow1$\;
\textbf{break}
}
}
\If{$pass=1$} {
\For{$itr_3=1:maxItr_3$}{
\textit{checkSampling}\;
Randomly select $(\mathbf{u}_k,\mathbf{v}_k)\in\mathcal{N}$ and construct a 3-COS with $\mathcal{R}$\;
\If {this 3-COS satisfies~(\ref{Constr-R.1}-\ref{Constr-R.3})}{
$count\leftarrow count+1$\;
$\mathcal{N}^{In}\leftarrow\mathcal{N}^{In}\cup\{(\mathbf{u}_k,\mathbf{v}_k)\}$\;}
\If{$count\geq X$}{
$\mathcal{N}^{In}\leftarrow\mathcal{N}^{In}\cup\mathcal{R}$\;
\textbf{break}}
}
}
}
Solve $\boldsymbol{\hat{R}}$ with set $\mathcal{N}^{In}$ using~\eqref{rot-compute}\;
Compute residuals $r_i$ ($i=1,2,\dots,N$) for all the correspondneces in $\mathcal{N}$; if $r_i\leq 5.2\sigma$, add it to $\mathcal{N}^{In}$\;
Solve optimal $\boldsymbol{\hat{R}}$ with $\mathcal{N}^{In}$ using~\eqref{rot-compute}\;
\Return optimal $\boldsymbol{\hat{R}}$ and the inlier set $\mathcal{N}^{In}$\;
\end{algorithm}

\begin{proposition}[4-COS, Invariants and Constraints] If we have the $4_{th}$ correspondence in addition to the 3-COS we have, we can build 3 more scale-based invariants such that
\begin{equation}\label{s-in-new}
{I}^{\mathit{s}}_{i4}=\frac{||\mathbf{{Q}}_i-\mathbf{{Q}}_4||}{||\mathbf{{P}}_i-\mathbf{{P}}_4||}=\mathit{s}_{gt}+\epsilon_{i4},
\end{equation}
where $i\in\{1,2,3\}$, with the following constraints:
\begin{equation}\label{Constr-3.1}
|{I}^{\mathit{s}}_{i4}-{I}^{\mathit{s}}_{j4}| \leq A\left(\frac{1}{||\mathbf{{P}}_i-\mathbf{{P}}_4||}+\frac{1}{||\mathbf{{P}}_j-\mathbf{{P}}_4||}\right),
\end{equation}
where $i,j\in\{1,2,3\},i\neq j$. Similar to Proposition~\ref{PropRS2}, we also supplement a residual-based constraint on $(\mathbf{{P}}_4,\mathbf{{Q}}_4)$ such that
\begin{equation}\label{Constr-3.2}
\left\|\mathit{s}_{123}\boldsymbol{R}_{123}\mathbf{{P}}_4+{\boldsymbol{t}}_{123}-\mathbf{{Q}}_4\right\| \leq C,
\end{equation}
where ${\boldsymbol{t}}_{123}$ can be computed from the 3-COS as
\begin{equation}\label{mean-t}
{\boldsymbol{t}}_{123}={\bar{I}}^{\boldsymbol{t}}_{123}=\frac{{{I}}^{\boldsymbol{t}}_{1}+{{I}}^{\boldsymbol{t}}_{2}+{{I}}^{\mathit{s}}_{3}}{3},
\end{equation}
and $C$ is the bound for this residual error. For the known-scale cases, ${\bar{I}}^{\mathit{s}}_{123}=1$ is adopted. Moreover, for every single 3-COS in this 4-COS, we can compute a raw rotation, represented as $\boldsymbol{R}_{ijk}$ with the correspondence tuple $(i,j,k)$. Thus, we can establish new constraints on the mutual compatiblity of the different rotations as in Proposition~\ref{PropRS2} such that
\begin{equation}\label{Constr-3.3}
\angle\left(\boldsymbol{R}_{ijk}, \boldsymbol{R}_{abc}\right)\leq D,
\end{equation}
where `$\angle$' is also the geodesic error between two rotations, expressed as
\begin{equation}
\angle\left(\boldsymbol{R}_{ijk}, \boldsymbol{R}_{abc}\right)=\left| arccos\left(\frac{tr\left(\boldsymbol{R}_{ijk}^{\top}\boldsymbol{R}_{abc}\right)-1}{2}\right)\right|,
\end{equation}
$(i,j,k),(a,b,c)\in\{(1,2,3),(1,2,4),(1,3,4),(2,3,4)\}$, and $D$ is the noise bound satisfying $2 \angle \left(Exp(\boldsymbol{\rho}),\mathbf{I}_3 \right)\leq D$.
\end{proposition}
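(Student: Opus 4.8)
The plan is to observe that each of the three families of constraints in this proposition is a direct structural analogue of a bound already established for the 3-COS, so that each proof reuses the same machinery with the index $4$ playing the role previously played by the third node. None of the three requires a new idea; the work is in transcribing the earlier arguments with the shifted index set.

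First, for the scale-based constraints~\eqref{Constr-3.1}, I would repeat the argument of Proposition~\ref{Prop1} verbatim. Writing $\mathbf{P}_{i4}=\mathbf{P}_i-\mathbf{P}_4$, $\mathbf{Q}_{i4}=\mathbf{Q}_i-\mathbf{Q}_4$ and $\boldsymbol{\varepsilon}_{i4}=\boldsymbol{\varepsilon}_i-\boldsymbol{\varepsilon}_4$, the relation $\mathit{s}_{gt}\boldsymbol{R}_{gt}\mathbf{P}_{i4}+\boldsymbol{\varepsilon}_{i4}=\mathbf{Q}_{i4}$ together with the triangle inequality yields the two-sided bound $\mathit{s}_{gt}-\frac{\|\boldsymbol{\varepsilon}_{i4}\|}{\|\mathbf{P}_{i4}\|}\leq I^{\mathit{s}}_{i4}\leq \mathit{s}_{gt}+\frac{\|\boldsymbol{\varepsilon}_{i4}\|}{\|\mathbf{P}_{i4}\|}$. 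Subtracting the analogous bounds for indices $i$ and $j$ and invoking $\|\boldsymbol{\varepsilon}_{i4}\|\leq 2\|\boldsymbol{\varepsilon}\|\leq A$ then produces~\eqref{Constr-3.1} exactly as~\eqref{Constr-1.1} was obtained.

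Next, for the residual constraint~\eqref{Constr-3.2}, I would mirror the trivial argument behind~\eqref{Constr-R.2}. Since the raw scale $\mathit{s}_{123}$, rotation $\boldsymbol{R}_{123}$ and translation $\boldsymbol{t}_{123}$ are all computed from the inlier 3-COS, they deviate from $(\mathit{s}_{gt},\boldsymbol{R}_{gt},\boldsymbol{t}_{gt})$ only through the noise perturbing the first three correspondences; substituting $\mathbf{Q}_4=\mathit{s}_{gt}\boldsymbol{R}_{gt}\mathbf{P}_4+\boldsymbol{t}_{gt}+\boldsymbol{\varepsilon}_4$ and collecting the resulting noise terms bounds $\|\mathit{s}_{123}\boldsymbol{R}_{123}\mathbf{P}_4+\boldsymbol{t}_{123}-\mathbf{Q}_4\|$ by a quantity that can be absorbed into the single bound $C$.

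Finally, for the rotation-compatibility constraints~\eqref{Constr-3.3}, I would copy the geodesic-distance argument from the proof of~\eqref{Constr-R.3}. Each raw rotation admits the factorisation $\boldsymbol{R}_{ijk}=\boldsymbol{R}_{gt}\,Exp(\boldsymbol{\rho}_{ijk})$; left-invariance of the geodesic metric gives $\angle(\boldsymbol{R}_{ijk},\boldsymbol{R}_{abc})=\angle(Exp(\boldsymbol{\rho}_{ijk}),Exp(\boldsymbol{\rho}_{abc}))$, and the triangular inequality on $SO(3)$ combined with $2\angle(Exp(\boldsymbol{\rho}),\mathbf{I}_3)\leq D$ closes the bound. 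The only step that demands genuine care is~\eqref{Constr-3.2}: unlike the length- and geodesic-based bounds, which isolate a single pairwise noise term, the residual couples all three noise sources through the jointly estimated triple $(\mathit{s}_{123},\boldsymbol{R}_{123},\boldsymbol{t}_{123})$, so the main obstacle is the bookkeeping of how these coupled estimation errors propagate into one scalar bound $C$. Everything else follows by direct reuse of Propositions~\ref{Prop1} and~\ref{PropRS2}.
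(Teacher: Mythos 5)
Your proposal is correct and follows exactly the route the paper intends: the paper omits this proof, stating that constraints~\eqref{Constr-3.1}--\eqref{Constr-3.2} follow from the arguments of Propositions~\ref{Prop1} and~\ref{Prop2} and that constraint~\eqref{Constr-3.3} follows from Proposition~\ref{PropRS2}, which is precisely the index-shifted transcription you carry out. Your added remark that the residual bound~\eqref{Constr-3.2} couples the noise of all three correspondences through the jointly estimated $(\mathit{s}_{123},\boldsymbol{R}_{123},\boldsymbol{t}_{123})$ is in fact more careful than the paper, which treats that step as apparent.
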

\begin{proof}
The proof is omitted here, because the derivations of constraints~(\ref{Constr-3.1}-\ref{Constr-3.2}) is fairly apparent based on the that of Proposition~\ref{Prop1} and~\ref{Prop2}, while for the derivation of constraint~\eqref{Constr-3.3}, readers can refer to Proposition~\ref{PropRS2}.
\end{proof}

So far, we have derived the invariant-based constraints from both the 3-COS and 4-COS for the point cloud registration problem, getting us prepared for the resulting robust solver.

\begin{algorithm}[t]
\caption{ICOS for Known-scale Registration}
\label{Algo1-ICOS}
\SetKwInOut{Input}{\textbf{Input}}
\SetKwInOut{Output}{\textbf{Output}}
\Input{correspondences $\mathcal{N}=\{(\mathbf{P}_i,\mathbf{Q}_i)\}_{i=1}^N$; \\ minimum number of the inlier set size $\mathit{X}$\;}
\Output{optimal $(\boldsymbol{\hat{R}},\boldsymbol{\hat{t}})$; inlier set $\mathcal{N}^{In}$\;}
\BlankLine
\While {true}{
$\mathcal{N}^{In}\leftarrow \emptyset$\;
\For {$itr_1=1:maxItr_1$}{
{Randomly select $\mathcal{R}=\{(\mathbf{P}_i,\mathbf{Q}_i)\}_{i=1}^2\subset\mathcal{N}$}\;
\If {$\mathcal{R}$ satisfies constraint~\eqref{Constr-1.2}}{
\For {$itr_2=1:maxItr_2$} {
Randomly select $(\mathbf{P}_3,\mathbf{Q}_3)\in\mathcal{N}$ and construct a 3-COS set $\mathcal{O}$ with $\mathcal{R}$\;
\If {$\mathcal{O}$ satisfies constraints~\eqref{Constr-1.2}}{
\textbf{break}
}
}
\textbf{break}
}
}
\If{$\mathcal{O}$ satisfies constraints~(\ref{Constr-1.1}-\ref{Constr-1.2})} {
Solve $\mathit{s}_{123},\boldsymbol{R}_{123}$ with $\mathcal{O}$ using~\eqref{mean-s} and~\eqref{horn}\;
\If{$\mathcal{O}$ satisfies constraints~\eqref{Constr-2.1}}{
\For{$itr_3=1:maxItr_3$}{
\textit{checkSampling}\;
Randomly select $(\mathbf{P}_k,\mathbf{Q}_k)\in\mathcal{N}$ and construct a 4-COS with $\mathcal{O}$\;
\If {this 4-COS satisfies~(\ref{Constr-3.1}-\ref{Constr-3.3})} {
$count\leftarrow count+1$\;
$\mathcal{N}^{In}\leftarrow\mathcal{N}^{In}\cup\{(\mathbf{P}_k,\mathbf{Q}_k)\}$\;}
\If{$count\geq X$}{
$\mathcal{N}^{In}\leftarrow\mathcal{N}^{In}\cup\mathcal{O}$\;
\textbf{break}}
}
}
}
}
Solve $(\boldsymbol{\hat{R}},\boldsymbol{\hat{t}})$ with set $\mathcal{N}^{In}$ using~\eqref{rot-compute} and~\eqref{tran-compute}\;
Compute residuals $r_i$ ($i=1,2,\dots,N$) for all the correspondneces in $\mathcal{N}$; if $r_i\leq 5.2\sigma$, add it to $\mathcal{N}^{In}$\;
Solve optimal $(\boldsymbol{\hat{R}},\boldsymbol{\hat{t}})$ with $\mathcal{N}^{In}$ using~\eqref{rot-compute} and~\eqref{tran-compute}\;
\Return optimal $(\boldsymbol{\hat{R}},\boldsymbol{\hat{t}})$ and the inlier set $\mathcal{N}^{In}$\;
\end{algorithm}

\section{Our Solver: ICOS}\label{ICOS}

In this section, we present our solver ICOS for rotation search and both the known-scale and the unknown-scale point cloud registration problems.

We first provide the insights and the complete overview of the proposed solver. As intuitively illustrated in Fig.~\ref{Overview}, we begin with finding eligible $a$-COS across the two point sets by roughly checking the first layer of invariant-based constraints, where $a=2$ for rotation search and $a=3$ for point cloud registration, as discussed in Section~\ref{COS}.

Once a $a$-COS satisfies the required constraints, we solve the raw solutions (rotation, scale or translation) with it, which is for further examining whether the second layer of constraints can be satisfied. Holding one eligible $a$-COS, we then pick random correspondences one after another with testing both their residual-based and invariant-based constraints simultaneously, in order to construct an eligible $b$-COS, where $b=a+1$, on the basis of this $a$-COS. 

The process of finding $b$-COS continues until we are able to obtain a sufficient number of such $b$-COS within reasonable times of the single-correspondence random sampling, which also serves as a hidden condition aiming to determine whether the $a$-COS and its corresponding $b$-COS ca be all true inliers.

The following subsections provide more details on formulating the pseudocode of ICOS for the specific problems.

\begin{algorithm}[t]
\caption{ICOS for Unknown-scale Registration}
\label{Algo2-ICOS}
\SetKwInOut{Input}{\textbf{Input}}
\SetKwInOut{Output}{\textbf{Output}}
\Input{correspondences $\mathcal{N}=\{(\mathbf{P}_i,\mathbf{Q}_i)\}_{i=1}^N$; \\ minimum number of the inlier set size $\mathit{X}$\;}
\Output{optimal $(\mathit{\hat{s}},\boldsymbol{\hat{R}},\boldsymbol{\hat{t}})$; inlier set $\mathcal{N}^{In}$\;}
\BlankLine
\While {true}{
$\mathcal{N}^{In}\leftarrow \emptyset$\;
Randomly select $\mathcal{O}=\{(\mathbf{P}_i,\mathbf{Q}_i)\}_{i=1}^3\subset\mathcal{N}$ as the 3-COS set\;
\If{$\mathcal{O}$ satisfies constraints~\eqref{Constr-1.1}} {
Solve $\mathit{s}_{123},\boldsymbol{R}_{123}$ with $\mathcal{O}$ using~\eqref{mean-s} and~\eqref{horn}\;
\If{$\mathcal{O}$ satisfies constraints~\eqref{Constr-2.1}}{
\For{$itr_3=1:maxItr_3$}{
\textit{checkSampling}\;
Randomly select $(\mathbf{P}_k,\mathbf{Q}_k)\in\mathcal{N}$ and construct a 4-COS with $\mathcal{O}$\;
\If {this 4-COS satisfies~(\ref{Constr-3.1}-\ref{Constr-3.3})} {
$count\leftarrow count+1$\;
$\mathcal{N}^{In}\leftarrow\mathcal{N}^{In}\cup\{(\mathbf{P}_k,\mathbf{Q}_k)\}$\;}
\If{$count\geq X$}{
$\mathcal{N}^{In}\leftarrow\mathcal{N}^{In}\cup\mathcal{O}$\;
\textbf{break}}
}
}
}
}
Solve $(\mathit{\hat{s}},\boldsymbol{\hat{R}},\boldsymbol{\hat{t}})$ with set $\mathcal{N}^{In}$ using~(\ref{rot-compute}-\ref{tran-compute})\;
Compute residuals $r_i$ ($i=1,2,\dots,N$) for all the correspondneces in $\mathcal{N}$; if $r_i\leq 5.2\sigma$, add it to $\mathcal{N}^{In}$\;
Solve optimal $(\mathit{\hat{s}},\boldsymbol{\hat{R}},\boldsymbol{\hat{t}})$ with $\mathcal{N}^{In}$ using~(\ref{rot-compute}-\ref{tran-compute})\;
\Return optimal $(\mathit{\hat{s}},\boldsymbol{\hat{R}},\boldsymbol{\hat{t}})$ and the inlier set $\mathcal{N}^{In}$\;
\end{algorithm}

\subsection{ICOS for Rotation Search}

For addressing the rotation search problem, we render the pseudocode of ICOS in Algorithm~\ref{Algo0-ICOS} as well as the description for each part as follows.

\textbf{Lines 2-10}: ICOS initiates with random sampling. But different from ordinary random sampling procedures, we impose constraints on each random sample selected. When picking one random subset $\mathcal{R}$ of 2 correspondences, we immediately check the length-based invariant~\eqref{Constr-l}. If it can be satisfied, we solve the raw rotation with this 2-COS $\mathcal{R}$ and then allow it to proceed to the next step, which is operated by setting $pass=1$; if not, we start over from the sampling process.

\textbf{Lines 11-25}: If we manage to obtain an eligible 2-COS, say $\mathcal{R}$, we can then seek new correspondences to build a 3-COS, also by constrained sampling, where we select and test the correspondence samples one at a time. If one correspondence can satisfy constraints~(\ref{Constr-R.1}-\ref{Constr-R.3}), which indicates it can form an eligible 3-COS with $\mathcal{R}$, we add it to the inlier set and make $count$ increase by 1. Once we collect enough correspondences ($count\geq X$), we stop sampling and return the $X$+2 inliers. 

During this process, we supplement a subroutine: \textit{checkSampling}, in order to boost time-efficiency, which is designed to timely break from the sampling if the obtained 2-COS $\mathcal{R}$ does not seem to be inliers, and this operation could be realized by regularly checking the number of 4-COS formed according to the iteration number, as demonstrated in Algorithm~\ref{Algo3-cS}. To be specific, as long as the correspondences in 2-COS $\mathcal{R}$ are both inliers, at least one inlier correspondence ought to be sampled after a sufficient number of iterations, which can be computed from~\eqref{maxitr} by setting a high confidence; conversely, if the latter condition cannot be satisfied, we have every reason to assume that set $\mathcal{R}$ contains at least one outlier.

\textbf{Lines 26-28}: We first compute the rotation with the $X$+2 inliers and consider all correspondences whose residual errors satisfy $r_i\leq 5.2\sigma$ as the ultimate inliers. Finally, we are able to solve the optimal rotation with all the inliers found among the putative correspondences using the non-minimal estimator~\eqref{rot-compute}.

\subsection{ICOS for Known-scale Registration}

The pseudocode of ICOS for point cloud registration with known scale ($\mathit{s}=1$) is given in Algorithm~\ref{Algo1-ICOS}.

\textbf{Lines 2-14}: We design a \textit{decoupled sampling method} for the known-scale registration. First, we select a random subset $\mathcal{R}$ of 2 correspondences, rather than 3, and check if their scale-based invariant $I^\mathit{s}_{12}$ is close enough to 1 using constraint~\eqref{Constr-1.2}. If it is, we can then proceed to select random samples for the remaining one correspondence, with which we try to form an eligible 3-COS that can satisfy constraints~\eqref{Constr-1.2} based on $\mathcal{R}$; if not, we go back to the sampling process for seeking $\mathcal{R}$.

\textbf{Lines 15-32}: Once a qualified 3-COS (set $\mathcal{O}$) is obtained, we check their constraints~\eqref{Constr-1.1}. If satisfied, we then solve the raw rotation with $\mathcal{O}$, and determine whether constraints~\eqref{Constr-2.1} are fulfilled in~$\mathcal{O}$. If yes, we then move on to sampling once again, for building one 4-COS based on set $\mathcal{O}$. We take random correspondences one at a time, and feed them sequentially to $\mathcal{O}$ to check constraints~(\ref{Constr-3.1}-\ref{Constr-3.3}). After collecting enough correspondences, say $X$, which can construct qualified 4-COS with $\mathcal{O}$, we jump out and return the inliers.

\textbf{Lines 33-35}: Similar to Algorithm~\ref{Algo0-ICOS}, we check the residual error w.r.t. each correspondence to gather the entire inlier set, which is then used to compute optimal solutions with the non-minimal solvers~\eqref{sca-compute}, \eqref{rot-compute} and~\eqref{tran-compute}.

\begin{algorithm}[t]\label{Algo3}
\caption{\textit{checkSampling} (Subroutine of ICOS)}
\label{Algo3-cS}
\SetKwInOut{Input}{\textbf{Input}}
\SetKwInOut{Output}{\textbf{Output}}
\Input{iteration number $itr_3$, and $count$\;}
\Output{\textbf{break} or \textit{do nothing}\;}
\BlankLine
\If {$itr_3\geq maxItr_4$ and $count<1$} { 
\textbf{break}
}
\If {$itr_3\geq 2maxItr_4$ and $count<2$} { 
\textbf{break}
}
\If {$itr_3\geq 3maxItr_4$ and $count<3$} { 
\textbf{break}
}
\end{algorithm}
\vspace{-3mm}

\subsection{ICOS for Unknown-scale Registration}

When the scale is unknown, we design a different framework for ICOS, and its pseudocode is shown in Algorithm~\ref{Algo2-ICOS}.

Algorithm~\ref{Algo2-ICOS} employs the methodology similar to \textbf{lines 15-35} of Algorithm~\ref{Algo1-ICOS}. The differences lie in that: (i) we are unable to impose constraints~\eqref{Constr-1.2} on the 3-COS using the decoupled sampling method as operated in \textbf{lines 3-14}, since the scale is not fixed, which may result in more runtime than the known-scale problems generally, and (ii) the raw scale and the scale-based invariants are not set to 1, but need to be computed as in~\eqref{s-in} and~\eqref{s-in-new} instead.

\begin{table}[h]
\caption{Parameter Setup of ICOS for the Rotation Search and Point Cloud Registration Problems}\label{parameter}
\setlength{\tabcolsep}{0.5mm}
\centering
\vspace{-2mm}
\begin{tabular}{c|c|c}

Algorithm~\ref{Algo0-ICOS} & Algorithm~\ref{Algo1-ICOS} & Algorithm~\ref{Algo2-ICOS} \\
\hline 

$X=2\,(N=100)$ & $X=4$  &  $X=4$ \\

$X=4\,(N=500)$ & $A=4.5\sigma$  & $A=4.5\sigma$ \\

$X=5\,(N=1000)$ & $B=5.0\sigma$  & $B=5.0\sigma$ \\

$L=2.5\sigma$ & $C=6.0\sigma$  & $C=6.0\sigma$ \\

$G=4\sigma$ & $D=10.5\sigma$  & $D=10.5\sigma$ \\

$F=10.5\sigma$ & $maxItr_1=40000$  & $maxItr_3=1600$ \\

$maxItr_1=40000$ & $maxItr_2=400$  & $maxItr_4=400$ \\

$maxItr_3=2000$ & $maxItr_3=1600$  &  \\

$maxItr_4=400$ & $maxItr_4=400$  &   \\

\hline

\end{tabular}

\centering
\vspace{-1mm}

\end{table}

\subsection{Discussion on the Maximum Iteration Numbers}

Note that in Algorithm~(\ref{Algo0-ICOS}-\ref{Algo3}), a series of maximum iteration numbers are involved during the sampling process, so now we discuss their proper choice of values.

Generally, the maximum iteration we set can be given by the following formulation:

\begin{equation}\label{maxitr}
maxItr \geq x\cdot\frac{\log(1-p)}{\log\left(1-(1-outlier\,\,ratio)^n\right)},
\end{equation}
where $x$ is the number of eligible inlier subsets desired, $p$ is the confidence for the fact that all the correspondences selected are inliers (typically we choose $p=0.98$-$0.99$), usually the outlier ratio should be within 0-99\% (typically, 99\% is chosen unless we can know in advance that the outlier ratio is below a certain level), and $n$ is the number of correspondences we select for each random sample (subset). 

Therefore, for $maxItr_1$, we adopt $n=2$, choosing two correspondences at one time, while for $maxItr_2$, $maxItr_3$ and $maxItr_4$, we all set $n=1$ for single-correspondence sampling.

\subsection{Discussion on the Performance of ICOS}

Since the main body of our solver ICOS consists in simply judging a series of Boolean conditions (\textit{true} or \textit{false}) according to the constraints over the invariants w.r.t. different variables, it can be fast to implement. Besides, ICOS abandons the strategy of making an estimate for every random sample selected (as operated in RANSAC), and smartly solves the transformation only when the scale-based constraints~(\ref{Constr-1.1}-\ref{Constr-1.2}) allow it to. Also, the residual errors w.r.t. all the correspondences only require to be solved once (just for finding the whole inlier set), saving much time for realistic implementation, especially when the point cloud correspondences are in huge numbers, which is not uncommon in practice.

\begin{figure*}[htp]
\centering

\footnotesize{(a) $N=100$}

\subfigure{
\begin{minipage}[t]{1\linewidth}
\centering
\includegraphics[width=0.246\linewidth]{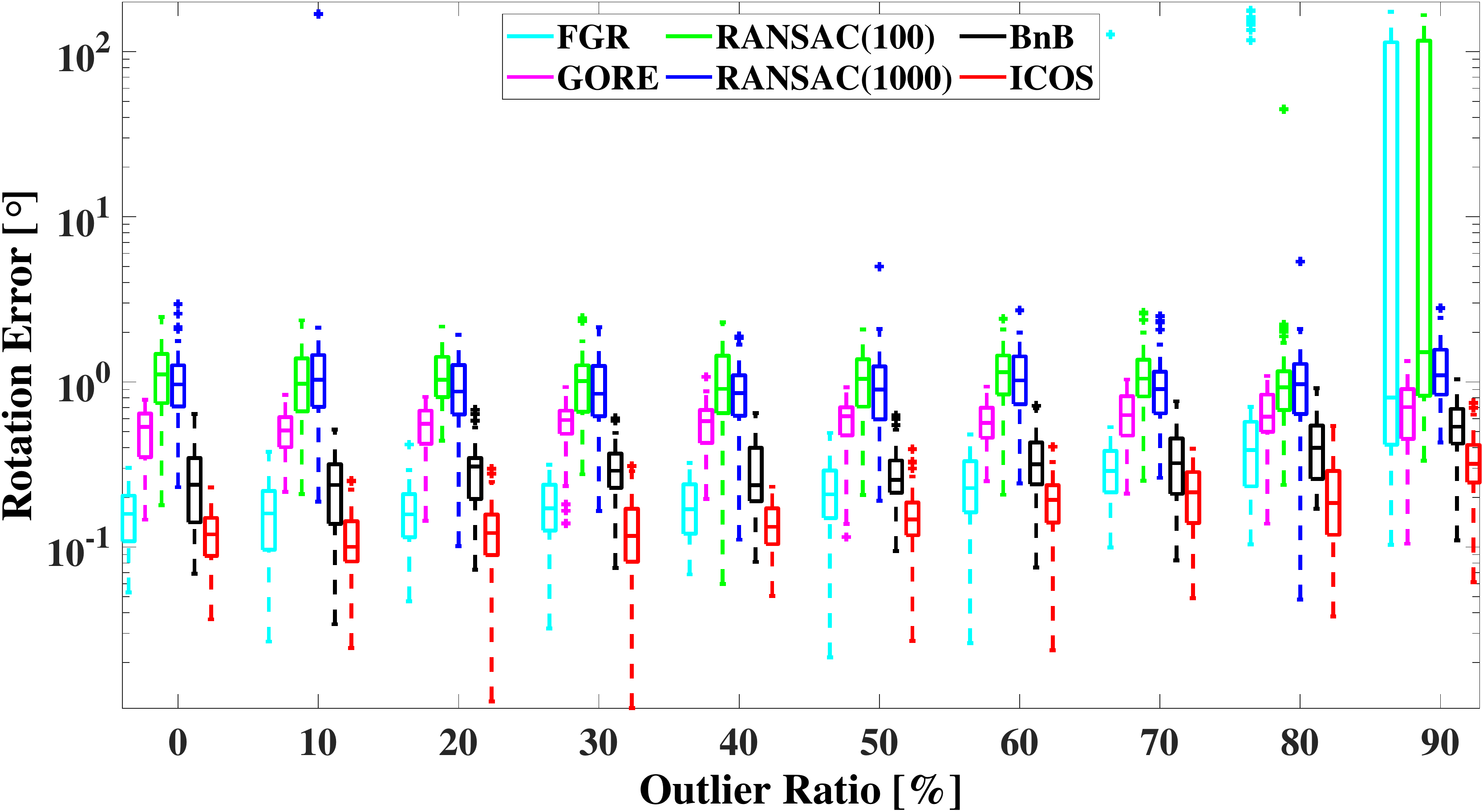}
\includegraphics[width=0.246\linewidth]{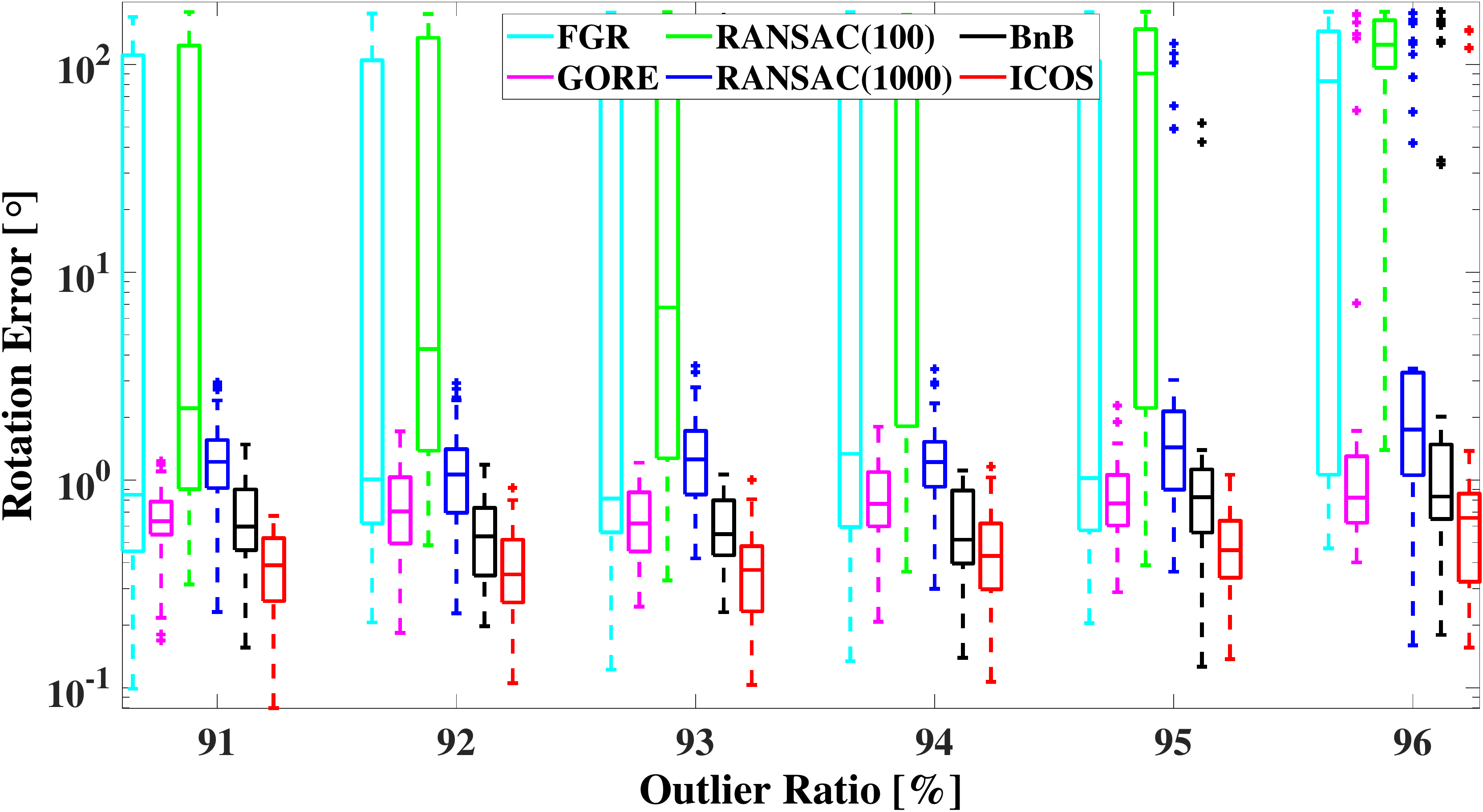}
\includegraphics[width=0.246\linewidth]{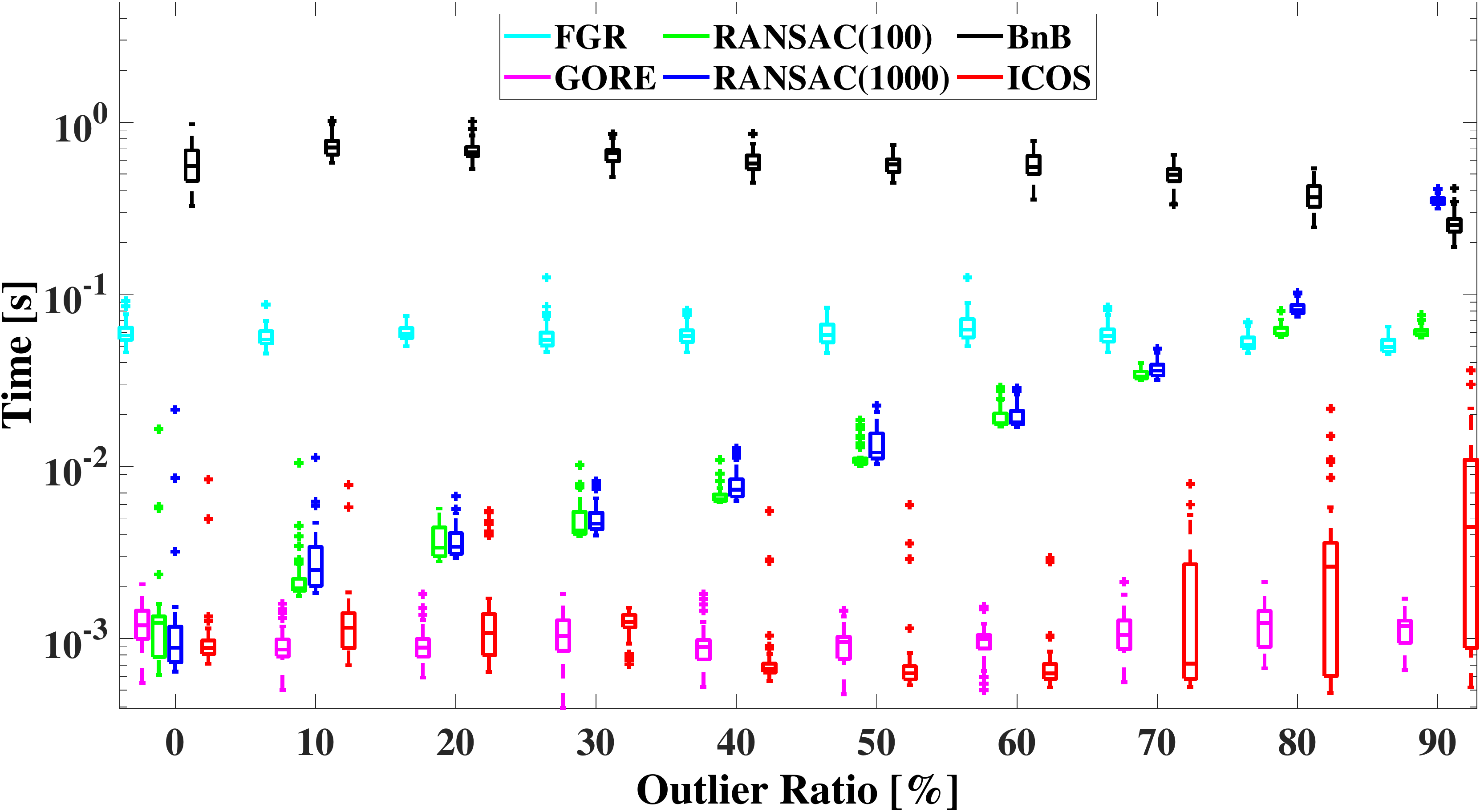}
\includegraphics[width=0.246\linewidth]{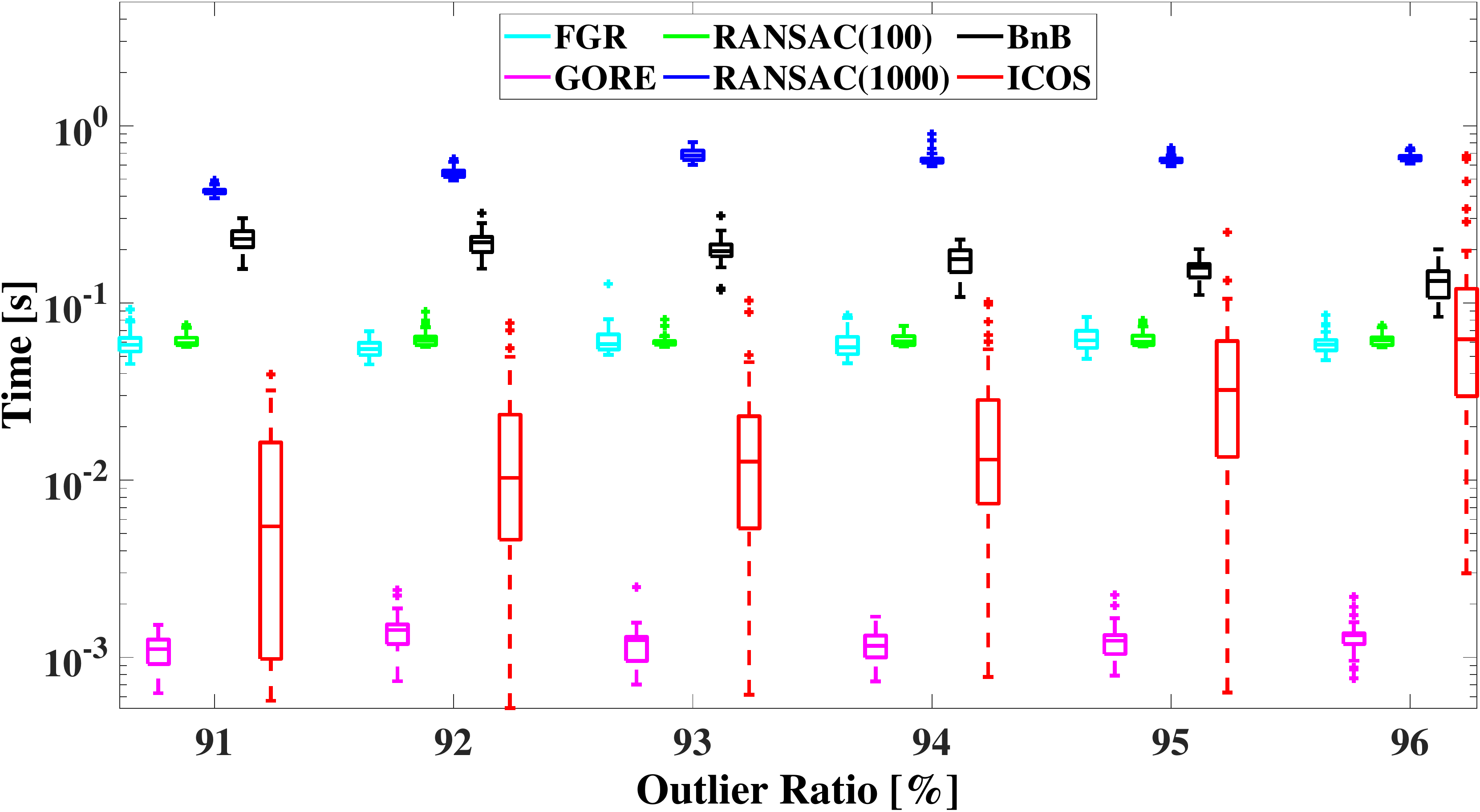}
\end{minipage}
}%

\footnotesize{(b) $N=500$}

\subfigure{
\begin{minipage}[t]{1\linewidth}
\centering
\includegraphics[width=0.246\linewidth]{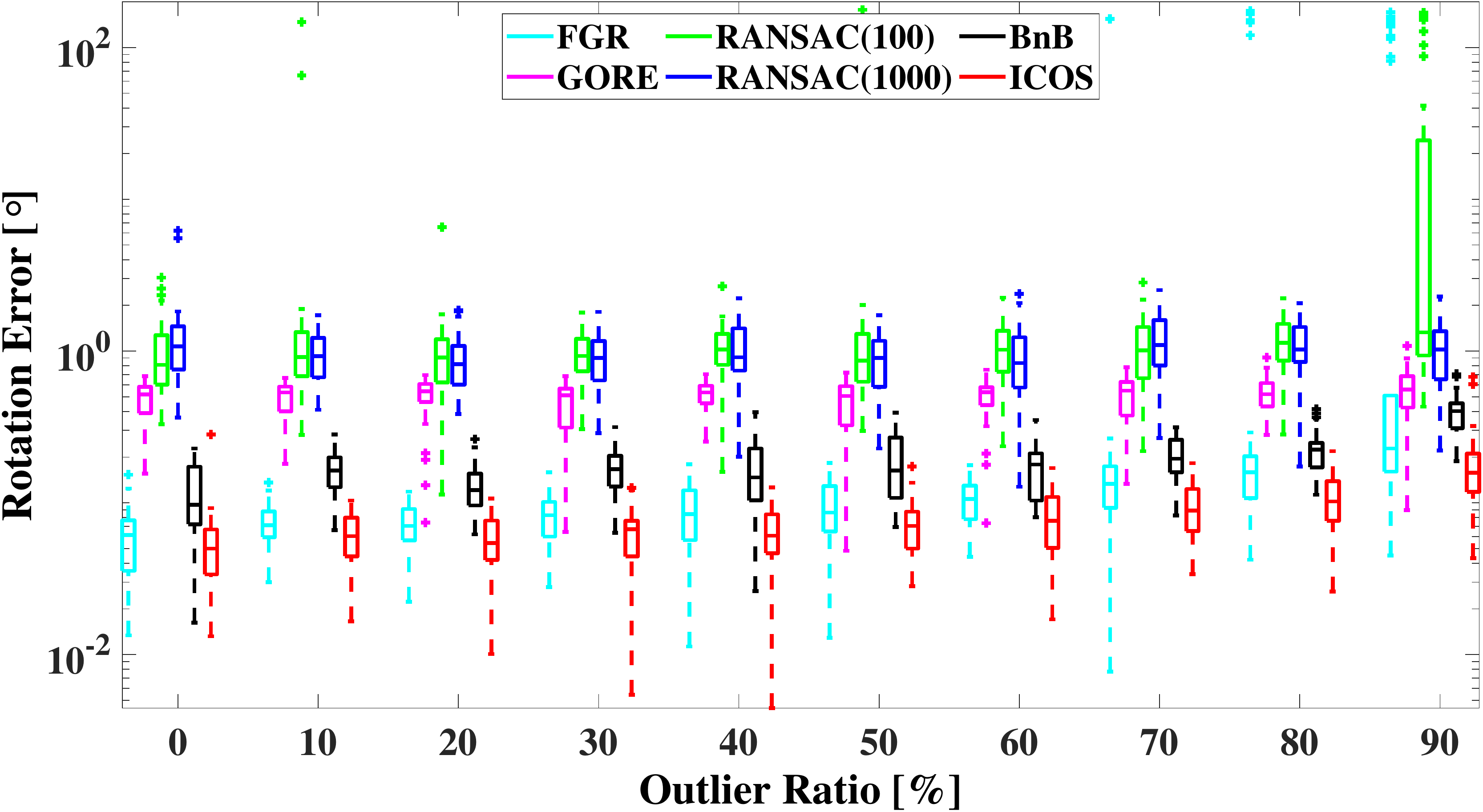}
\includegraphics[width=0.246\linewidth]{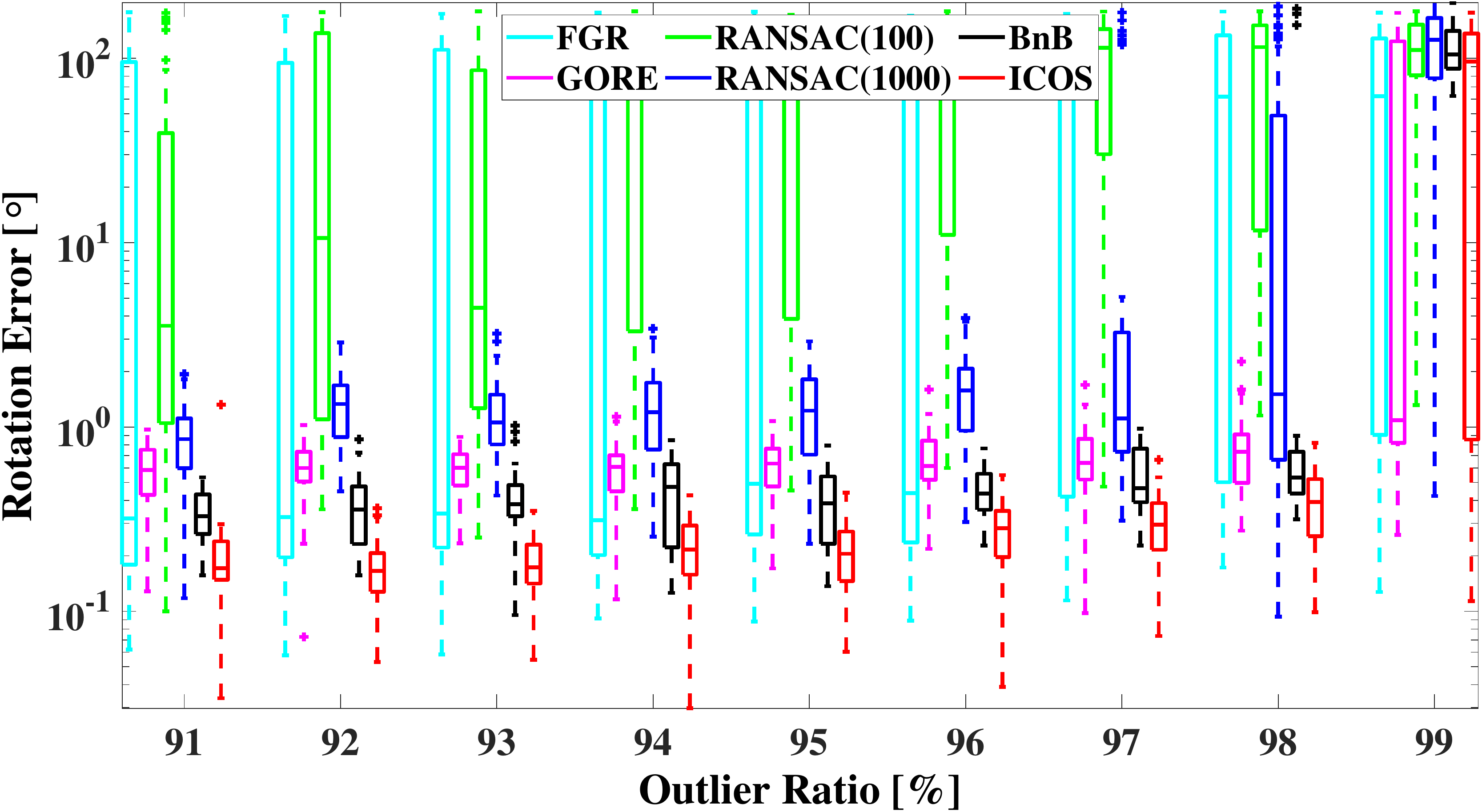}
\includegraphics[width=0.246\linewidth]{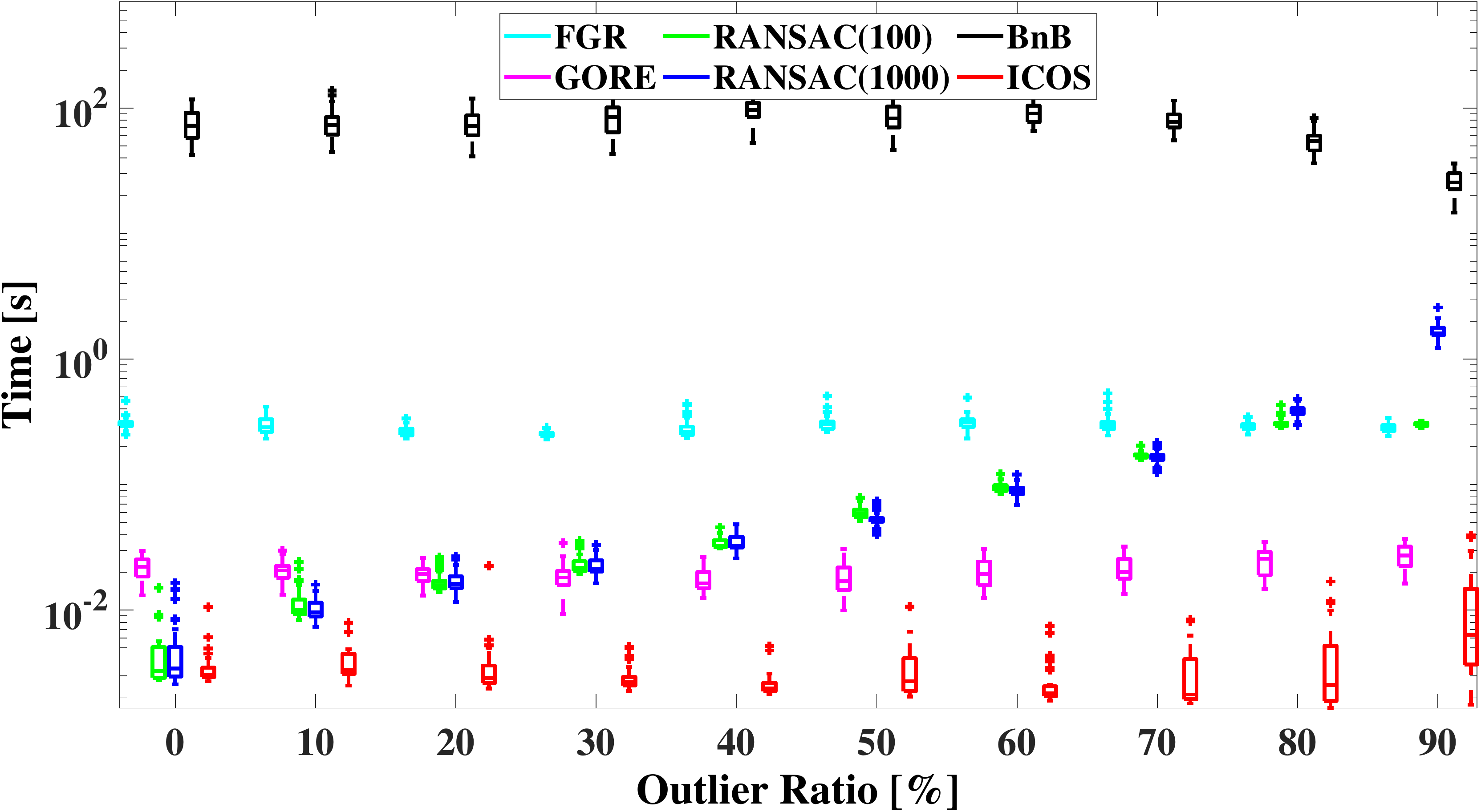}
\includegraphics[width=0.246\linewidth]{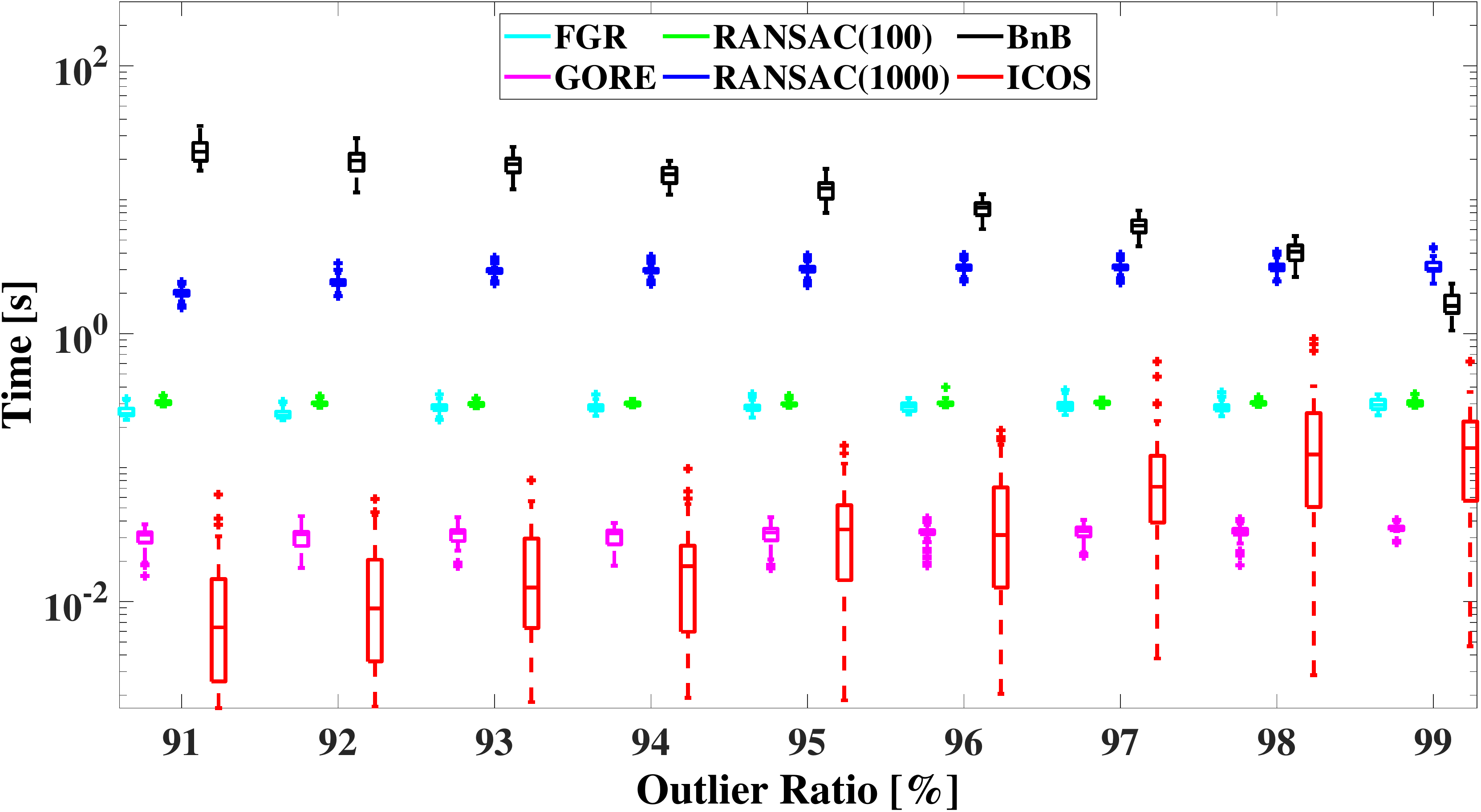}
\end{minipage}
}%

\footnotesize{(c) $N=1000$}

\subfigure{
\begin{minipage}[t]{1\linewidth}
\centering
\includegraphics[width=0.246\linewidth]{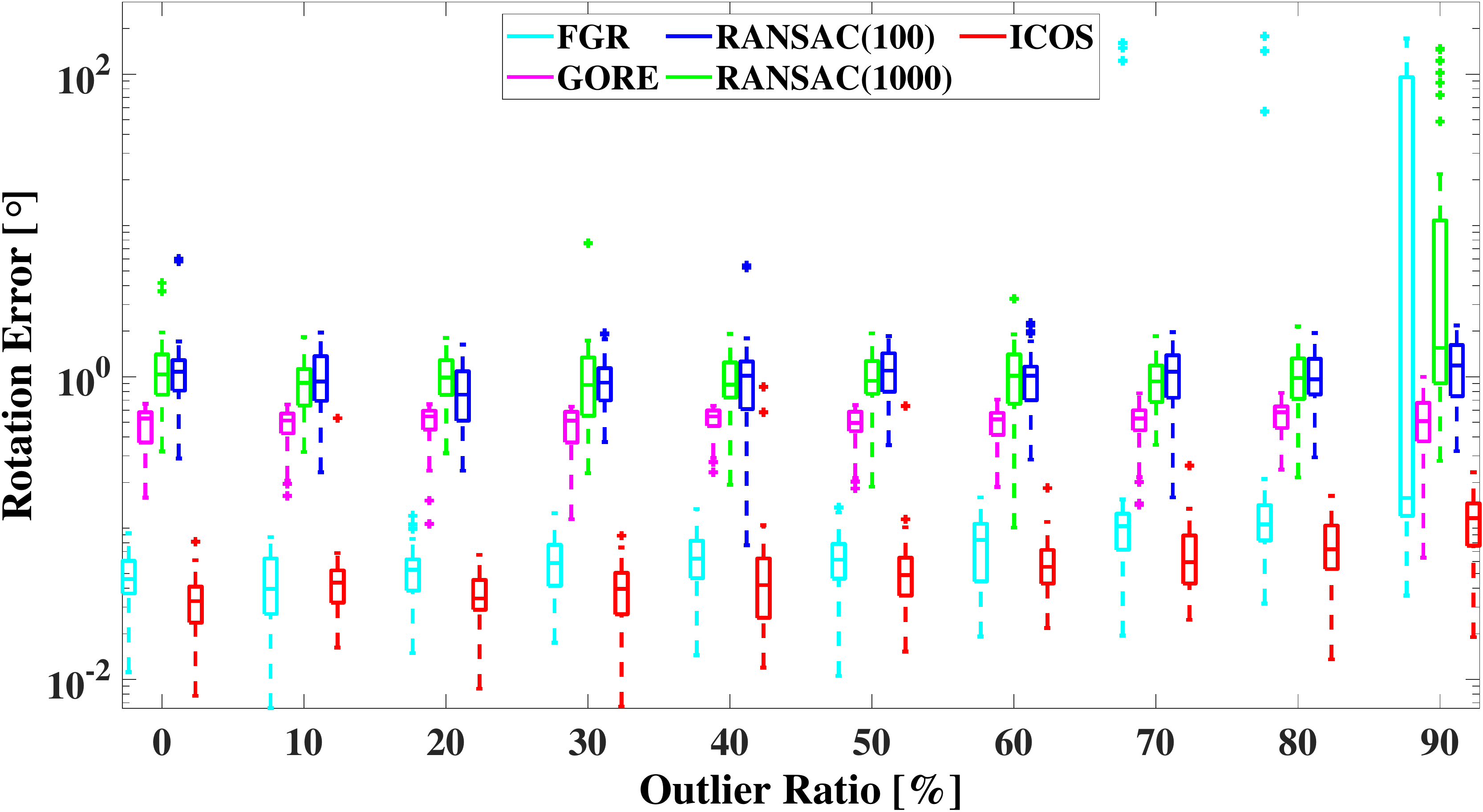}
\includegraphics[width=0.246\linewidth]{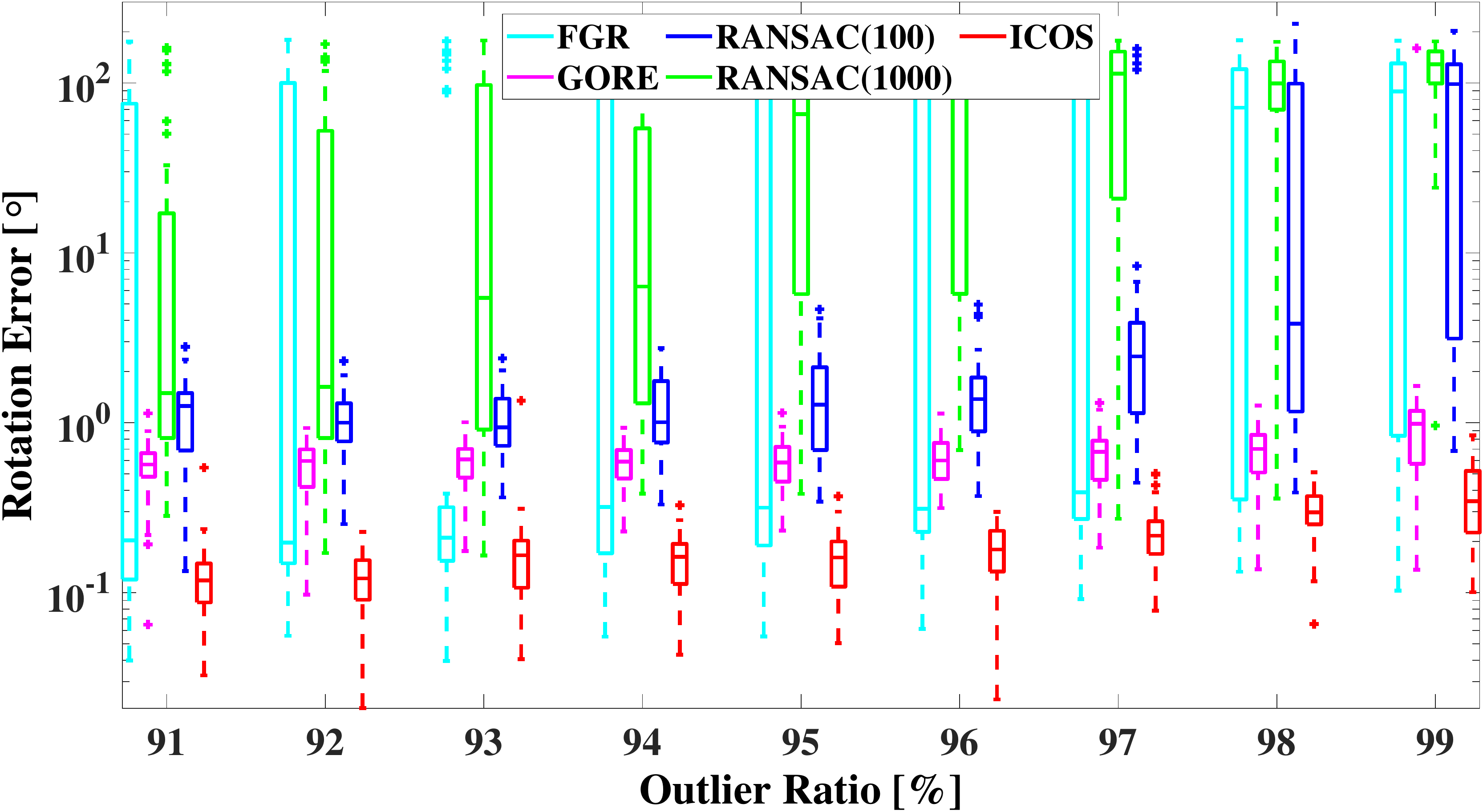}
\includegraphics[width=0.246\linewidth]{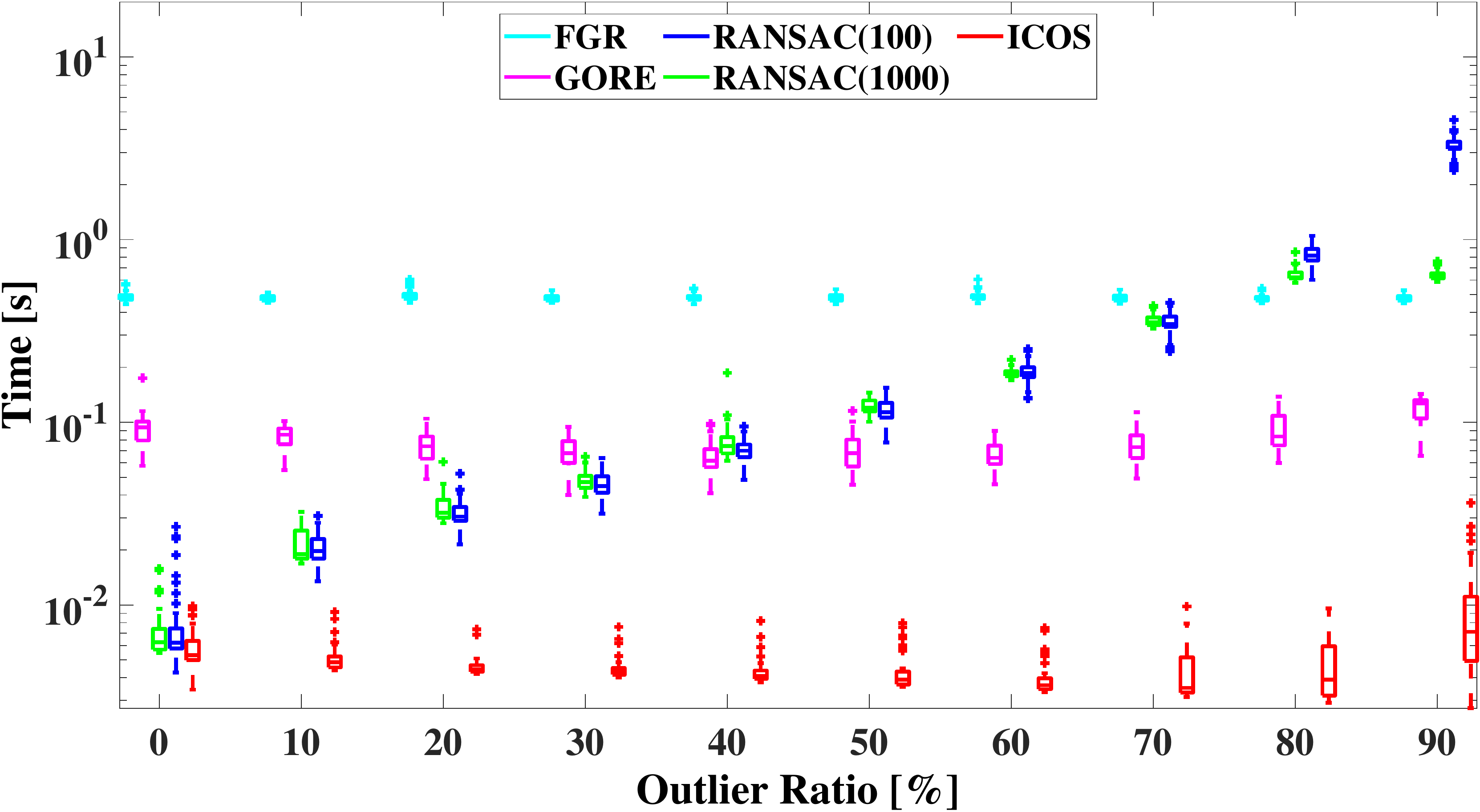}
\includegraphics[width=0.246\linewidth]{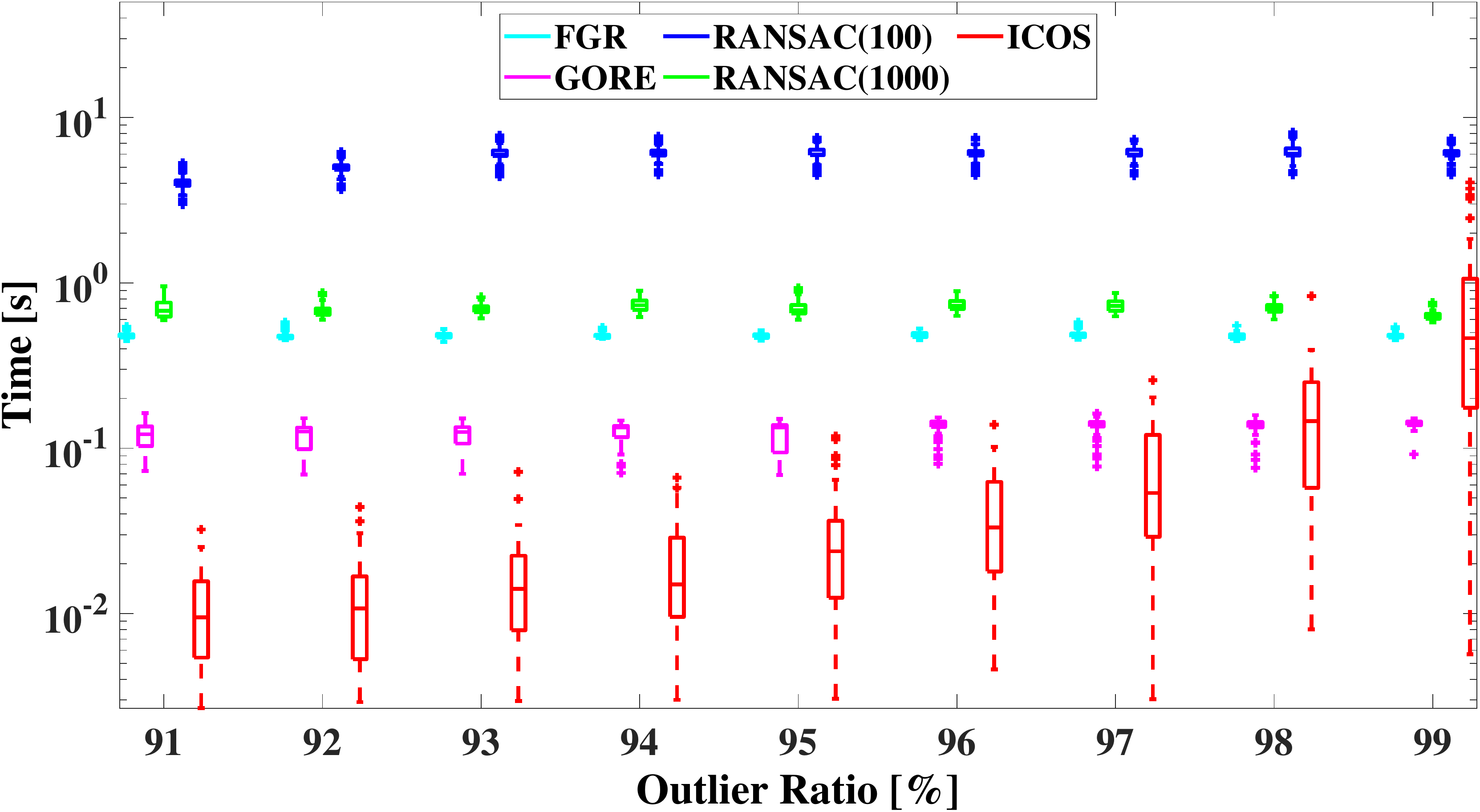}
\end{minipage}
}%
\vspace{-3mm}
\centering
\caption{Benchmarking of rotation search over the synthetic data. (a) Results with $N=100$ correspondences w.r.t. increasing outlier ratio (0-99\%). (b) Results with $N=500$ correspondences w.r.t. increasing outlier ratio (0-99\%). (c) Results with $N=1000$ correspondences w.r.t. increasing outlier ratio (0-99\%).}
\label{Syn-RS}
\end{figure*}

\begin{figure*}[t]
\centering

\footnotesize{(a)Point Cloud Registration with Known Scale: $\mathit{s}=1$}

\subfigure{
\begin{minipage}[t]{1\linewidth}
\centering
\includegraphics[width=0.245\linewidth]{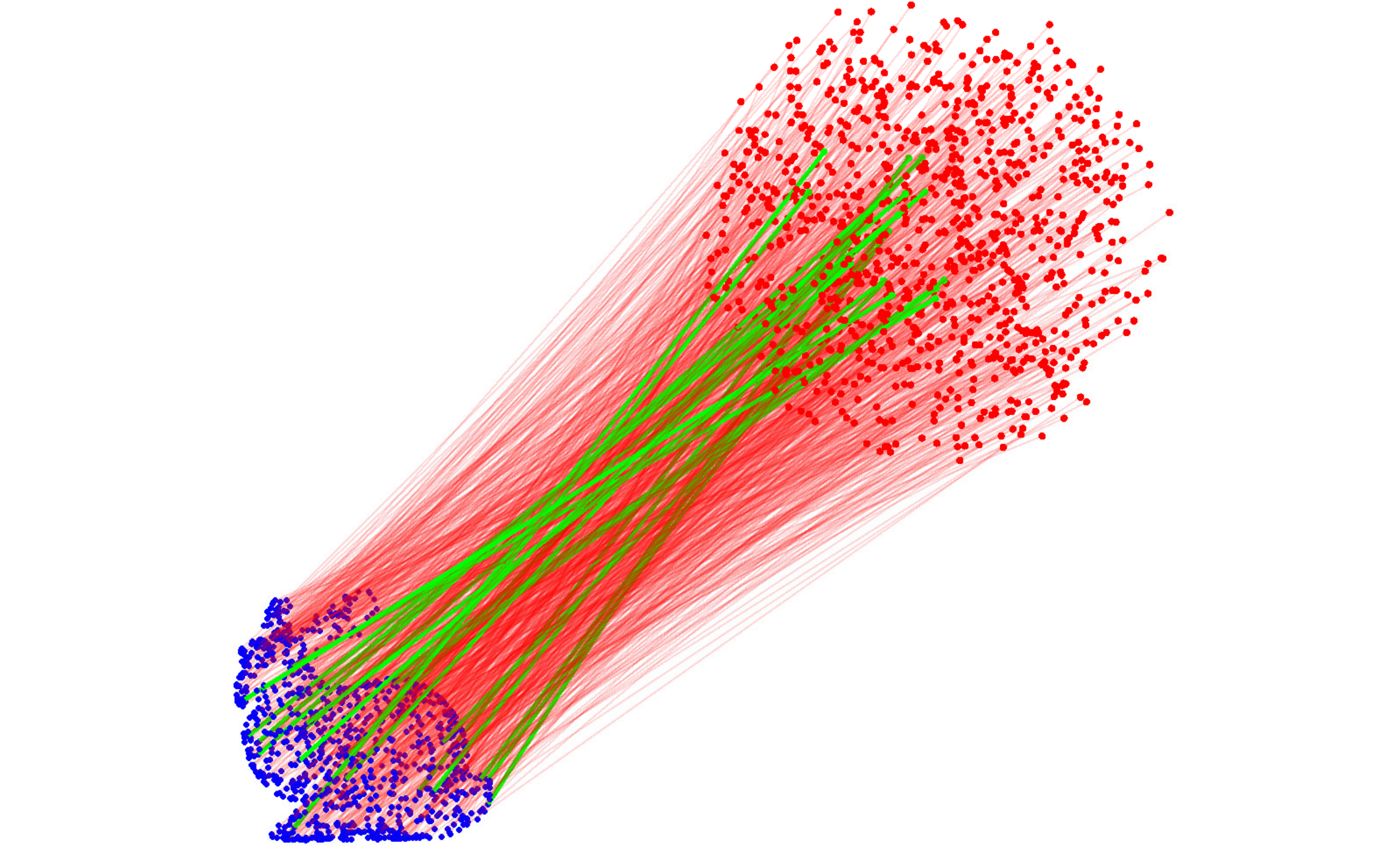}
\includegraphics[width=0.245\linewidth]{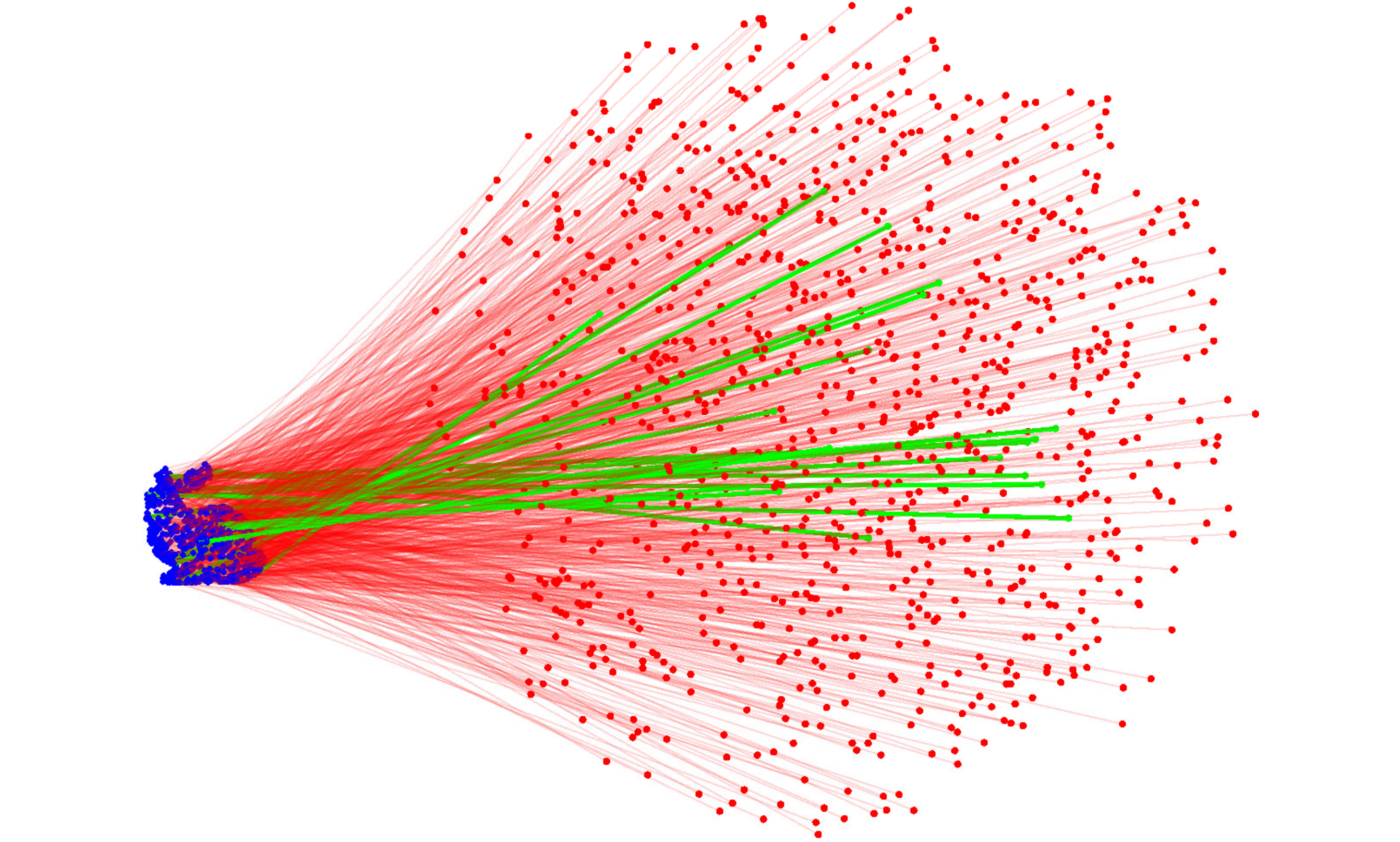}
\includegraphics[width=0.245\linewidth]{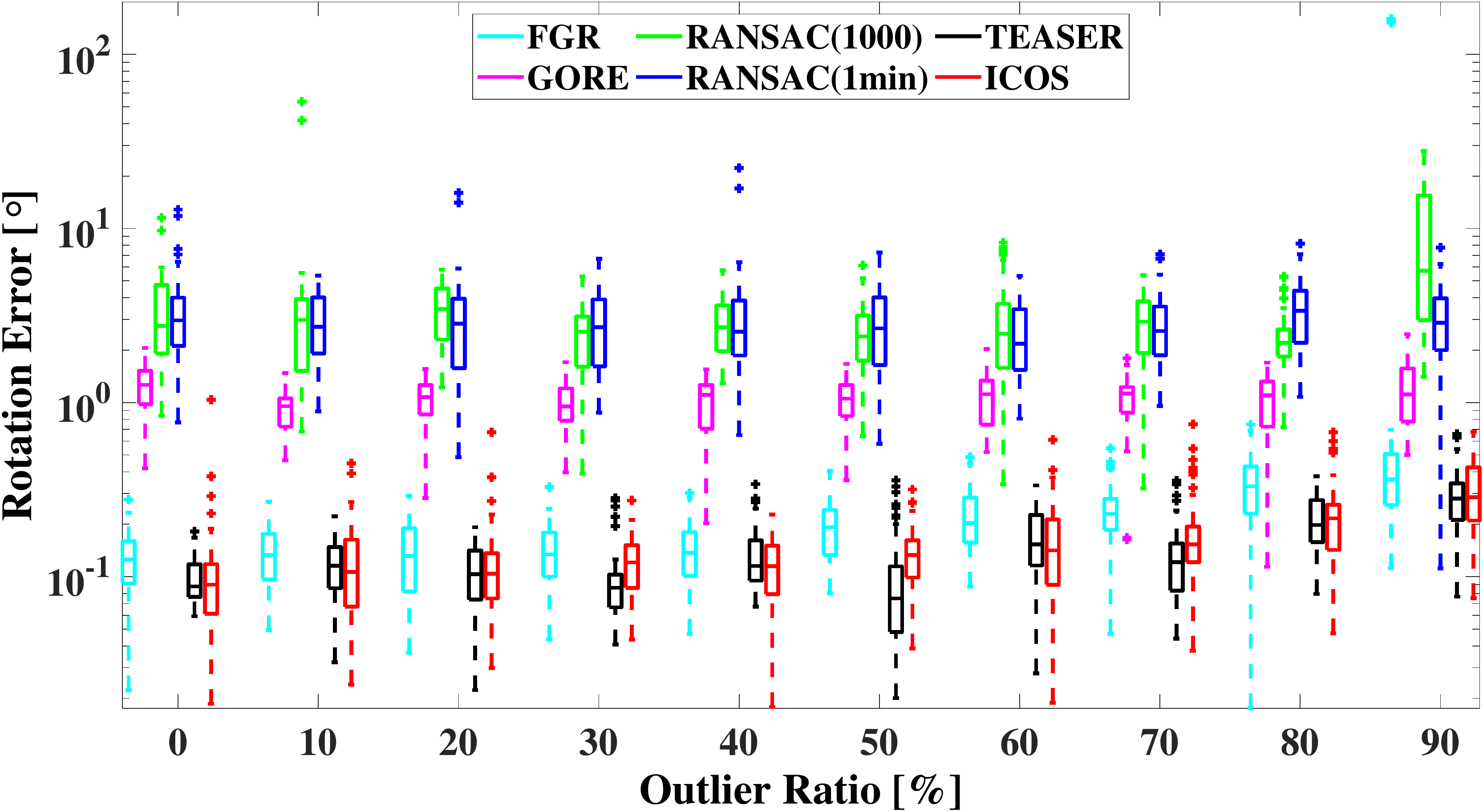}
\includegraphics[width=0.245\linewidth]{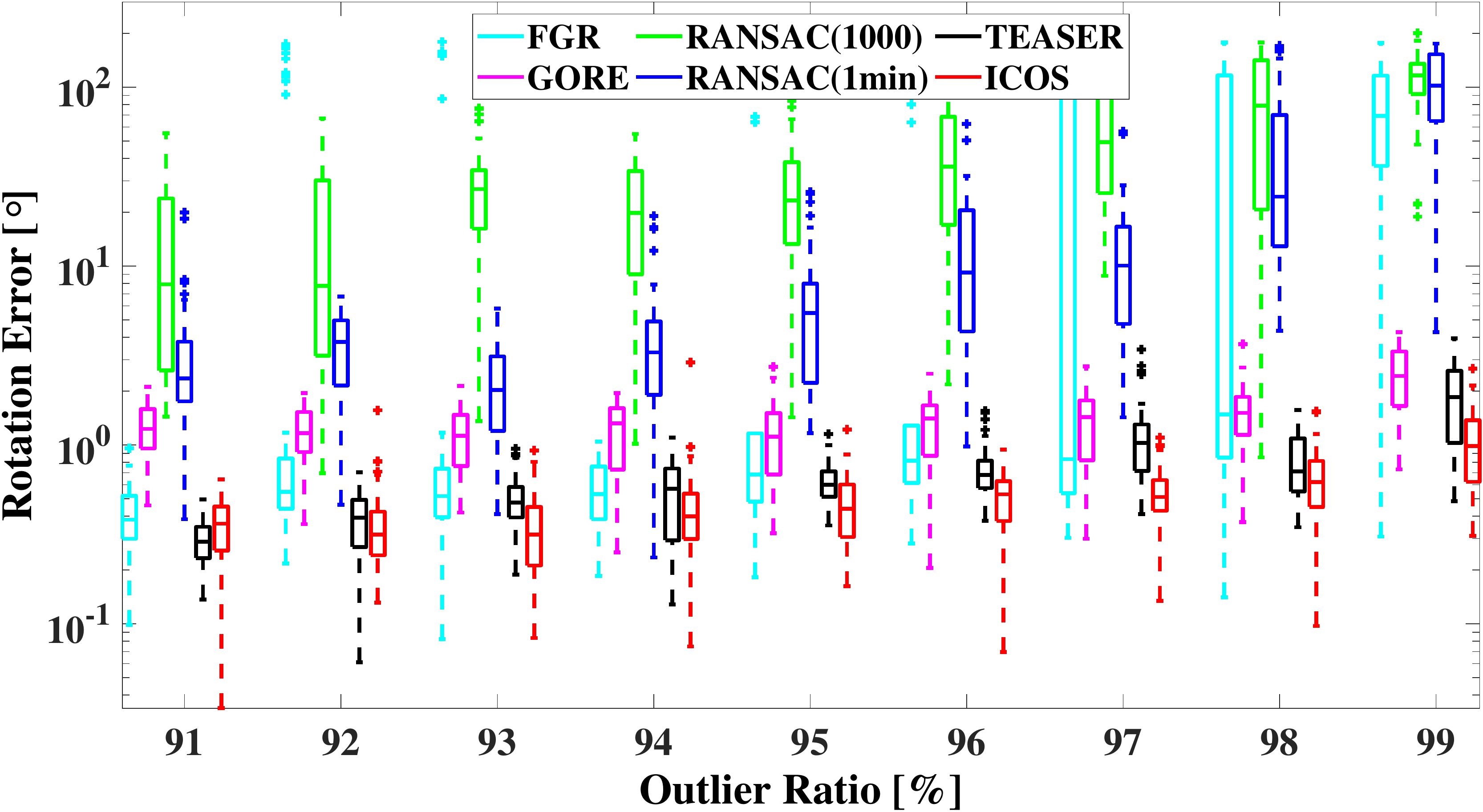}
\end{minipage}
}%

\subfigure{
\begin{minipage}[t]{1\linewidth}
\centering
\includegraphics[width=0.245\linewidth]{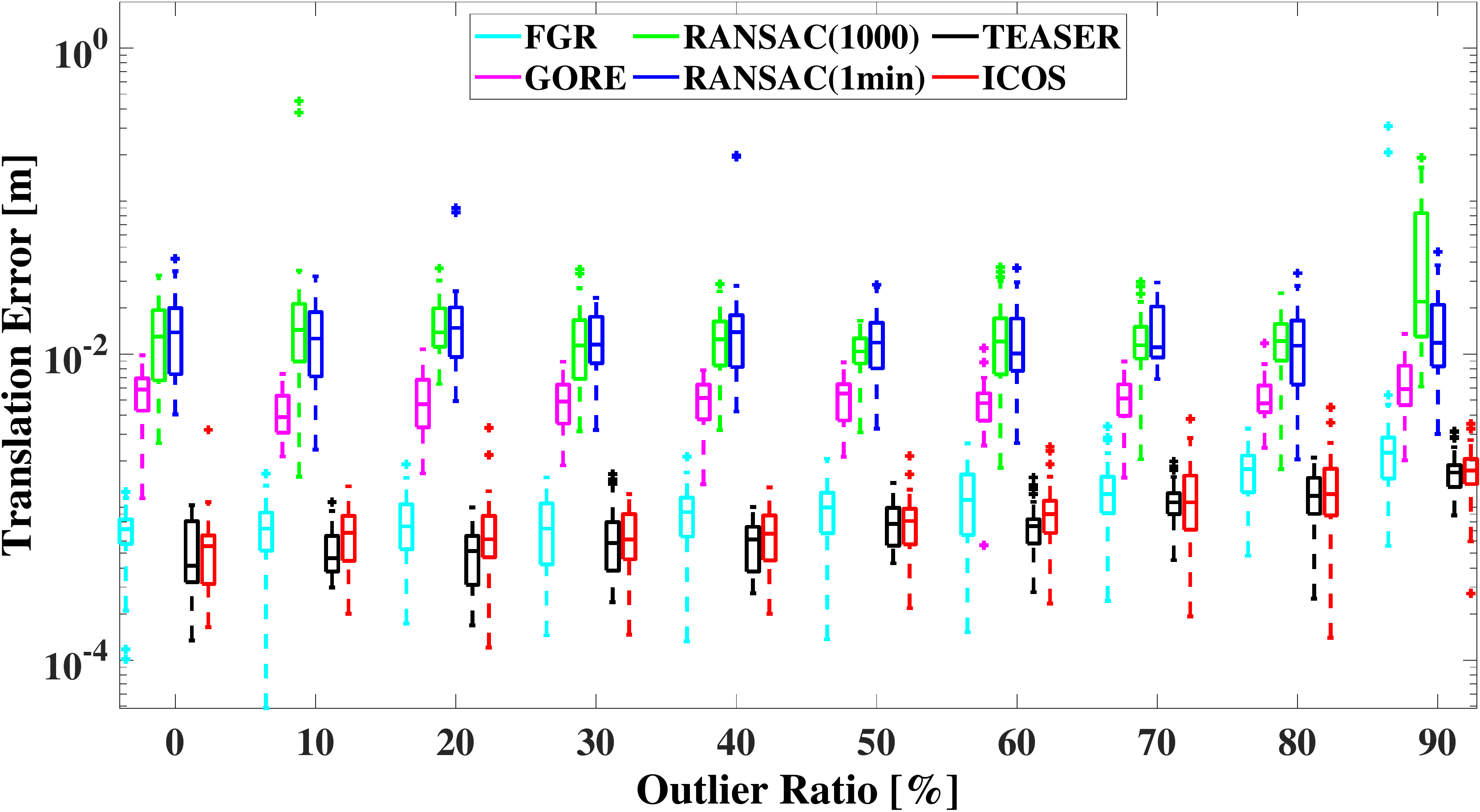}
\includegraphics[width=0.245\linewidth]{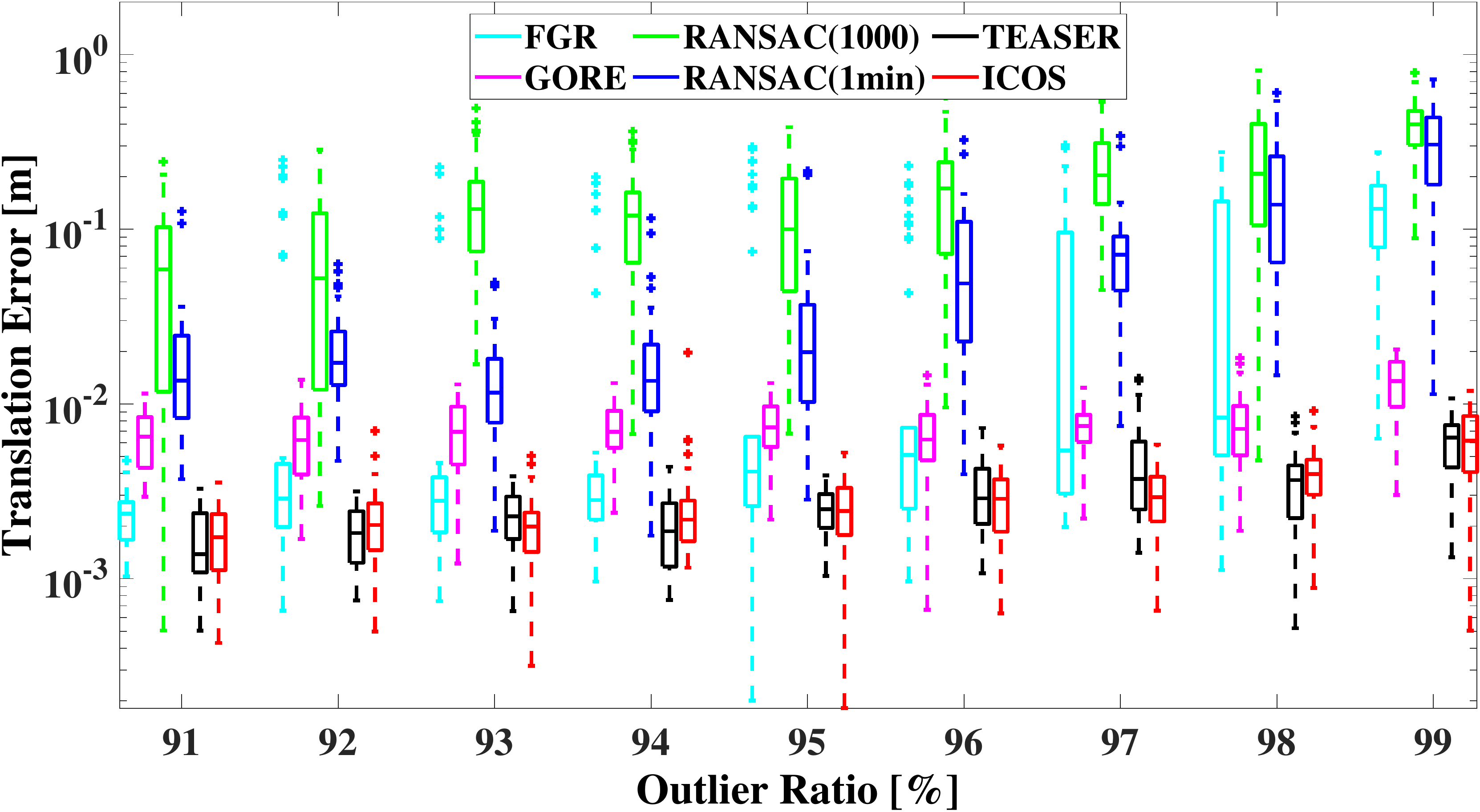}
\includegraphics[width=0.245\linewidth]{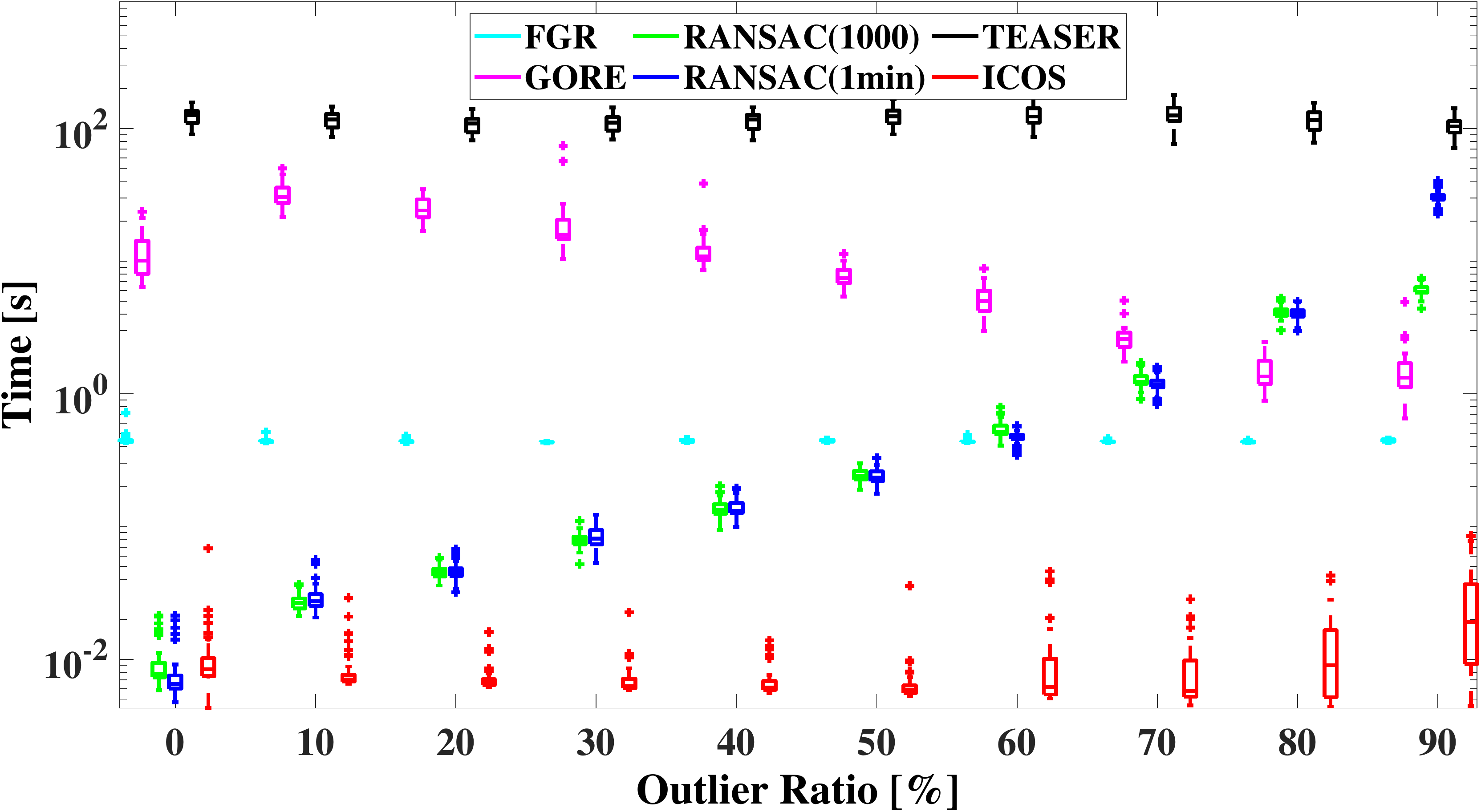}
\includegraphics[width=0.245\linewidth]{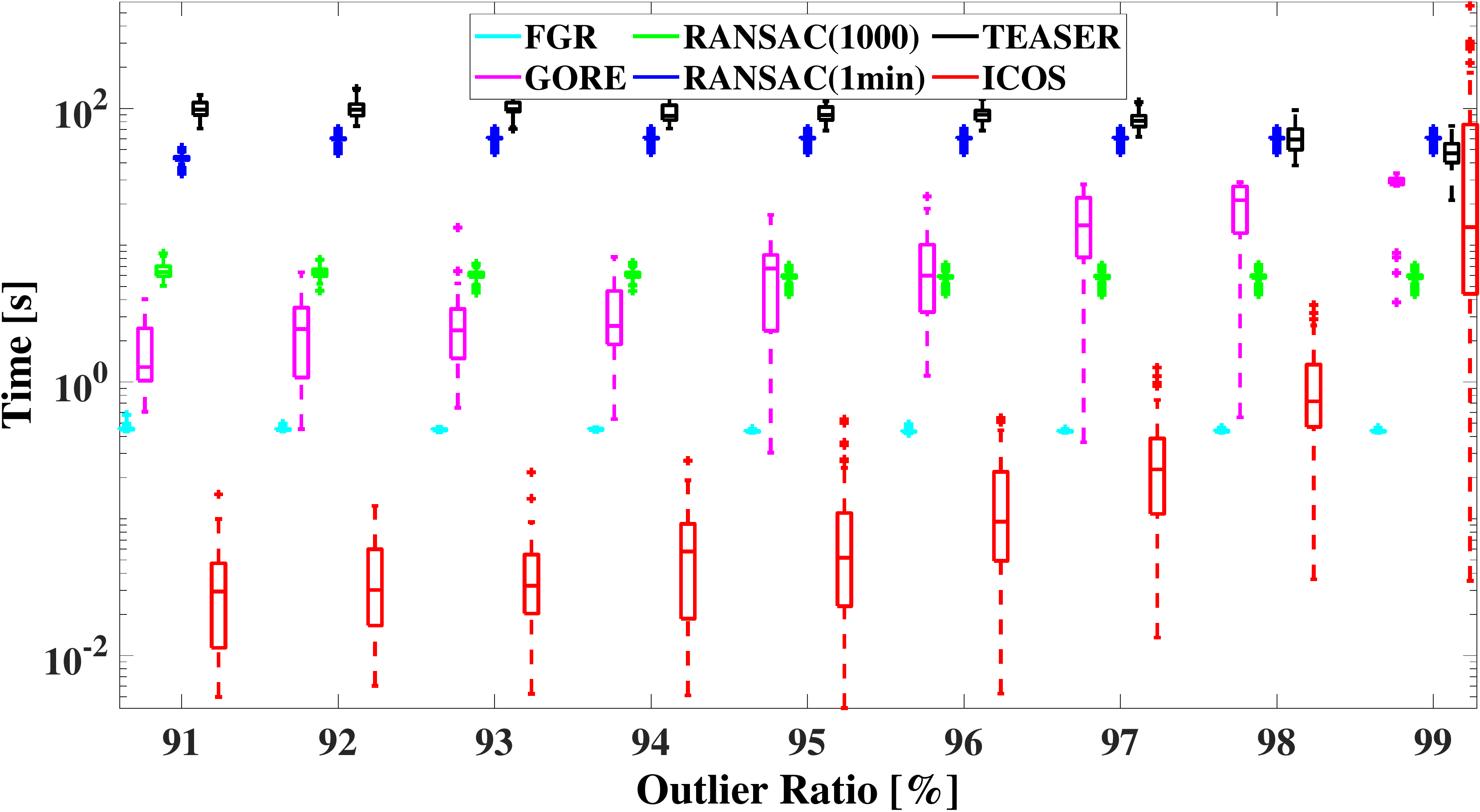}
\end{minipage}
}%

\footnotesize{(b)Point Cloud Registration with Unknown Scale: $\mathit{s}\in(1,5)$}

\subfigure{
\begin{minipage}[t]{1\linewidth}
\centering
\includegraphics[width=0.245\linewidth]{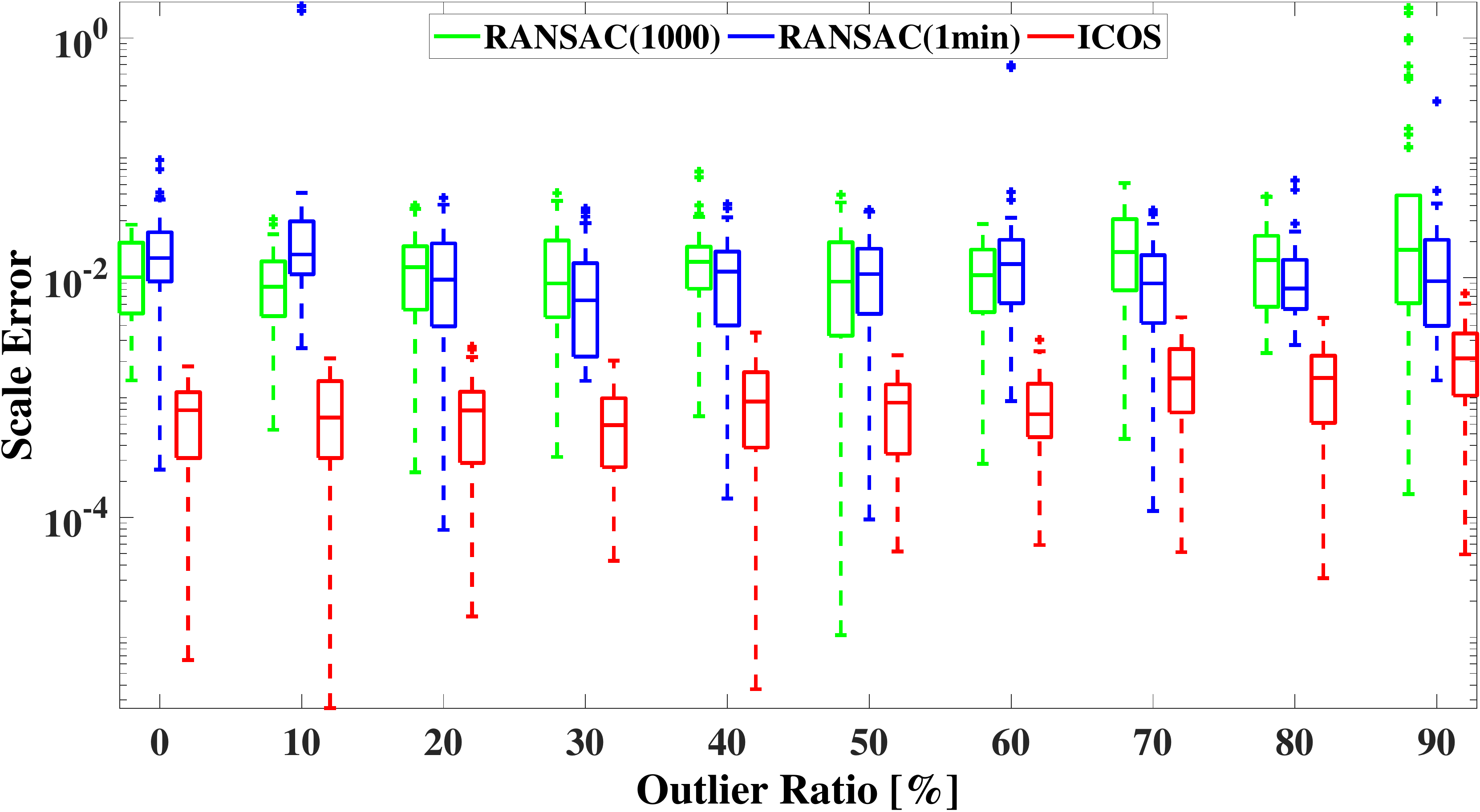}
\includegraphics[width=0.245\linewidth]{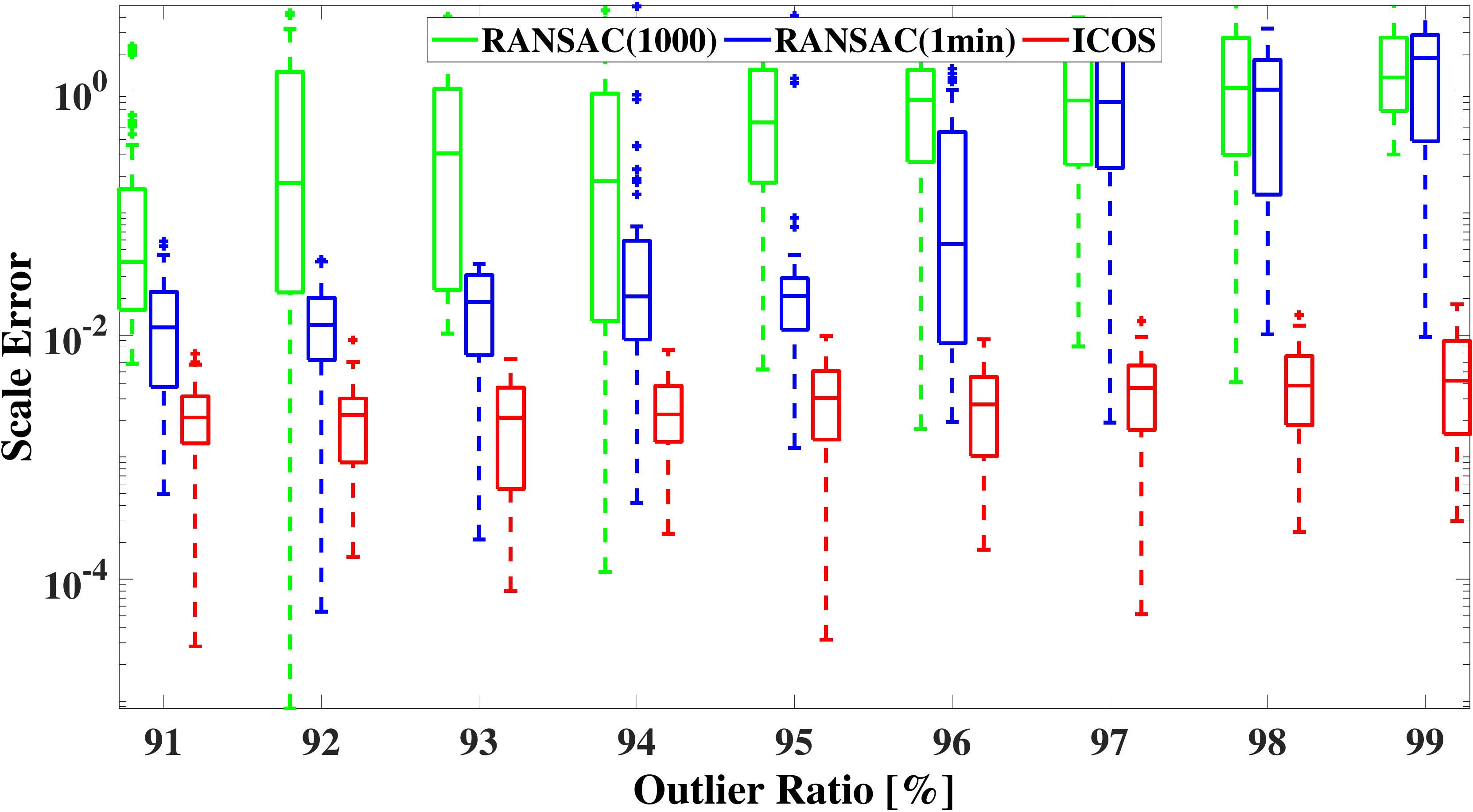}
\includegraphics[width=0.245\linewidth]{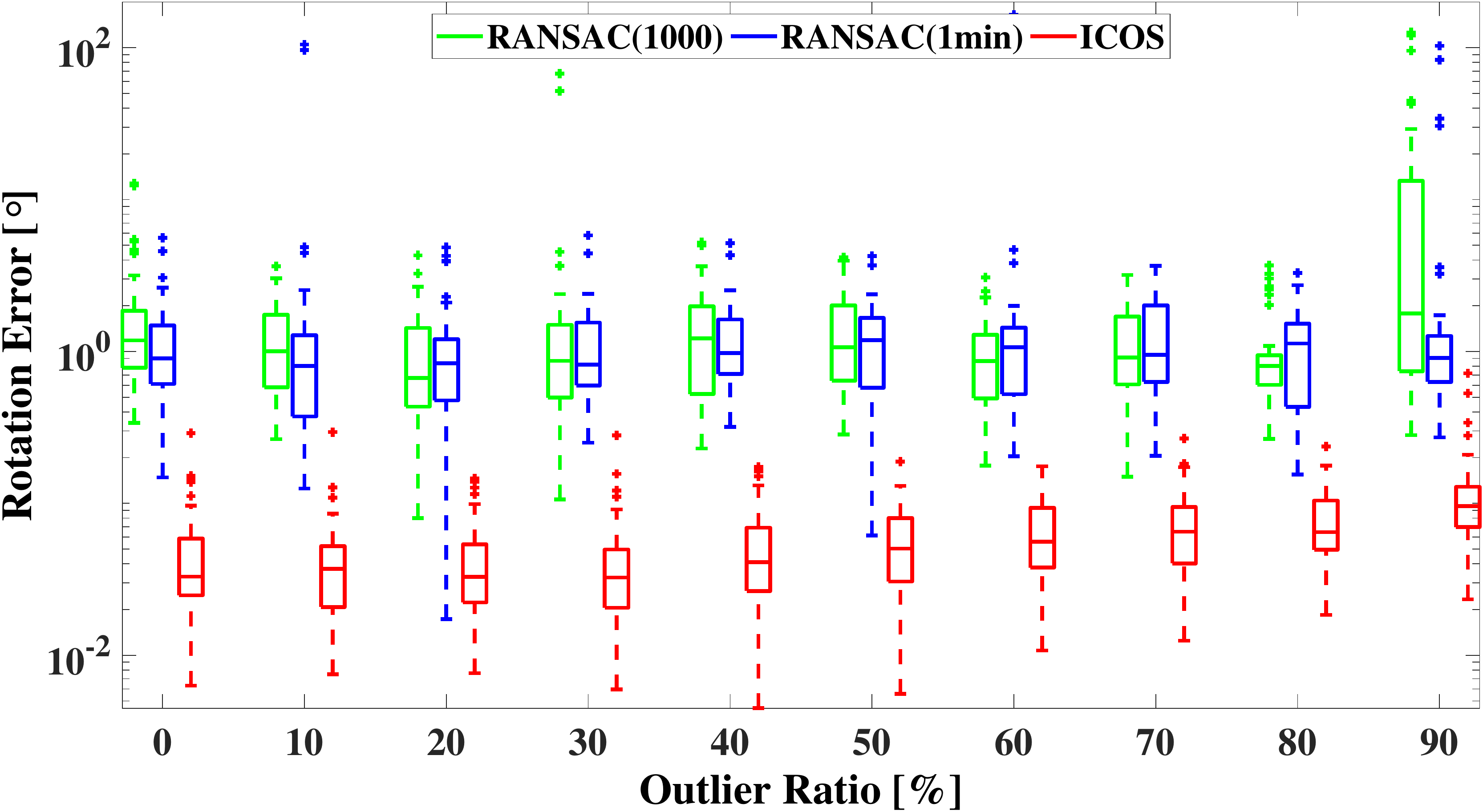}
\includegraphics[width=0.245\linewidth]{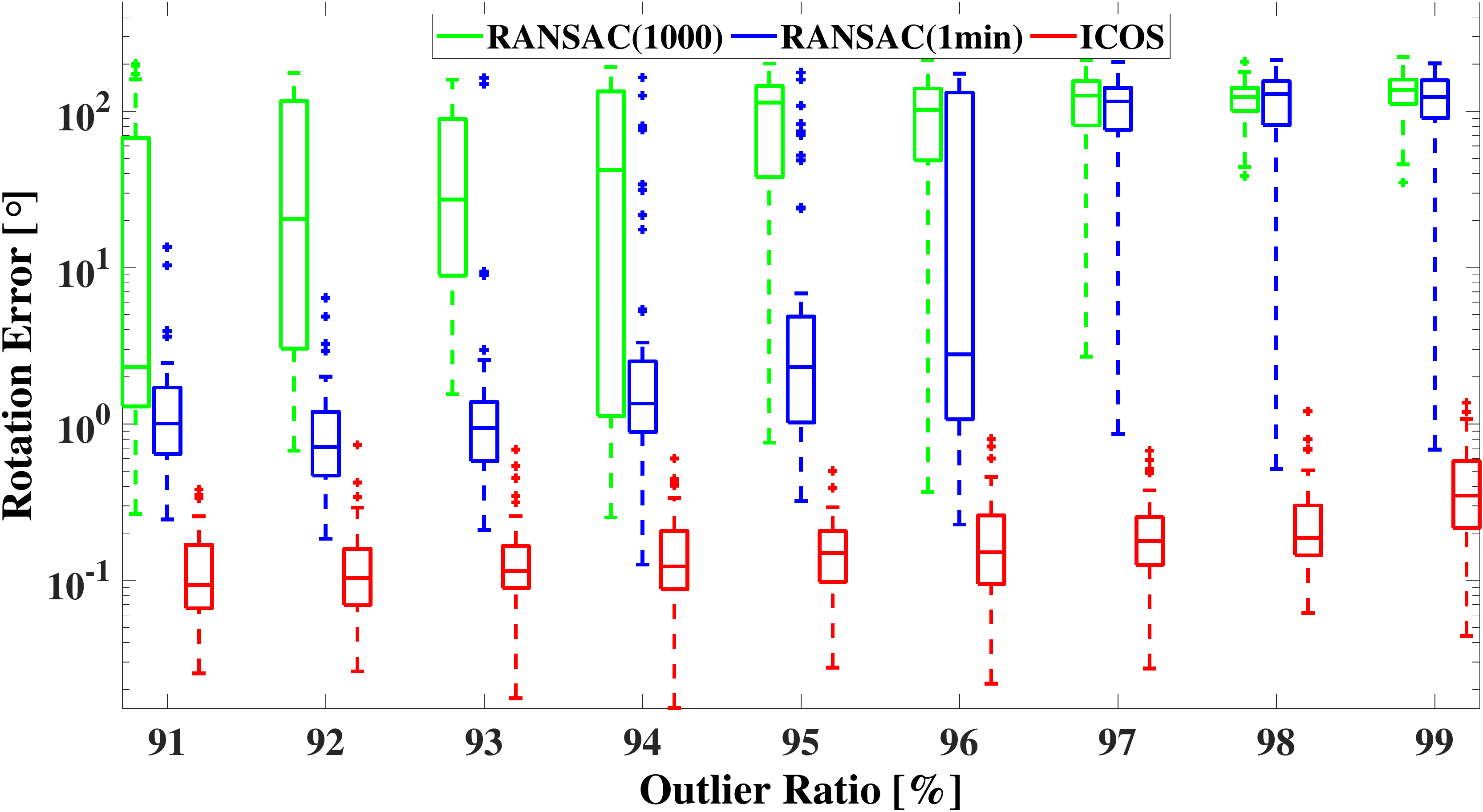}
\end{minipage}
}%

\subfigure{
\begin{minipage}[t]{1\linewidth}
\centering
\includegraphics[width=0.245\linewidth]{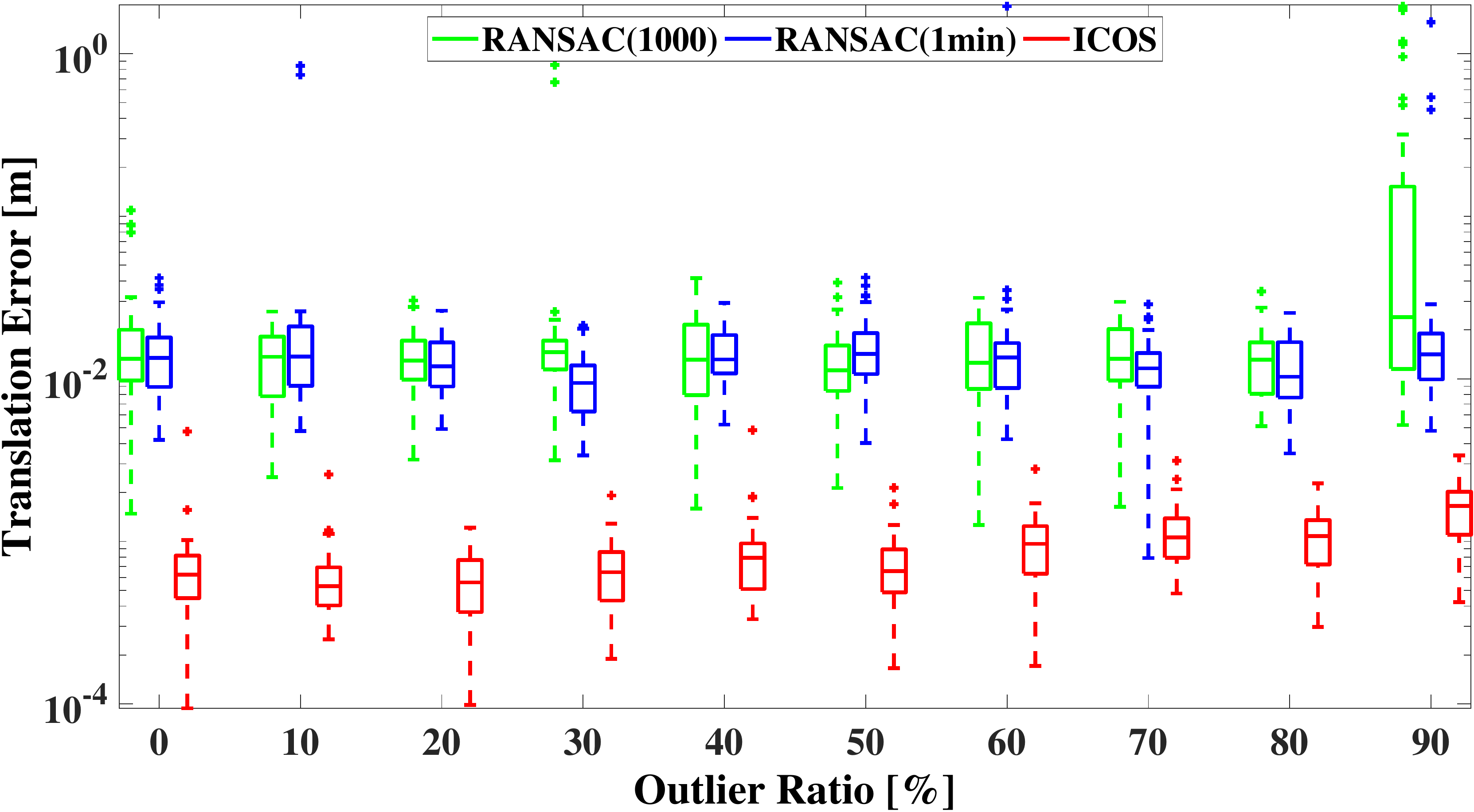}
\includegraphics[width=0.245\linewidth]{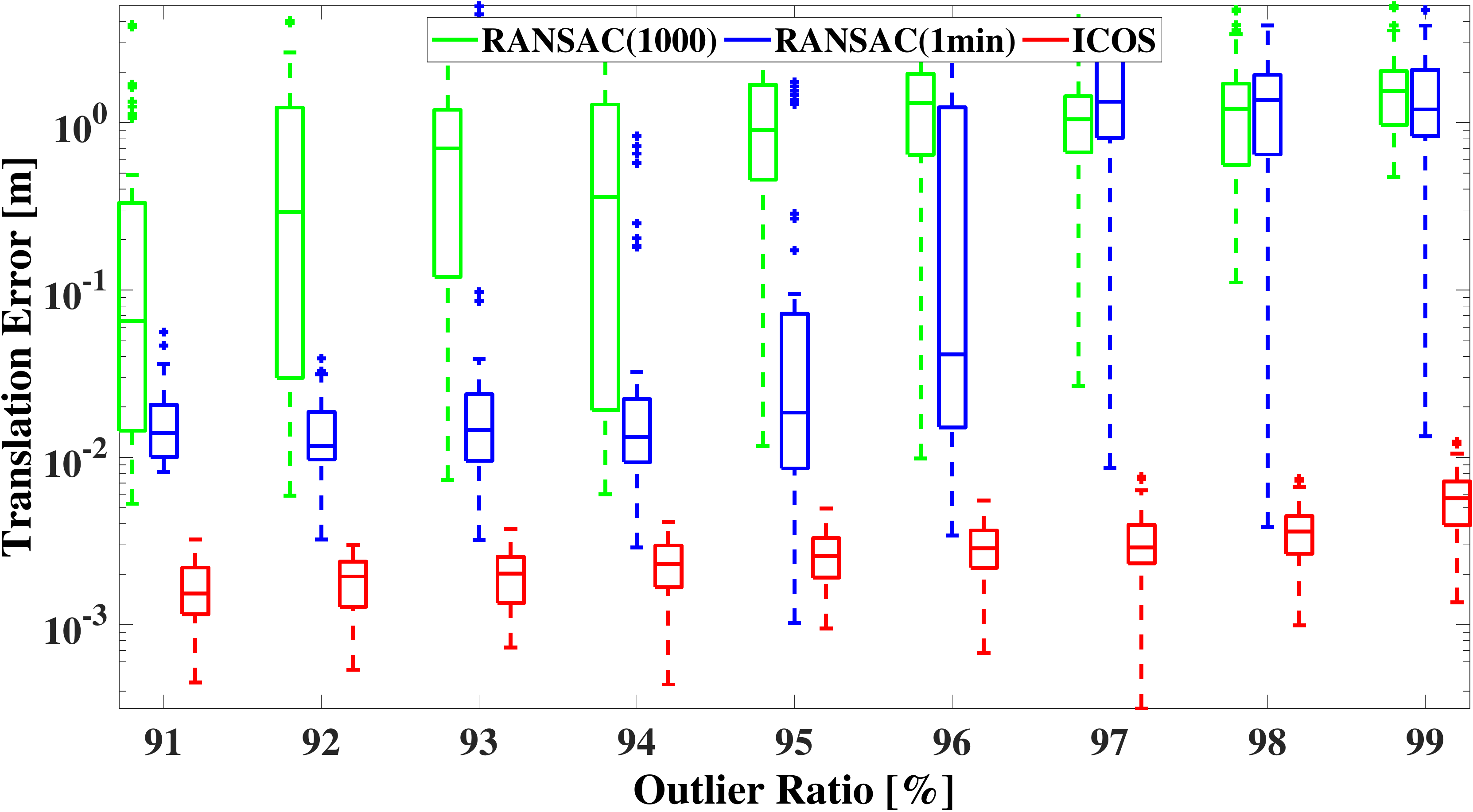}
\includegraphics[width=0.245\linewidth]{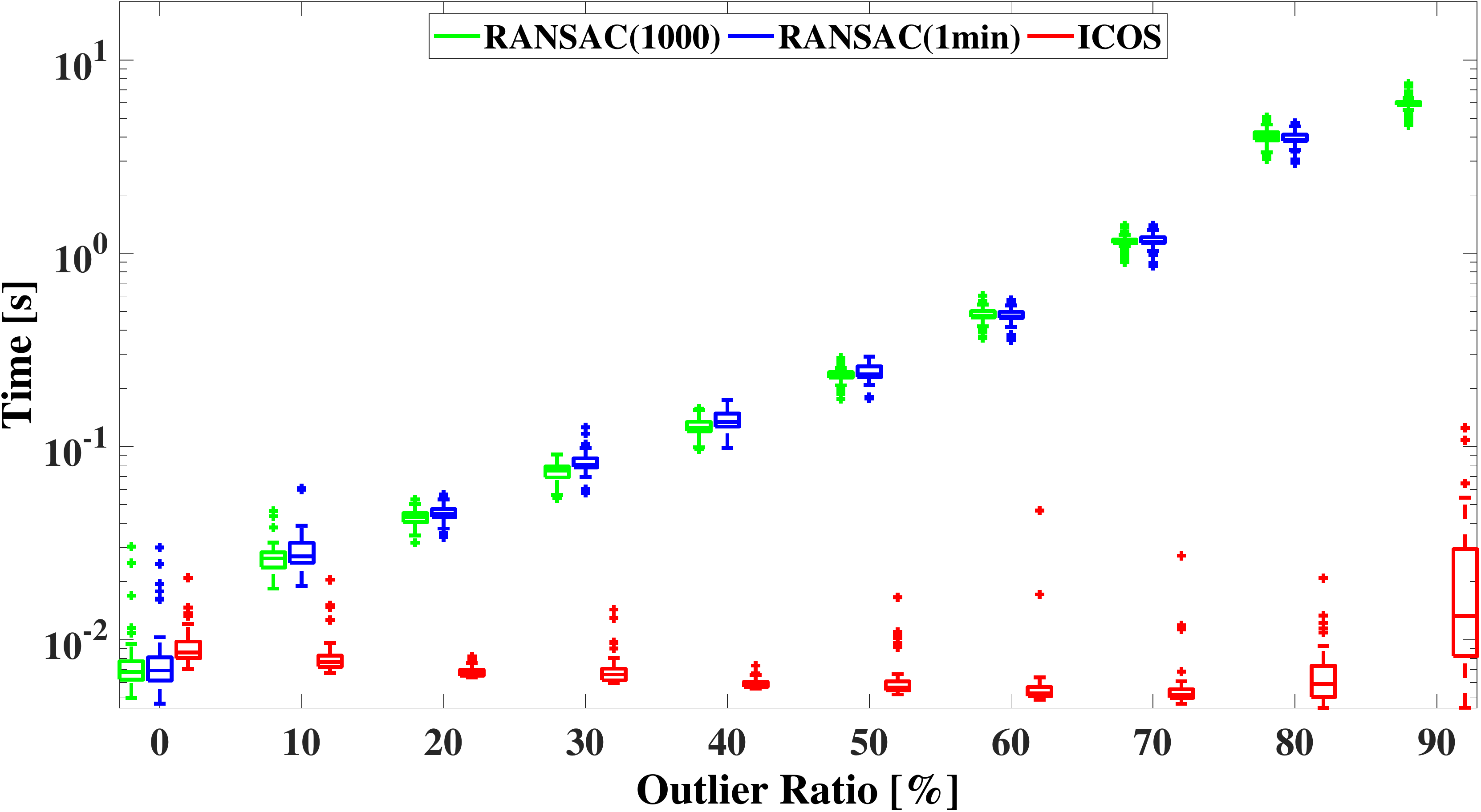}
\includegraphics[width=0.245\linewidth]{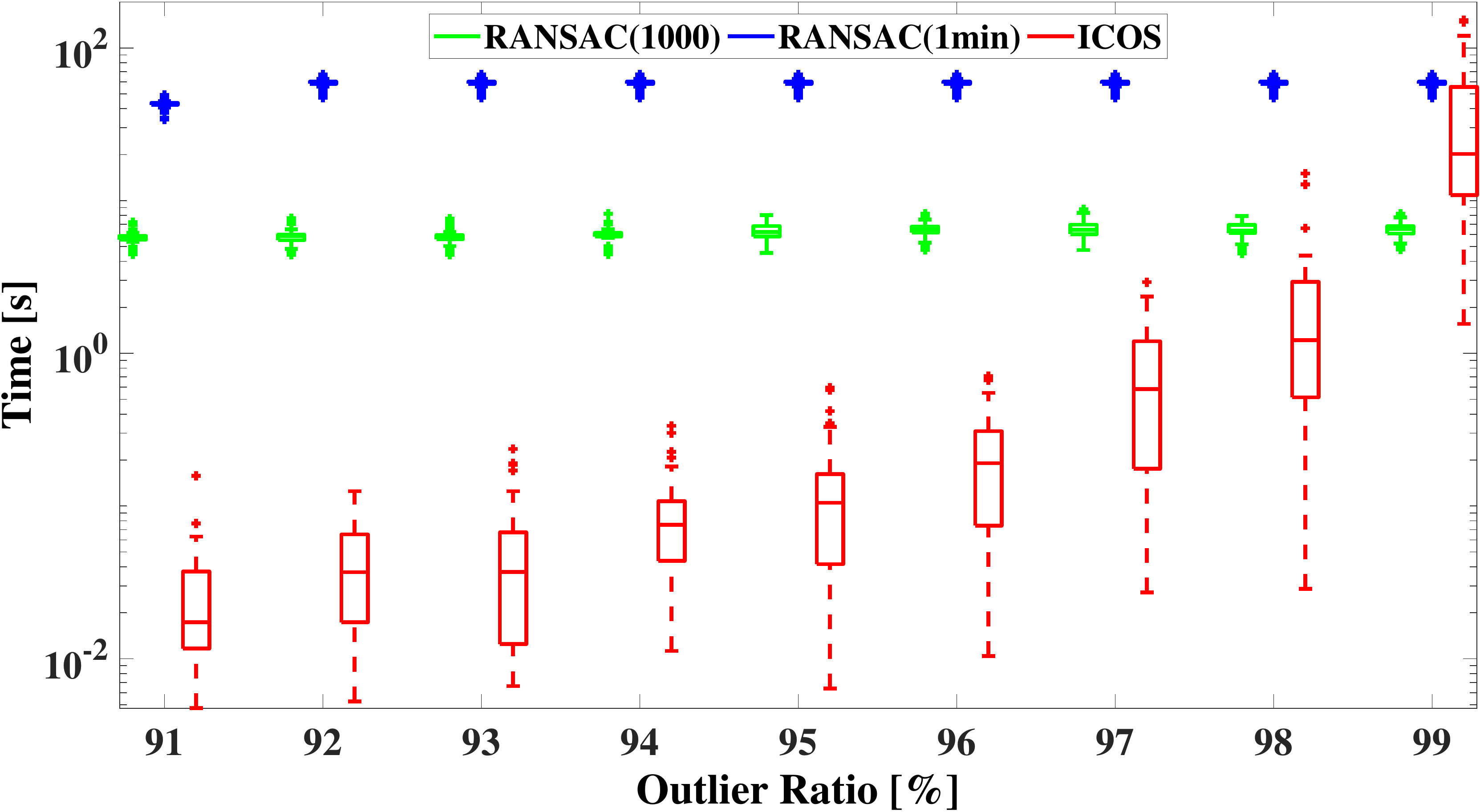}
\end{minipage}
}%

\centering
\caption{Benchmarking of point cloud registration over the `bunny'~\cite{curless1996volumetric}. Left-top: Examples of a known-scale and an unknown-scale point cloud registration problem with 98\% outliers. (a) Known-scale registration results w.r.t. increasing outlier ratio (0-99\%). (b) Unknown-scale registration results w.r.t. increasing outlier ratio (0-99\%).}
\label{bunny}
\vspace{-9pt}
\end{figure*}

\begin{figure}[t]
\centering
\subfigure{
\begin{minipage}[t]{0.95\linewidth}
\centering
\includegraphics[width=1\linewidth]{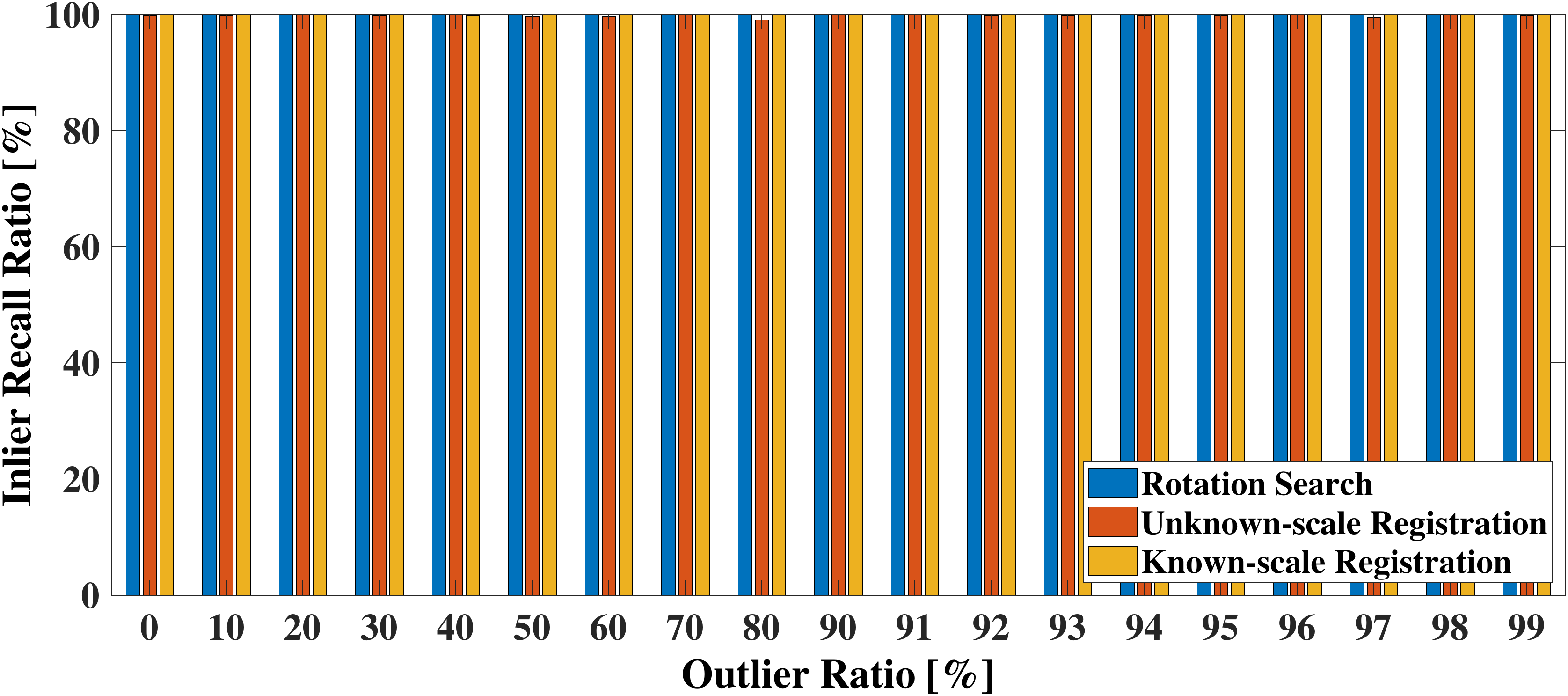}
\end{minipage}
}%
\vspace{-3mm}

\centering
\caption{Inlier recall ratio of ICOS in rotation search, and both known-scale and unknown-scale point cloud registration w.r.t. increasing outlier ratio (0-99\%).}
\label{Recall}
\end{figure}

\begin{figure}[t]
\centering
\subfigure{
\begin{minipage}[t]{1\linewidth}
\centering
\includegraphics[width=0.49\linewidth]{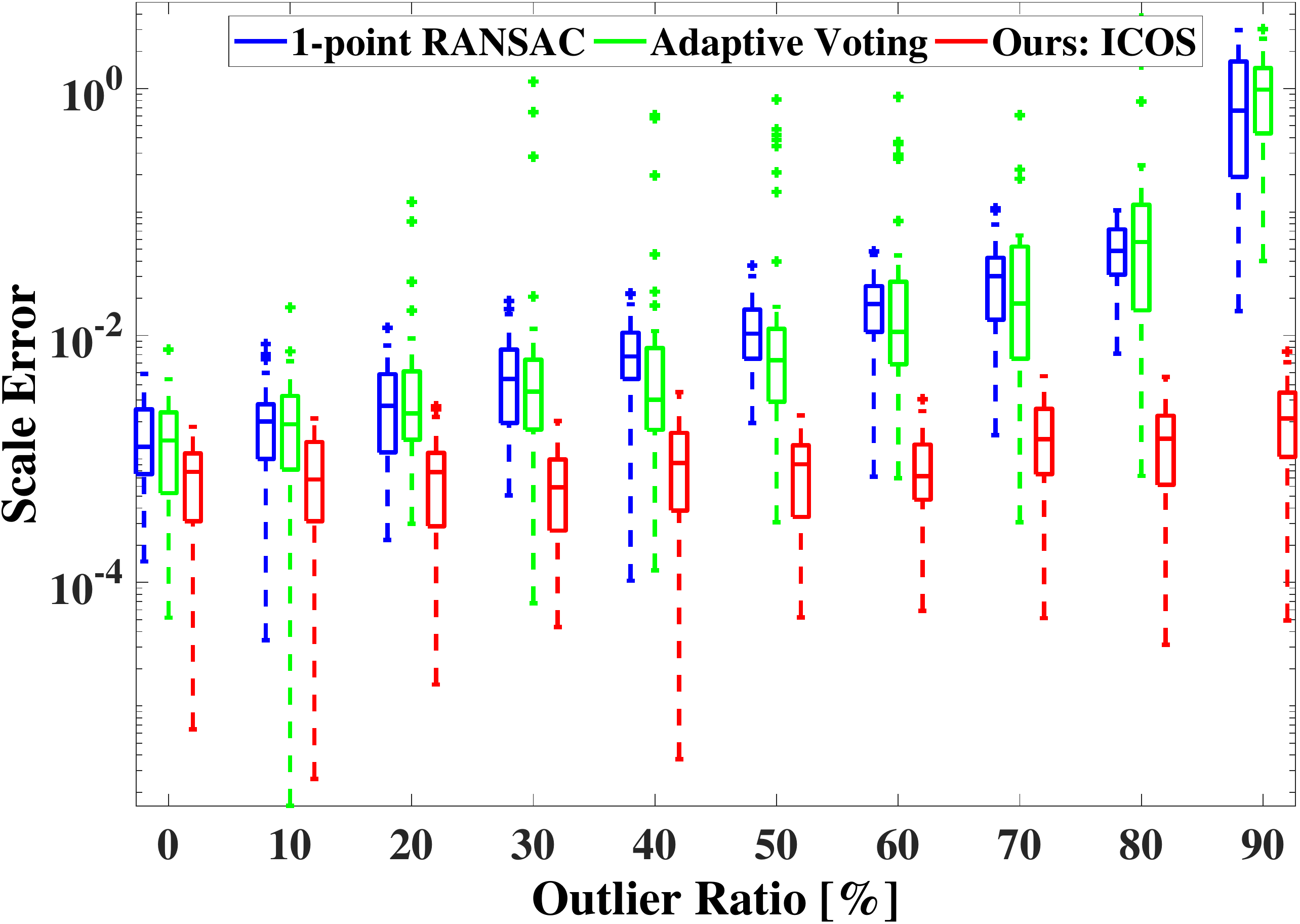}
\includegraphics[width=0.49\linewidth]{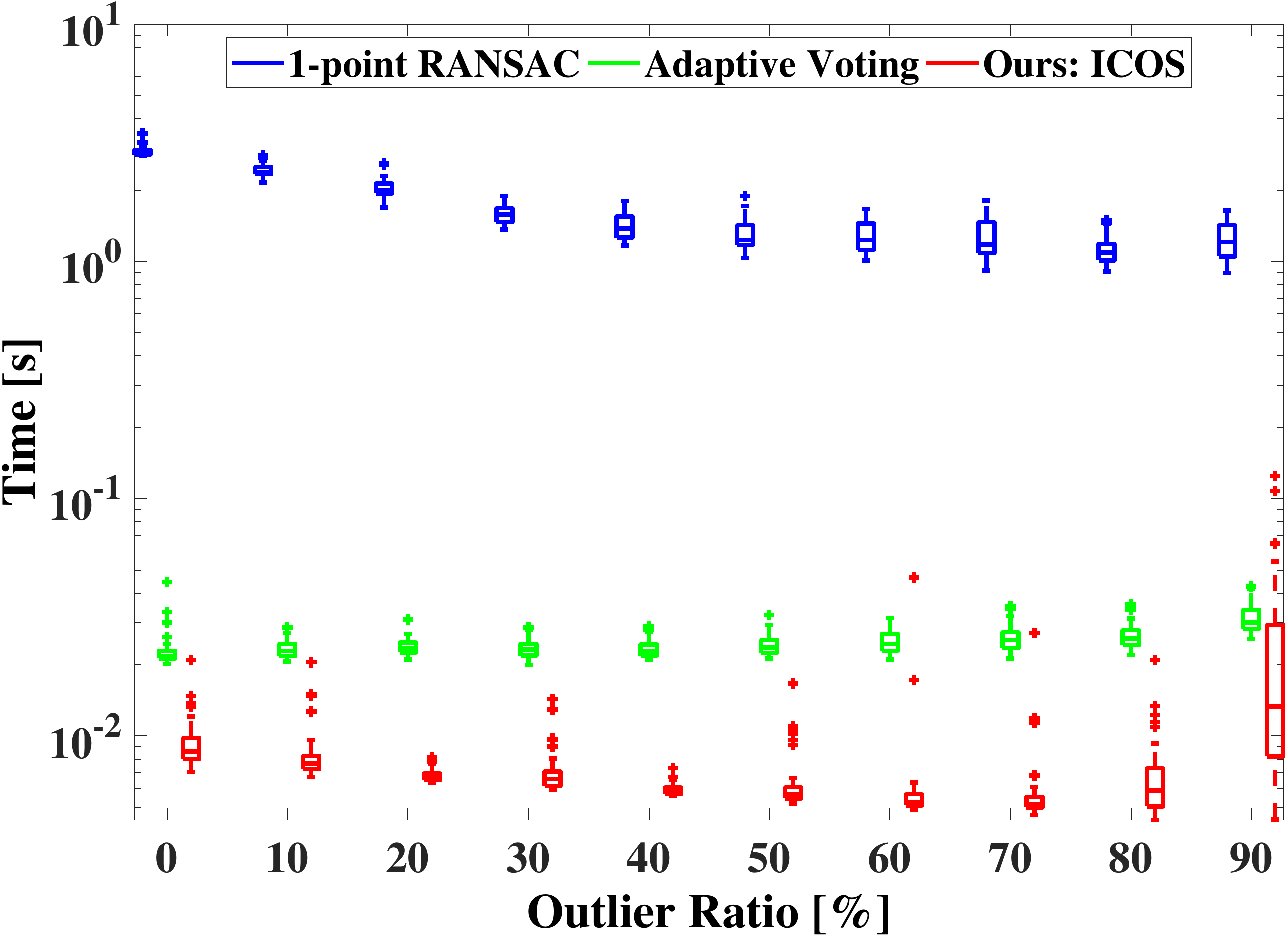}
\end{minipage}
}%
\vspace{-1mm}

\centering
\caption{Benchmarking of point cloud registration on scale estimation. Left: Scale errors w.r.t. increasing outlier ratio (0-99\%). Right: Runtime w.r.t. increasing outlier ratio (0-99\%).}
\label{Scale}
\end{figure}

\section{Experiments}\label{Experiments}

In this section, we conduct a series of experiments based on multiple datasets to evaluate the proposed solver ICOS for point cloud registration, also in comparison with the existing state-of-the-art solvers. All the experiments are implemented in Matlab on a laptop with an i7-7700HQ CPU and 16GB RAM, and no parallelism programming is ever used. And the explicit parameter setup for ICOS is given in Table~\ref{parameter}.

\subsection{Benchmarking of Rotation Search on Synthetic Data}\label{RS-overall}

\textbf{Experimental Setup.} We test our solver ICOS for rotation search over synthetic experiments. We first randomly generate $N=\{100,500,1000\}$ vectors $\mathcal{U}=\{\mathbf{u}_i\}_{i=1}^{N}$ with unit-norm, and rotate $\mathcal{U}$ with a random rotation $\boldsymbol{R}\in SO(3)$ to obtain a new vector set $\mathcal{V}=\{\mathbf{v}_i\}_{i=1}^{N}$ in another coordinate frame. We then add random zero-mean Gaussian noise with $\sigma=0.01$ and $\mu=0$ to all vectors in $\mathcal{V}$. After that, a portion of the vectors in $\mathcal{V}$ (from 0\% up to 99\%) are substituted by randomly-generated unit-norm vectors to generate outliers among correspondences. All the experimental results are obtained over 50 Monte Carlo runs in the same environment.

To quantitatively represent the errors of rotation, we adopt the geodesic errors such that
\begin{equation}\label{Rot-error}
E_{\boldsymbol{R}}=\angle (\boldsymbol{\hat{R}},\boldsymbol{{R}}_{gt} )\cdot \frac{180}{\pi}^{\circ},
\end{equation}
where`$\angle$' is the geodesic error~\eqref{geo-error}~\cite{hartley2013rotation} and the subscript `$gt$' denotes the ground-truth value.

For benchmarking, we test our solver ICOS (Algorithm~\ref{Algo0-ICOS}) against FGR~\cite{zhou2016fast}, GORE~\cite{parra2015guaranteed}, RANSAC and BnB~\cite{parra2014fast}. In terms of FGR, we merely solve the rotation rather than the whole transformation using the Gauss-Newton method. As for RANSAC, we adopt Horn's minimal method~\cite{horn1987closed} to solve the rotation with the confidence set to 0.995, where we design two stop conditions: (i) RANSAC with at most 100 iterations, named RANSAC(100), and (ii) RANSAC with at most 1000 iterations, named RANSAC(1000). When $N=1000$, we exclude BnB as it takes tens of minutes per run. QUASAR~\cite{yang2019quaternion} is also not adopted due to its fairly long runtime.

\textbf{Results.} Fig.~\ref{Syn-RS} displays the benchmarking results on both accuracy and runtime. It is not hard to observe that: (i) both FGR and RANSAC(100) fail at 90\% and RANSAC(1000) breaks at 98\%, while ICOS is robust against 95-96\% with $N=100$, 98\% with $N=500$, and 99\% with $N=1000$, (iii) ICOS is the most accurate solver  for all time since it constantly demonstrates the lowest rotation errors, and (iv) with $N=\{500,1000\}$, ICOS is the fastest solver when the outlier ratio is below 95\%, and with $N=100$, it has similar speed as GORE when the outlier ratio is no greater than 60\%.

\subsection{Benchmarking of Point Cloud Registration on Real Data}\label{overall}

\textbf{Experimental Setup.} We then evaluate ICOS against other state-of-the-art methods over the `bunny' and the `Armadillo' point clouds from the Stanford 3D Scanning Repository~\cite{curless1996volumetric}. We first randomly downsample the point cloud to 1000 points and resize it to be placed in a cube of $[-0.5,0.5]^3$ meters, which can be regarded as the initial point set $\mathcal{P}=\{\mathbf{P}_i\}_{i=1}^{1000}$. Then, we generate a random rigid transformation: ($\mathit{s}, \boldsymbol{R},\boldsymbol{t}$), in which scale $\mathit{s}\in[1,5)$ meters, rotation $\boldsymbol{R}\in SO(3)$ and translation $||\boldsymbol{t}||\leq 3$ meters, to transform point set $\mathcal{P}$. We add random Gaussian noise with $\sigma=0.01$ and $\mu=0$ to the transformed point set, as the point set $\mathcal{Q}=\{\mathbf{Q}_i\}_{i=1}^{1000}$. To create outliers in clutter, a portion of the points in $\mathcal{Q}$ are substituted by random points inside a sphere of diameter $\mathit{s}\sqrt{3}$ meters according to the outlier ratio (from 0\% up to 99\%), as exemplified in Fig.~\ref{bunny}. All the experimental results are obtained over 50 Monte Carlo runs in the same environment.

\begin{figure*}[t]
\centering

\footnotesize{(a)Point Cloud Registration with Known Scale: $\mathit{s}=1$}

\subfigure{
\begin{minipage}[t]{1\linewidth}
\centering
\includegraphics[width=0.245\linewidth]{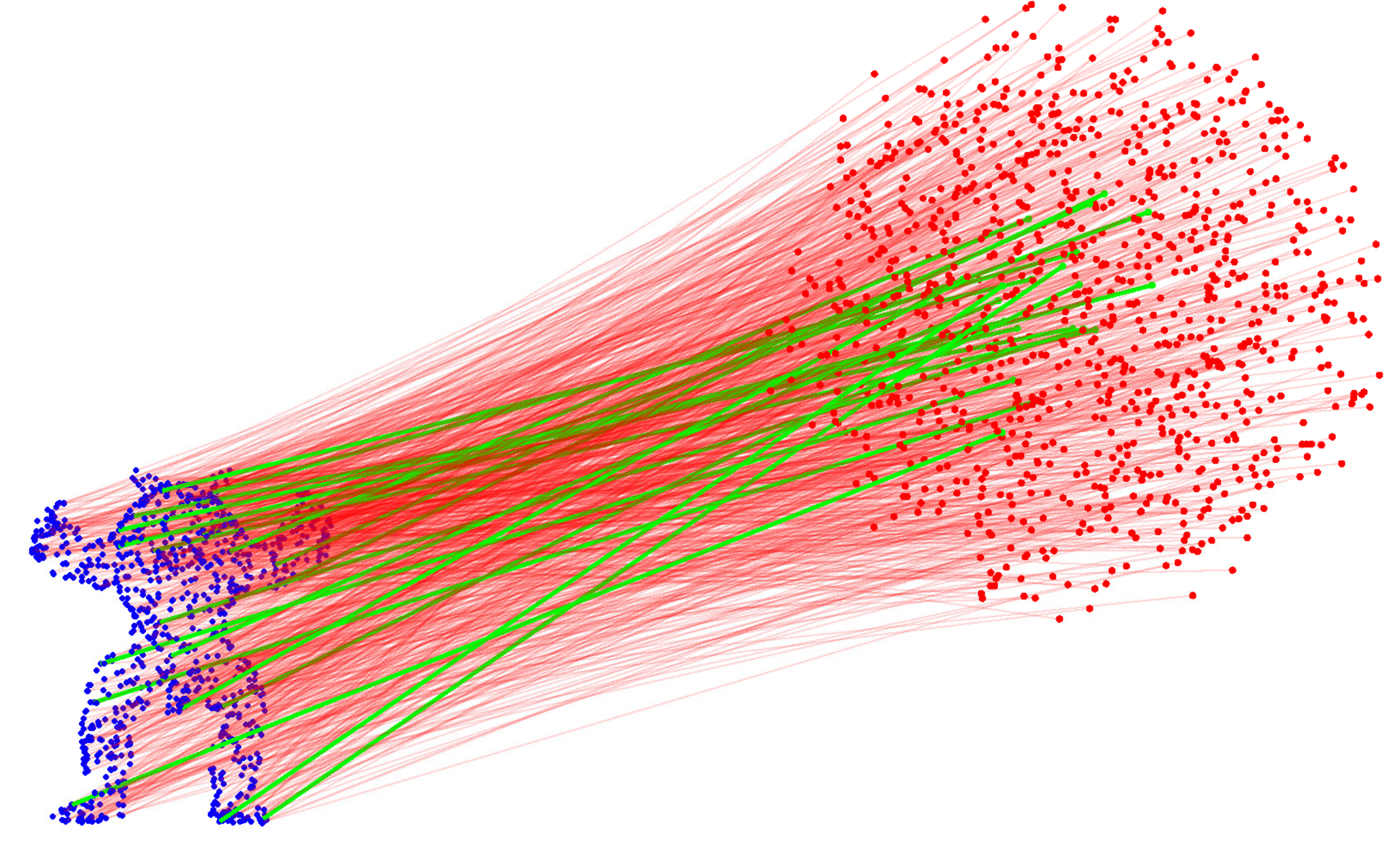}
\includegraphics[width=0.245\linewidth]{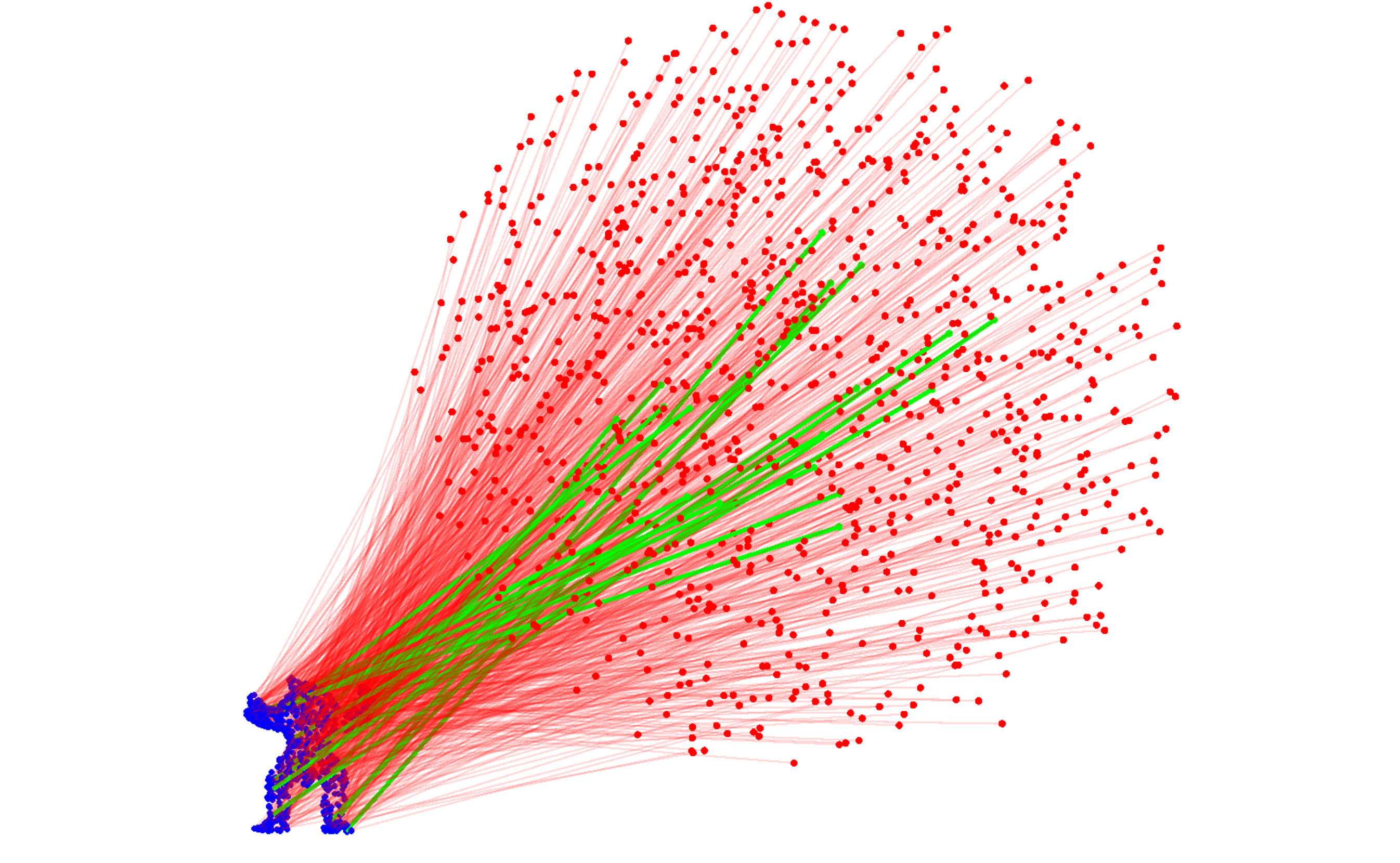}
\includegraphics[width=0.245\linewidth]{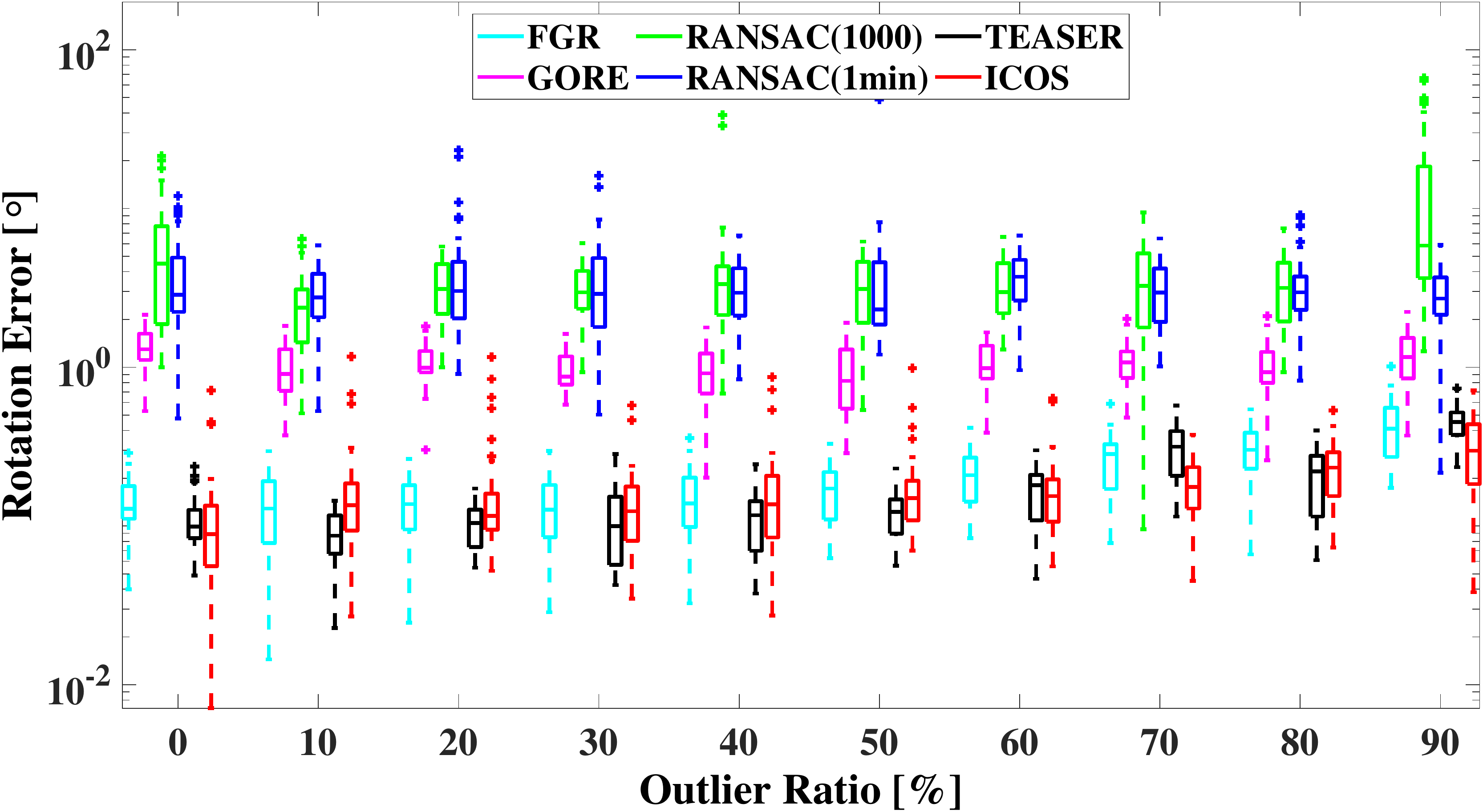}
\includegraphics[width=0.245\linewidth]{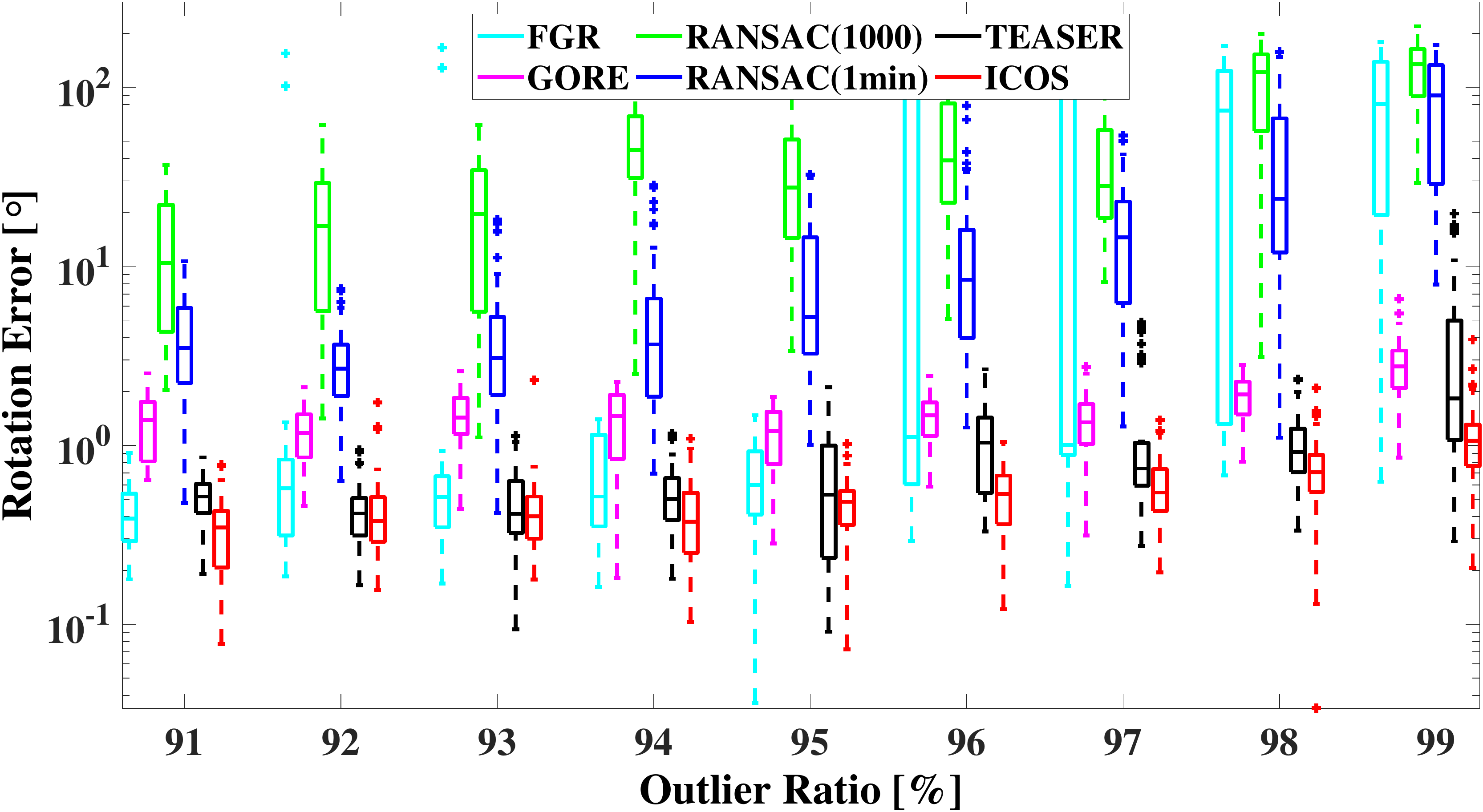}
\end{minipage}
}%

\subfigure{
\begin{minipage}[t]{1\linewidth}
\centering
\includegraphics[width=0.245\linewidth]{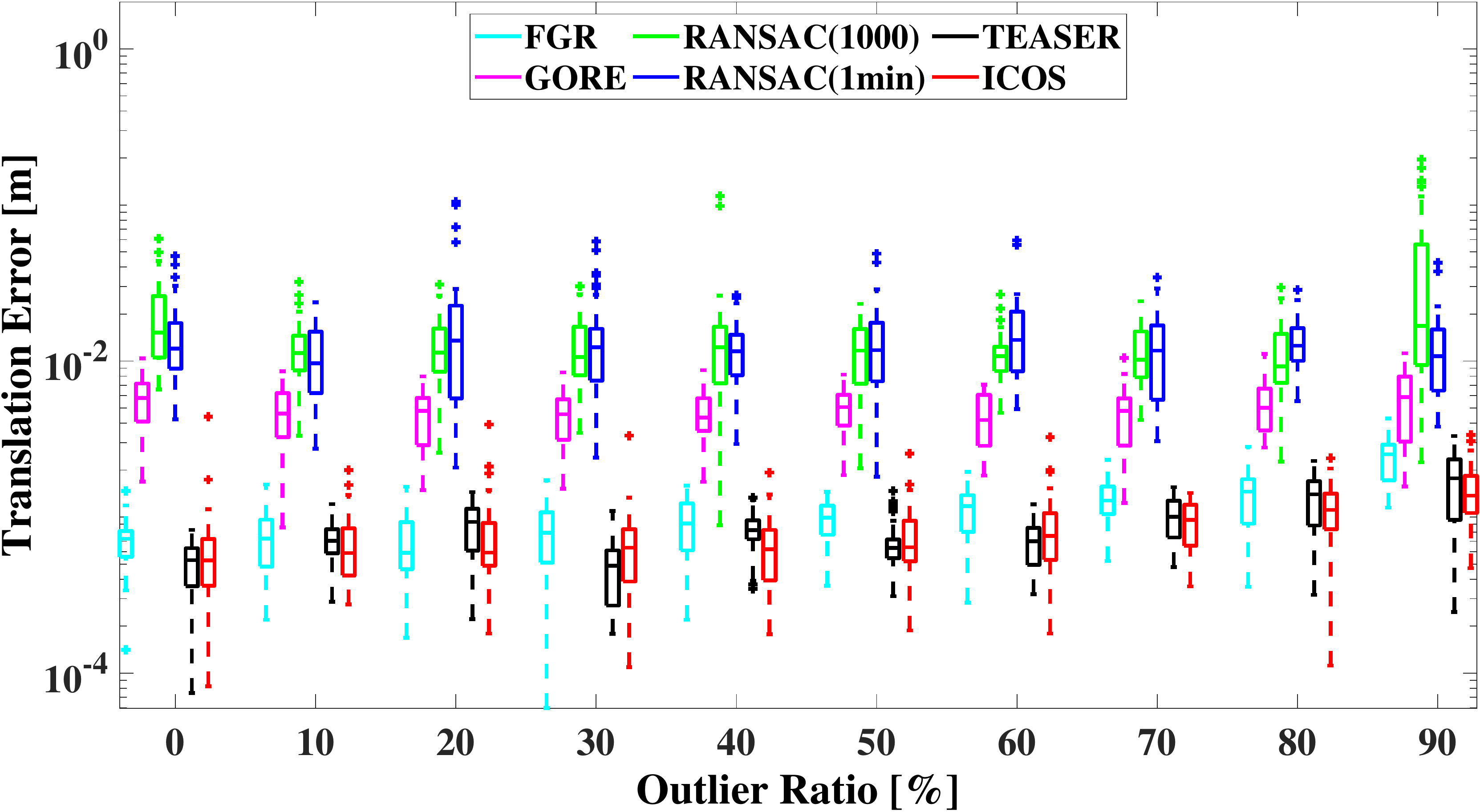}
\includegraphics[width=0.245\linewidth]{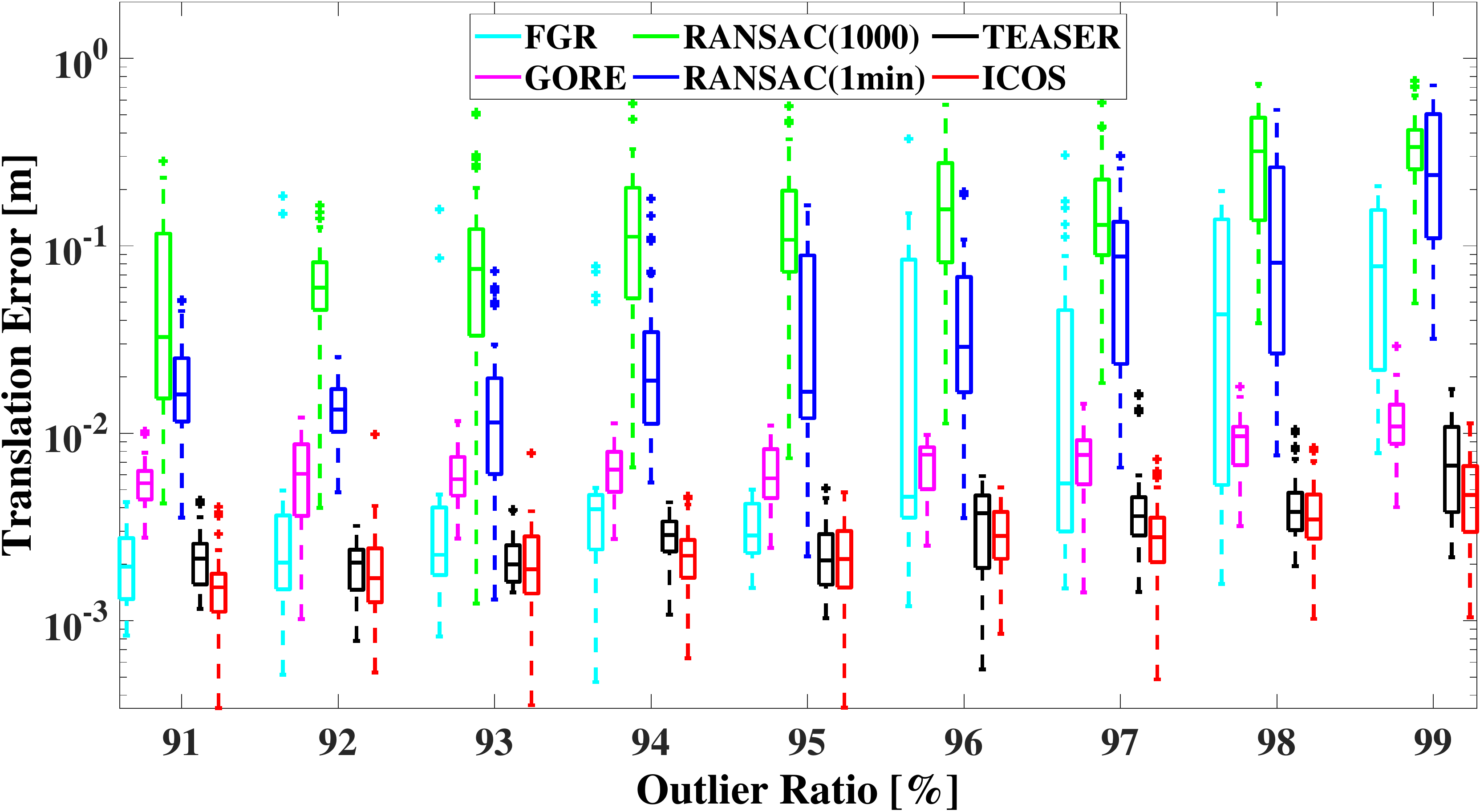}
\includegraphics[width=0.245\linewidth]{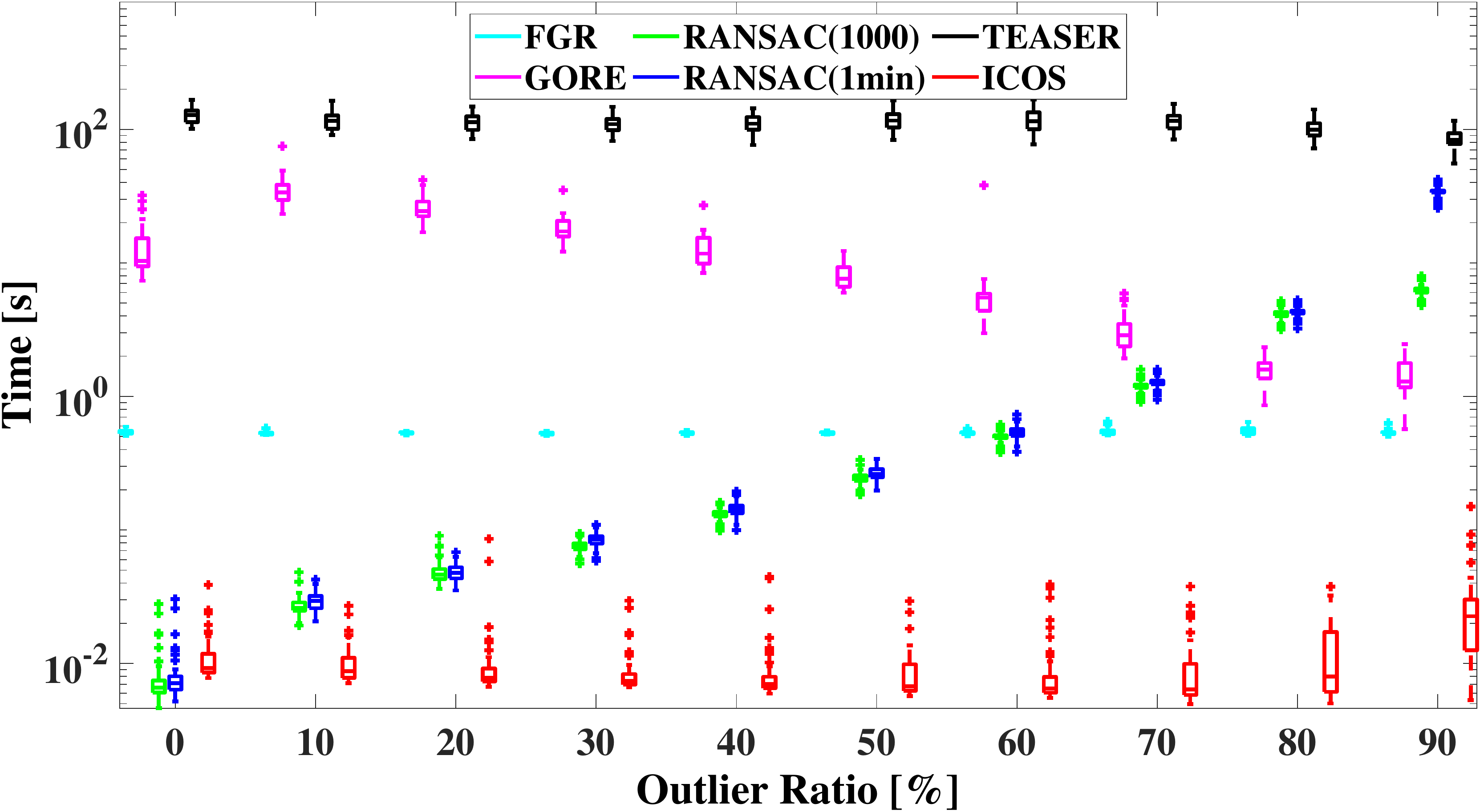}
\includegraphics[width=0.245\linewidth]{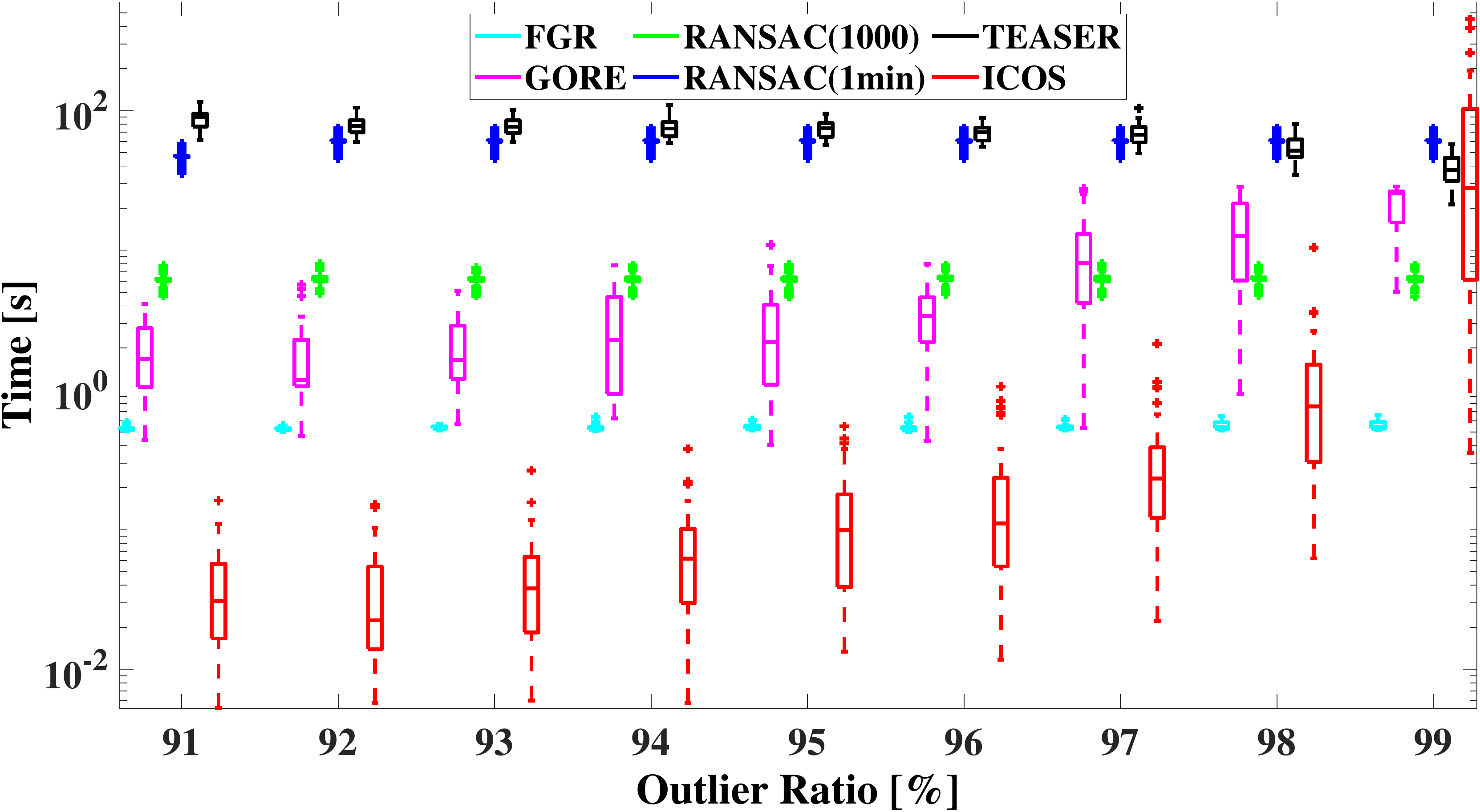}
\end{minipage}
}%

\footnotesize{(b)Point Cloud Registration with Unknown Scale: $\mathit{s}\in(1,5)$}

\subfigure{
\begin{minipage}[t]{1\linewidth}
\centering
\includegraphics[width=0.245\linewidth]{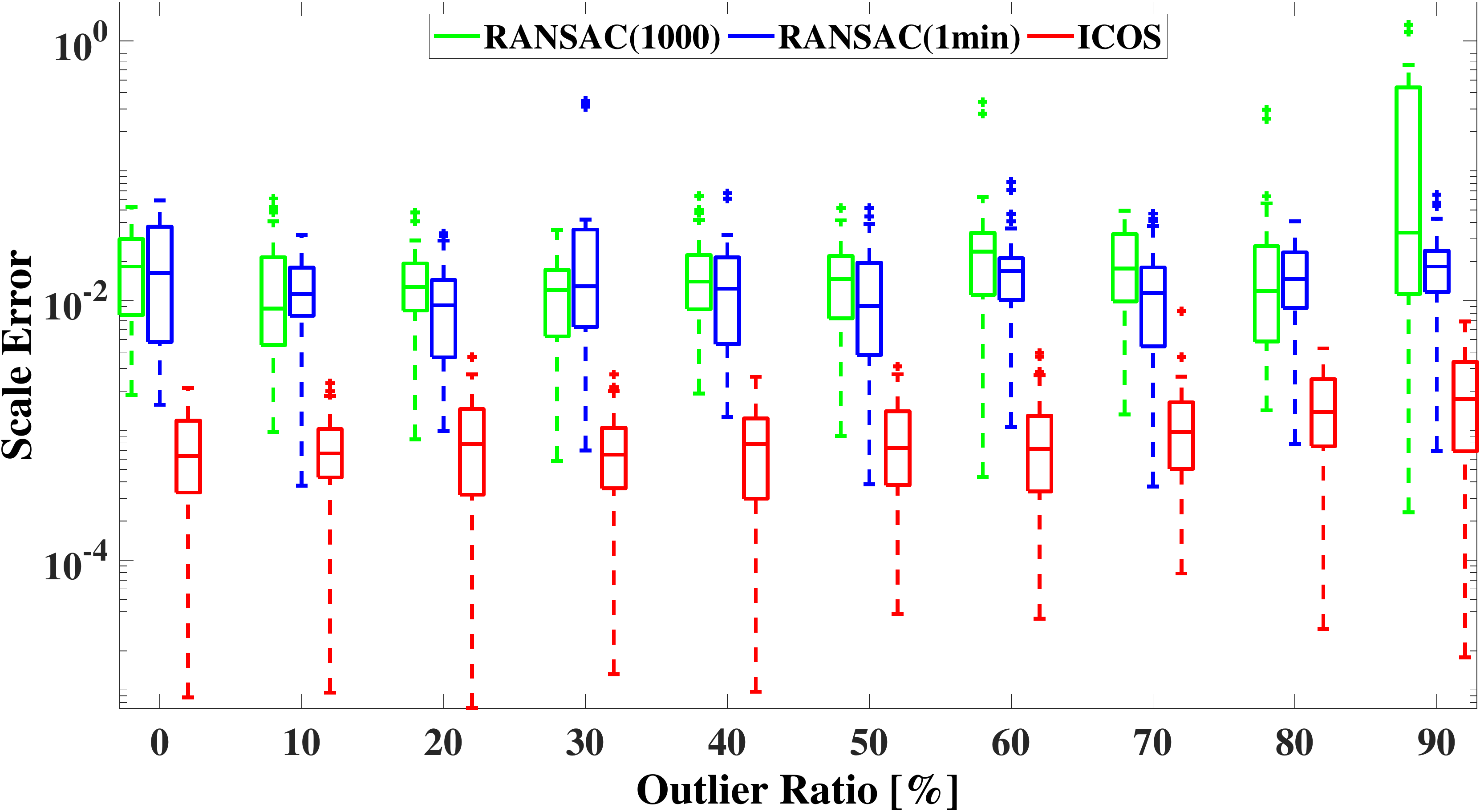}
\includegraphics[width=0.245\linewidth]{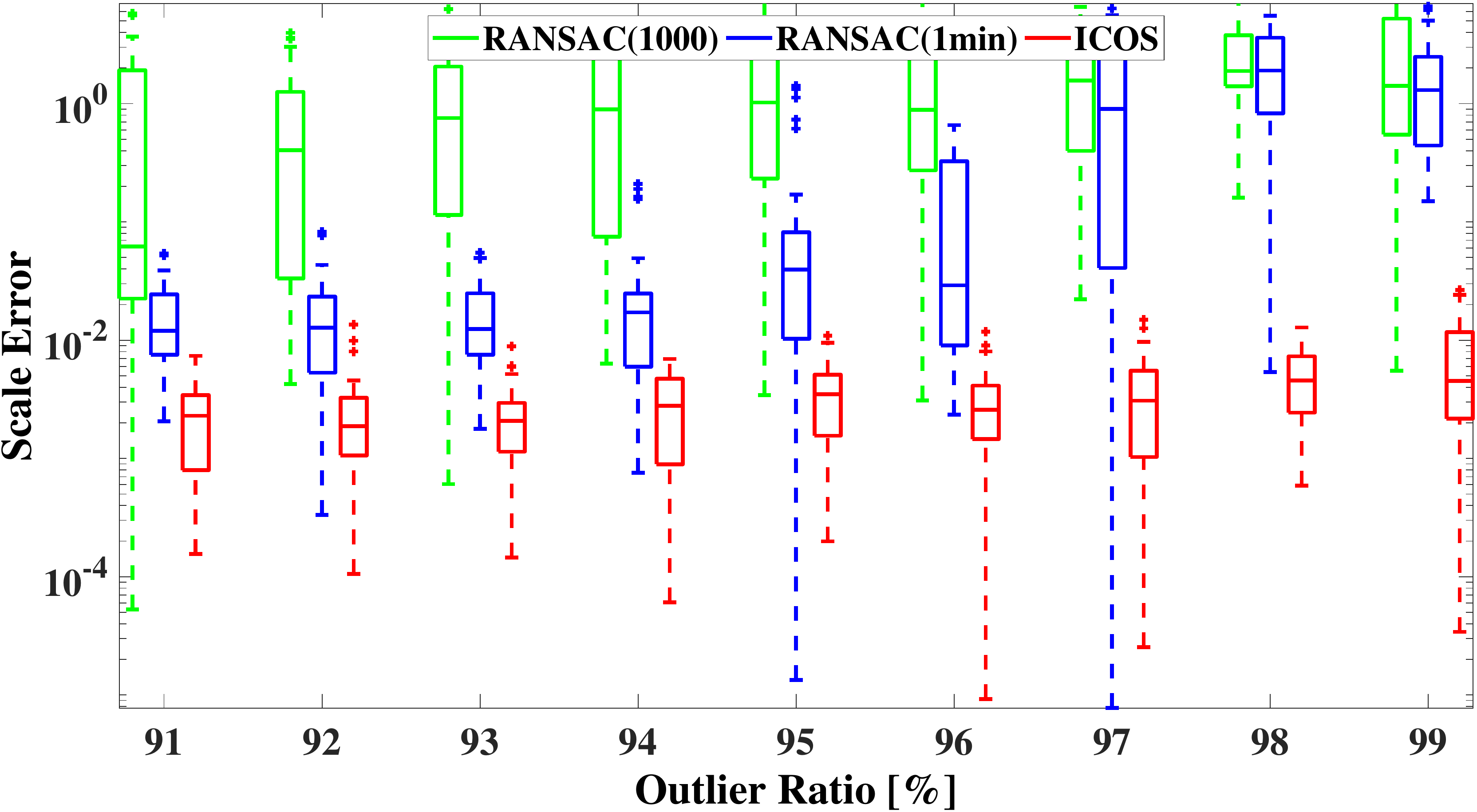}
\includegraphics[width=0.245\linewidth]{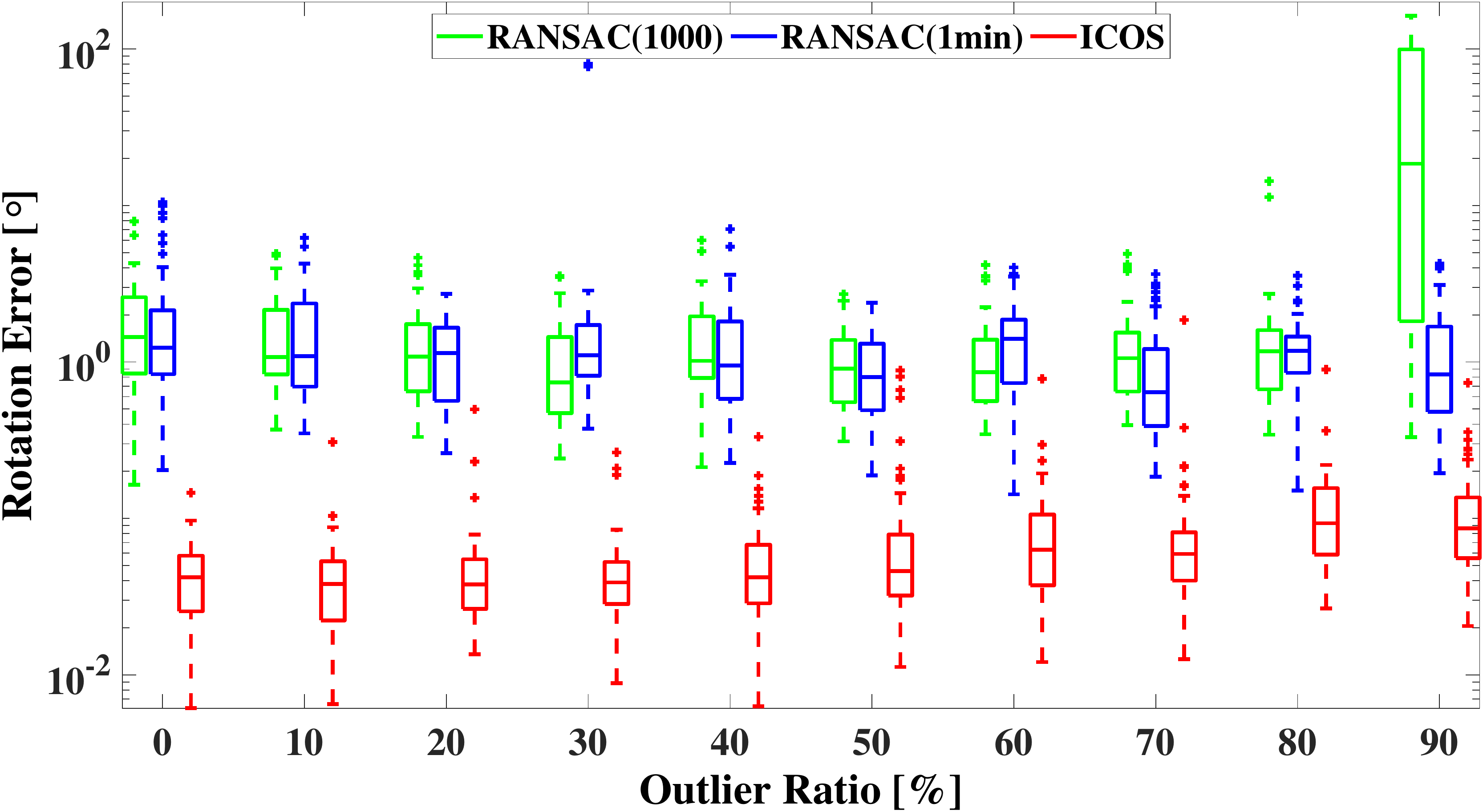}
\includegraphics[width=0.245\linewidth]{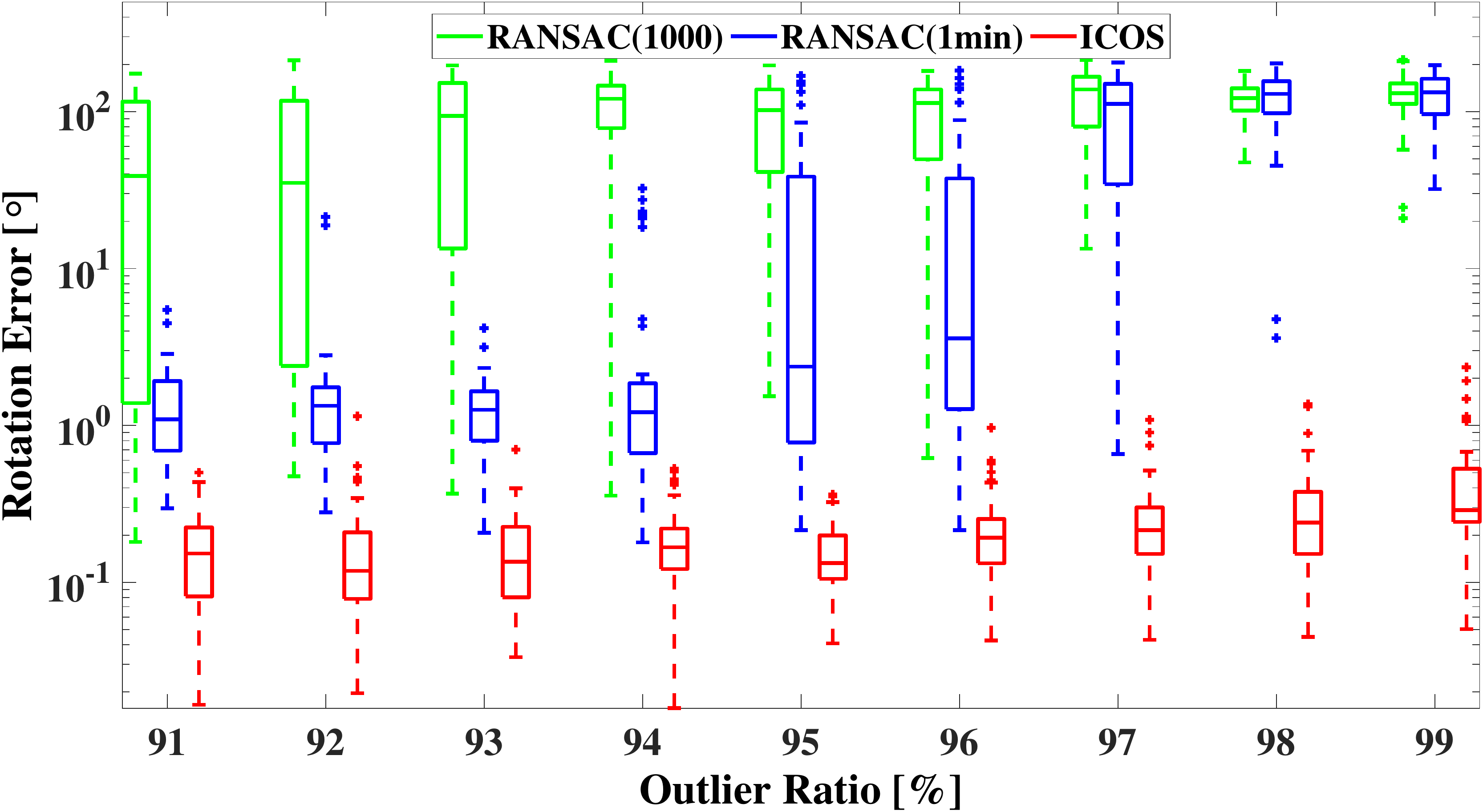}
\end{minipage}
}%

\subfigure{
\begin{minipage}[t]{1\linewidth}
\centering
\includegraphics[width=0.245\linewidth]{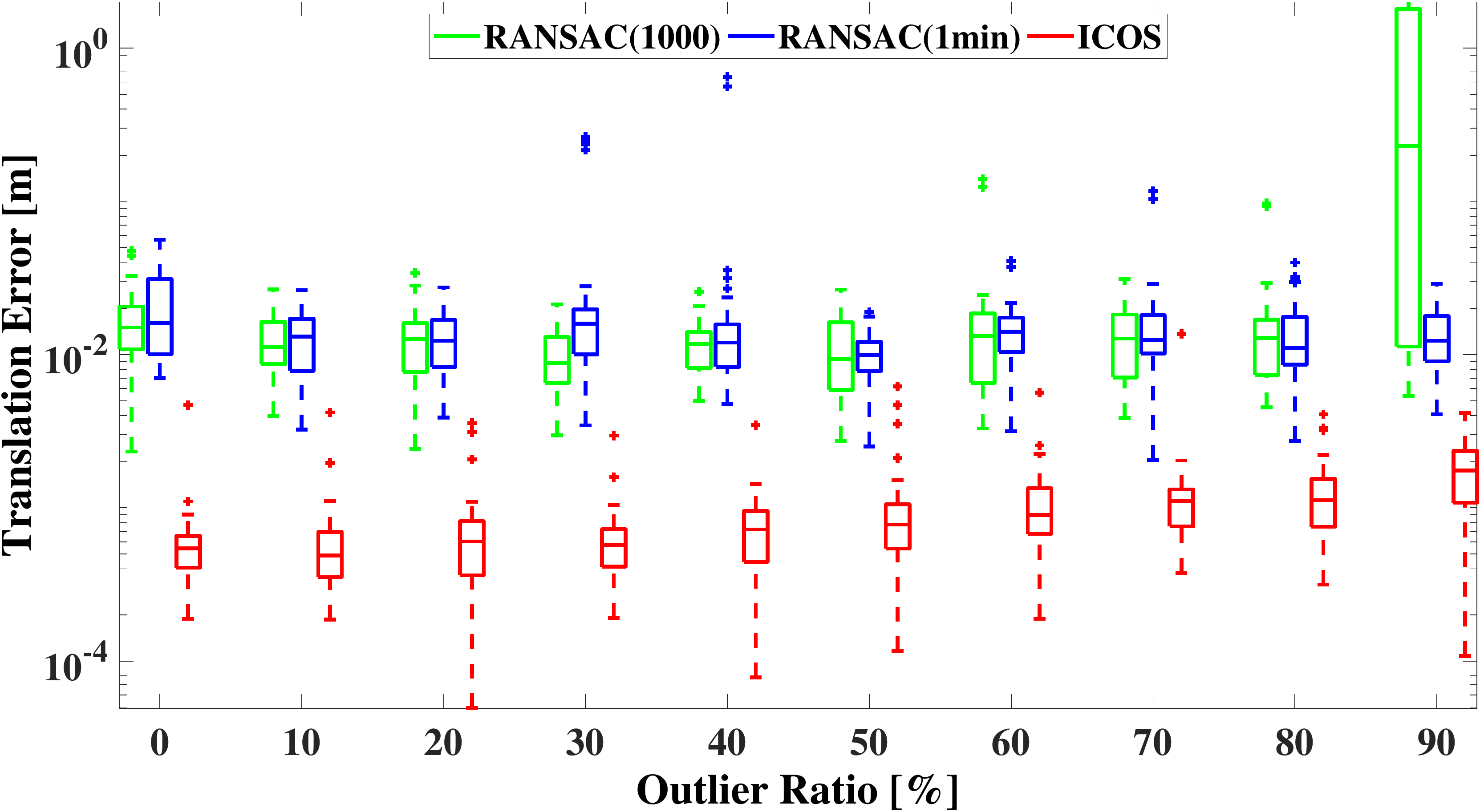}
\includegraphics[width=0.245\linewidth]{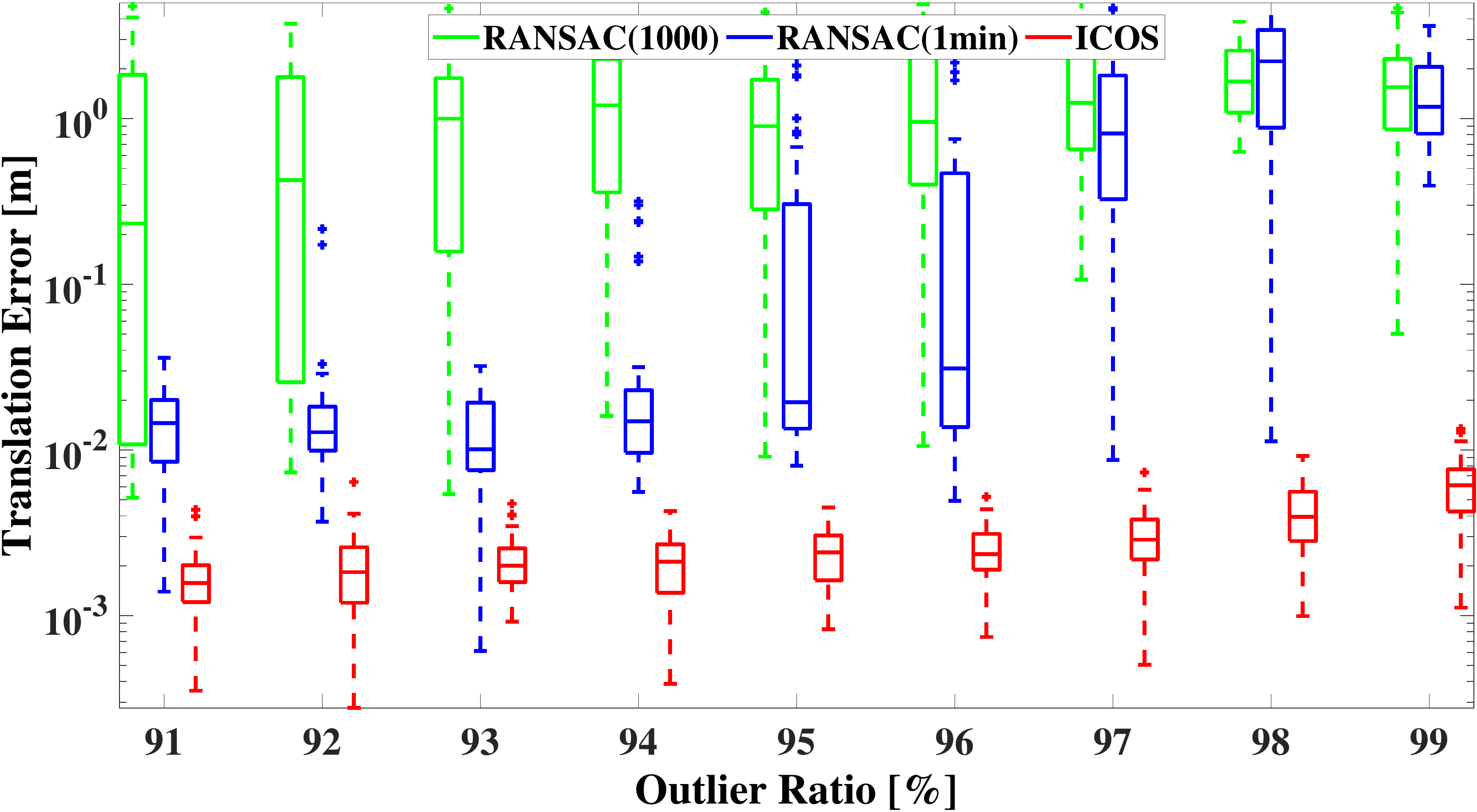}
\includegraphics[width=0.245\linewidth]{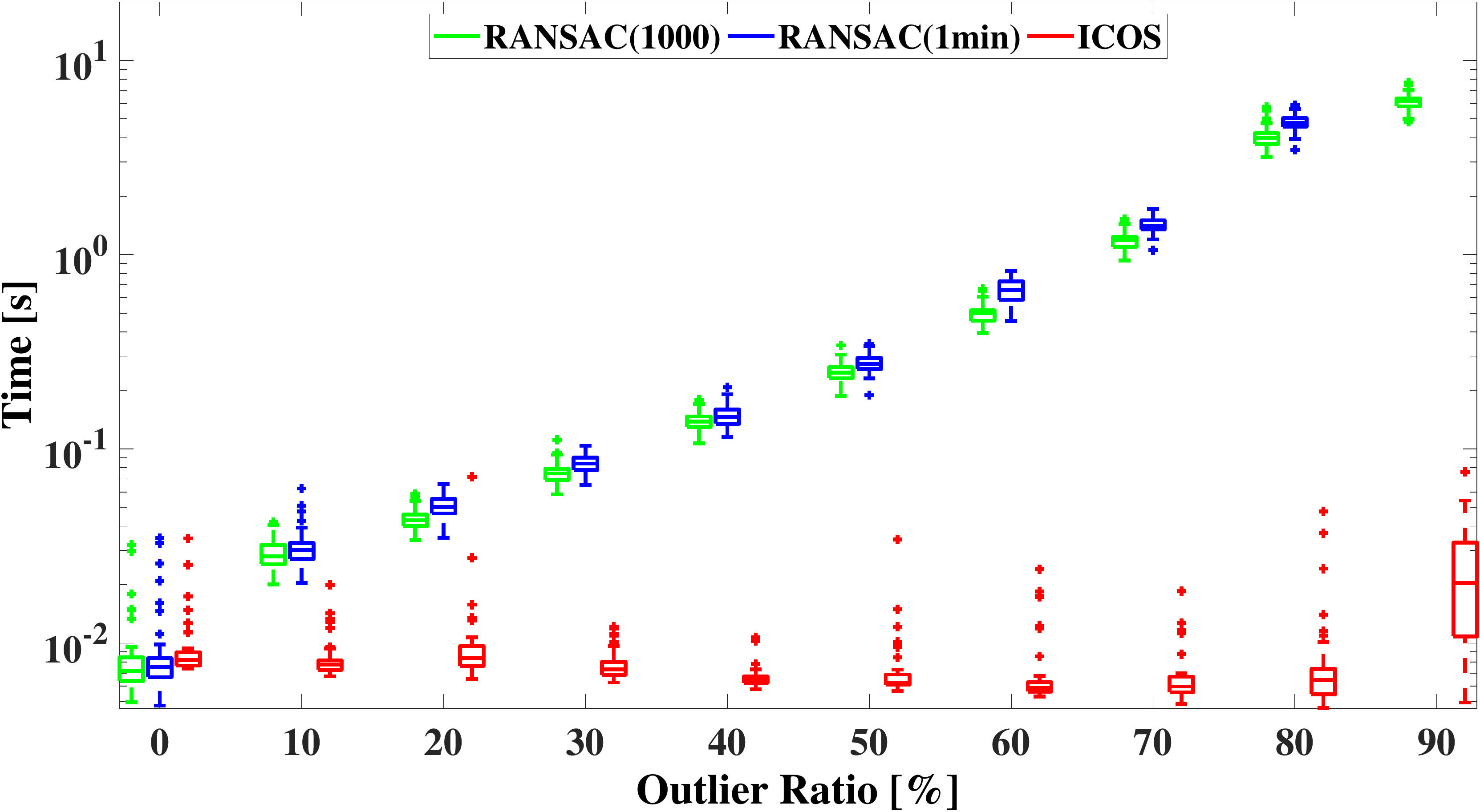}
\includegraphics[width=0.245\linewidth]{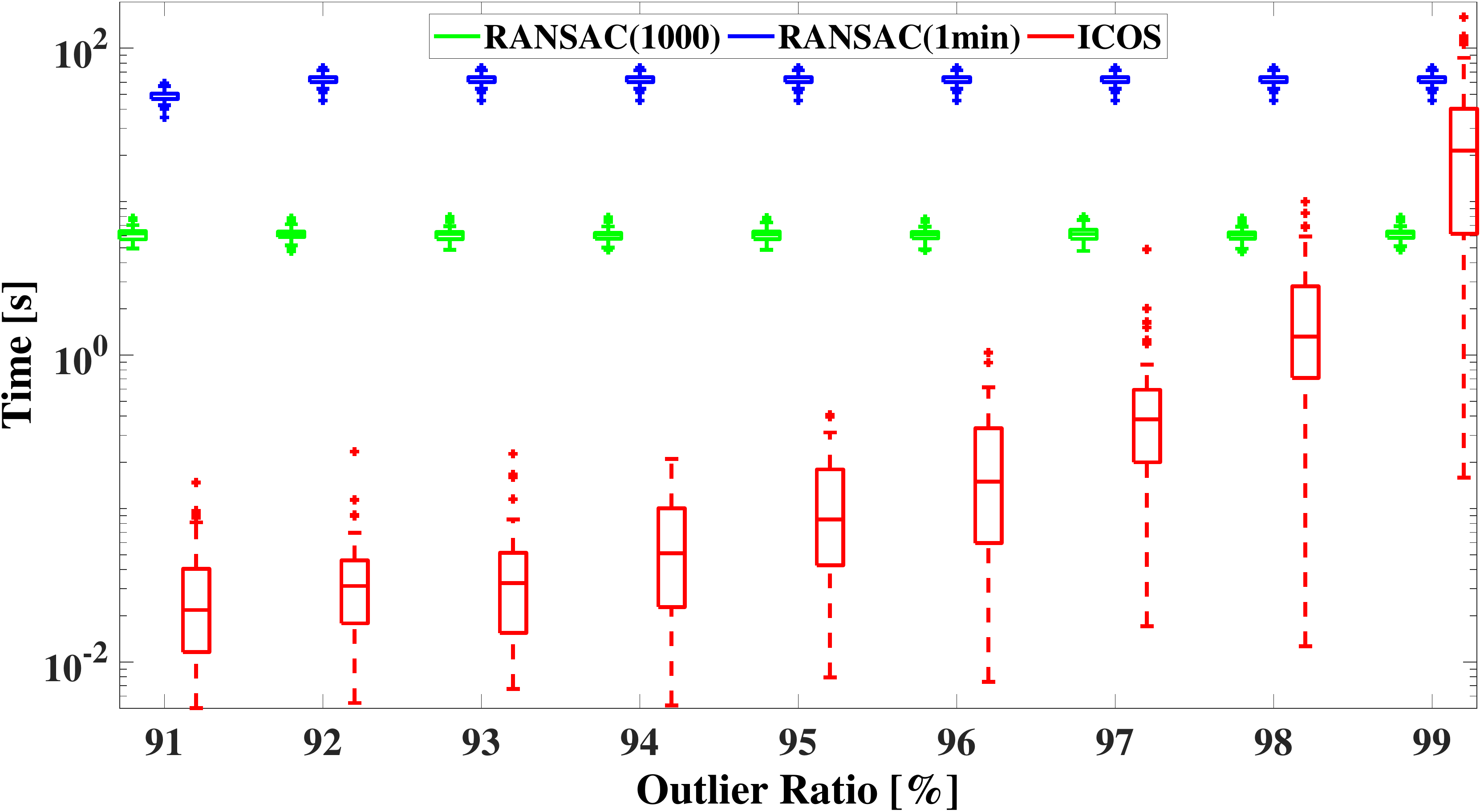}
\end{minipage}
}%

\centering
\caption{Benchmarking of point cloud registration over the `Armadillo'~\cite{curless1996volumetric}. Left-top: Examples of a known-scale and an unknown-scale point cloud registration problem with 98\% outliers. (a) Known-scale registration results w.r.t. increasing outlier ratio (0-99\%). (b) Unknown-scale registration results w.r.t. increasing outlier ratio (0-99\%).}
\label{Armadillo}
\vspace{-9pt}
\end{figure*}

The estimation errors in all experiments are computed as:

\begin{equation}\label{errors}
\begin{gathered}
E_{\mathit{s}}=\left| \mathit{\hat{s}}-\mathit{{s}}_{gt}\right|,\\
E_{\boldsymbol{t}}=\| \boldsymbol{\hat{t}}-\boldsymbol{{t}}_{gt}\|,
\end{gathered}
\end{equation}
and $E_{\boldsymbol{R}}$ is the same as~\eqref{Rot-error}, where subscripts `$gt$' all indicate the ground-truth.

\textbf{Known-scale problems:} We adopt FGR, GORE, RANSAC, TEASER and ICOS for the known-scale point cloud registration problems. As for RANSAC, we use Horn's minimal (3-point) method \cite{horn1987closed} to solve the rigid transformation and the confidence is set to 0.995. There are two stop conditions for RANSAC: (i) RANSAC with at most 1000 iterations, named RANSAC(1000), and (ii) RANSAC with at most 1 minute of runtime, named RANSAC(1min). In terms of TEASER, we adopt the GNC heuristic version as in~\cite{yang2020teaser}, which uses GNC-TLS to solve the large robust SDP problem, rather than using convex relaxation, but differently: (i) we do not apply parallelism programming in order to ensure the fairness in runtime evaluation for all the tested solvers, and (ii) TEASER is completely implemented in Matlab, not in C++, using the fast maximum clique solver~\cite{eppstein2010listing}.

\textbf{Unknown-scale problems:} For unknown-scale registration, we only compare ICOS with RANSAC because: (i) FGR and GORE are not designed for scale estimation originally, and (ii) the scale estimator Adaptive Voting in TEASER is too slow for handling 1000 correspondences (requiring tens of minutes per run) and is also unable to tolerate over 90\% outliers (as will be shown in Fig.~\ref{Scale}), thus no need to be compared.

\textbf{Results.} The main benchmarking results are shown in Fig.~\ref{bunny} (over the `bunny' point cloud) and Fig.~\ref{Armadillo} (over the `Armadillo' point cloud). We can observe that: (i) in the known-scale registration problem, though GORE, TEASER and our ICOS are all robust against 99\% outliers, TEASER and ICOS are generally more accurate, (ii) RANSAC(1000) fails at 90\%, while FGR and RANSAC(1min) fail at over 95\%, not as robust as ICOS, (iii) in the unknown-scale registration problems, ICOS can still address 99\% outliers highly robustly, and (iv) ICOS is the fastest solver when the outlier ratio is below 98\% (Note that though FGR seems faster at over 96\%, it already becomes hardly robust).

\subsection{Evaluation on the Inlier Recall Ratio}

We test the capability of ICOS for recalling correct inliers from the outlier-corrupted putative correspondence set in rotation search and known-scale and unknown-scale registration with the same experimental setup as in Section~\ref{RS-overall} and~\ref{overall} setting $N=1000$ for all the problems, and the results are shown in Fig.~\ref{Recall}. The recall ratio is defined as the ratio of the number of true inliers found by ICOS to the ground-truth number of inliers among the original correspondences. We can observe that ICOS is able to recall approximately 100\% (no less than 99\%) of the inliers from the correspondences with outlier ratio ranging from 0\% to 99\%, which underlies its high estimation accuracy against noise in the benchmarking above.

\subsection{Benchmarking on Scale Estimation in Registration}

We benchmark ICOS for scale estimation alone, compared with the Adaptive Voting in TEASER/TEASER++ and the 1-point RANSAC in~\cite{li2021point}. Following the setup in Section~\ref{overall}, we further downsample the `bunny' to only 100 points because solver Adaptive Voting will run in tens of minutes with 1000 correspondences. In each run, the scale is randomly generated within $(1,5)$ meters. Fig.~\ref{Scale} shows the scale errors and the runtime w.r.t. increasing outlier ratio. We can see that both Adaptive Voting and 1-point RANSAC break at 90\% outliers, while ICOS can stably tolerate 90\% outliers and is faster than the other two competitors. In fact, ICOS can be robust against up to 99\% outliers, as has been shown in Section~\ref{overall}.

\subsection{Qualitative Registration Results over Real Datasets}

In addition to the quantitative point cloud registration  results rendered above, we provide qualitative results of the registration problems over multiple real point clouds, including the `bunny' and `Armadillo' from~\cite{curless1996volumetric}, the `Mario' and `Squirrel' from the SHOT dataset\footnote[2]{http://vision.deis.unibo.it/list-all-categories/78-cvlab/80-shot}, and the `city' and `castle' from the RGB-D Scans dataset~\cite{zeisl2013automatic}, for concrete evaluation.

These experiments are all set up as follows: (i) we first transform a certain point cloud with a random transformation (with both known scale and unknown scale), (ii) we then adopt the FPFH 3D feature descriptor~\cite{rusu2009fast} to match and build putative correspondences between the transformed point cloud (colored in magenta) and the initial one (colored in blue), in which outliers are bound to exist, and (iii) we apply ICOS to solve the transformation and then reproject the transformed point cloud back to its initial place with the transformation solved. The qualitative results are shown in Fig.~\ref{FPFH}. Note that after reprojection, the better the magenta point cloud overlaps with the blue one, the smaller the registration errors can be.

From Fig.~\ref{FPFH}, we can see that ICOS yields rather satisfactory robust registration results.

\begin{figure}[t]
\centering

\setlength{\tabcolsep}{0mm}

\begin{tabular}{cccc}

$\mathit{s}=1$ & Result by ICOS & $\mathit{s}\in (1,5)$ & Result by ICOS \\
\hline
&&& \\
\includegraphics[width=0.25\linewidth]{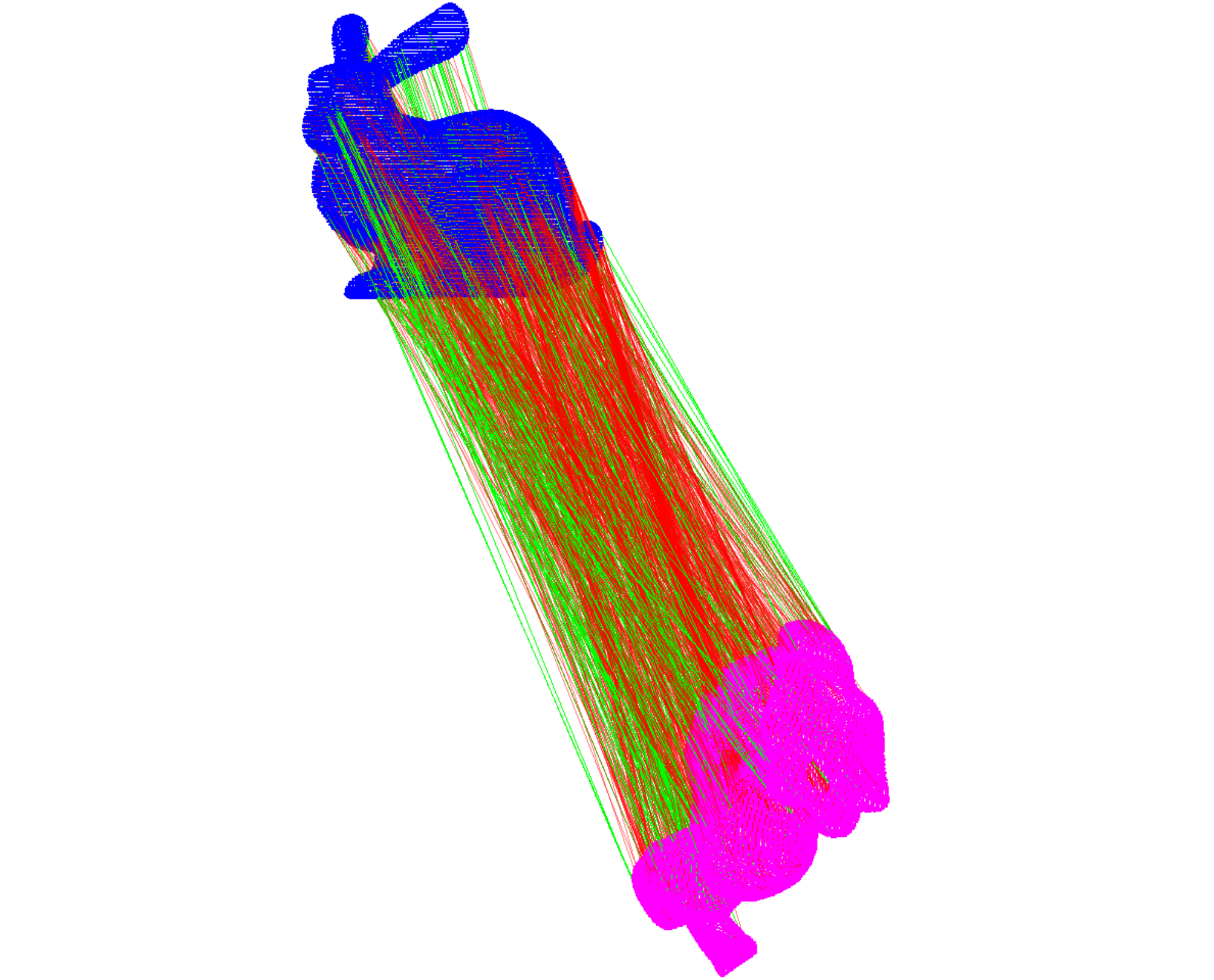}
&\includegraphics[width=0.15\linewidth]{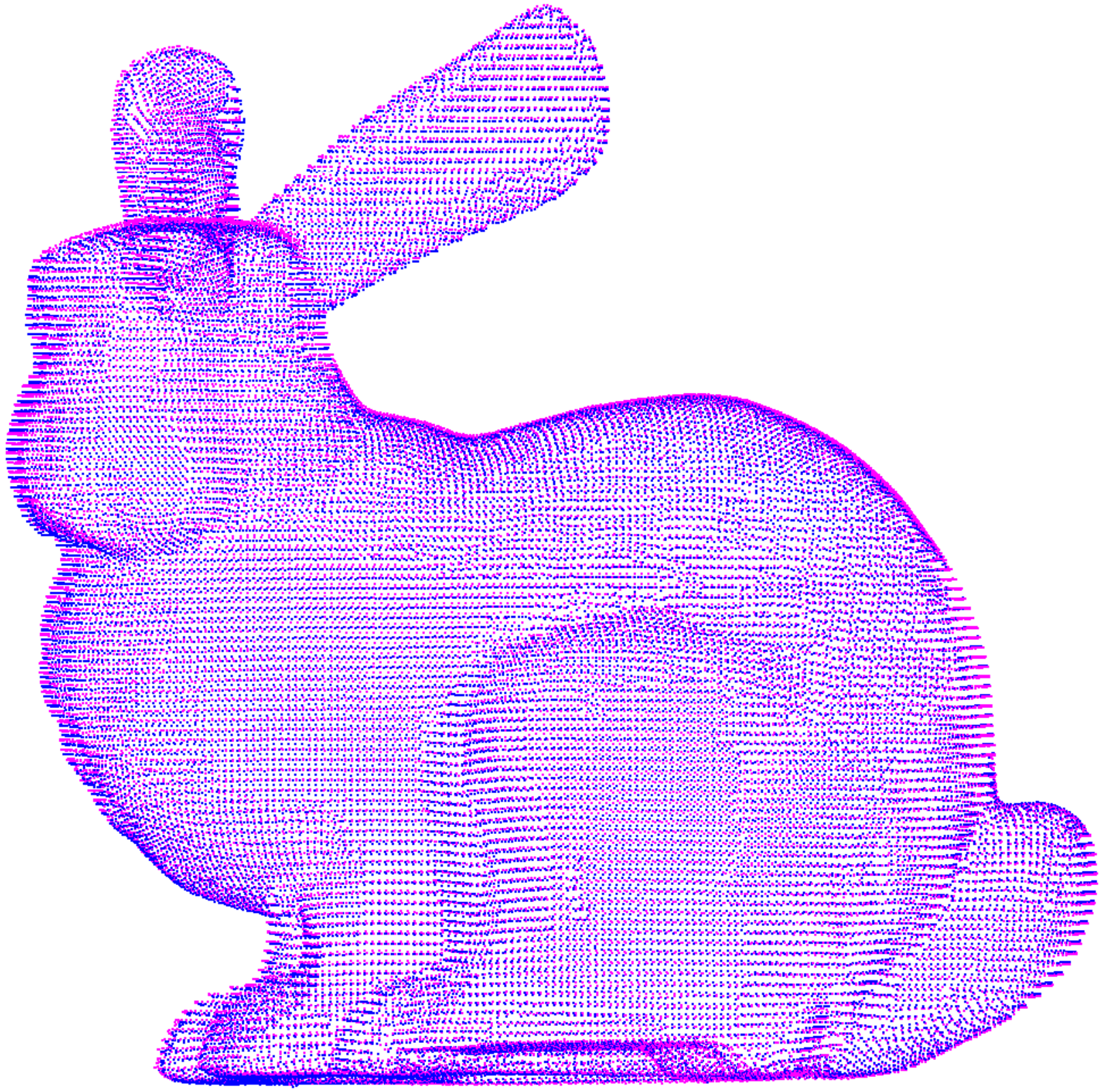}
&\includegraphics[width=0.25\linewidth]{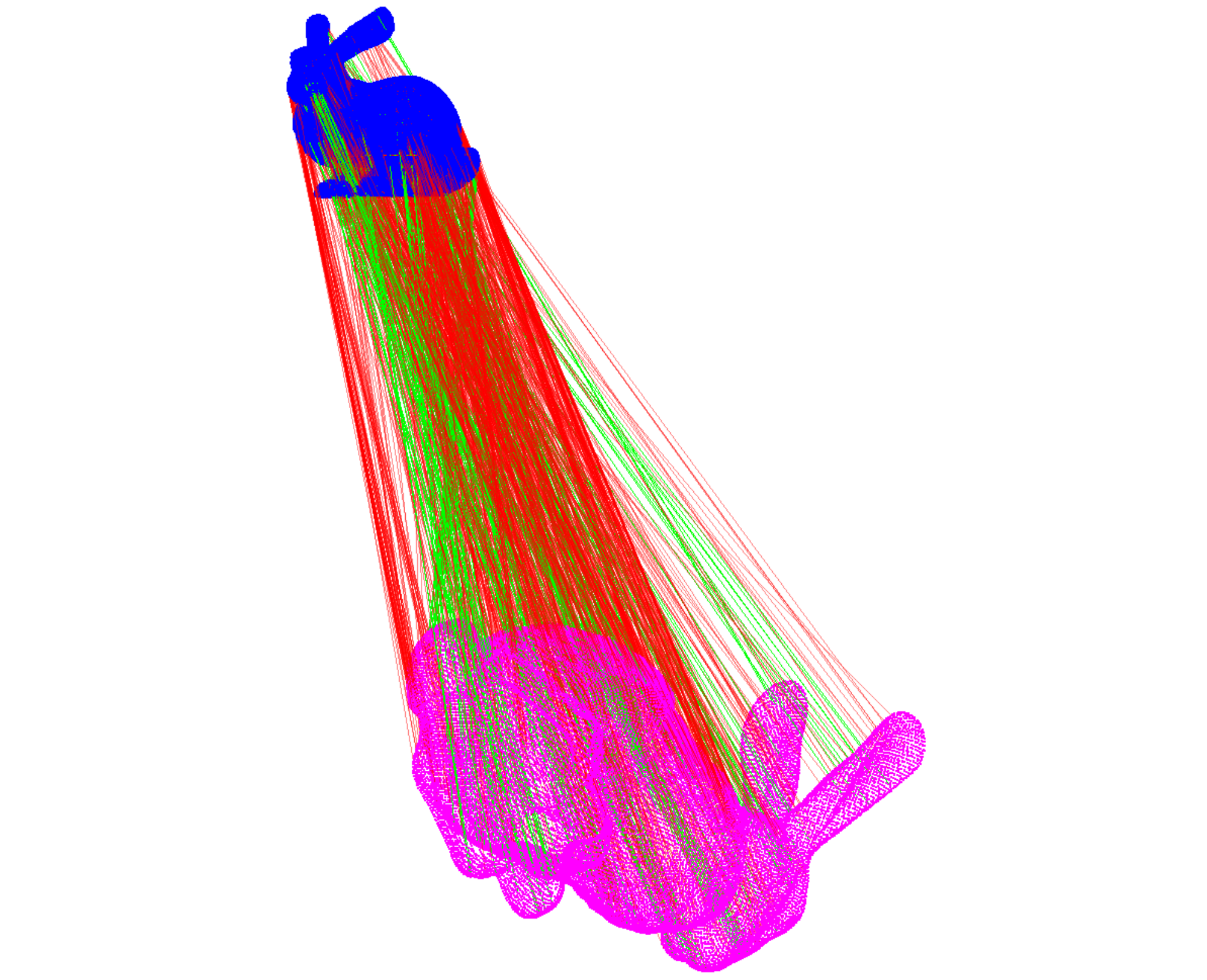}
&\includegraphics[width=0.15\linewidth]{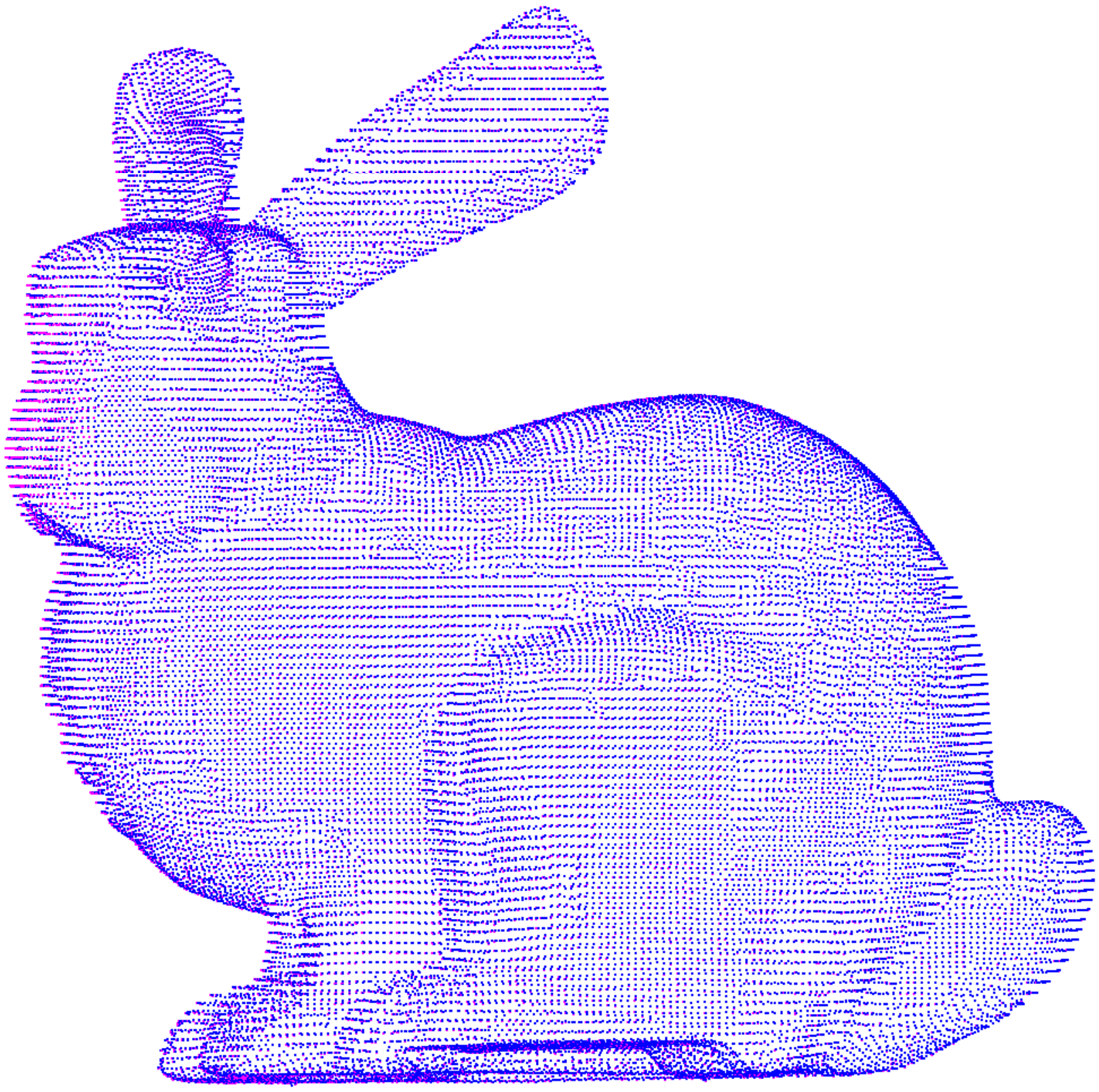} \\
\includegraphics[width=0.25\linewidth]{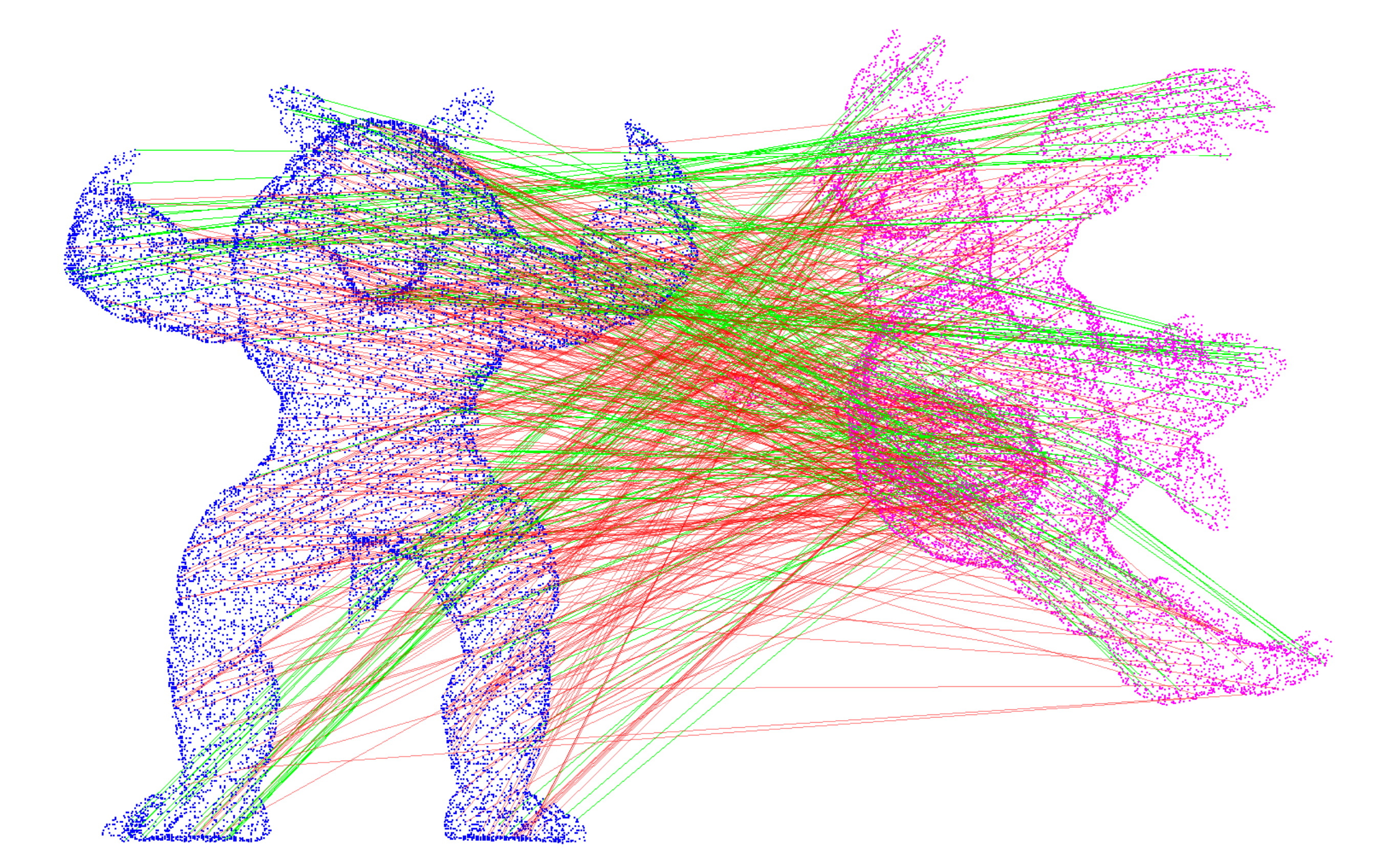} 
&\includegraphics[width=0.15\linewidth]{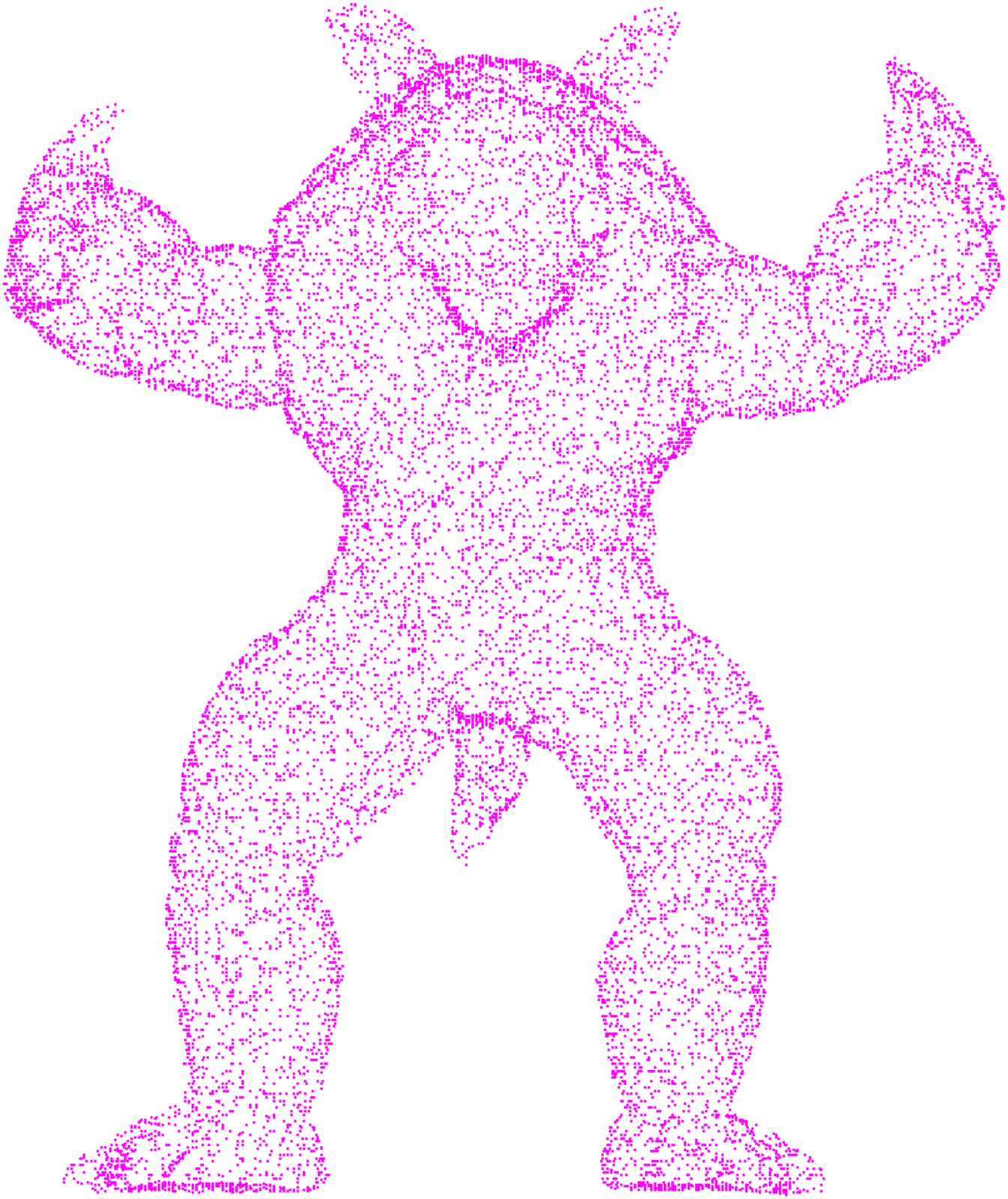}  &\includegraphics[width=0.25\linewidth]{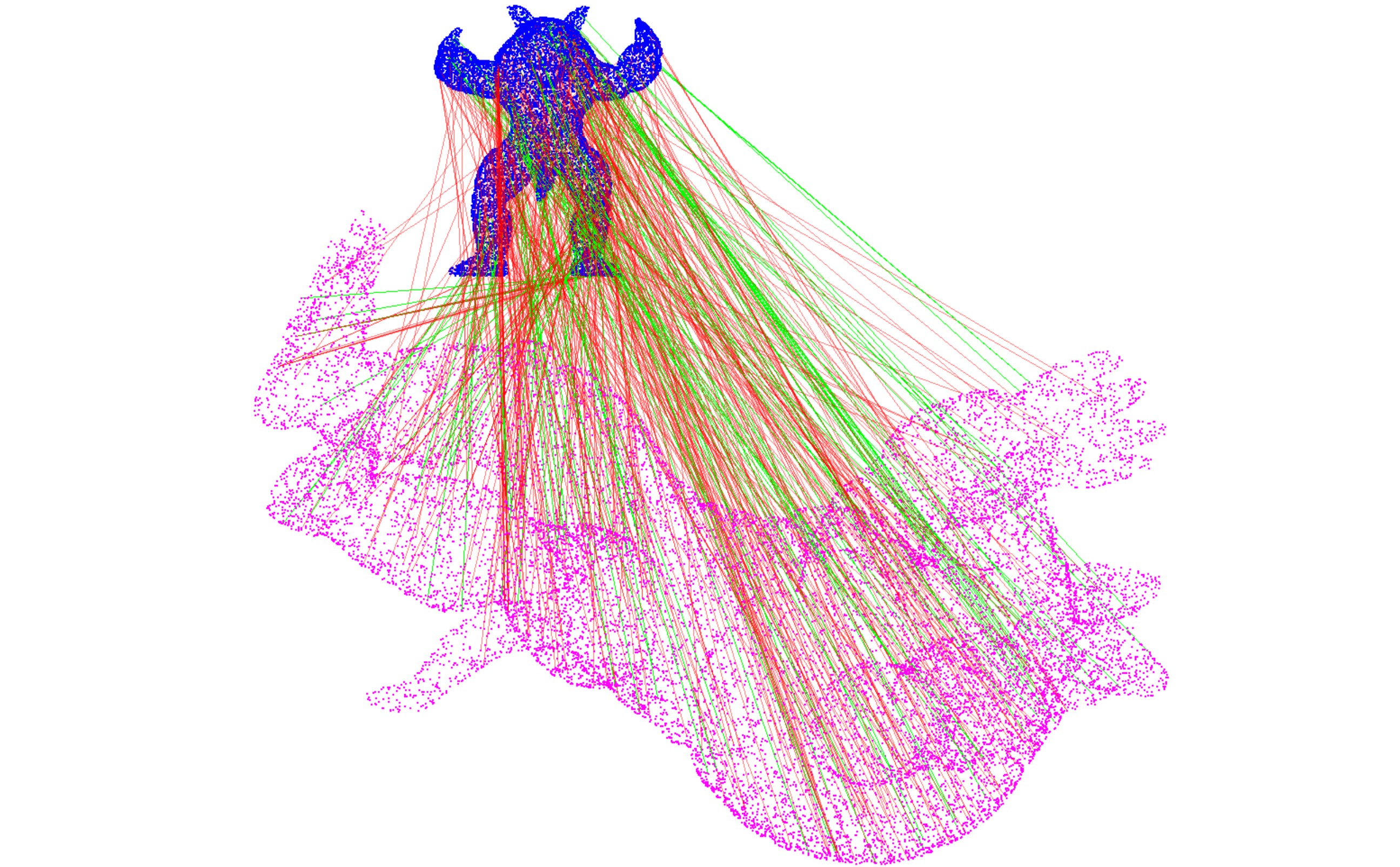}
&\includegraphics[width=0.15\linewidth]{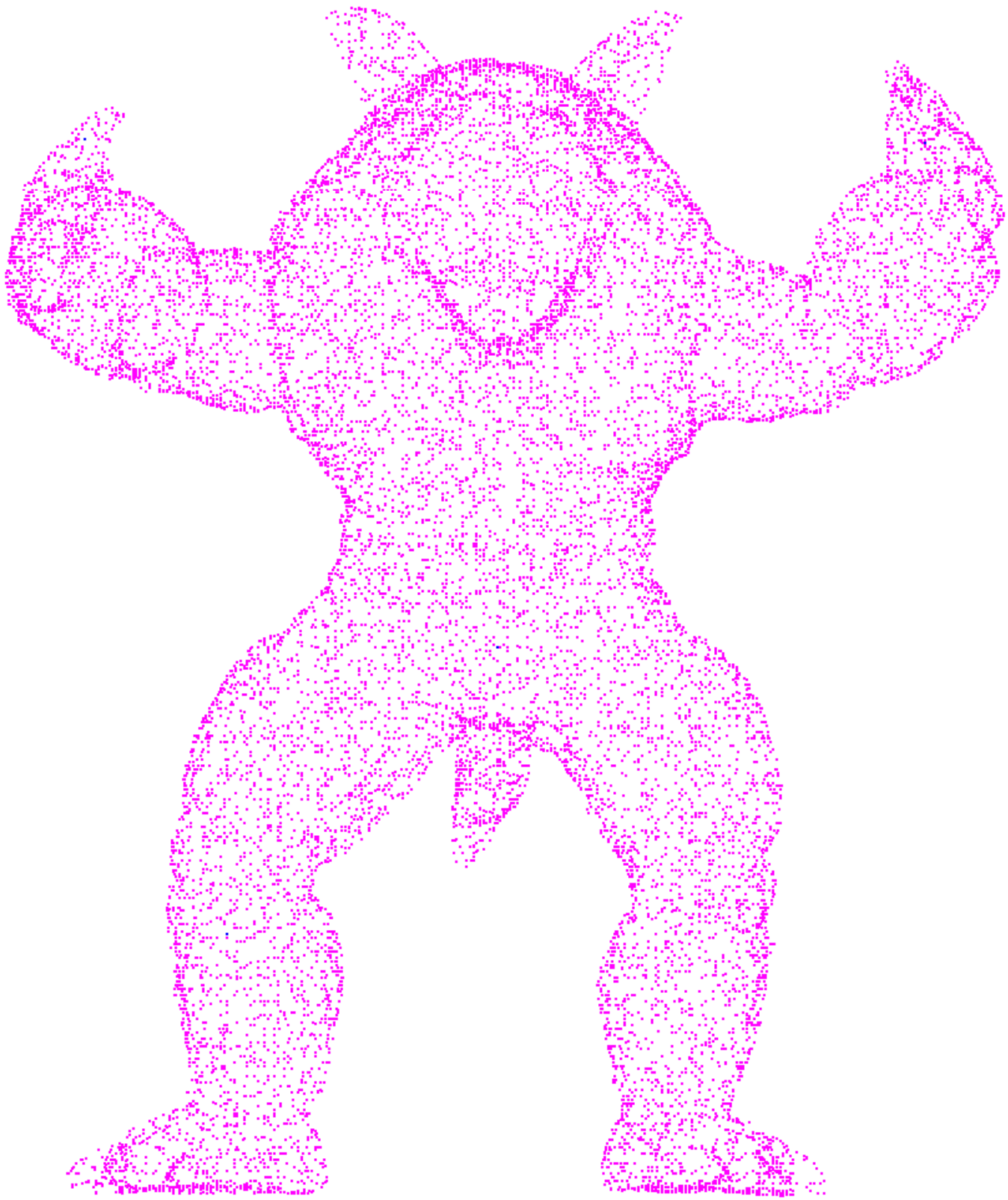} \\

\includegraphics[width=0.25\linewidth]{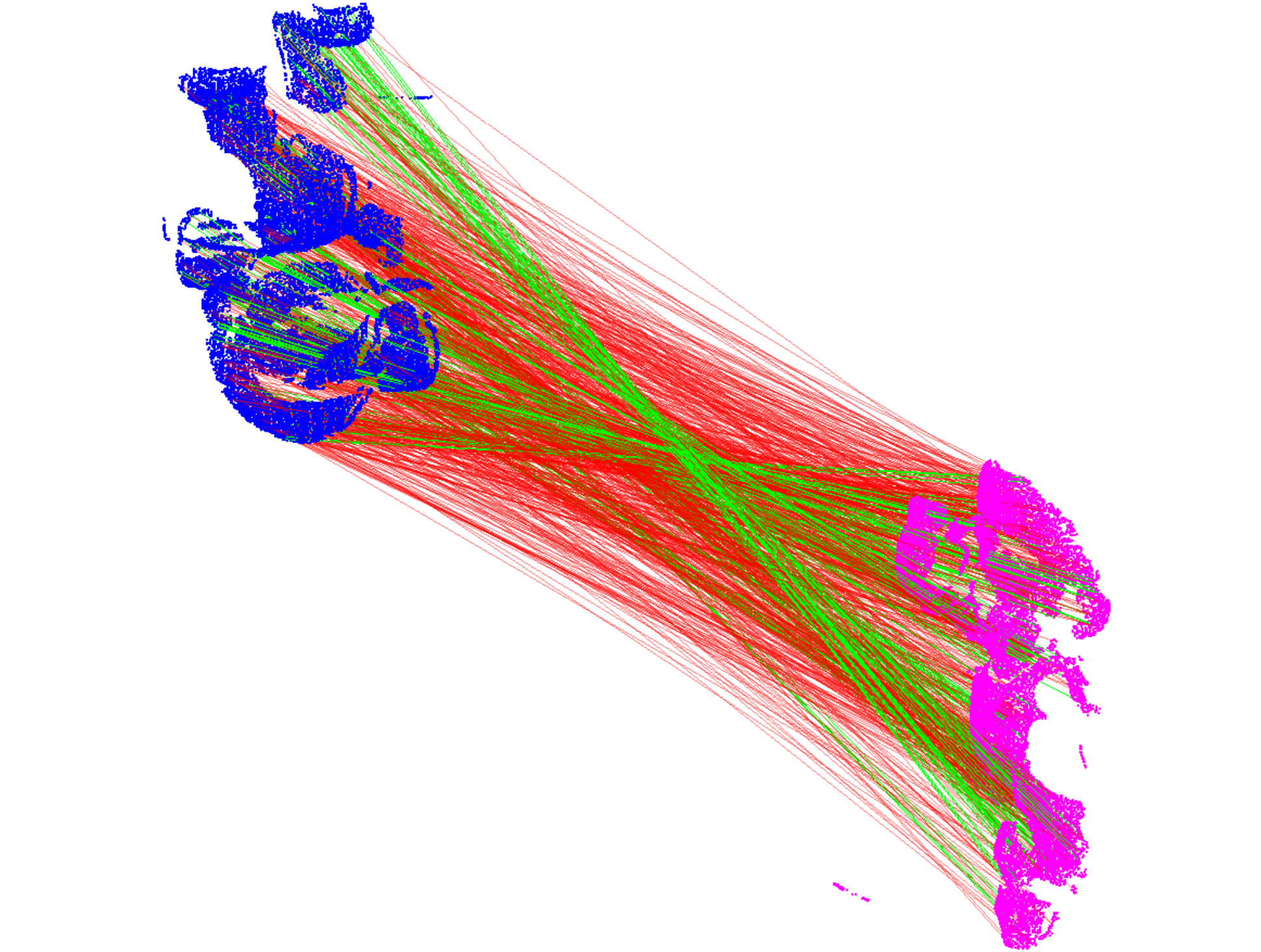}
&\includegraphics[width=0.15\linewidth]{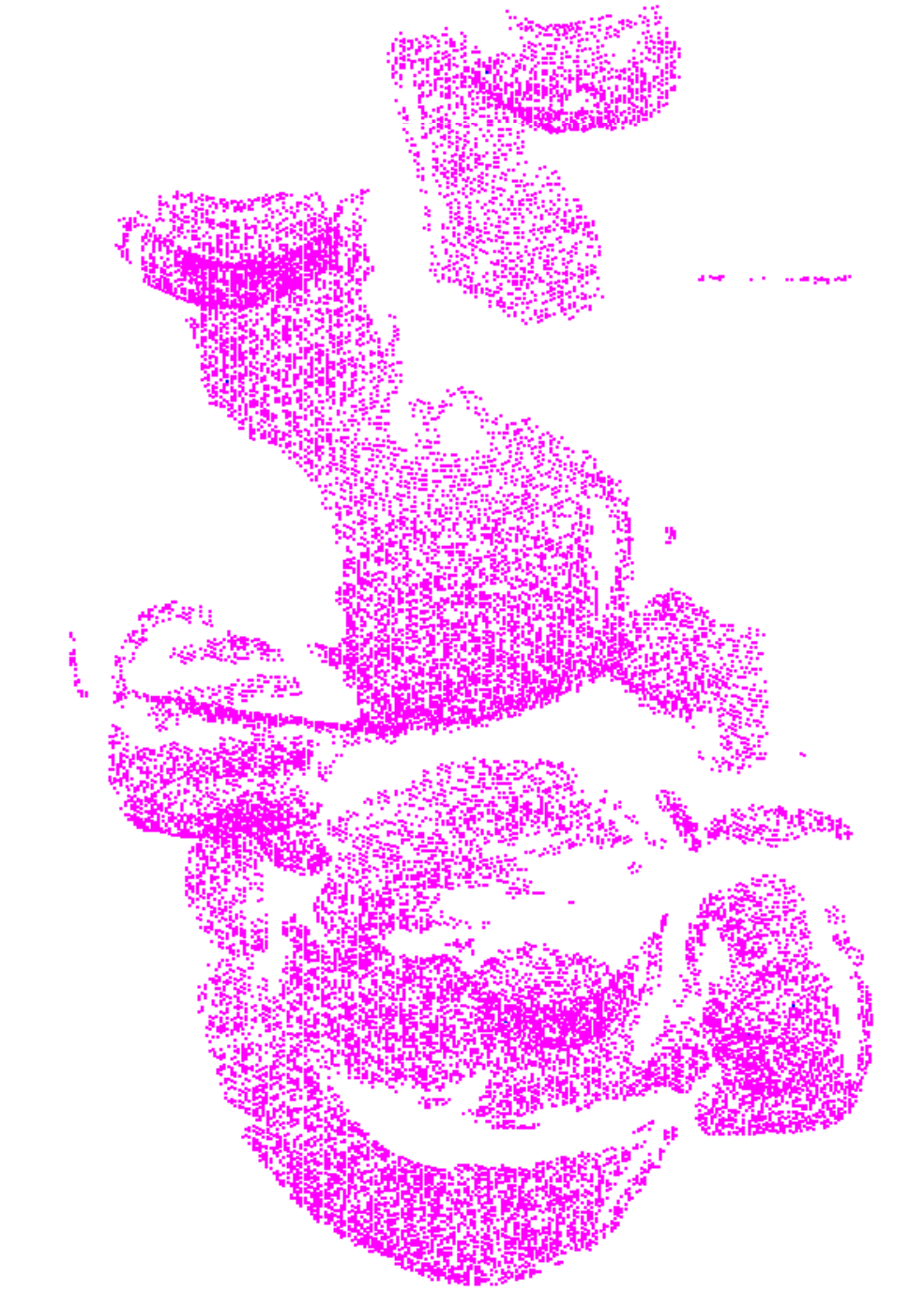}
&\includegraphics[width=0.25\linewidth]{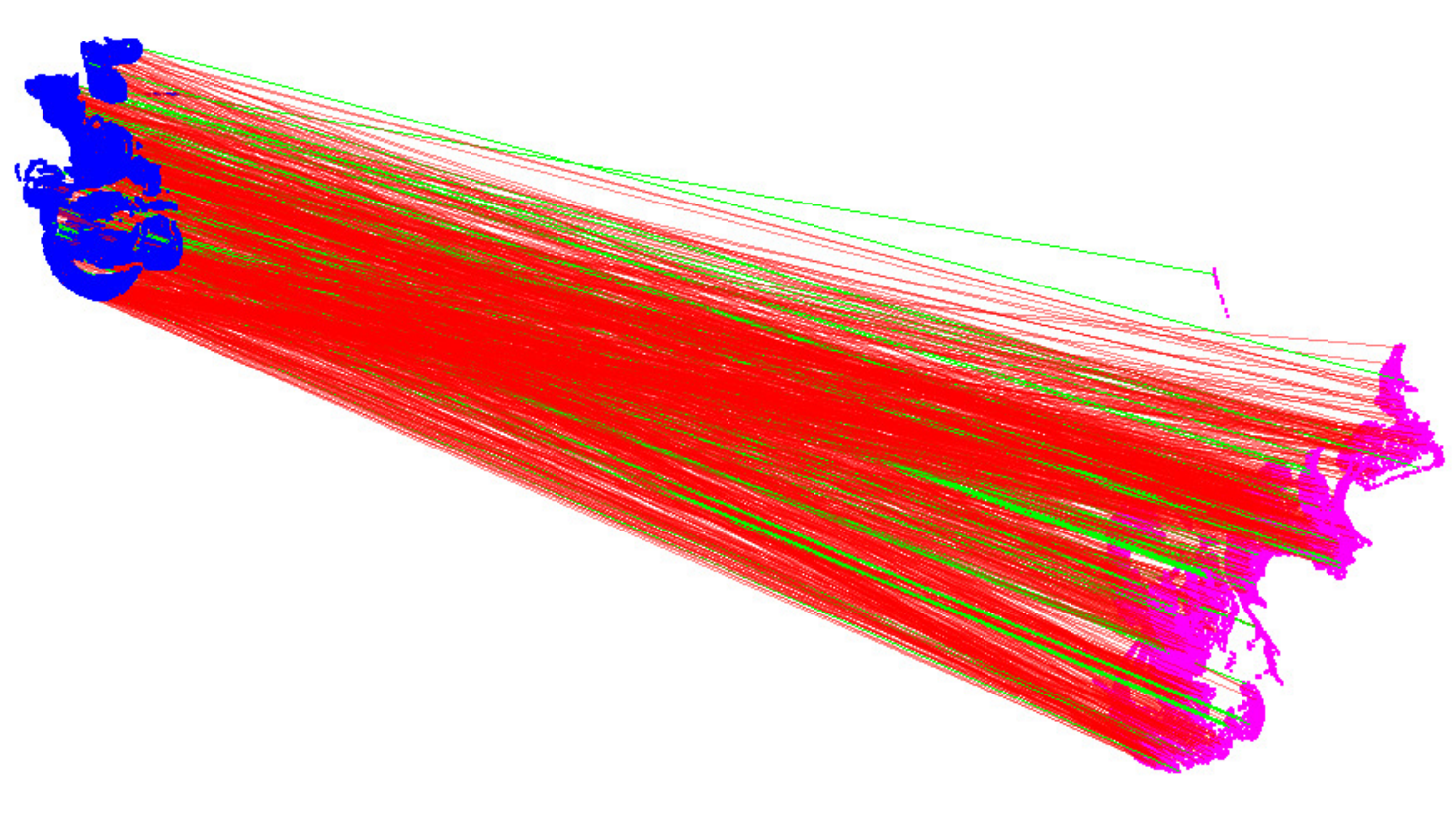}
&\includegraphics[width=0.15\linewidth]{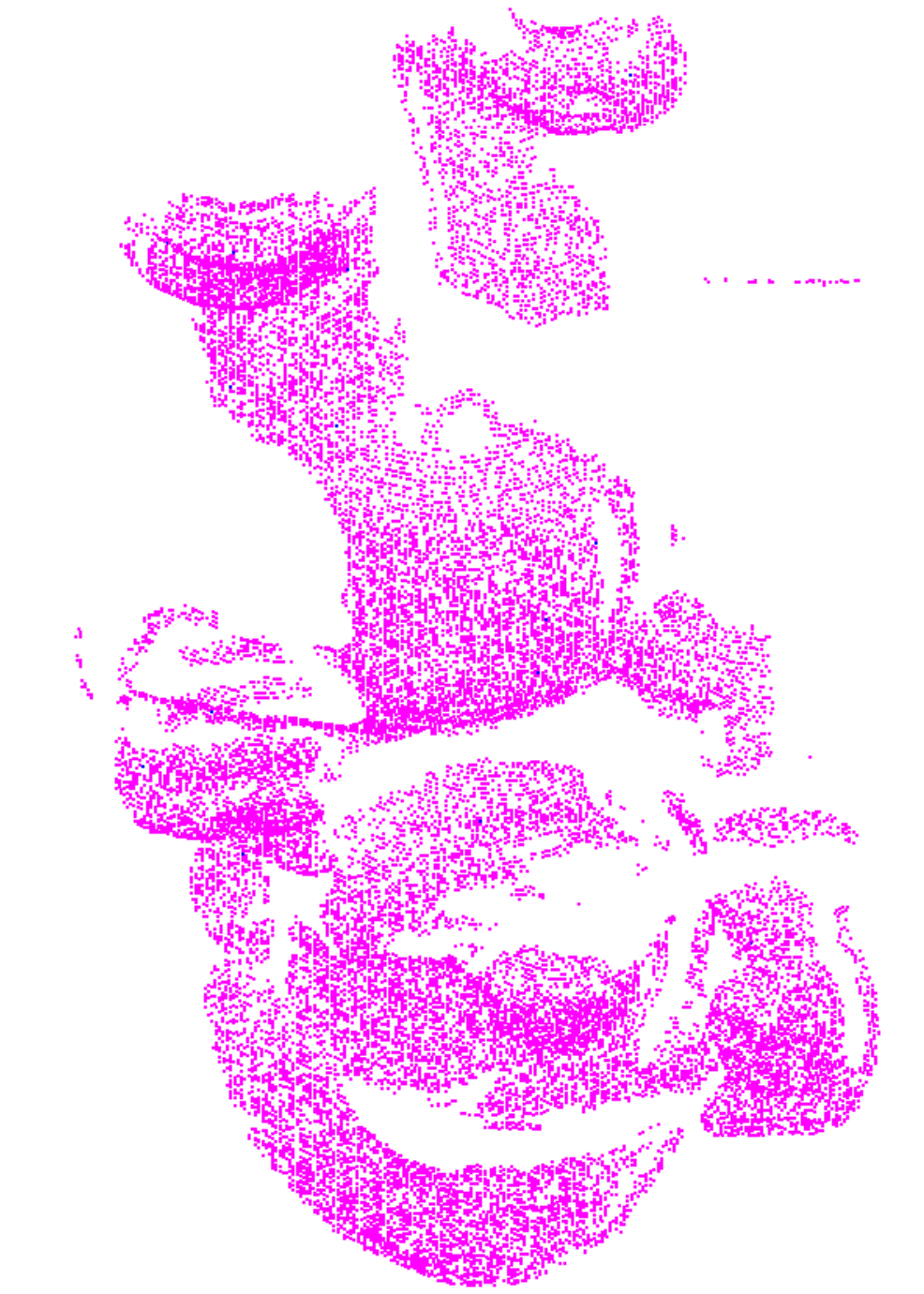}\\
\includegraphics[width=0.25\linewidth]{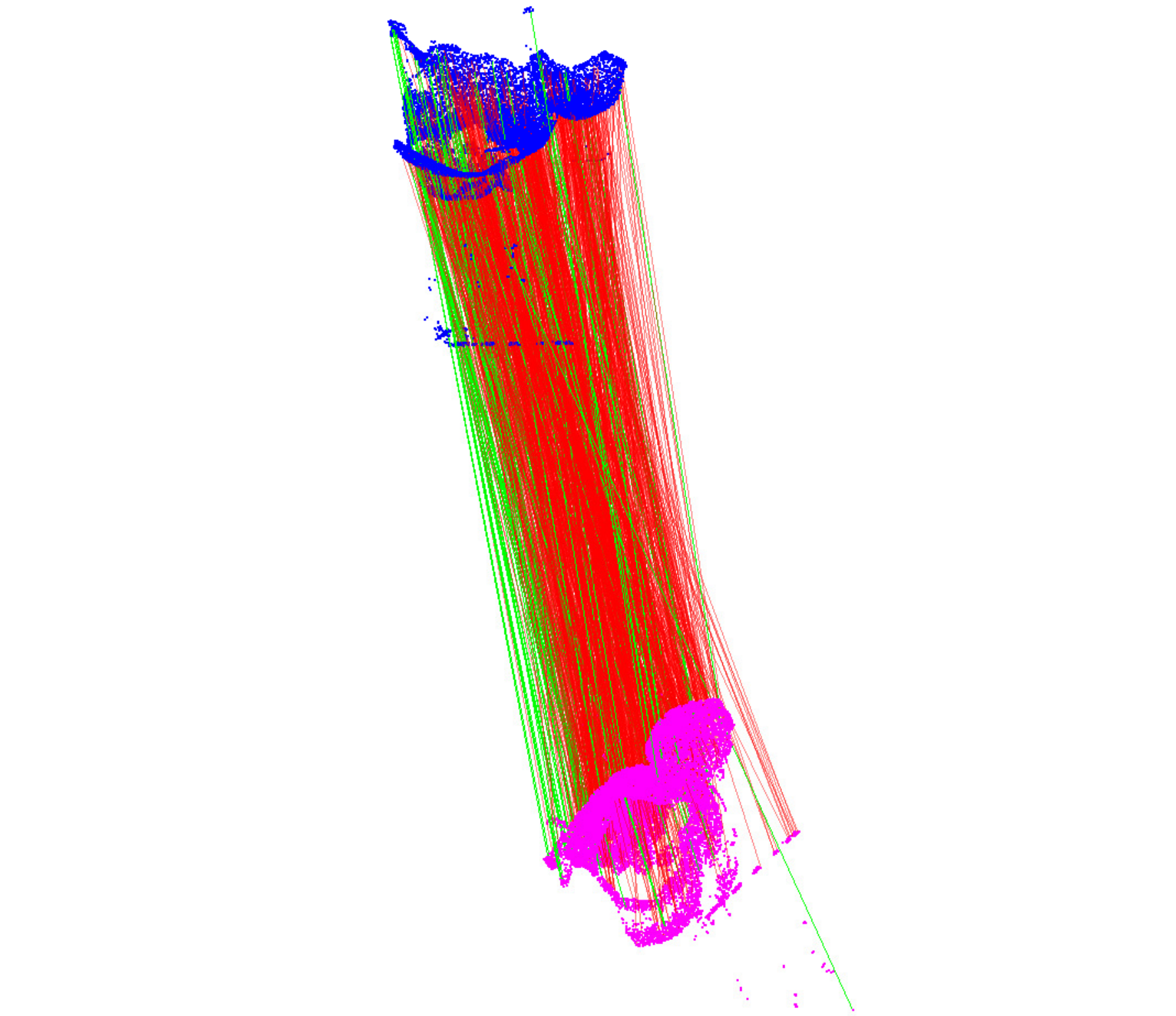}
&\includegraphics[width=0.15\linewidth]{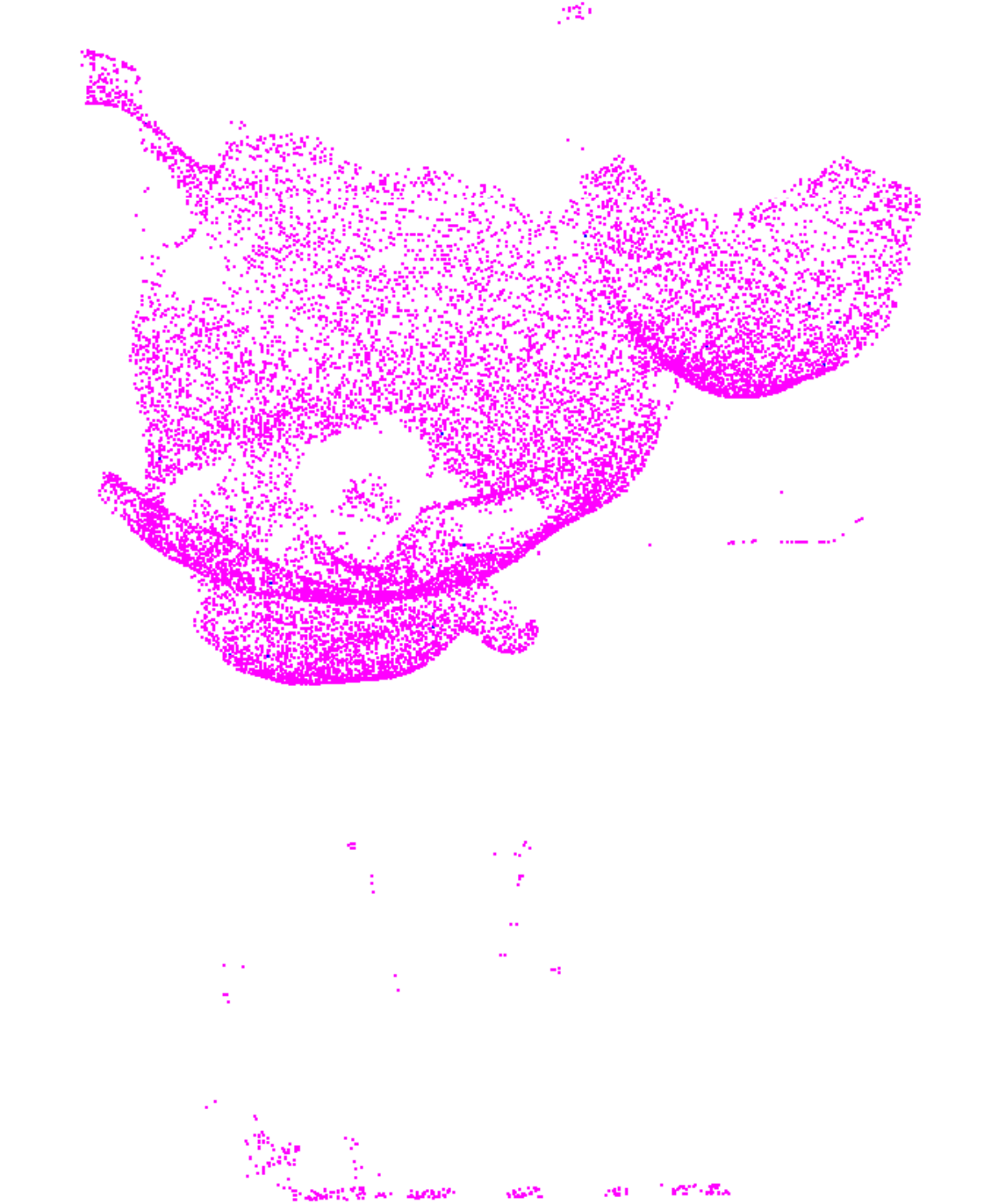}
&\includegraphics[width=0.25\linewidth]{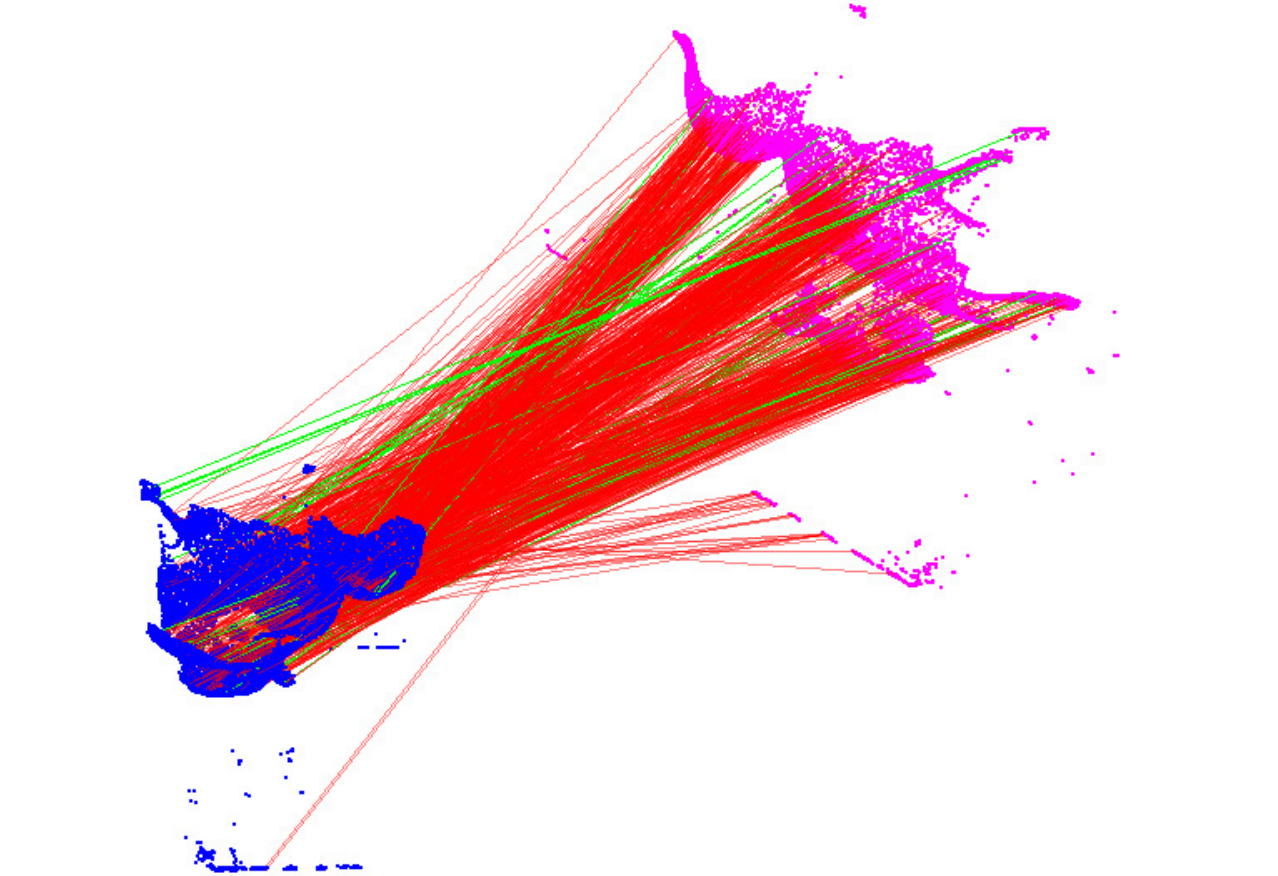}
&\includegraphics[width=0.15\linewidth]{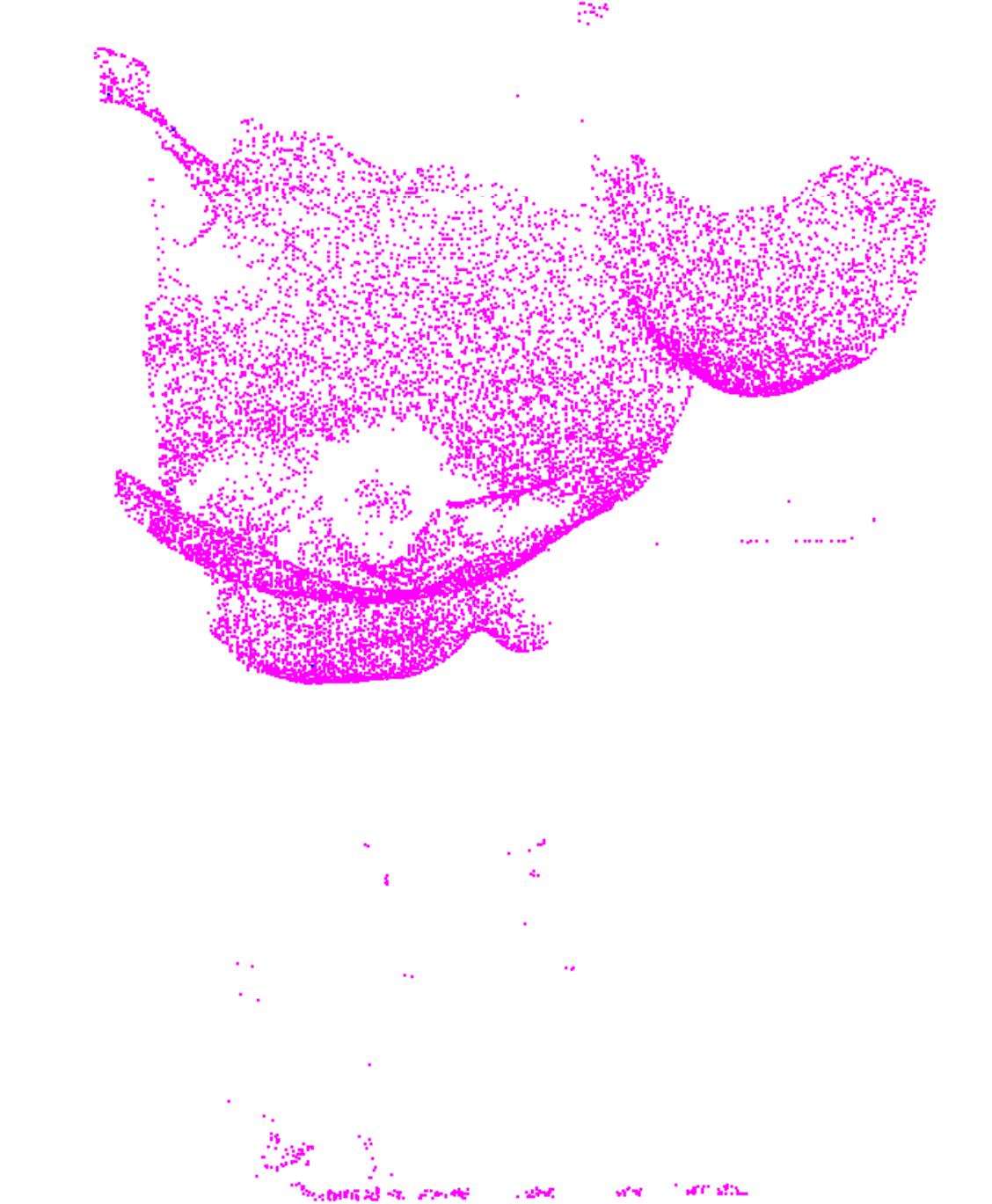} \\
\includegraphics[width=0.25\linewidth]{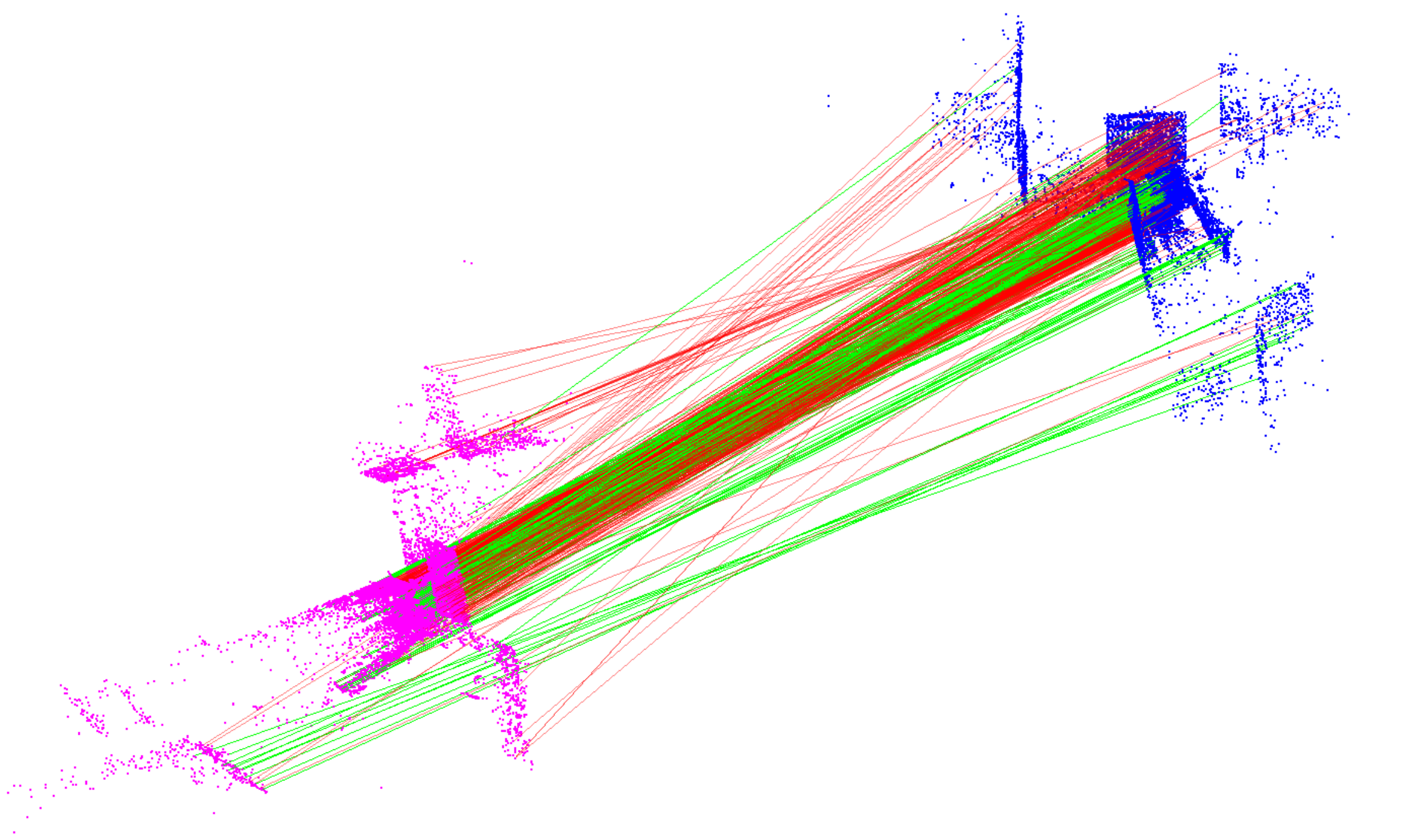}
&\includegraphics[width=0.15\linewidth]{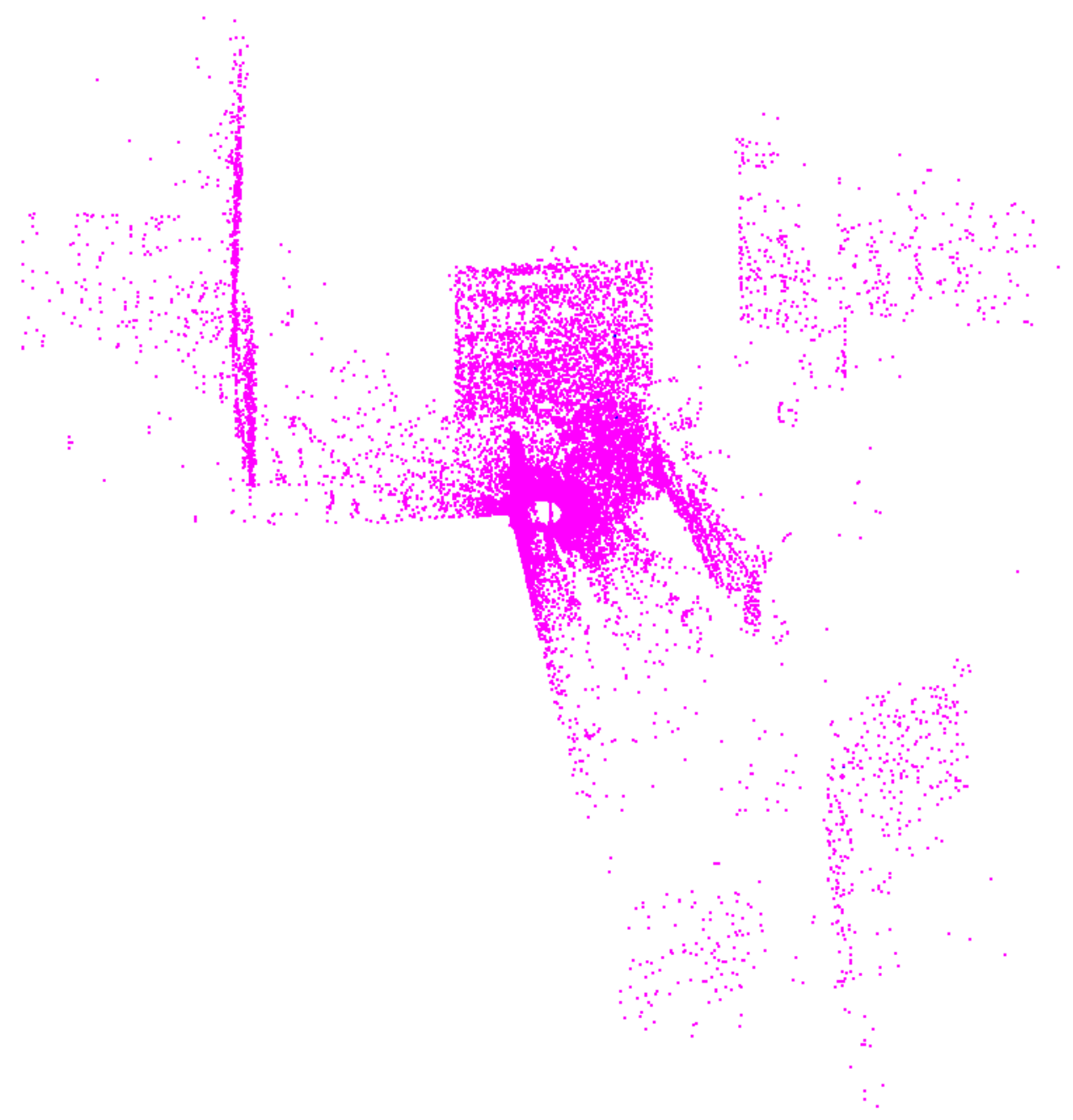}
&\includegraphics[width=0.25\linewidth]{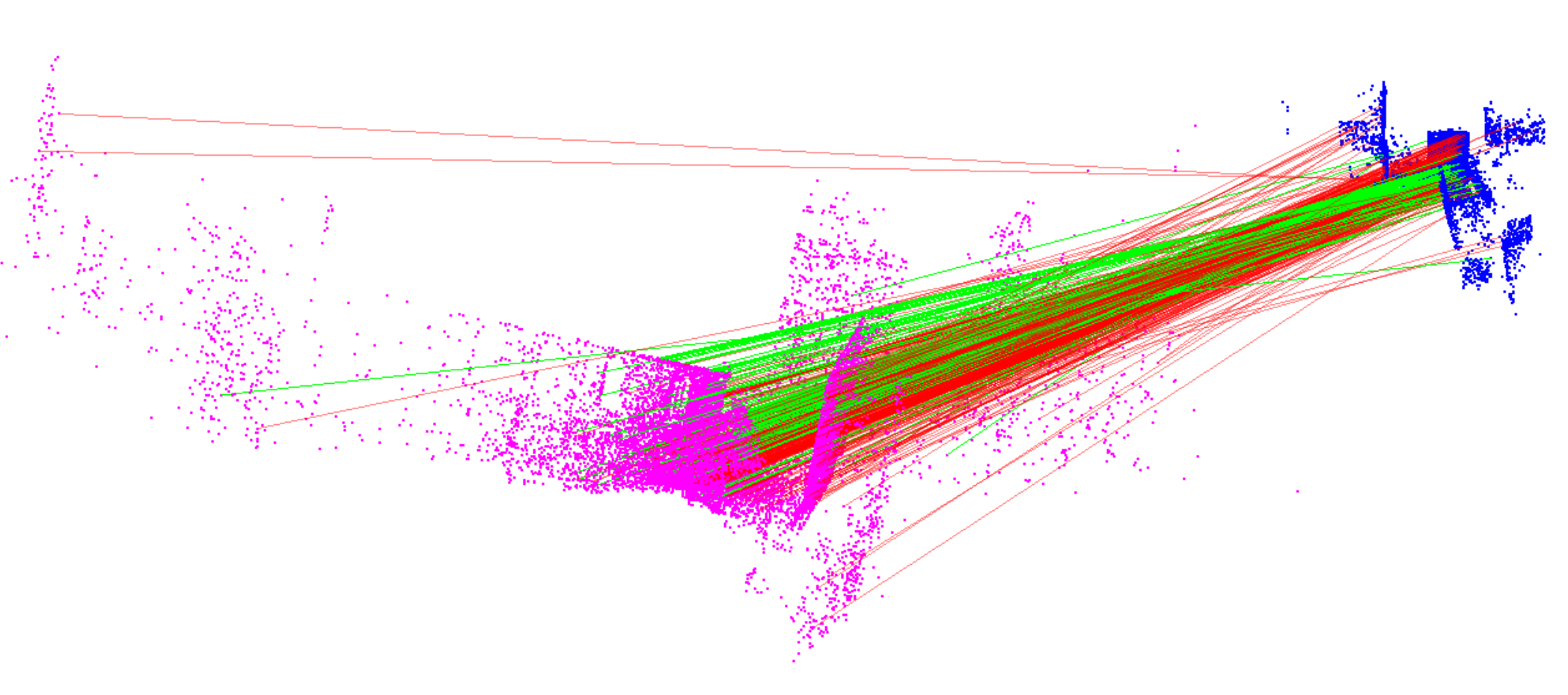}
&\includegraphics[width=0.15\linewidth]{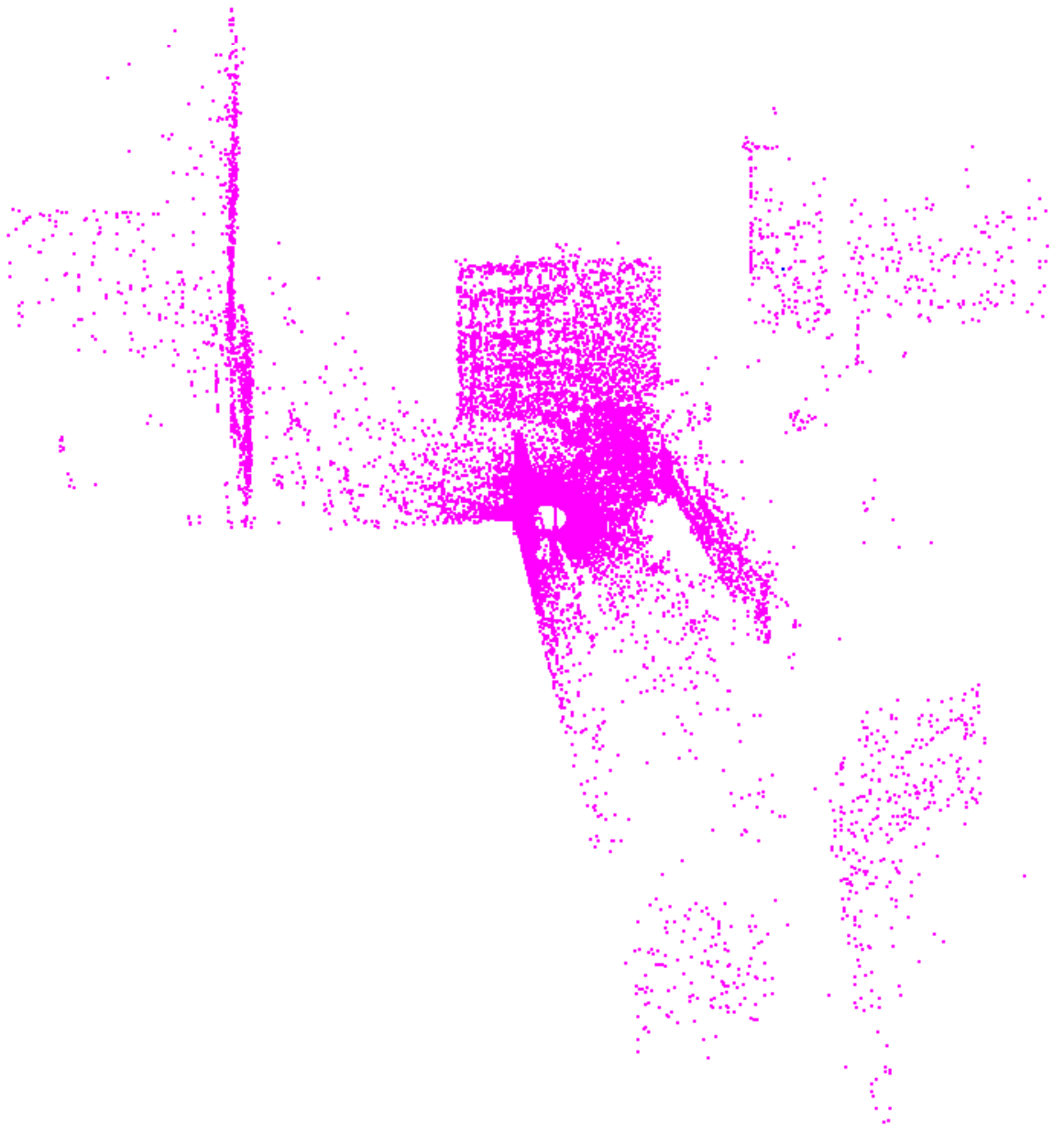} \\
\includegraphics[width=0.25\linewidth]{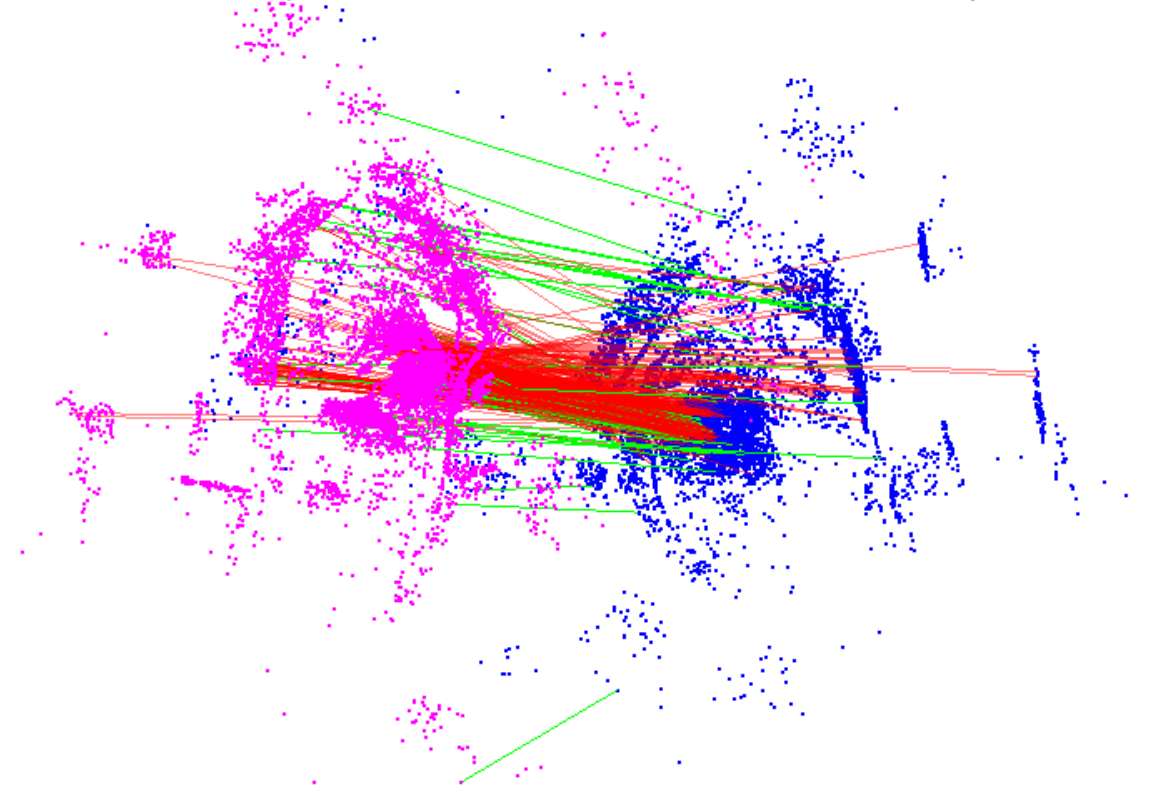}
&\includegraphics[width=0.15\linewidth]{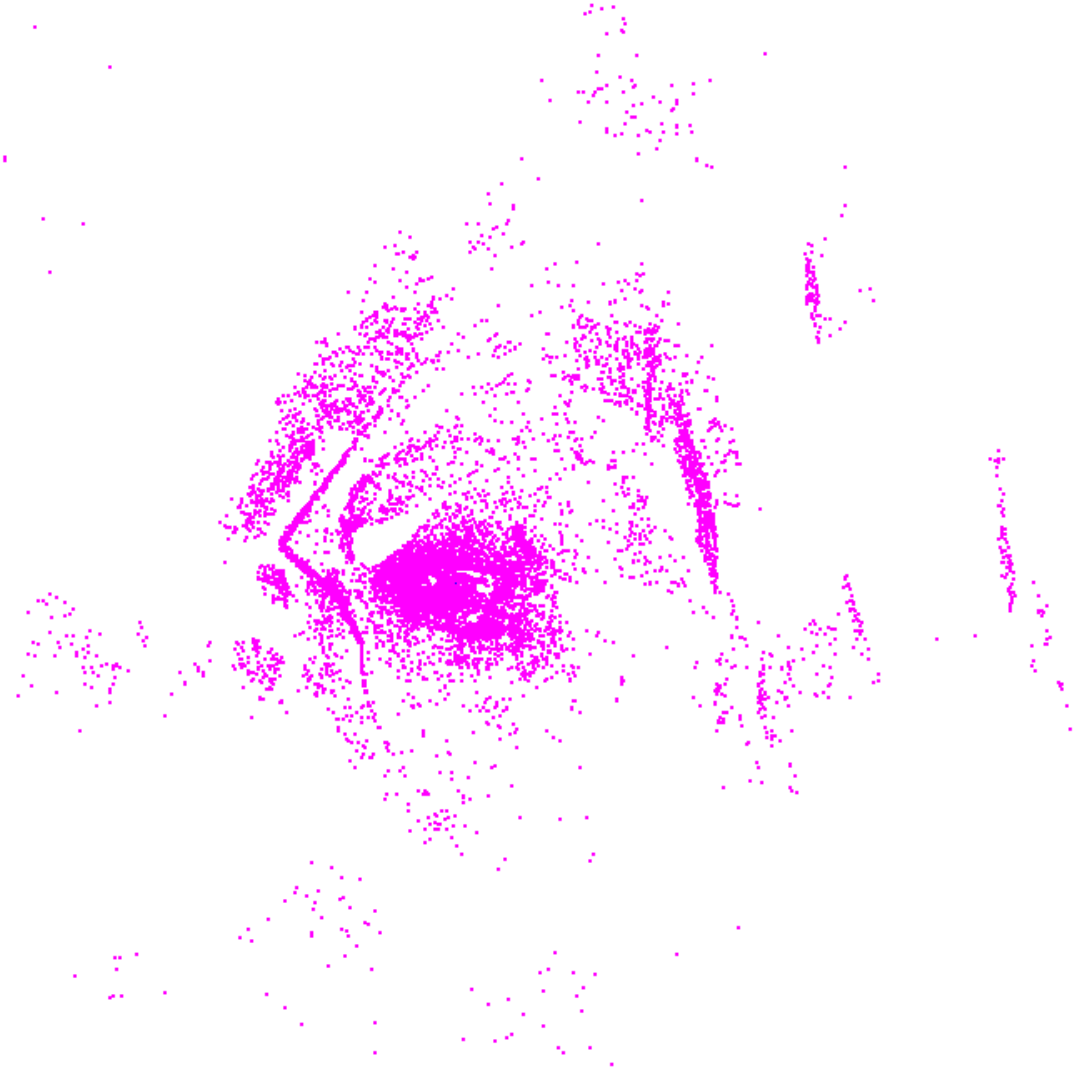}
&\includegraphics[width=0.25\linewidth]{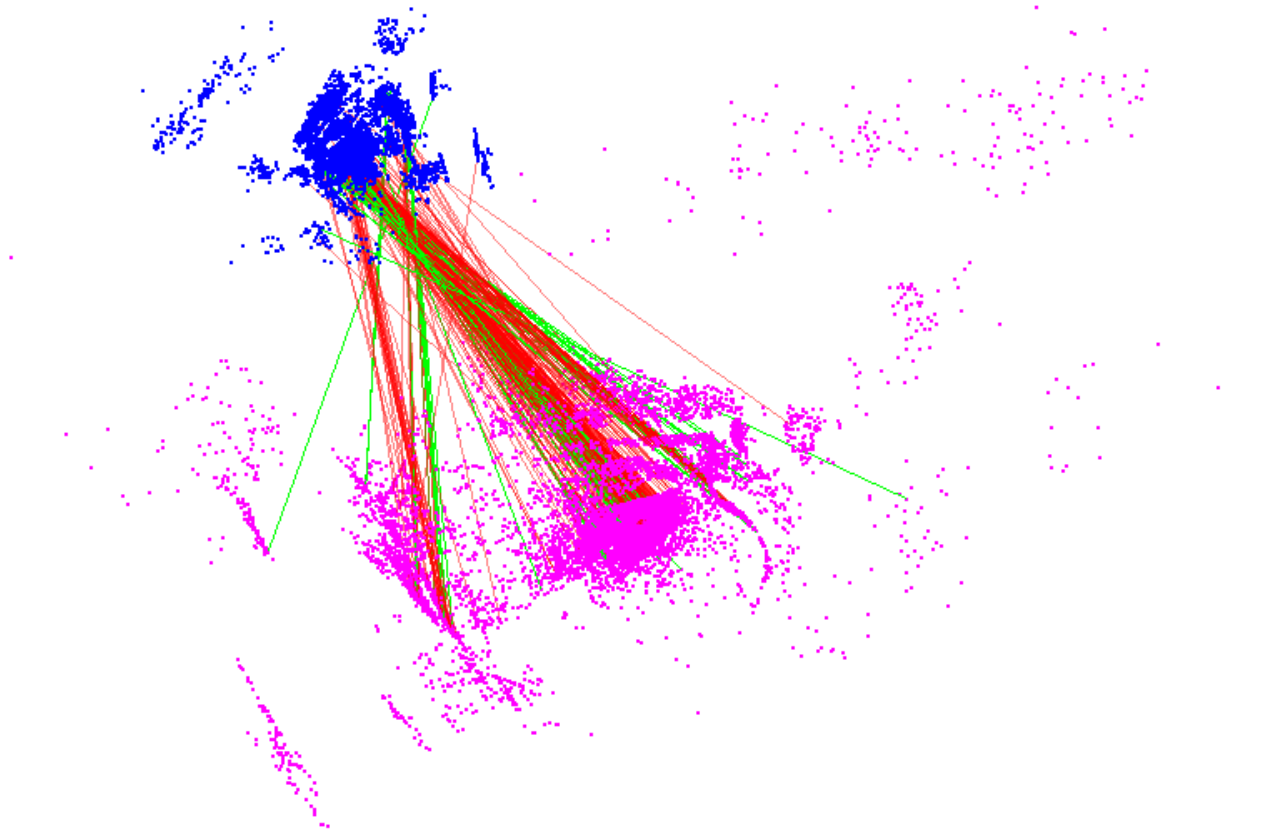}
&\includegraphics[width=0.15\linewidth]{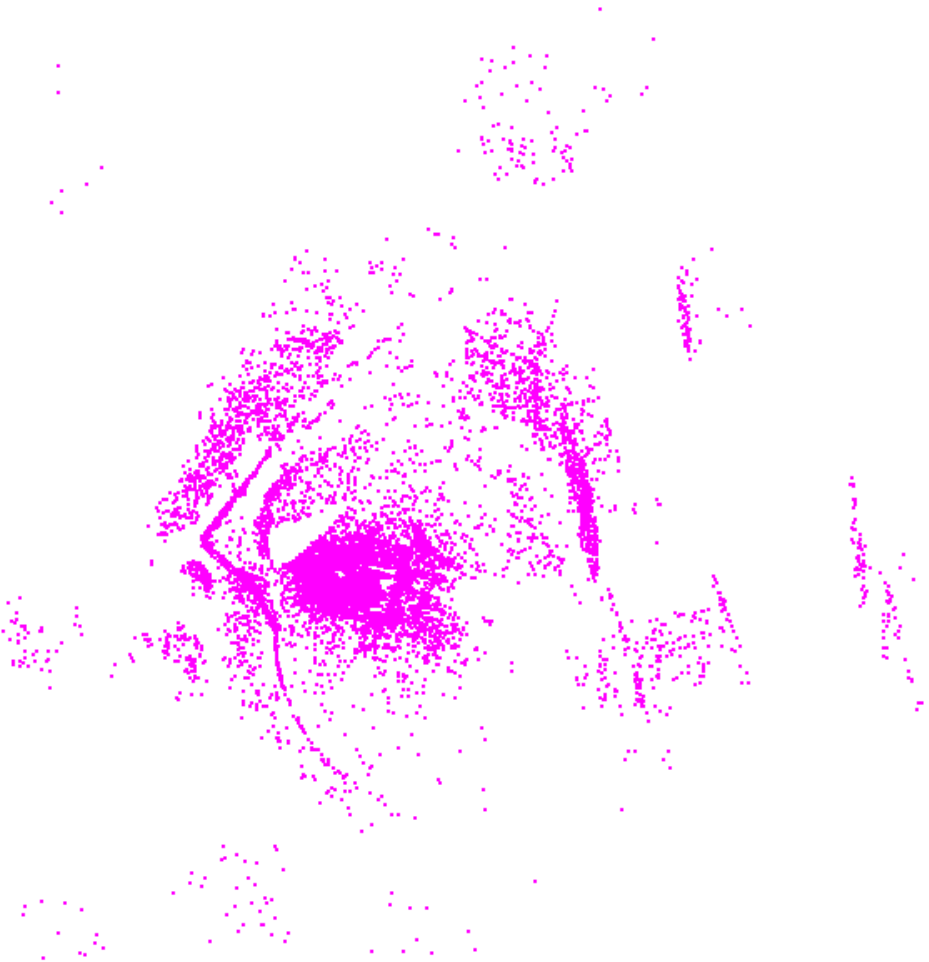}

\end{tabular}

\vspace{-1mm}

\centering
\caption{Qualitative registration results over multiple real datasets. From row 1-6, the point clouds adopted include `bunny', `Armadillo', `Mario', `Squirrel', `city', and `castle', respectively. Left-Half: Known-scale registration. Right-Half: Unknown-scale registration. In each half, from column 1-2, we display the correspondences (inliers are in green lines and outliers are in red lines), and the registration (reprojection) result using ICOS, respectively.}
\label{FPFH}
\end{figure}

\subsection{Additional Benchmarking on High Noise}

We supplement more experiments on rotation search and unknown-scale point cloud registration with high noise ($\sigma=0.1$). Since the `bunny' point cloud is placed in a $[-0.5,0.5]^3$ box, adding a noise with $\sigma=0.1$ can be extreme, as demonstrated in Fig.~\ref{High-noise}, where the shape of the `bunny' is hardly seen. We test ICOS against other solvers over the synthetic data with $N=100$ and the `bunny' with $N=1000$. The results are shown in Fig.~\ref{High-noise} (in unknown-scale registration, the translation errors are similar to rotation errors, thus omitted).

We can observe that ICOS is still robust against at least 80\% outliers in both of the tested problems with such high noise while keeping time-efficient, outperforming other competitors.

\begin{figure}[t]
\centering

\footnotesize{(a) Rotation Search with High Noise}

\subfigure{
\begin{minipage}[t]{1\linewidth}
\centering
\includegraphics[width=0.493\linewidth]{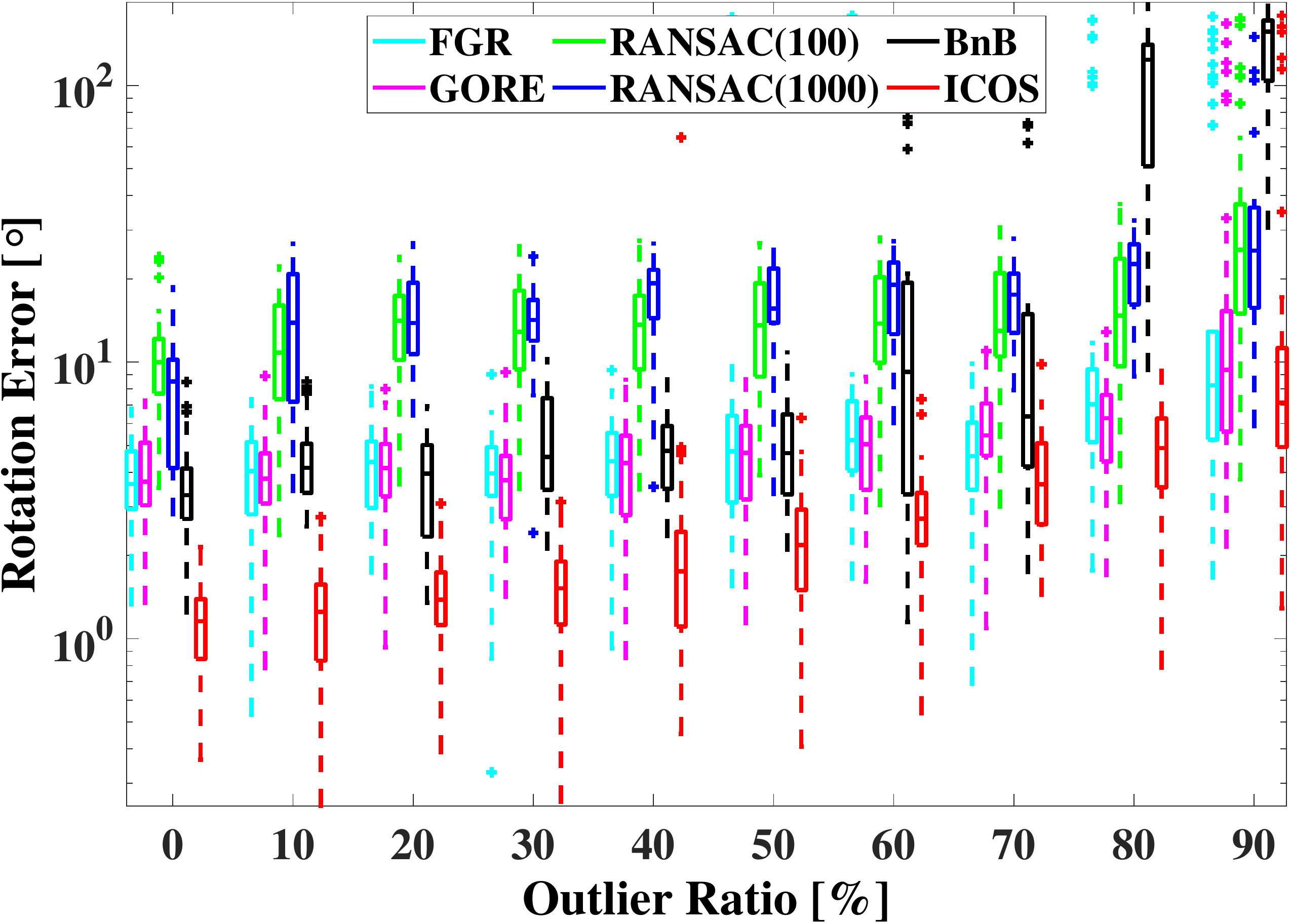}
\includegraphics[width=0.493\linewidth]{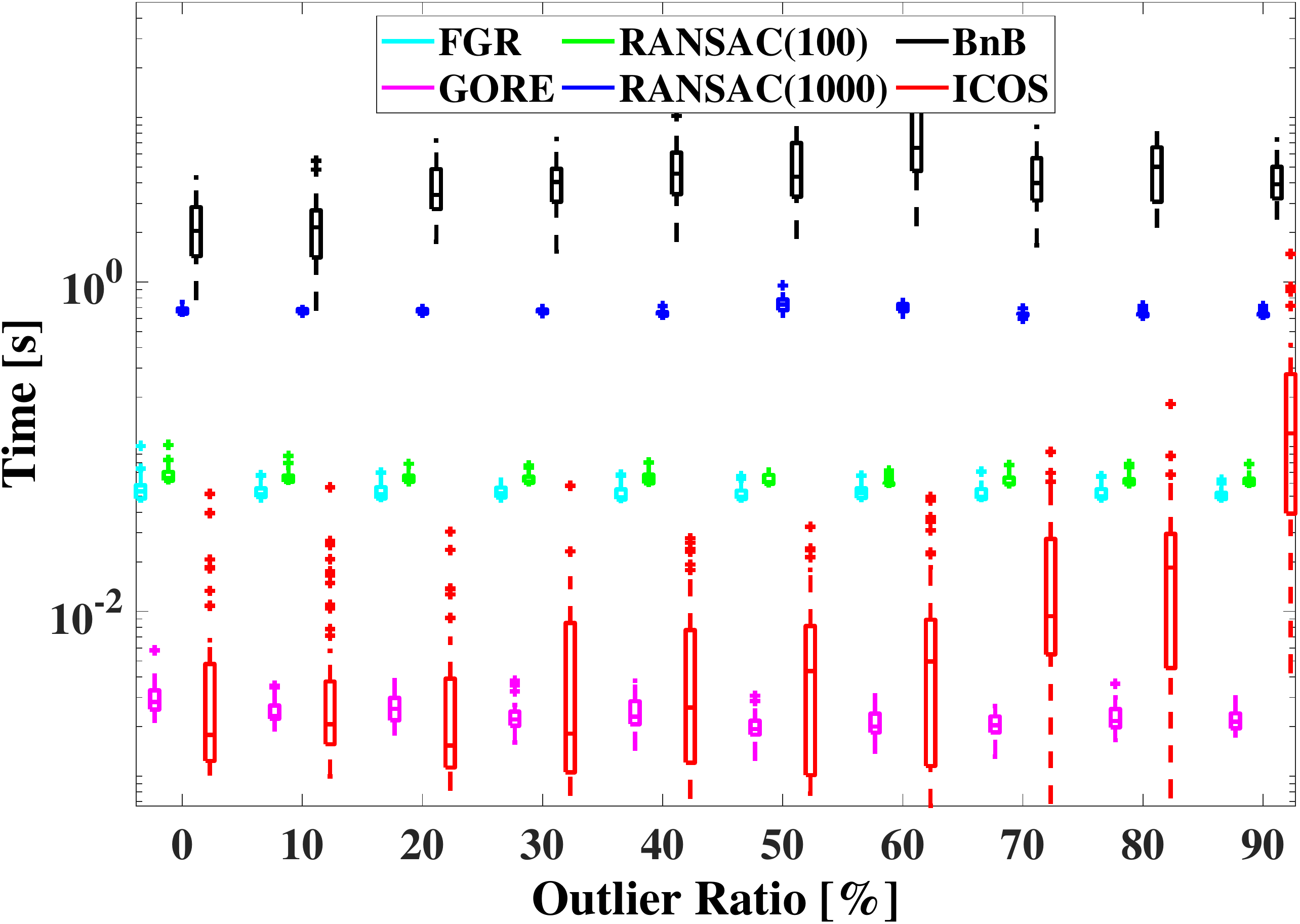}
\end{minipage}
}%

\footnotesize{(b) Point Cloud Registration with High Noise}

\subfigure{
\begin{minipage}[t]{1\linewidth}
\centering
\includegraphics[width=0.493\linewidth]{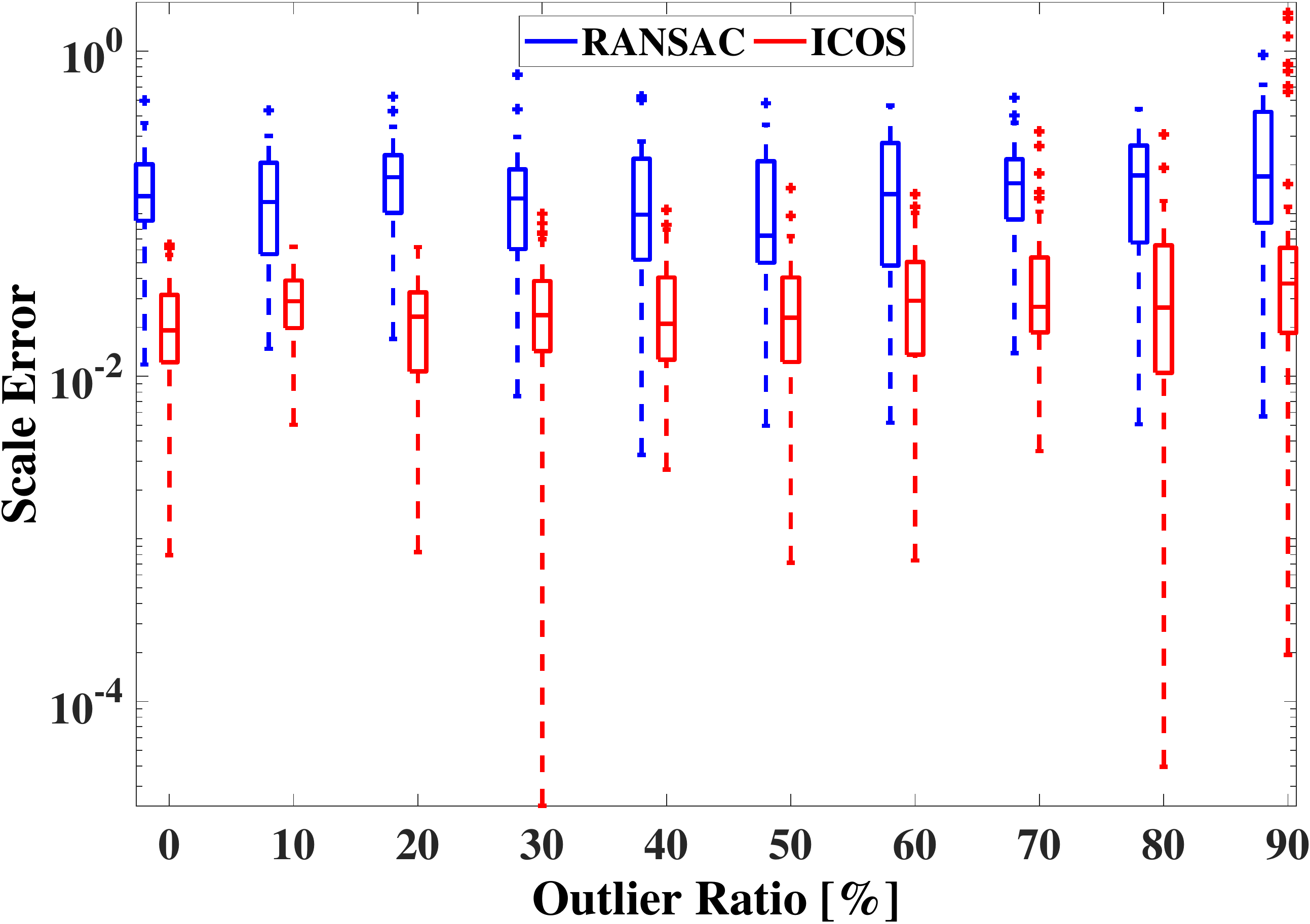}
\includegraphics[width=0.493\linewidth]{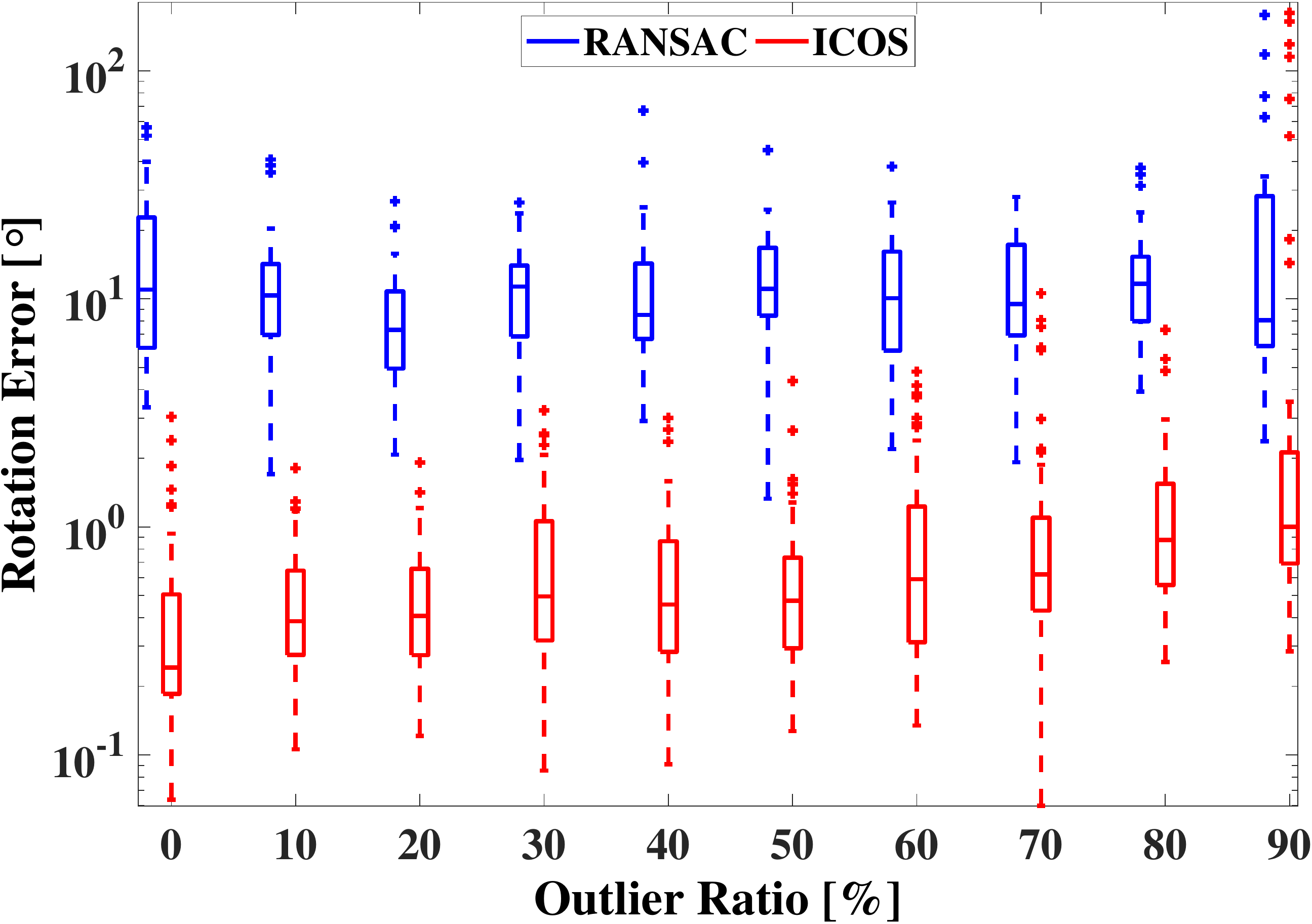}
\end{minipage}
}%

\subfigure{
\begin{minipage}[t]{1\linewidth}
\centering
\includegraphics[width=0.493\linewidth]{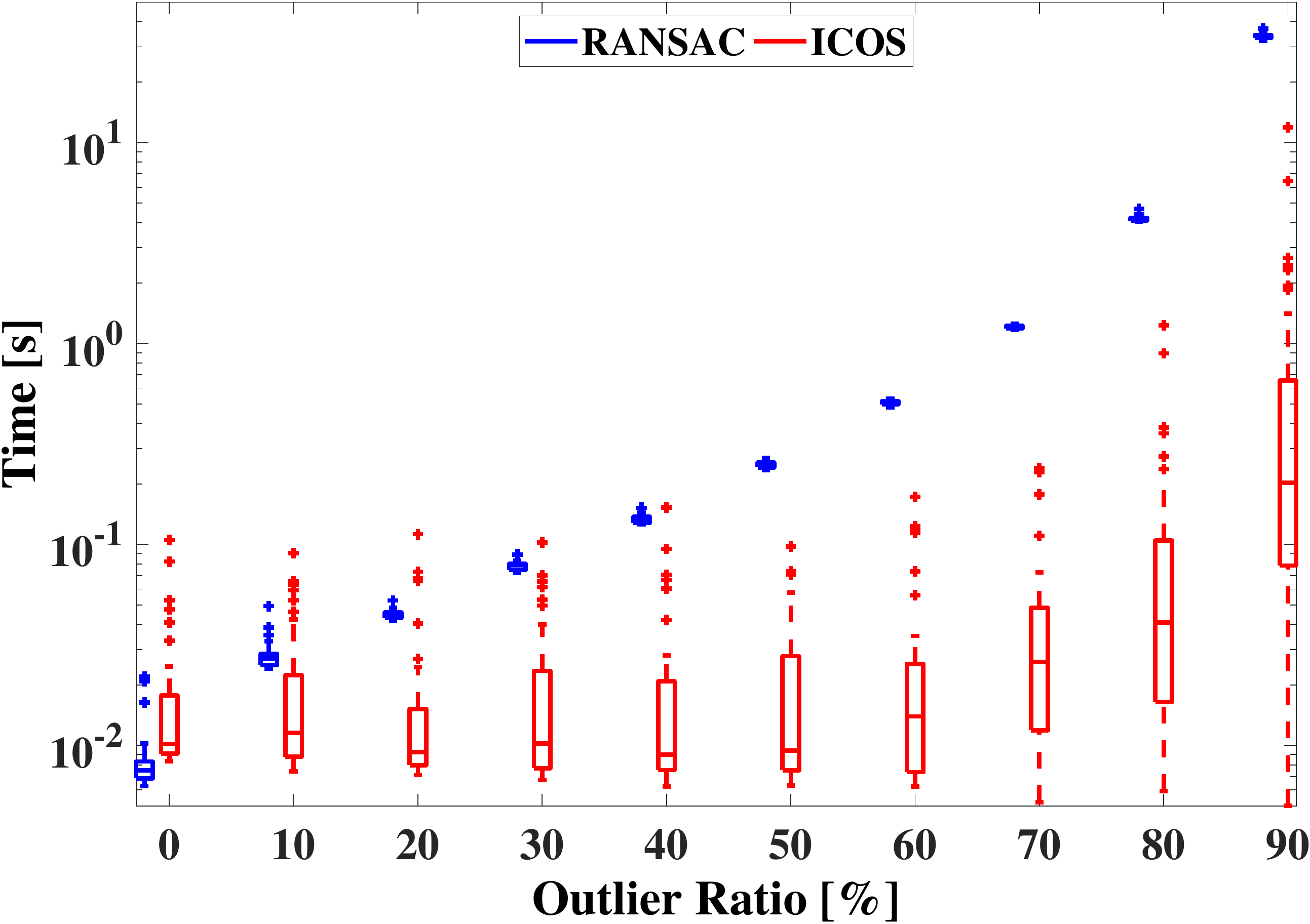}
\includegraphics[width=0.24\linewidth]{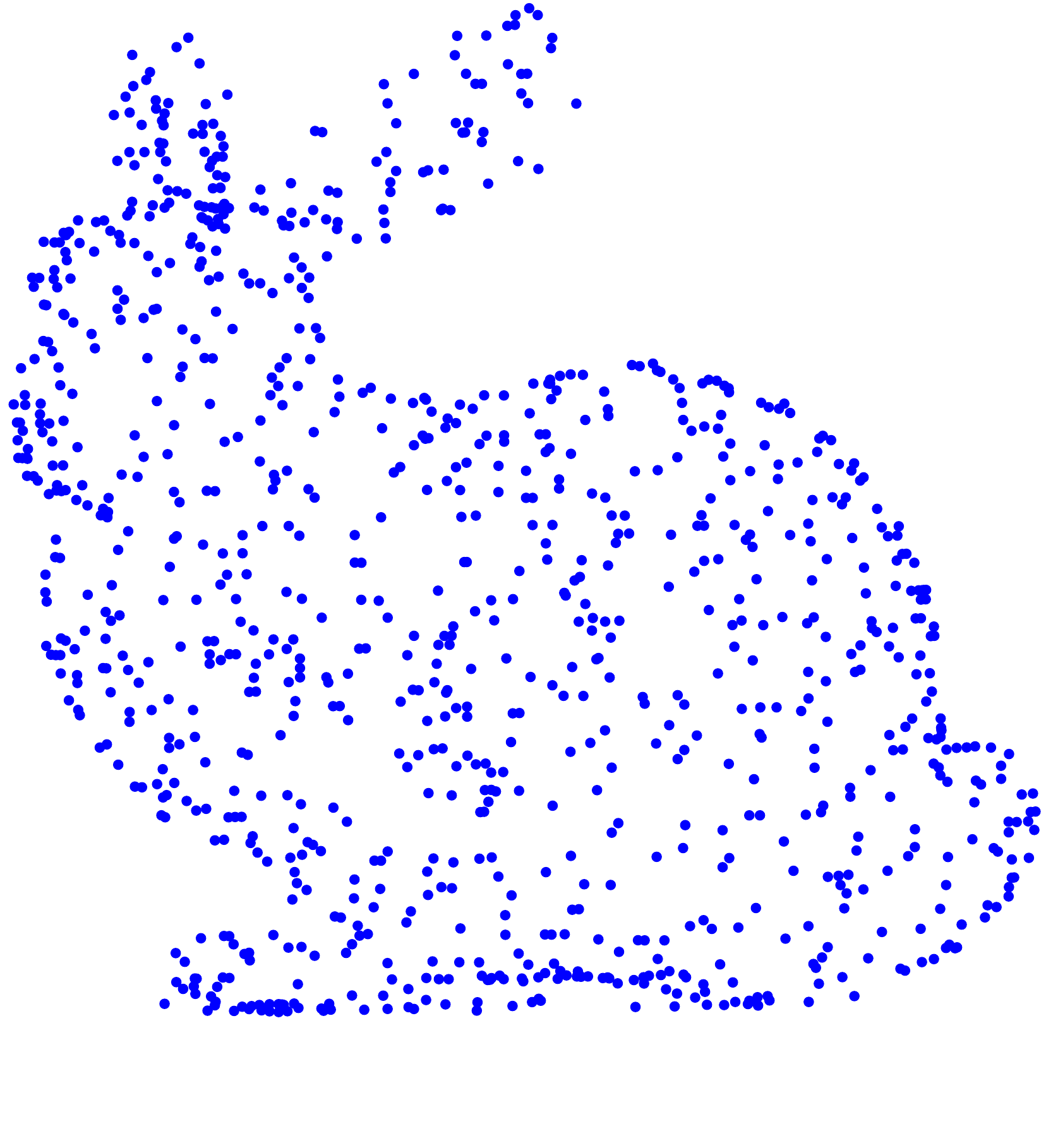}
\includegraphics[width=0.24\linewidth]{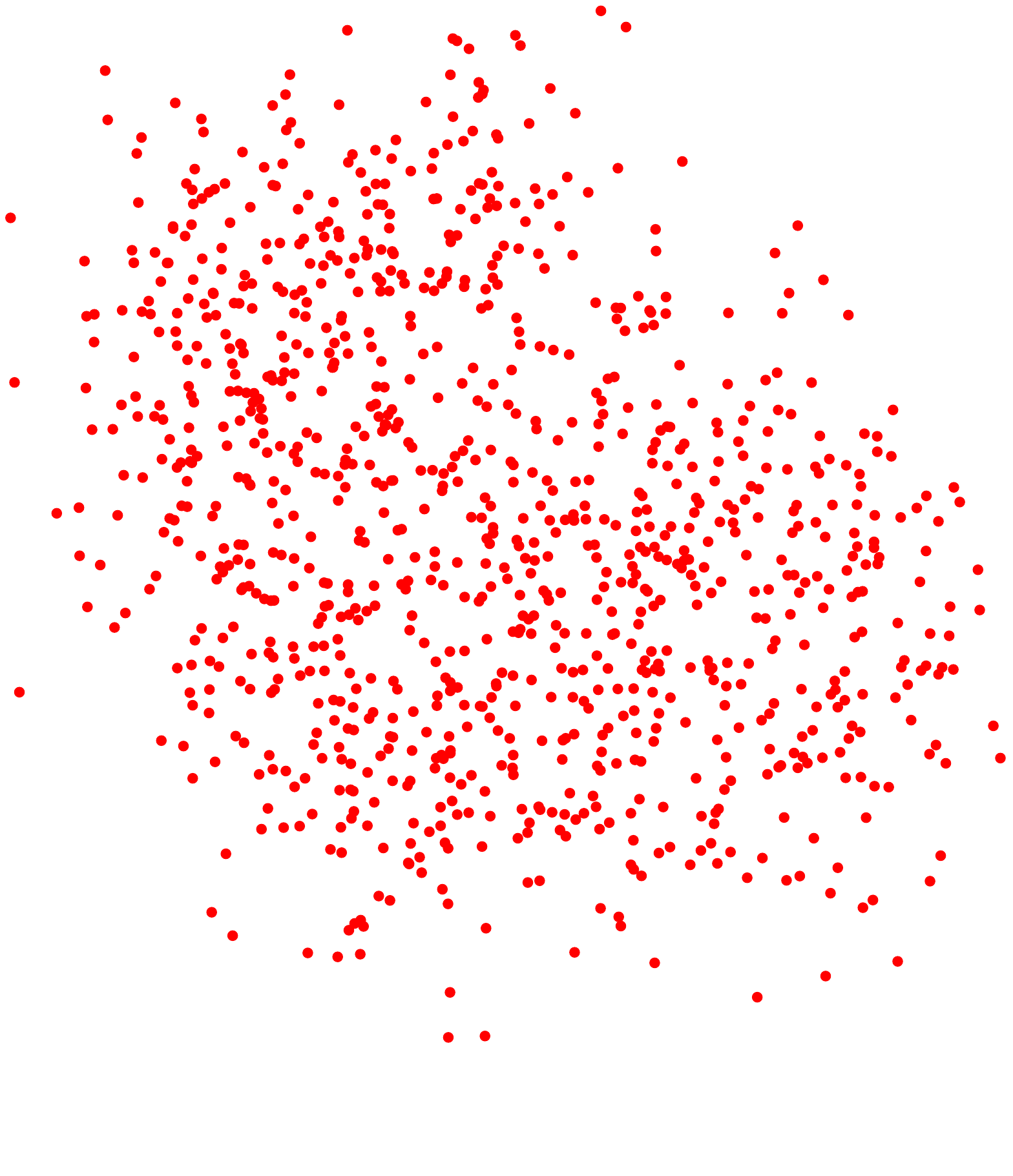}
\end{minipage}
}%

\vspace{-1mm}

\centering
\caption{Results with high noise. (a) Rotation search with high noise. (a) Unknown-scale registration with high noise. Right-bottom: The left image is the `bunny' without noise and the right one is that with noise $\sigma=0.1$.}
\label{High-noise}
\end{figure}

\subsection{Application 1: Image Stitching}

To validate ICOS in the real-world application w.r.t. rotation search, we adopt ICOS to solve the image stitching problem over the \textit{LunchRoom} image sequence from the PASSTA dataset~\cite{meneghetti2015image}. We use 2D feature descriptor SURF~\cite{bay2008speeded} to detect and match the 2D keypoints across a certain pair of images with overlapping scenes in them. Subsequently, we apply the inverse of the intrinsic matrix $\mathbf{K}^{-1}$ of the camera to create the vector correspondences $\mathcal{U}=\{\mathbf{u}_i\}_{i=1}^N$ and $\mathcal{V}=\{\mathbf{v}_i\}_{i=1}^N$ over the two different frames. Eventually, we employ Algorithm~\ref{Algo0-ICOS} to solve the rotation $\boldsymbol{\hat{R}}$ aligning the two vector sets (coordinate frames) and then obtain the homography matrix using matrix transformation $\mathbf{H}=\mathbf{K}\boldsymbol{\hat{R}}\mathbf{K}^{-1}$ to stitch the image pair together. The correspondences and experimental results are shown in Fig.~\ref{ImgStit}. We find that ICOS always finds the correct inliers among correspondences and stitches the images successfully.

\subsection{Application 2: 3D Object Localization}

We further test ICOS over the RGB-D scenes datasets~\cite{lai2011large} for dealing with the 3D object localization (pose estimation) problems. We first extract the 3D point cloud of the targeted object (\textit{cereal box, cap,} and \textit{table}) from the scene according to the labels provided and impose a random rigid transformation (with both known scale and unknown scale) as well as random noise with $\sigma=0.01$ and $\mu=0$ on the target object. Then, we adopt the FPFH 3D feature descriptor~\cite{rusu2009fast} to build putative correspondences between the scene and the transformed object. We find that the correspondences constructed by FPFH contain over 80\% outliers in all the scenes tested, and what is worse, the outlier ratios even exceed 90\% in most scenes. 

For each object-scene registration problem, we apply ICOS, RANSAC(1000) and RANSAC(1min) to estimate the transformation (relative pose) of the object w.r.t. the scene, and then reproject the object back to the scene with the transformation estimated by each solver, respectively. The qualitative results are demonstrated in Fig.~\ref{Object-Lo} and~\ref{Object-Lo2}, and the quantitative results are specified in Table.~\ref{quantitative} and~\ref{quantitative2}, in which the unit for $E_{\boldsymbol{R}}$ is degree ($^\circ$), and the unit for $E_{\boldsymbol{t}}$ is meter, computed as in~\eqref{errors}. In each table, the best results are labeled in the \textbf{bold} font.

\clearpage

\begin{figure*}[t]
\centering
\setlength{\tabcolsep}{1mm}

\begin{tabular}{ccc|ccc}

\textit{Image 01\&07}, 12.88\% & Inliers Found by ICOS & Stitching&
\textit{Image 07\&14}, 56.41\% & Inliers Found by ICOS & Stitching \\

\includegraphics[width=0.19\linewidth]{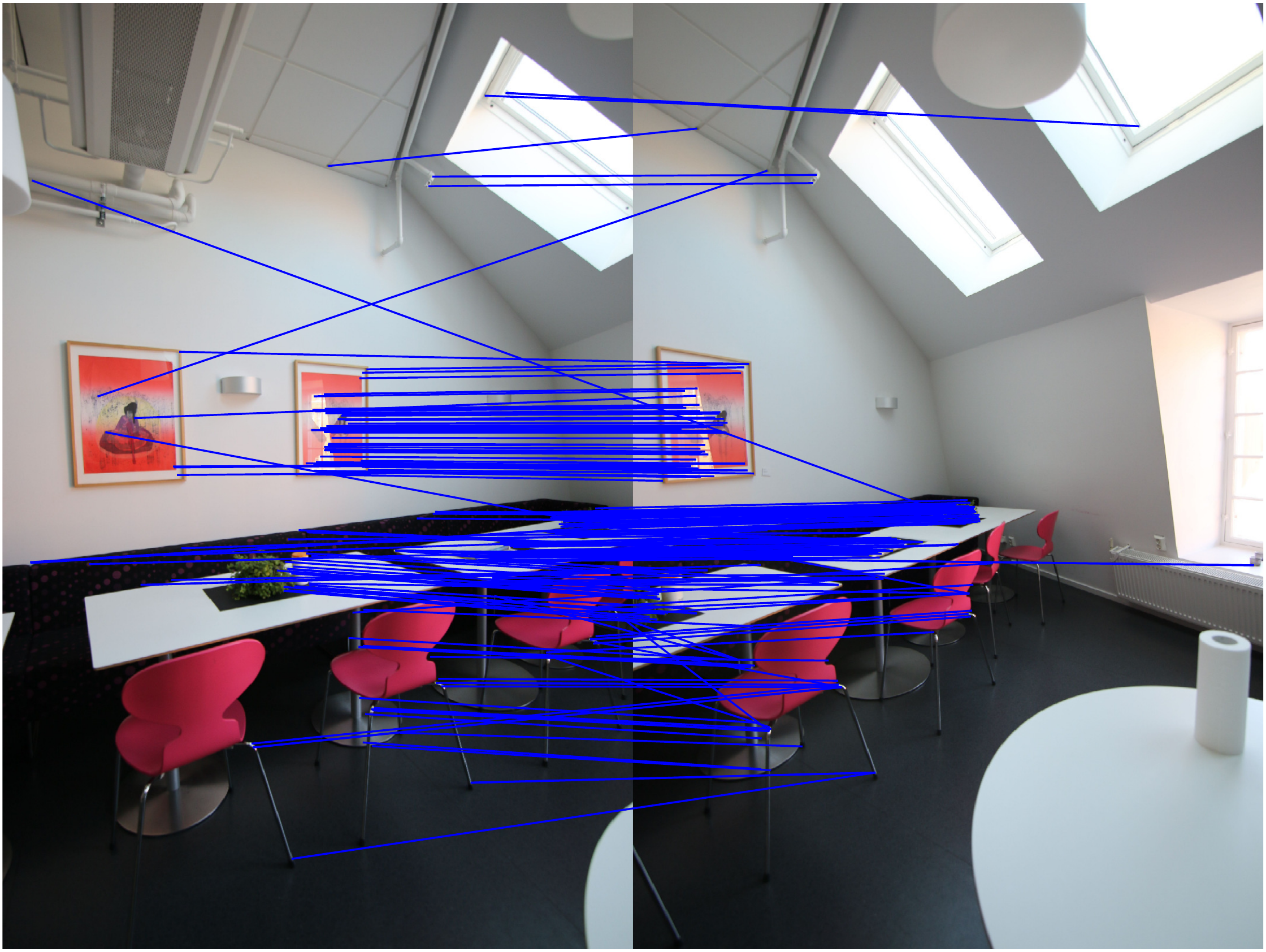}
&\includegraphics[width=0.19\linewidth]{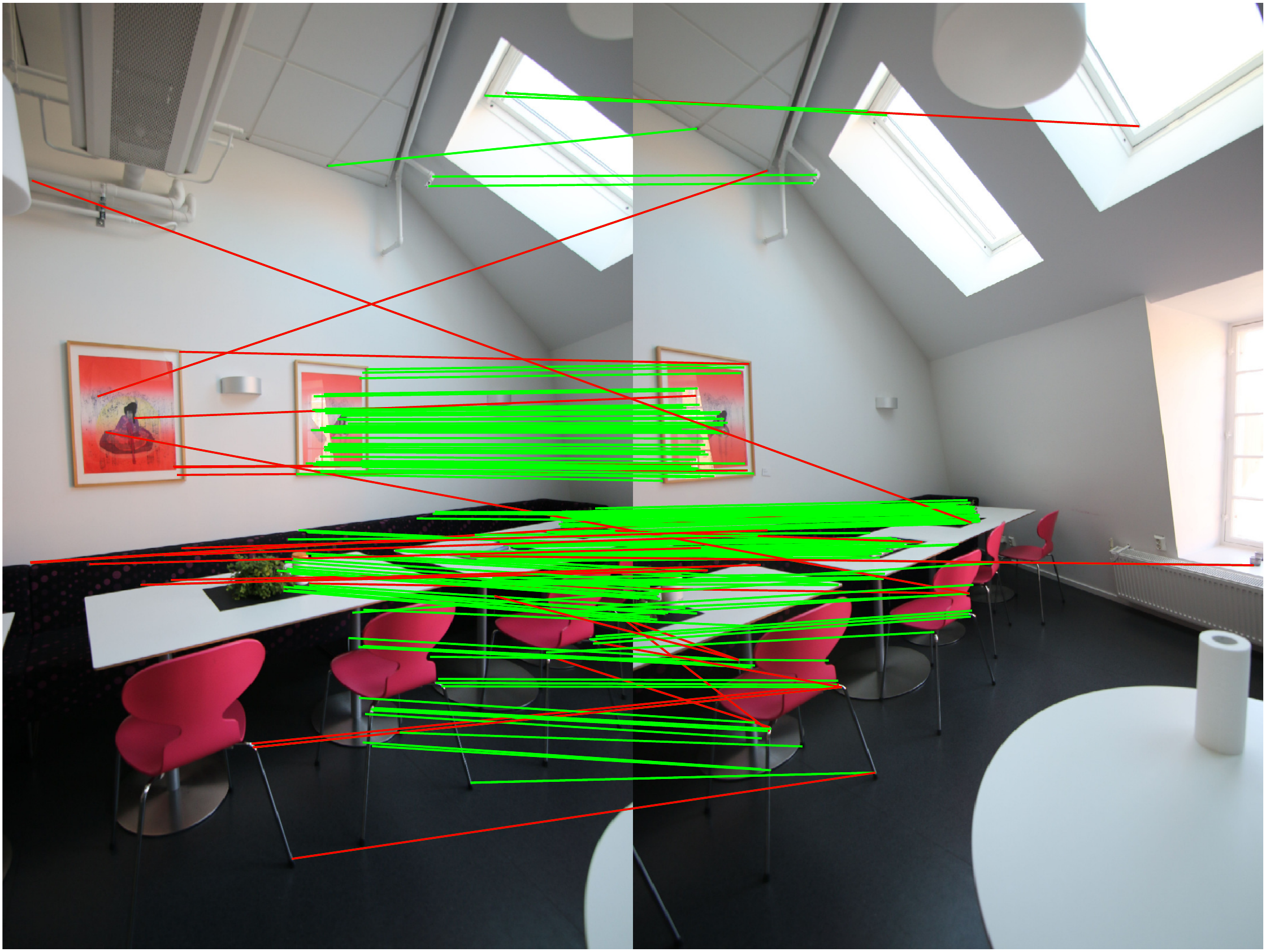}
&\includegraphics[width=0.096\linewidth]{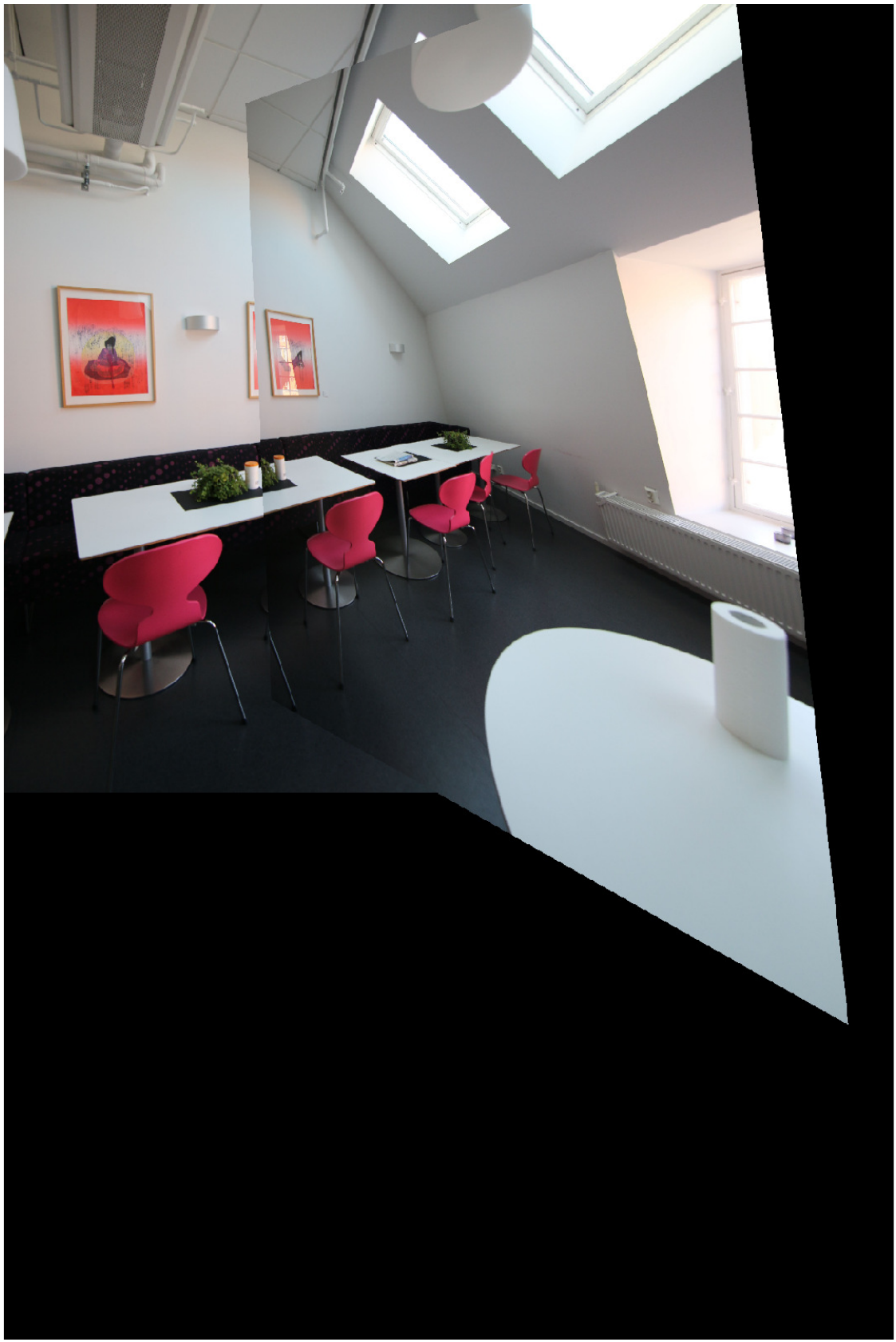}
&\includegraphics[width=0.19\linewidth]{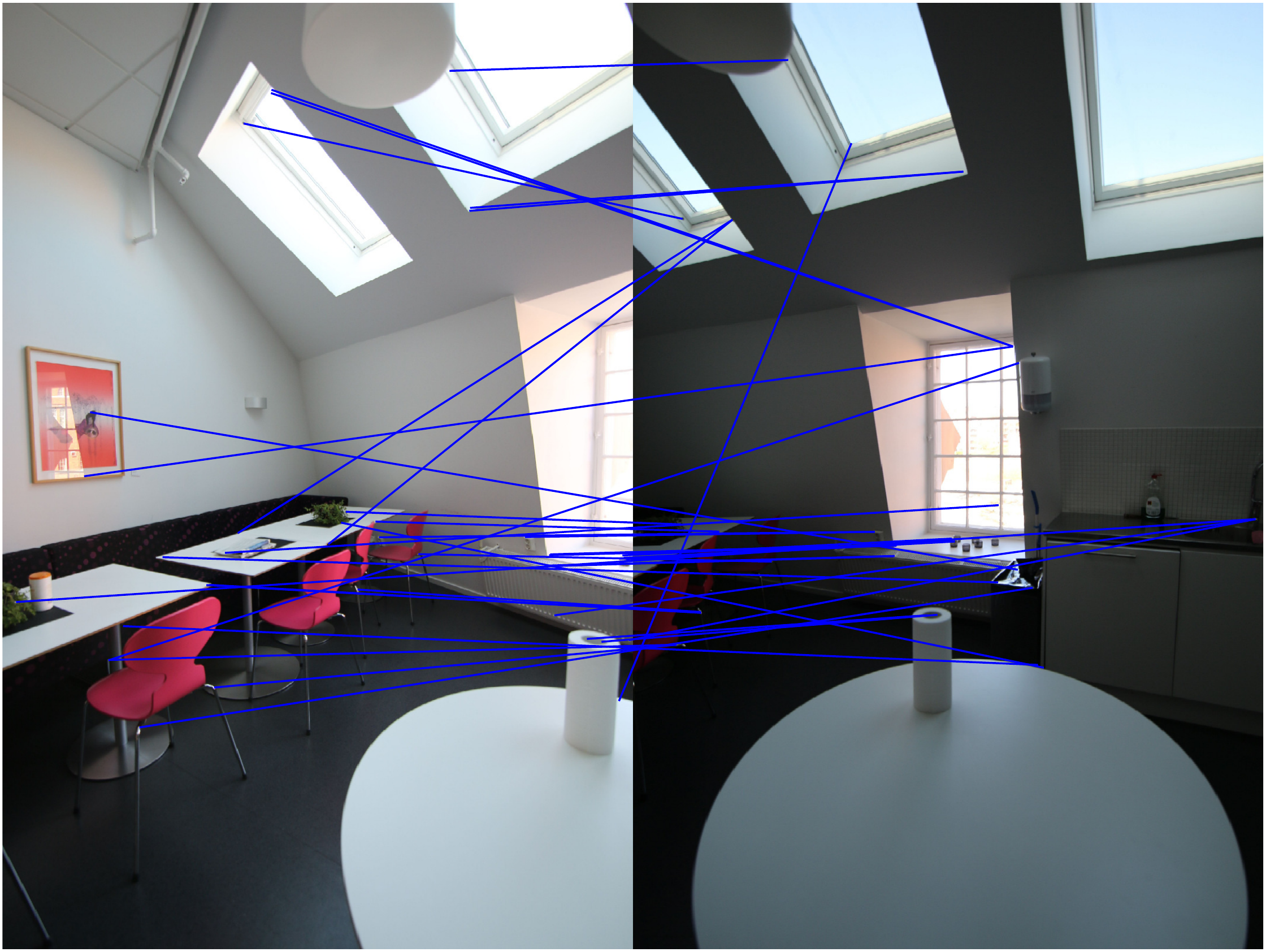}
&\includegraphics[width=0.19\linewidth]{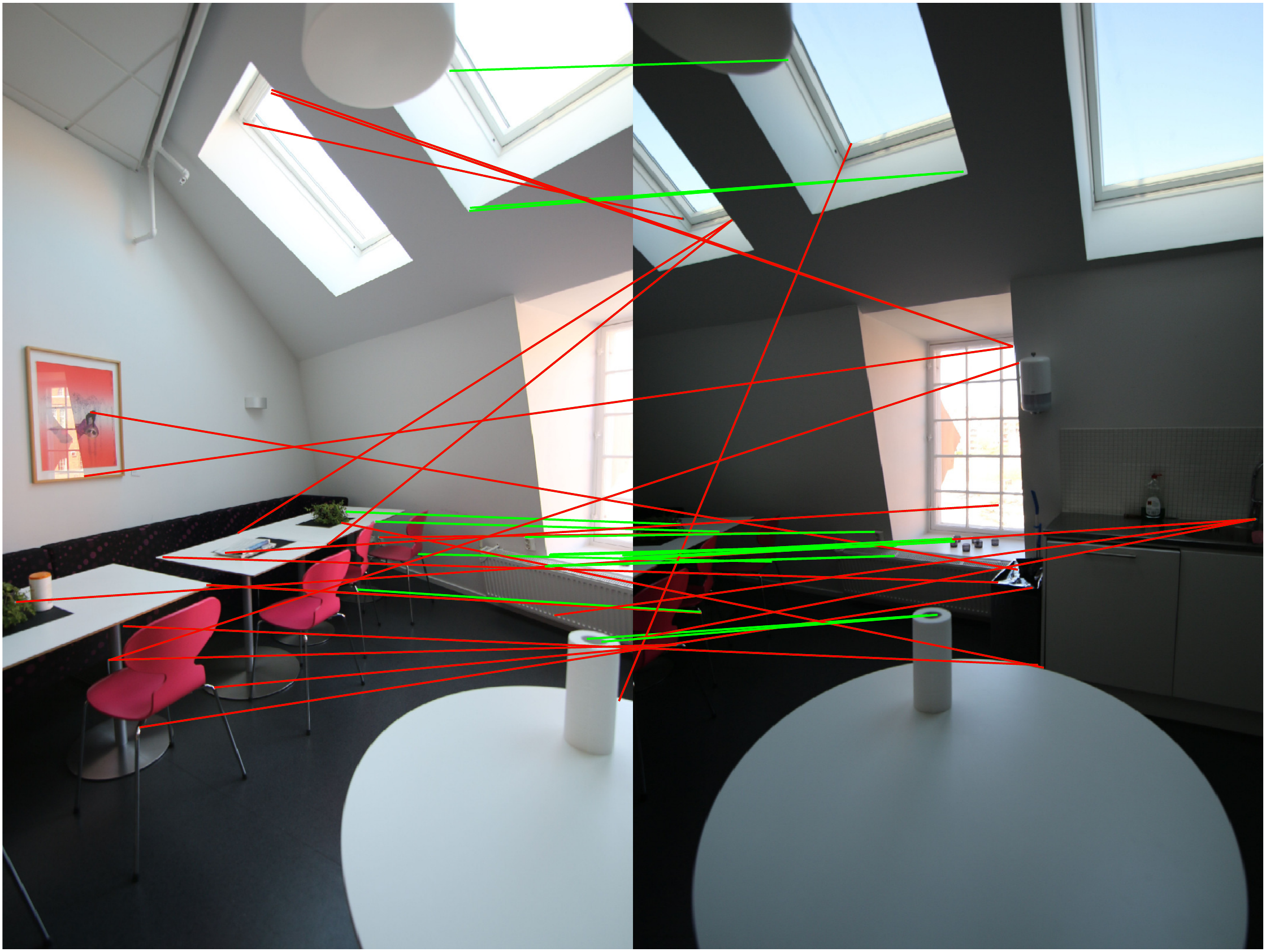}
&\includegraphics[width=0.096\linewidth]{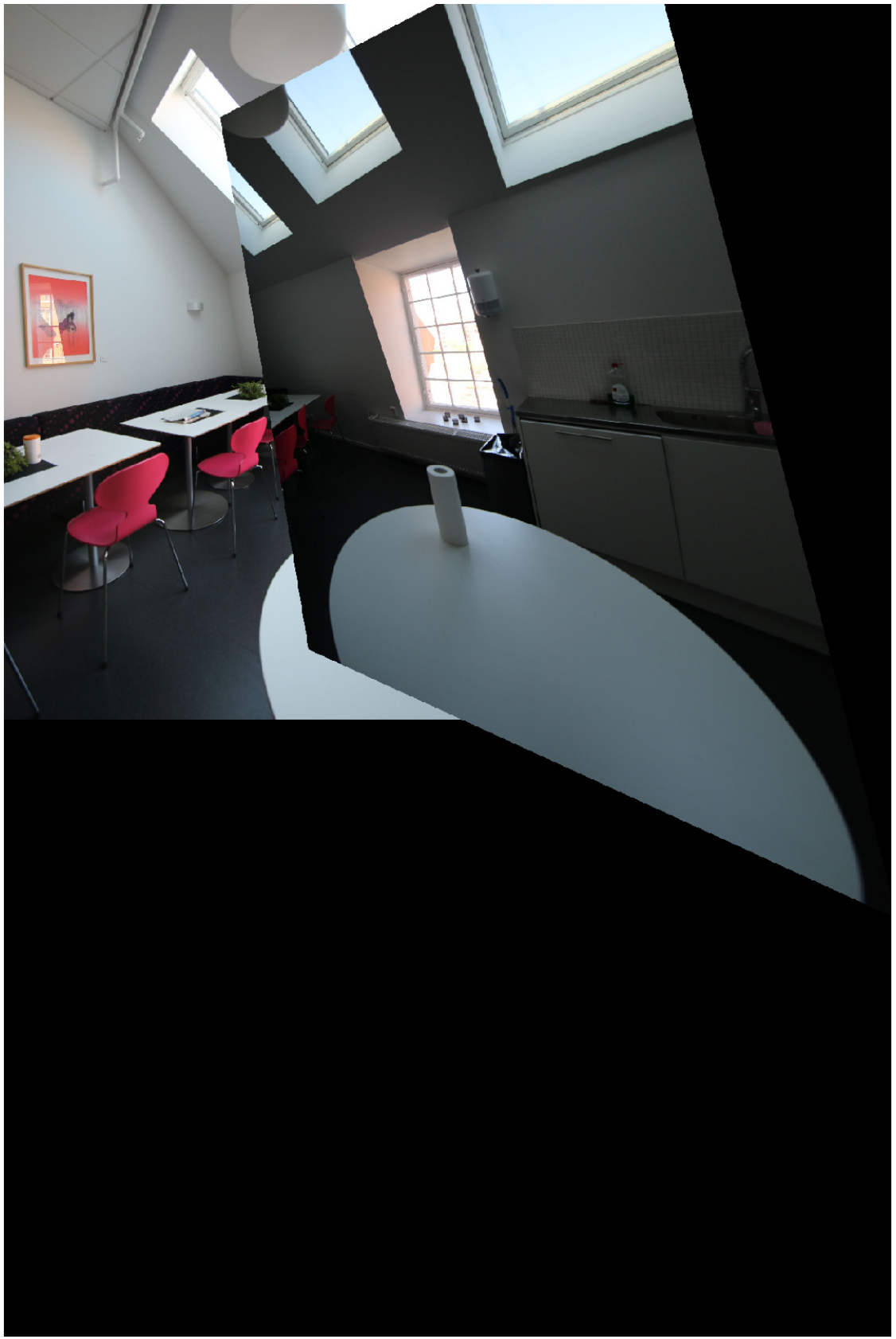} \\

\textit{Image 14\&21}, 19.00\% & Inliers Found by ICOS & Stitching&
\textit{Image 21\&30}, 13.68\% & Inliers Found by ICOS & Stitching \\

\includegraphics[width=0.19\linewidth]{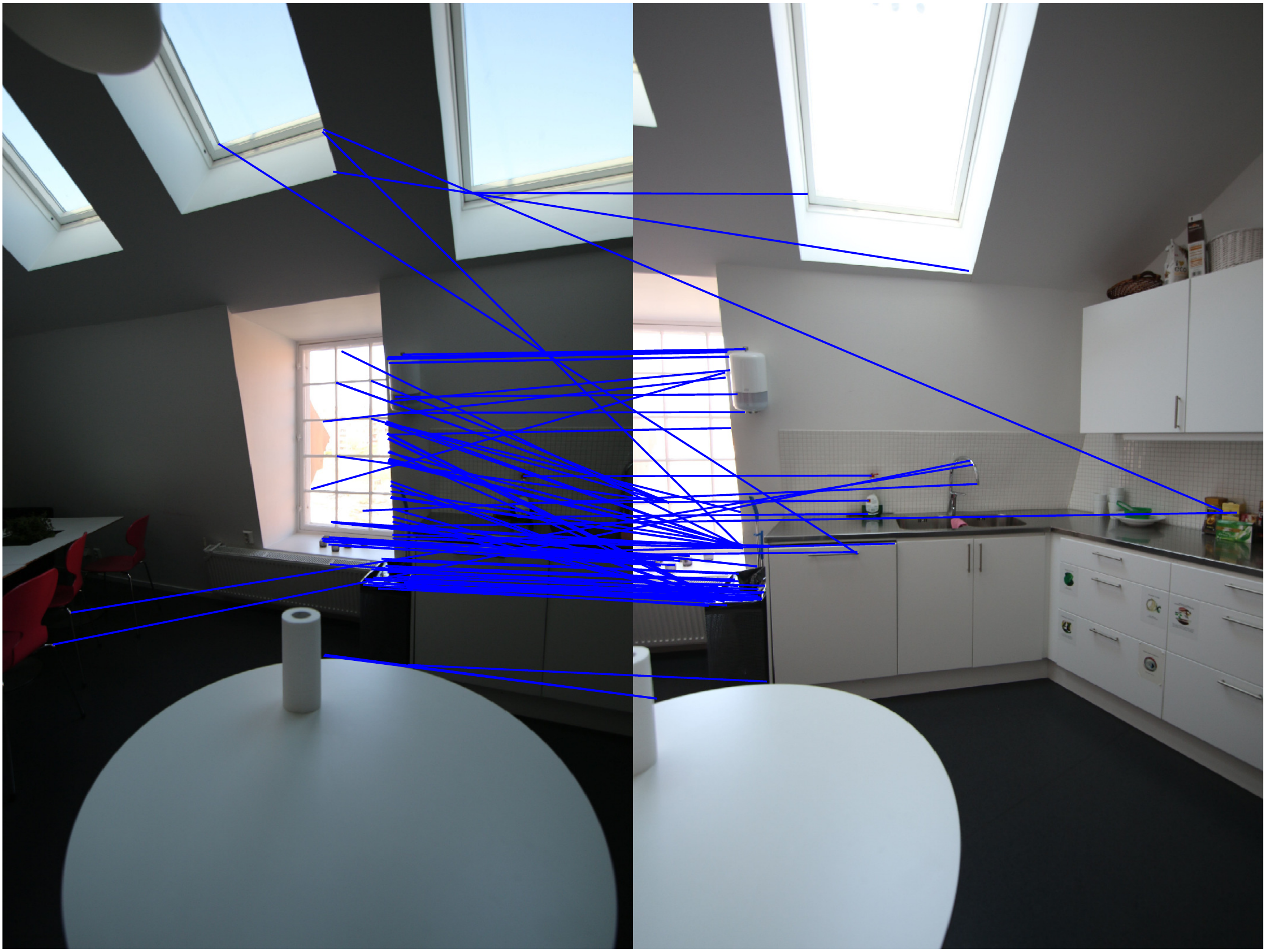}
&\includegraphics[width=0.19\linewidth]{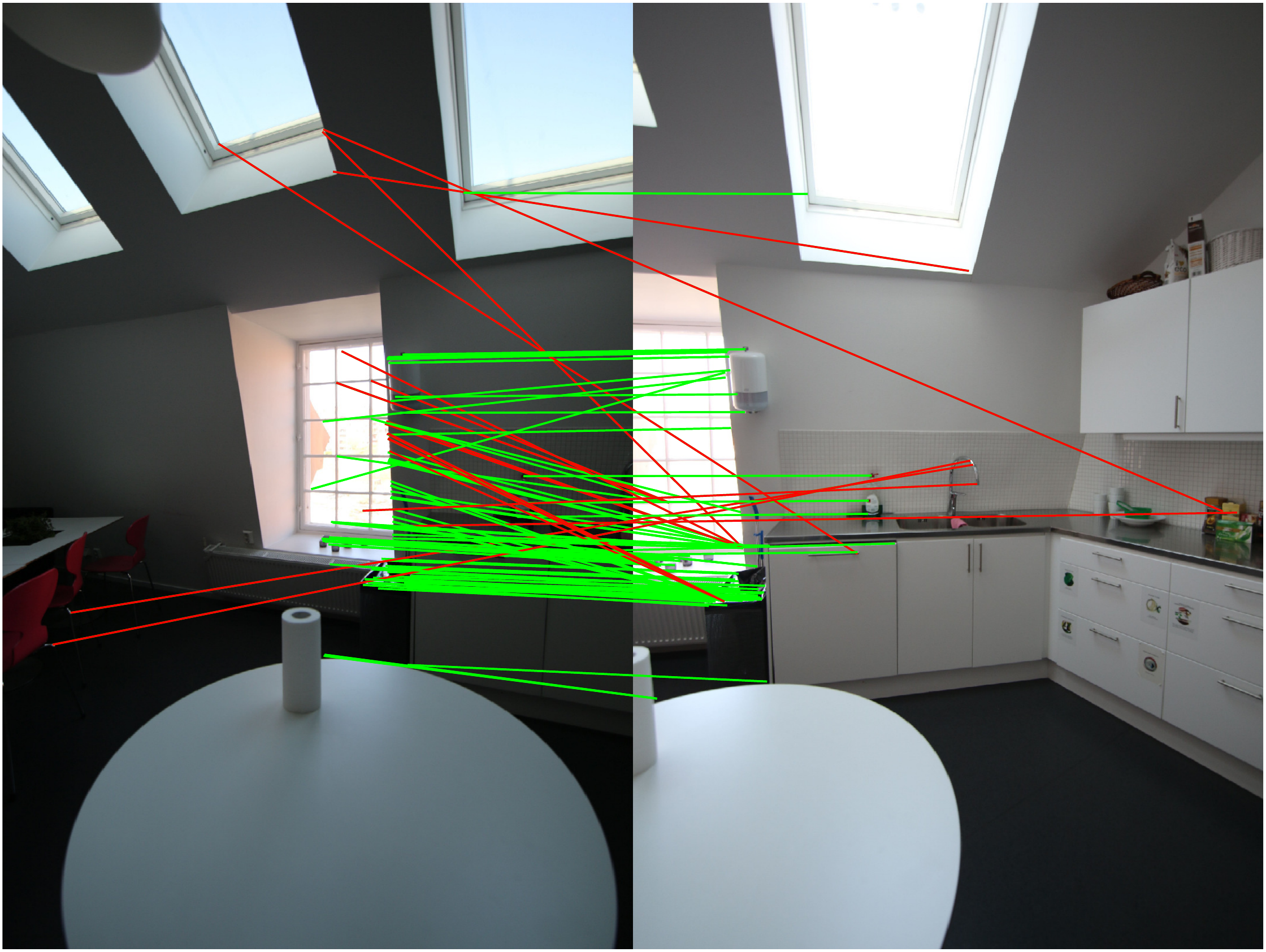}
&\includegraphics[width=0.096\linewidth]{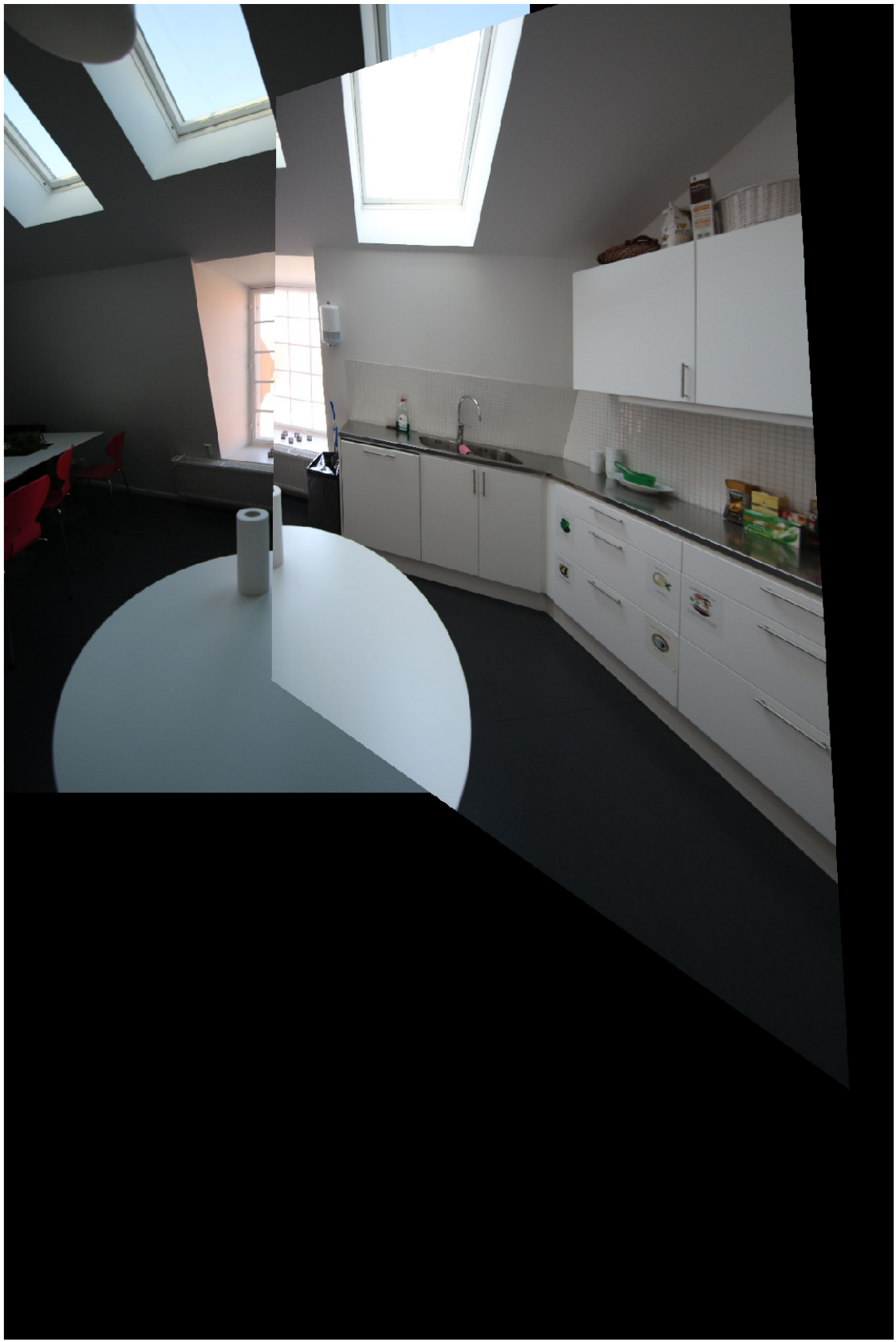}
&\includegraphics[width=0.19\linewidth]{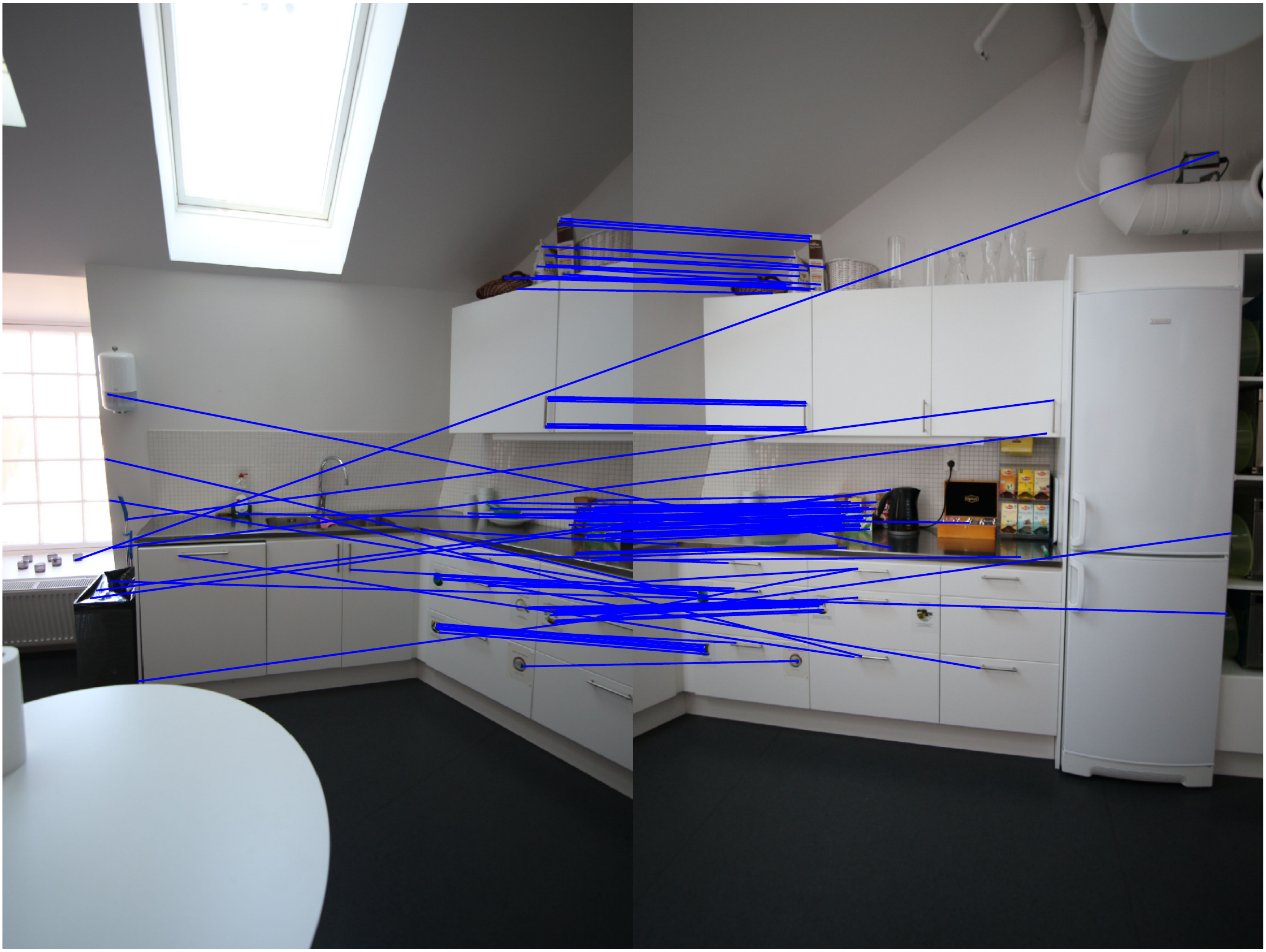}
&\includegraphics[width=0.19\linewidth]{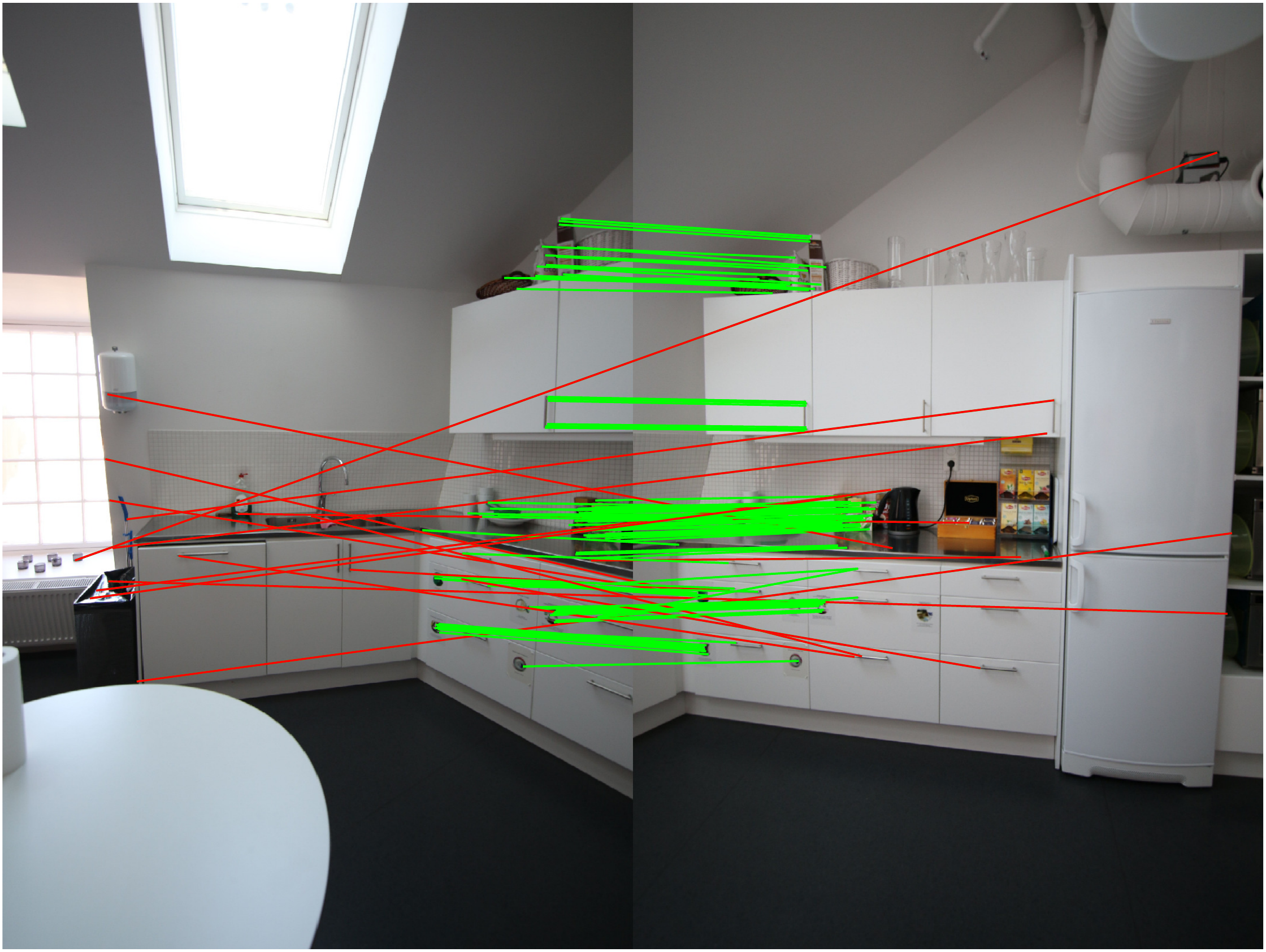}
&\includegraphics[width=0.096\linewidth]{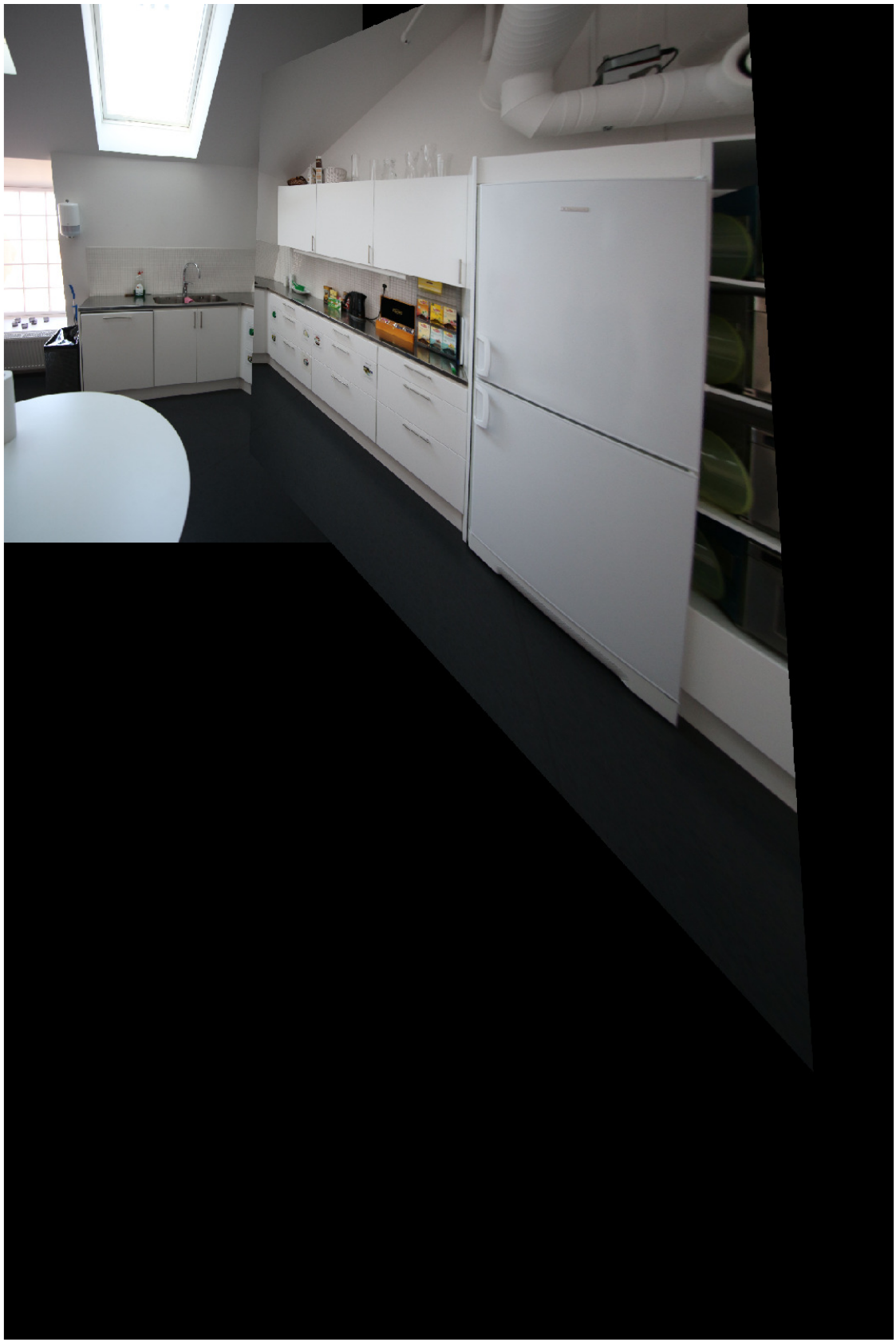} \\

\textit{Image 30\&40}, 56.56\% & Inliers Found by ICOS & Stitching&
\textit{Image 40\&48}, 43.66\% & Inliers Found by ICOS & Stitching \\

\includegraphics[width=0.19\linewidth]{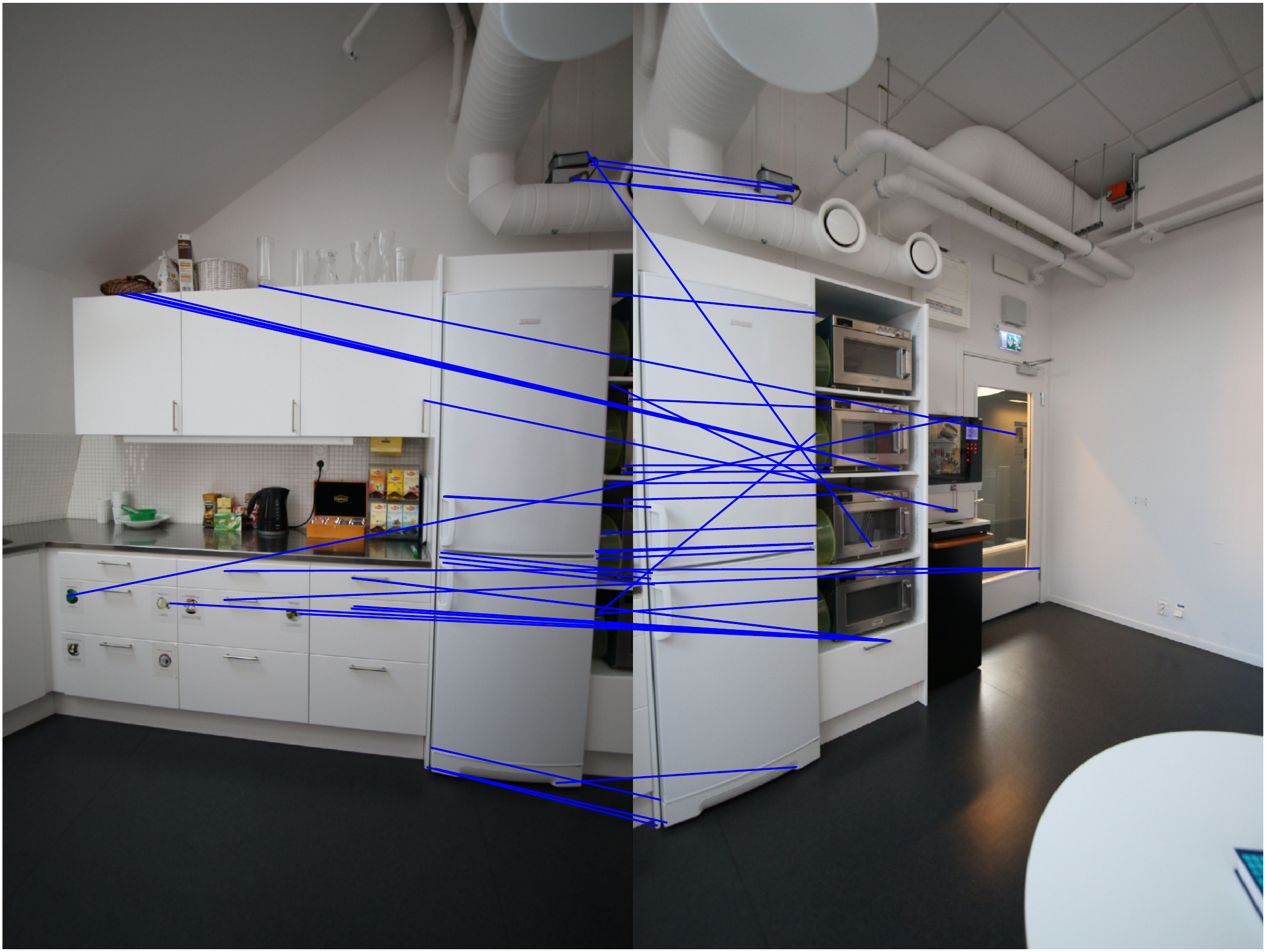}
&\includegraphics[width=0.19\linewidth]{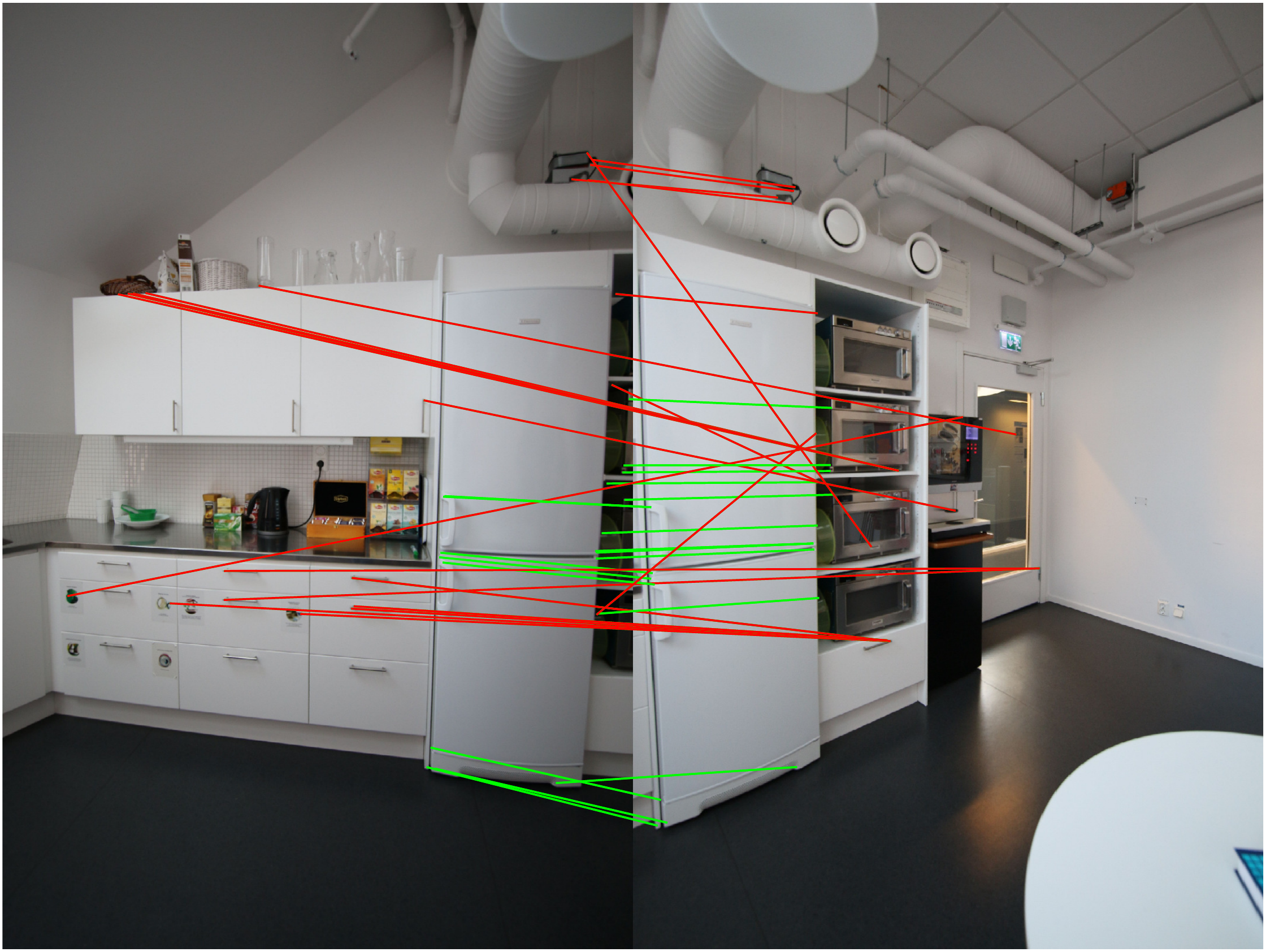}
&\includegraphics[width=0.096\linewidth]{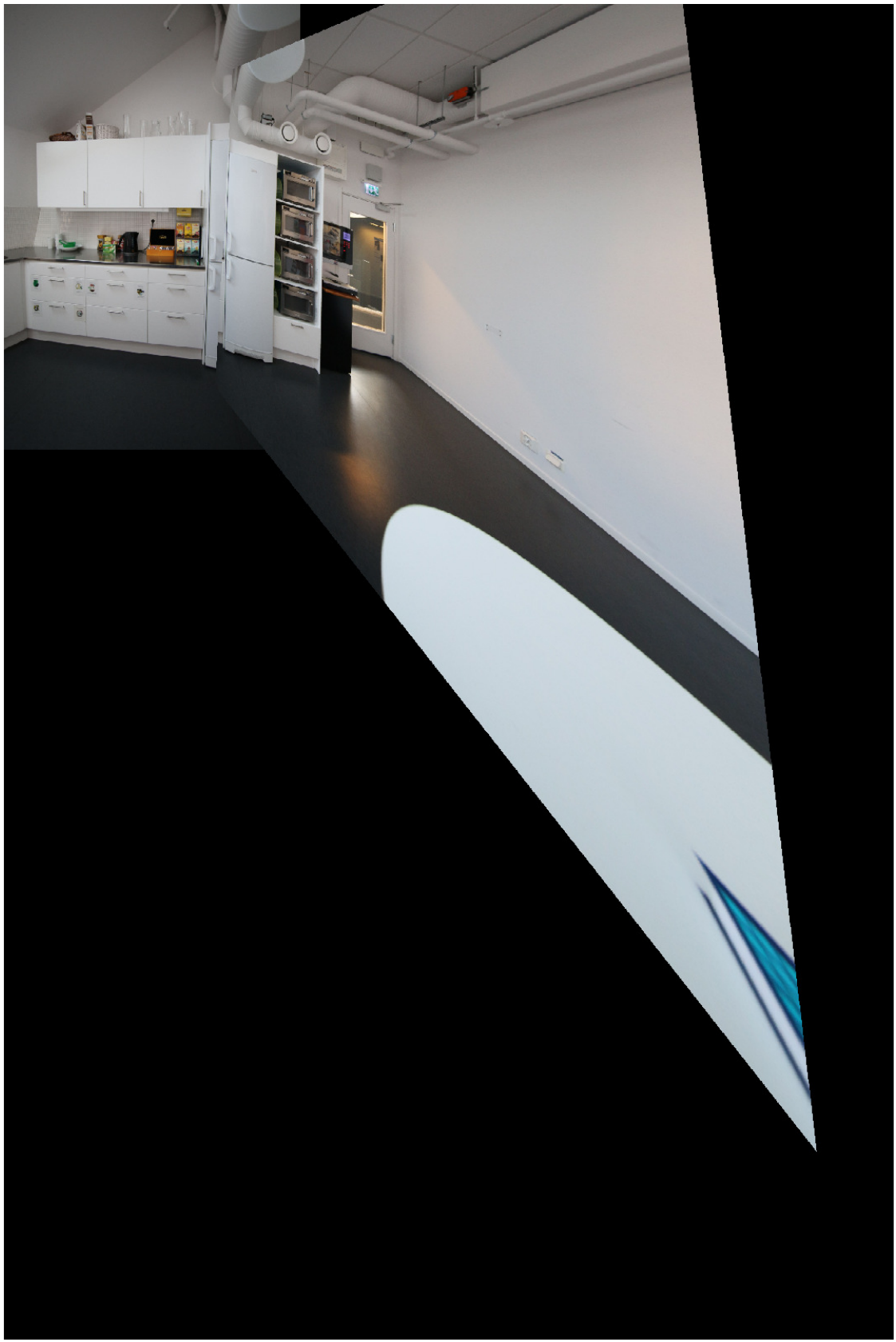}
&\includegraphics[width=0.19\linewidth]{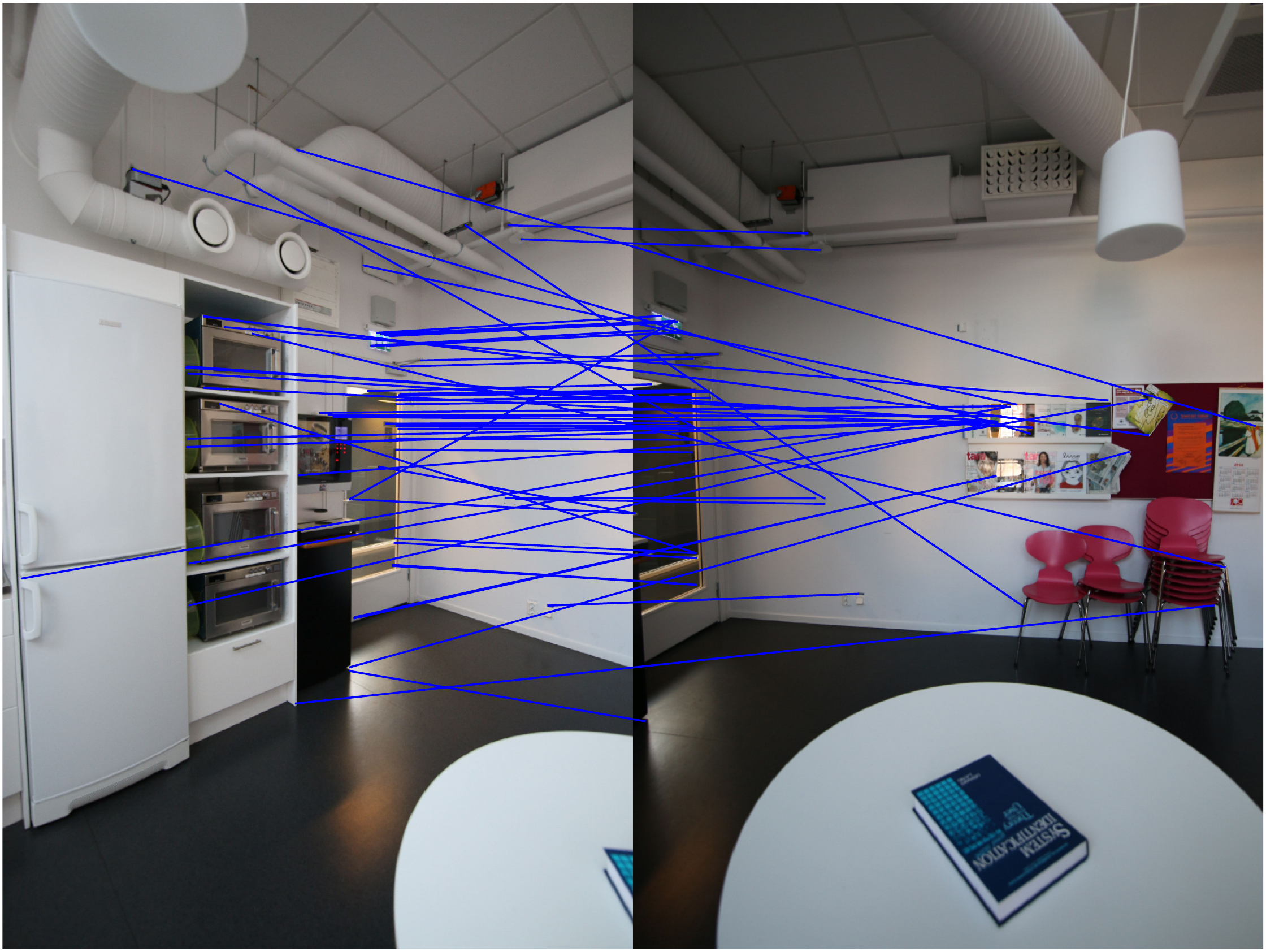}
&\includegraphics[width=0.19\linewidth]{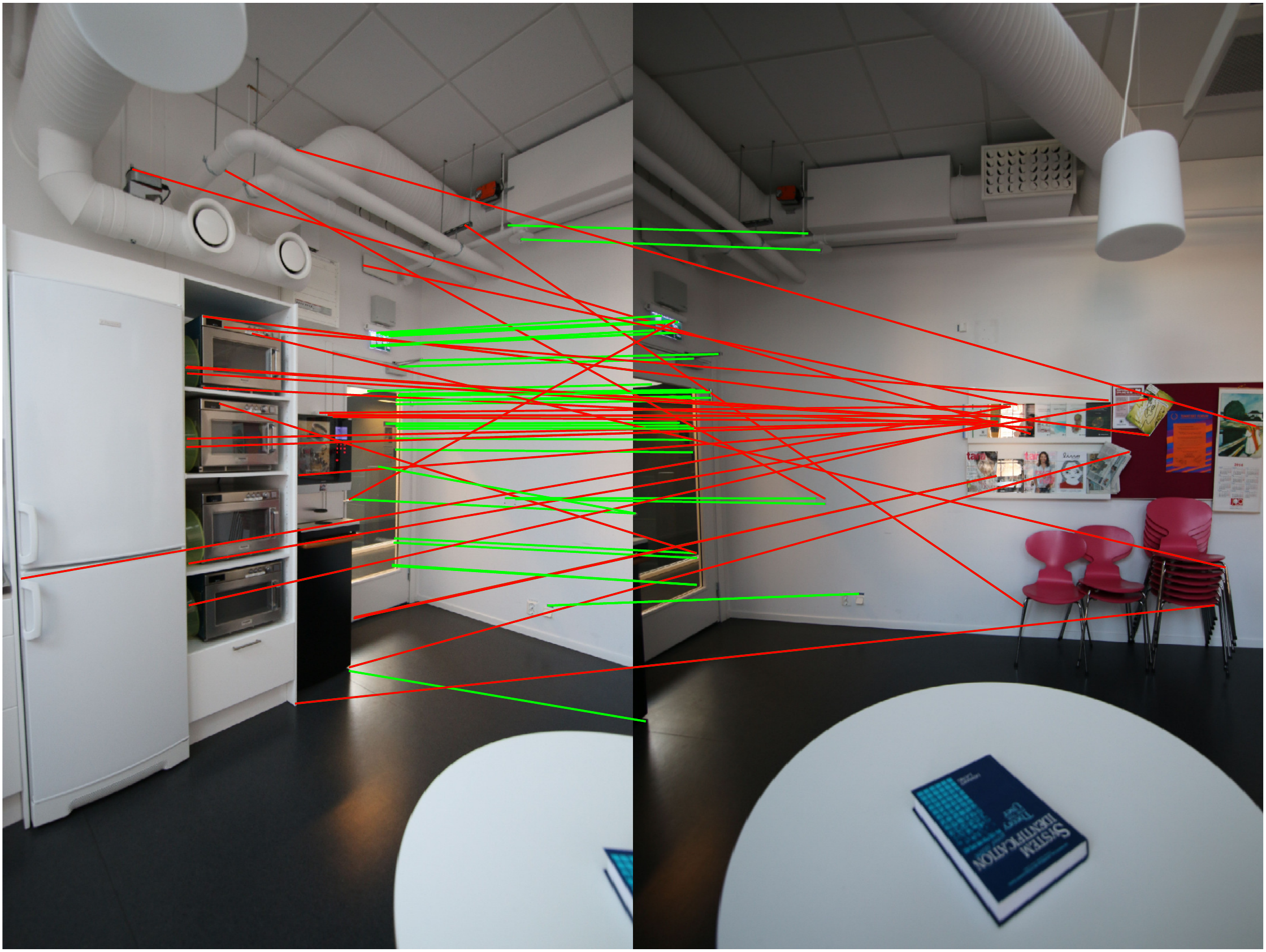}
&\includegraphics[width=0.096\linewidth]{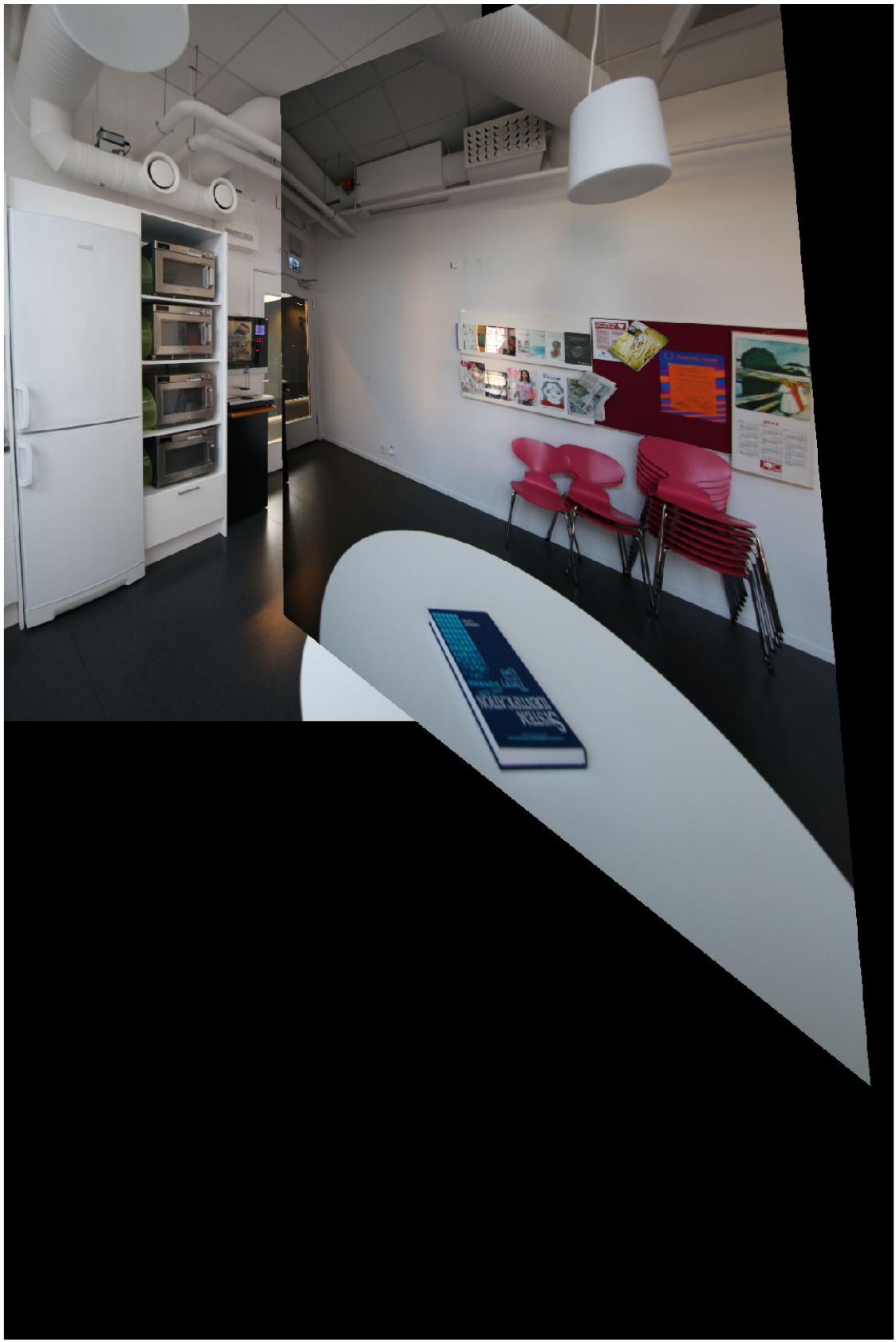} \\

\textit{Image 48\&60}, 17.68\% & Inliers Found by ICOS & Stitching&
\textit{Image 60\&69}, 48.84\% & Inliers Found by ICOS & Stitching \\

\includegraphics[width=0.19\linewidth]{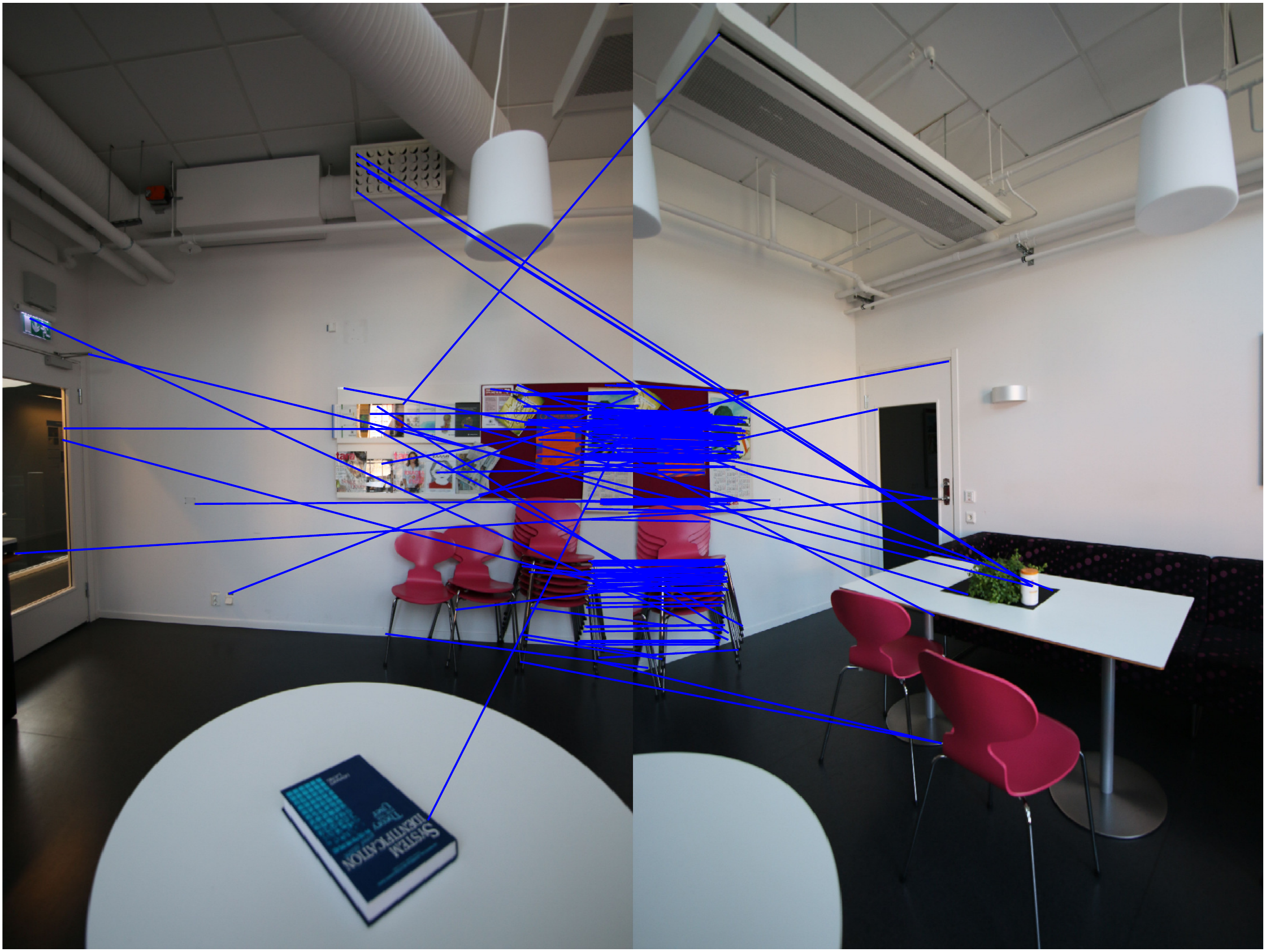}
&\includegraphics[width=0.19\linewidth]{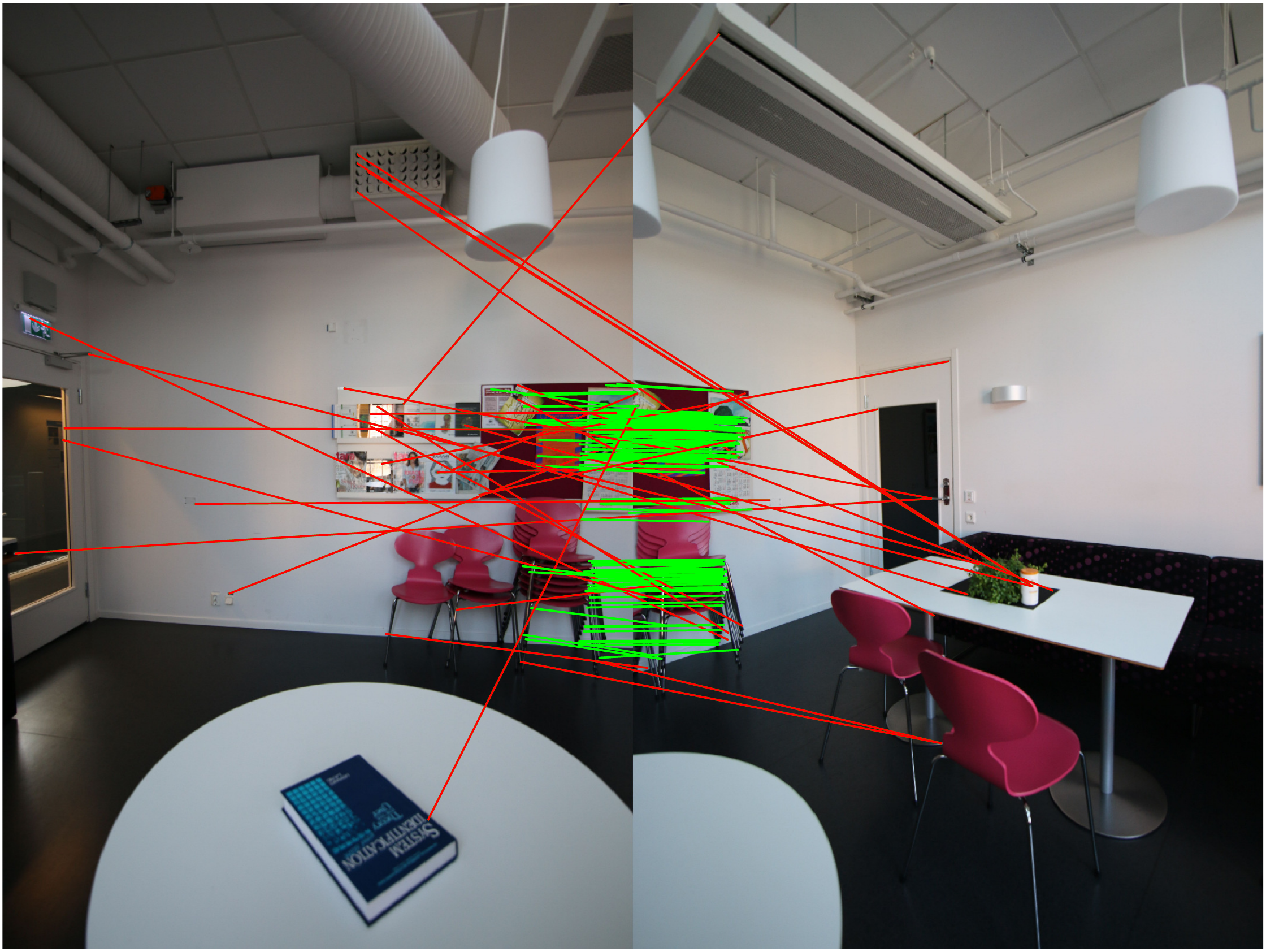}
&\includegraphics[width=0.096\linewidth]{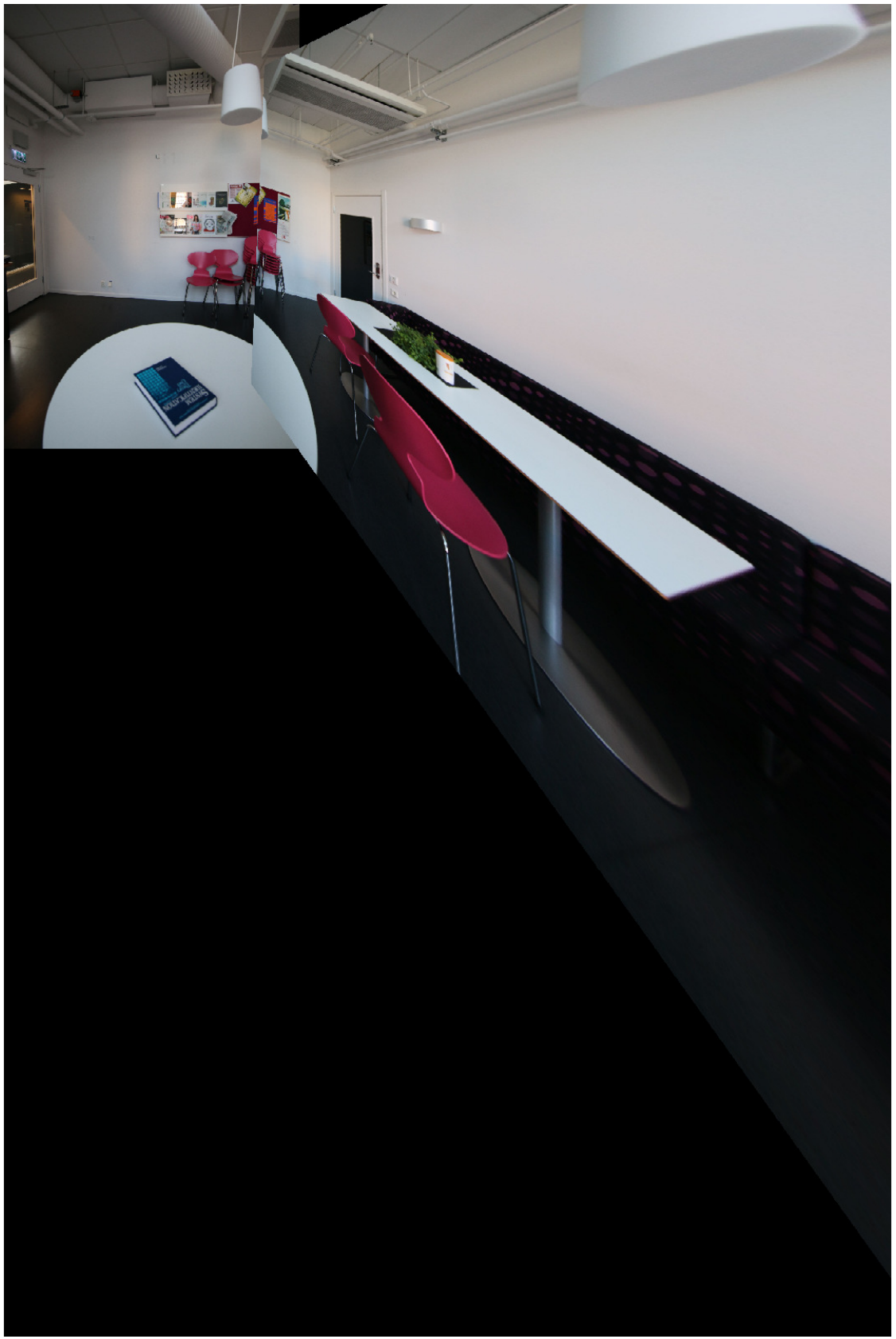}
&\includegraphics[width=0.19\linewidth]{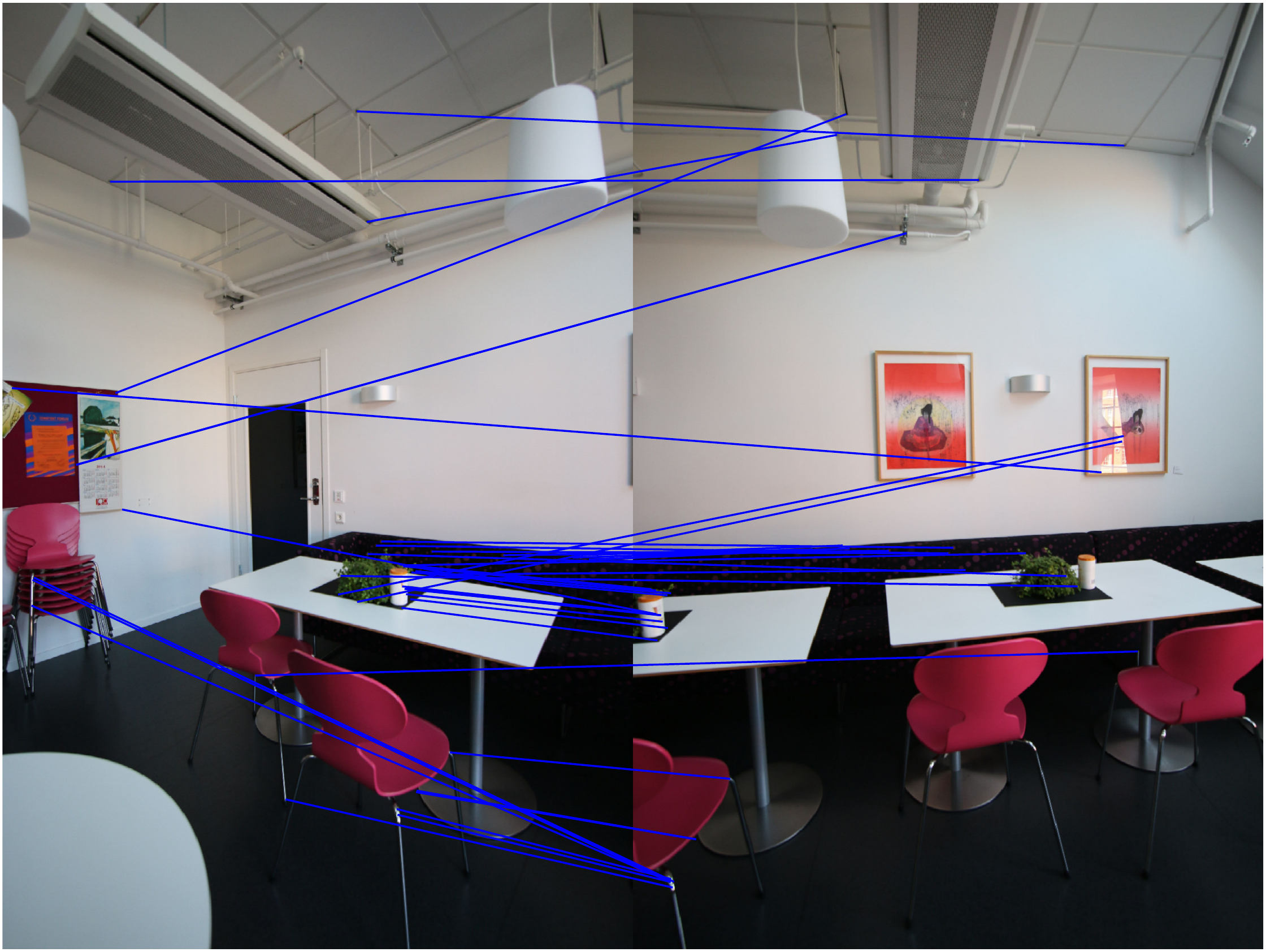}
&\includegraphics[width=0.19\linewidth]{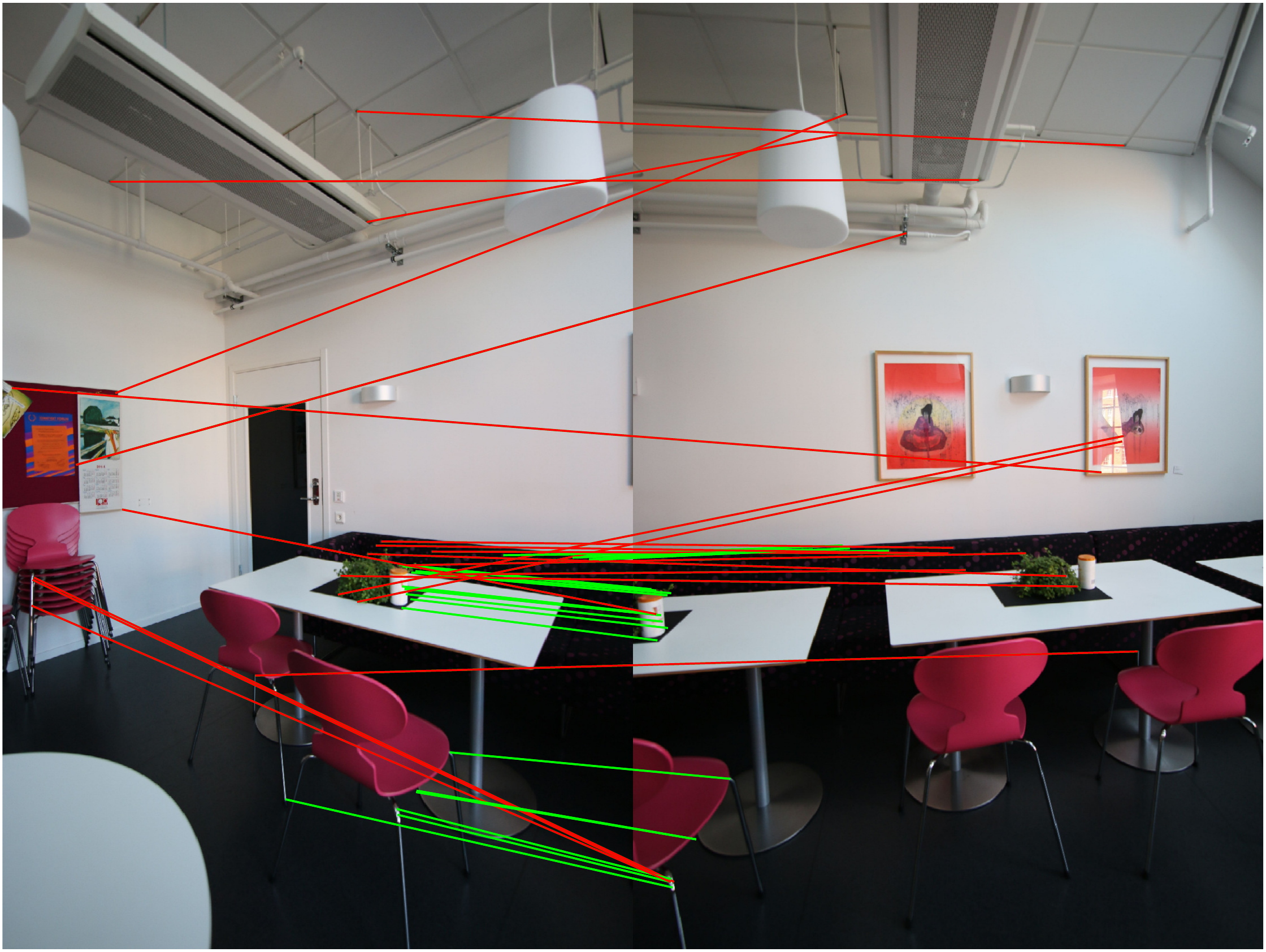}
&\includegraphics[width=0.096\linewidth]{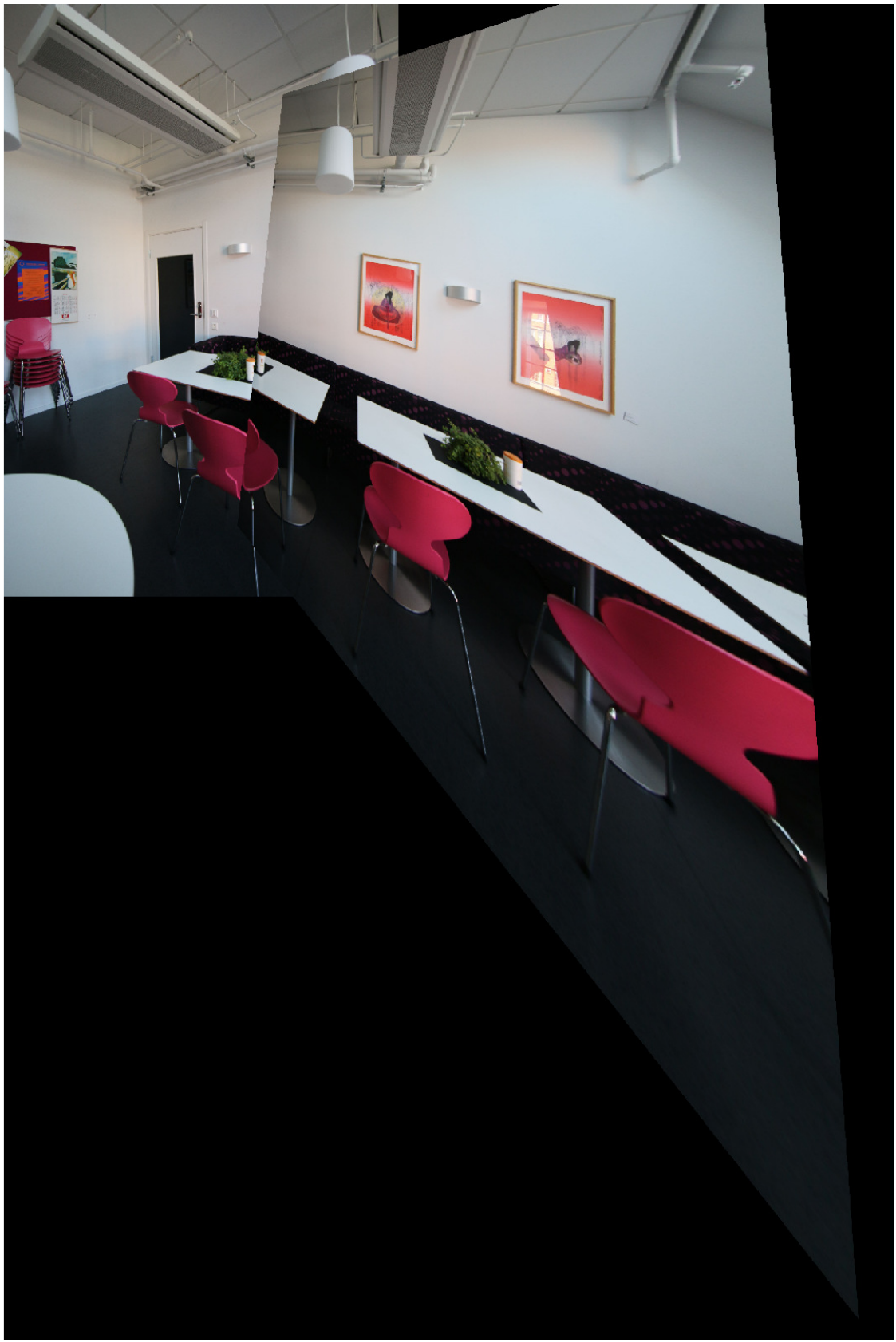} \\

\textit{Image 05\&12}, 19.77\% & Inliers Found by ICOS & Stitching&
\textit{Image 12\&19}, 3.64\% & Inliers Found by ICOS & Stitching \\

\includegraphics[width=0.19\linewidth]{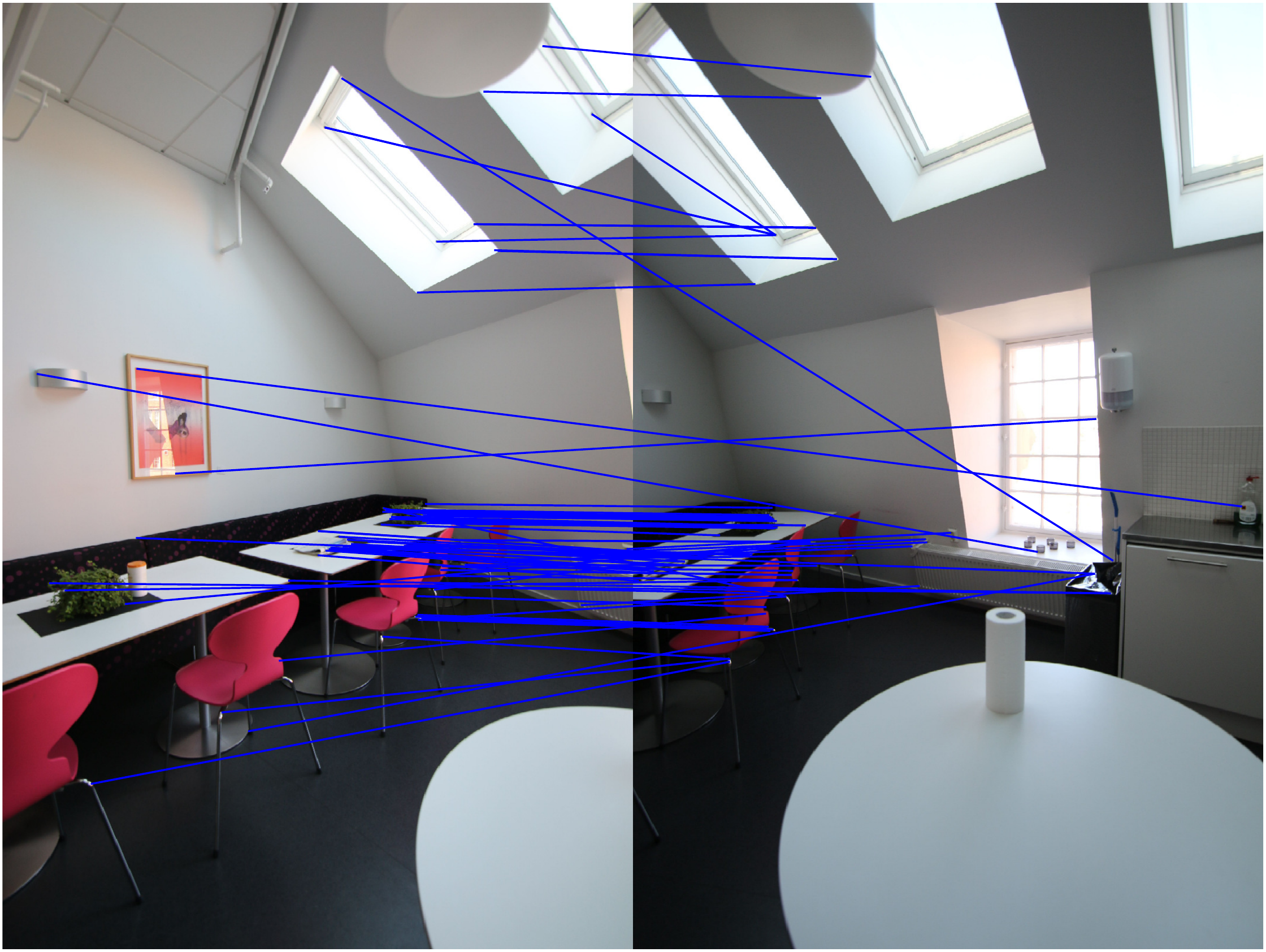}
&\includegraphics[width=0.19\linewidth]{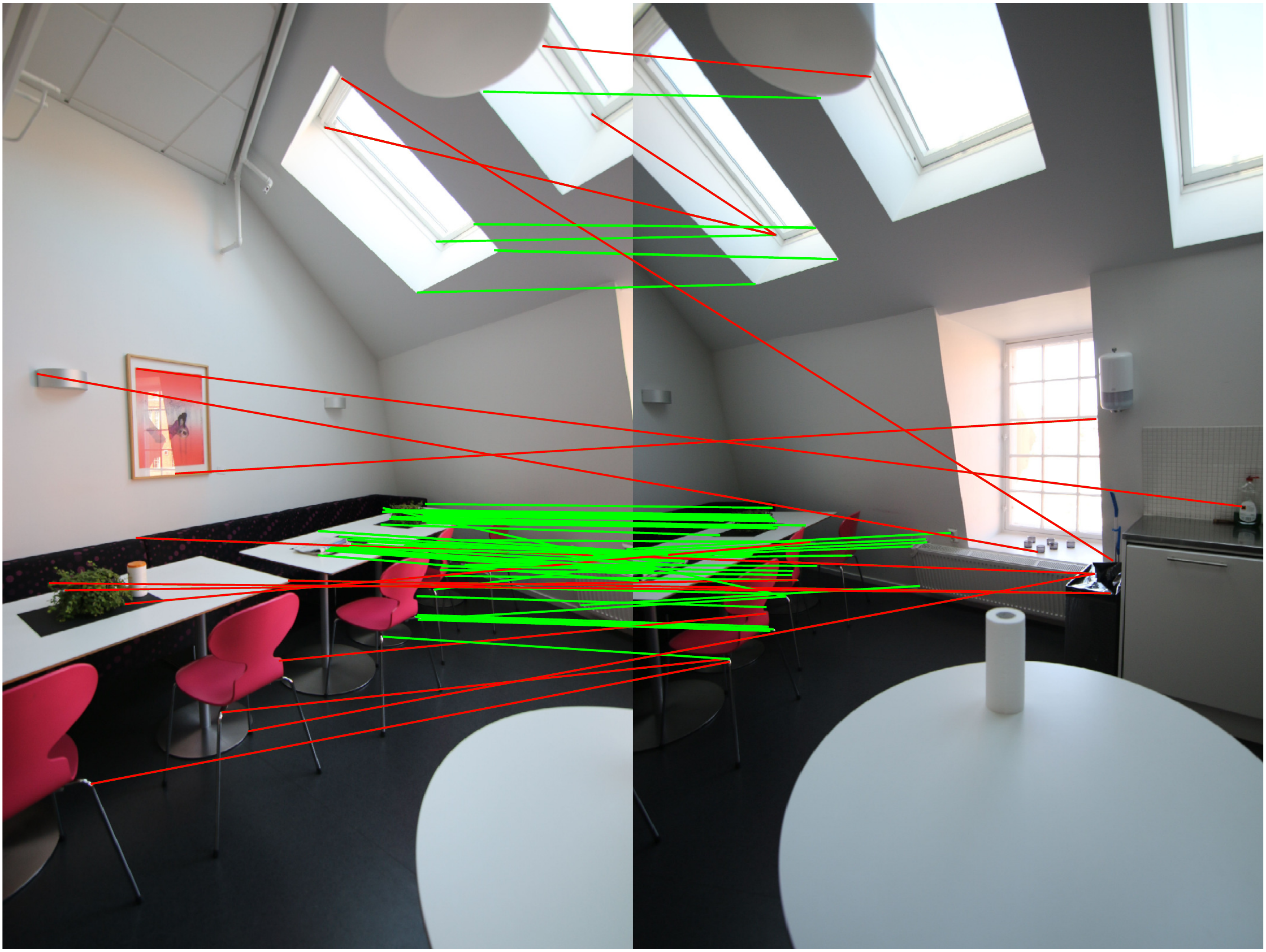}
&\includegraphics[width=0.096\linewidth]{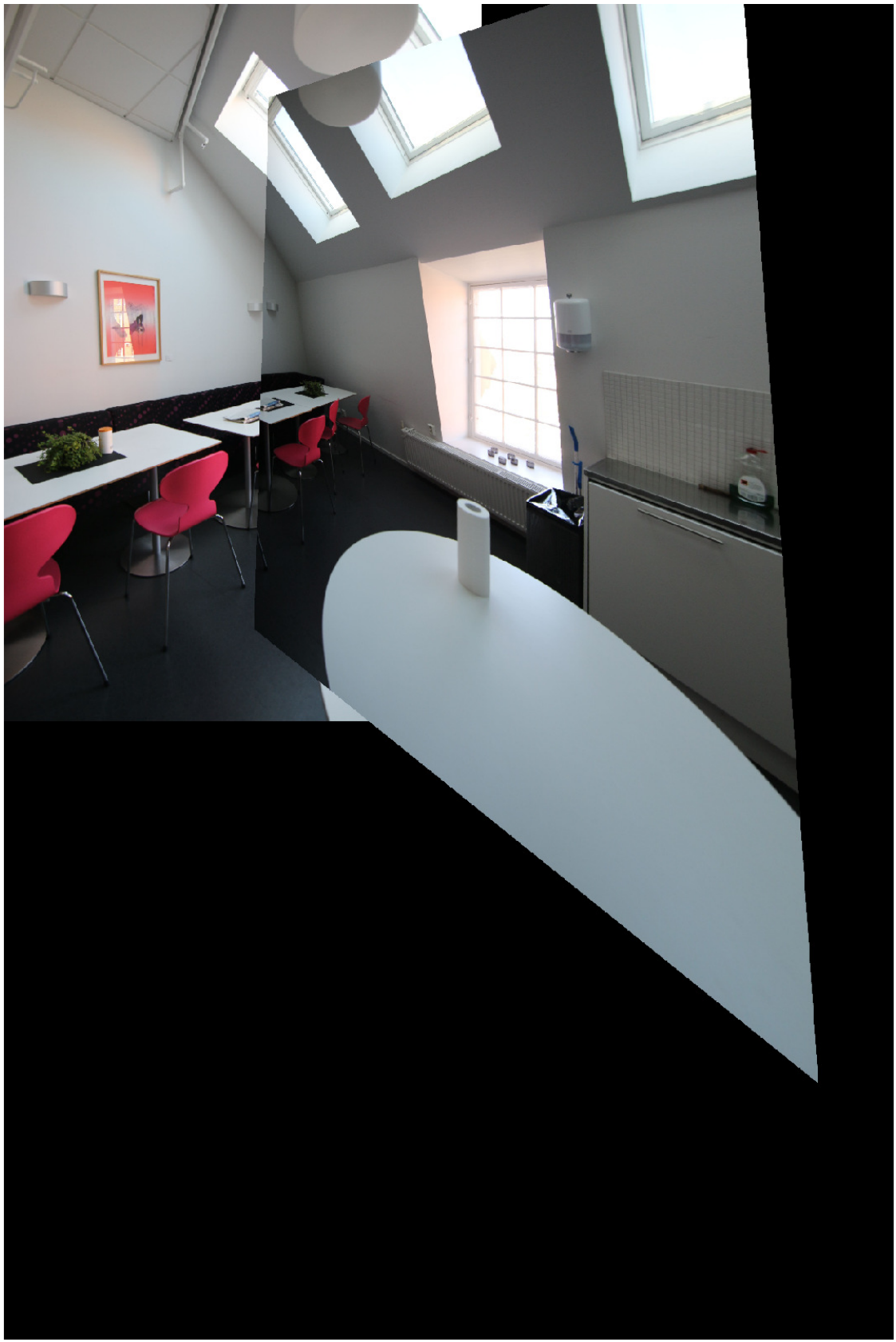}
&\includegraphics[width=0.19\linewidth]{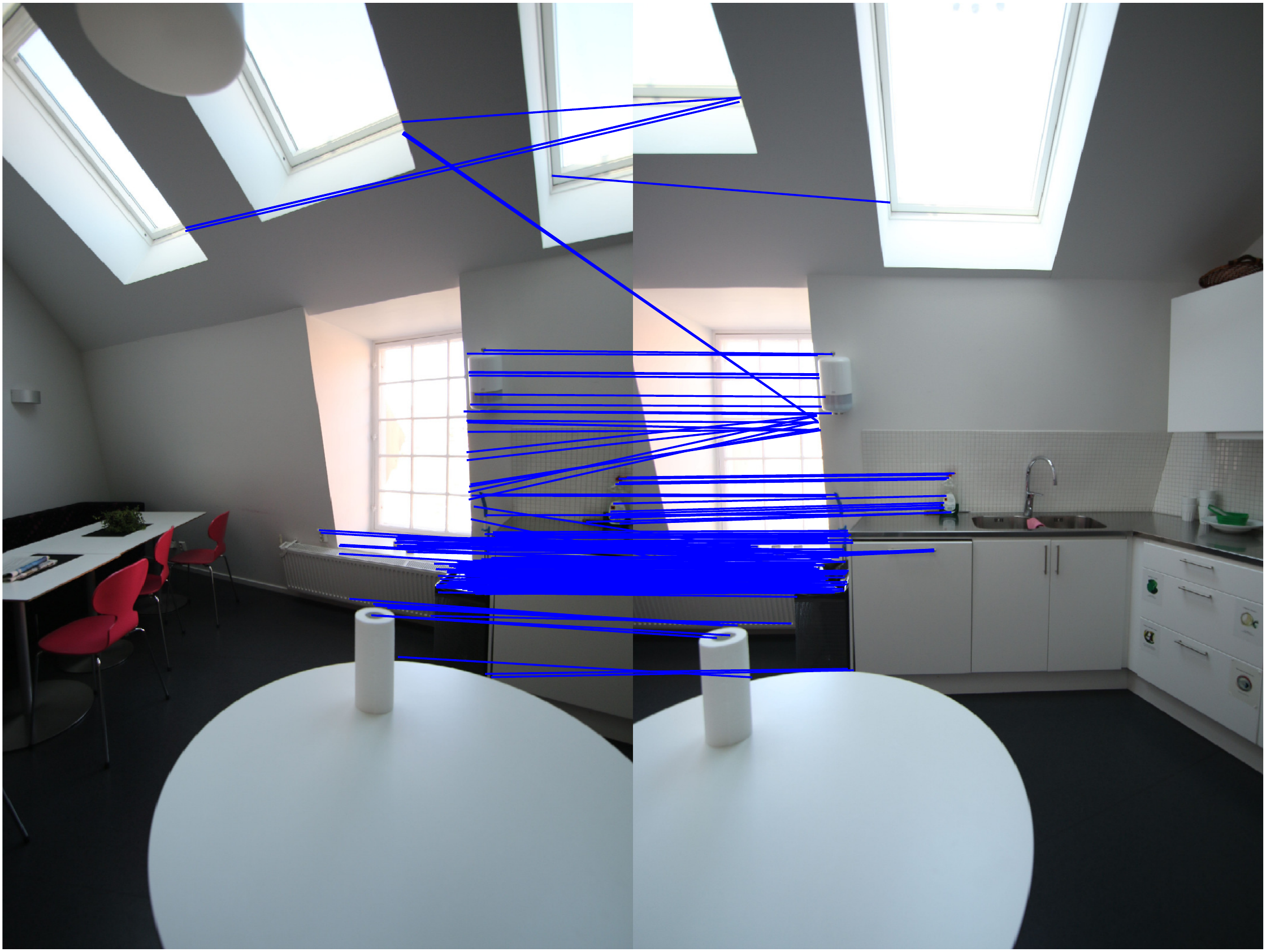}
&\includegraphics[width=0.19\linewidth]{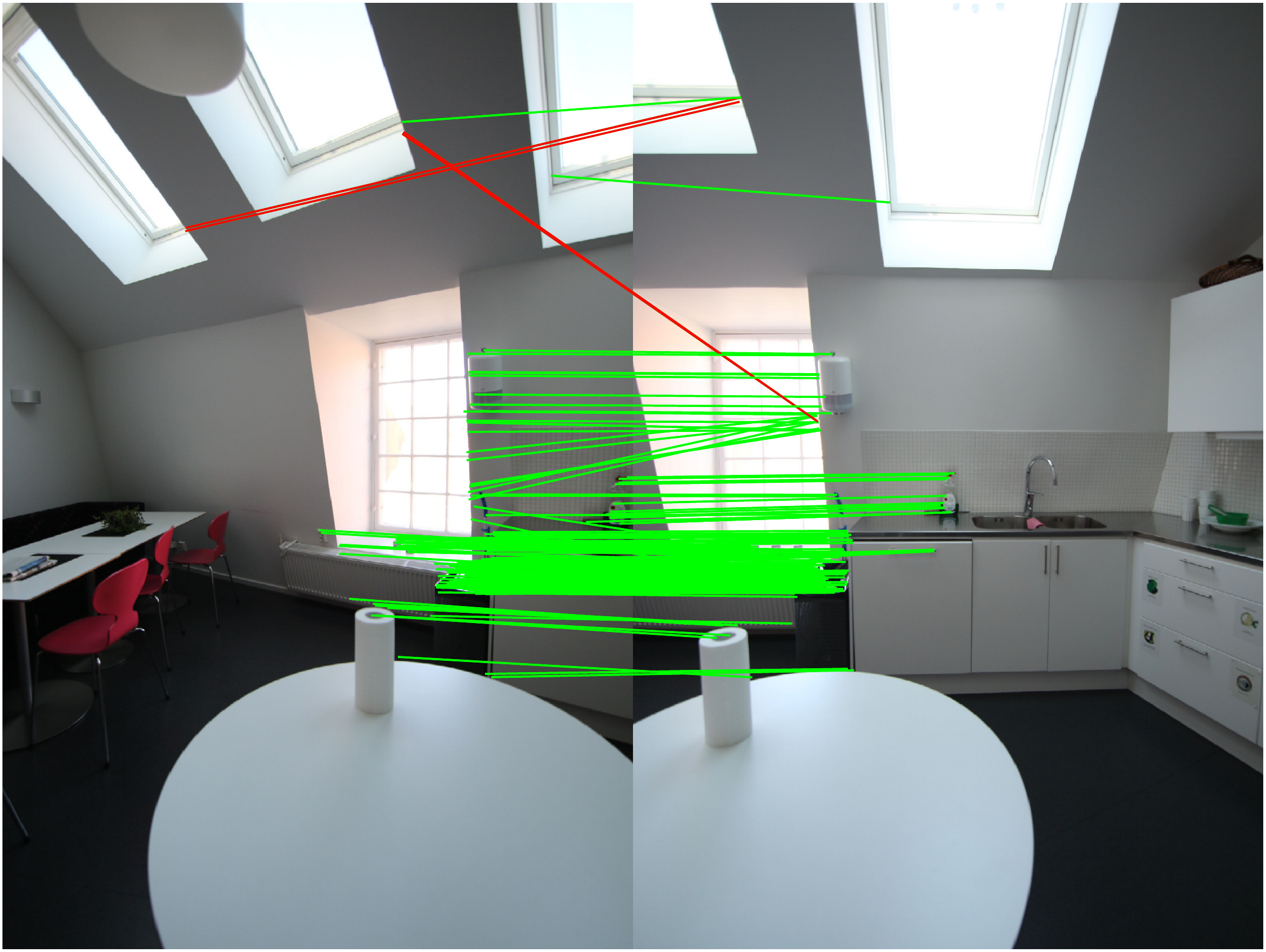}
&\includegraphics[width=0.096\linewidth]{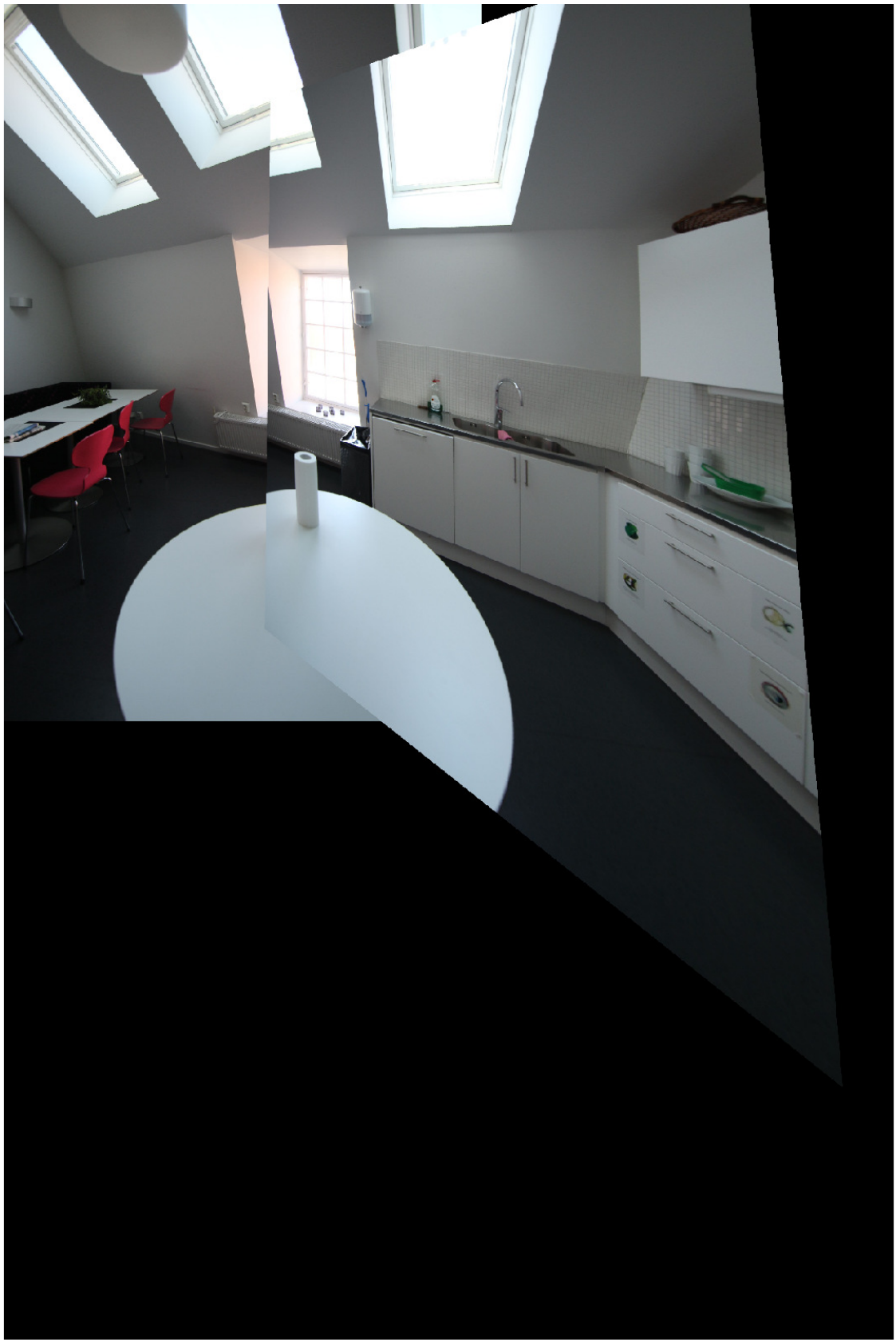} \\

\textit{Image 19\&27}, 33.82\% & Inliers Found by ICOS & Stitching&
\textit{Image 27\&36}, 5.18\% & Inliers Found by ICOS & Stitching \\

\includegraphics[width=0.19\linewidth]{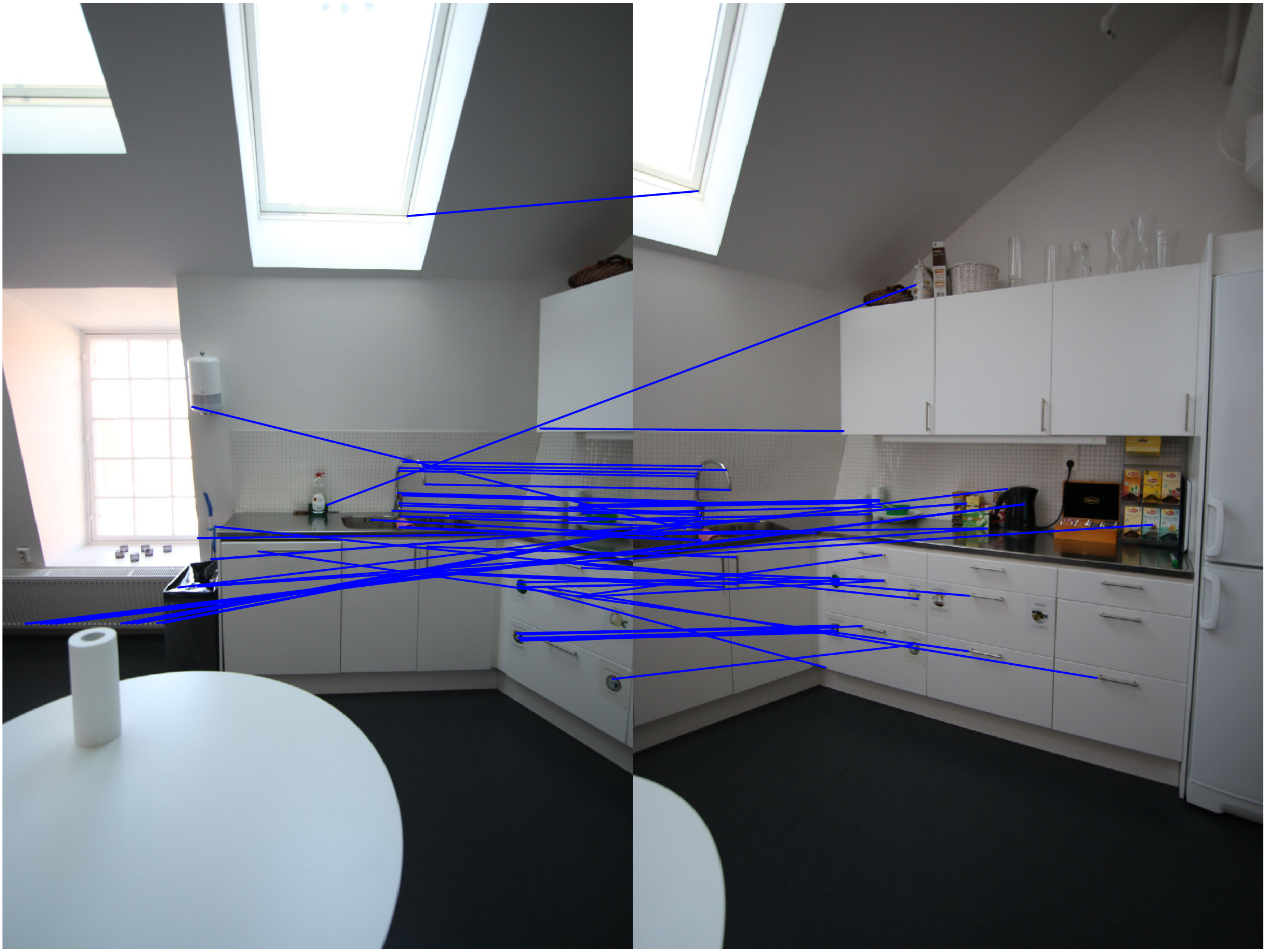}
&\includegraphics[width=0.19\linewidth]{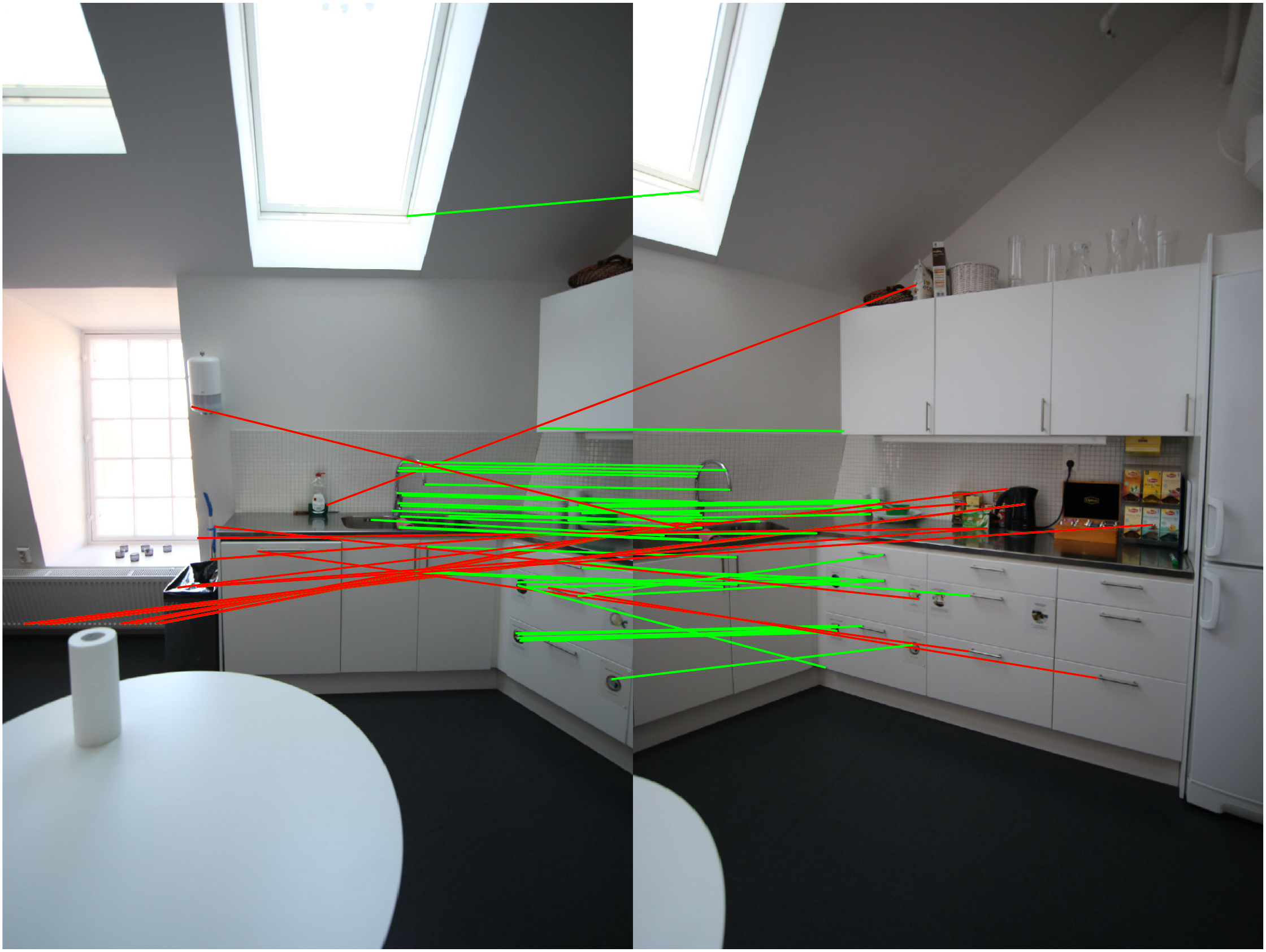}
&\includegraphics[width=0.096\linewidth]{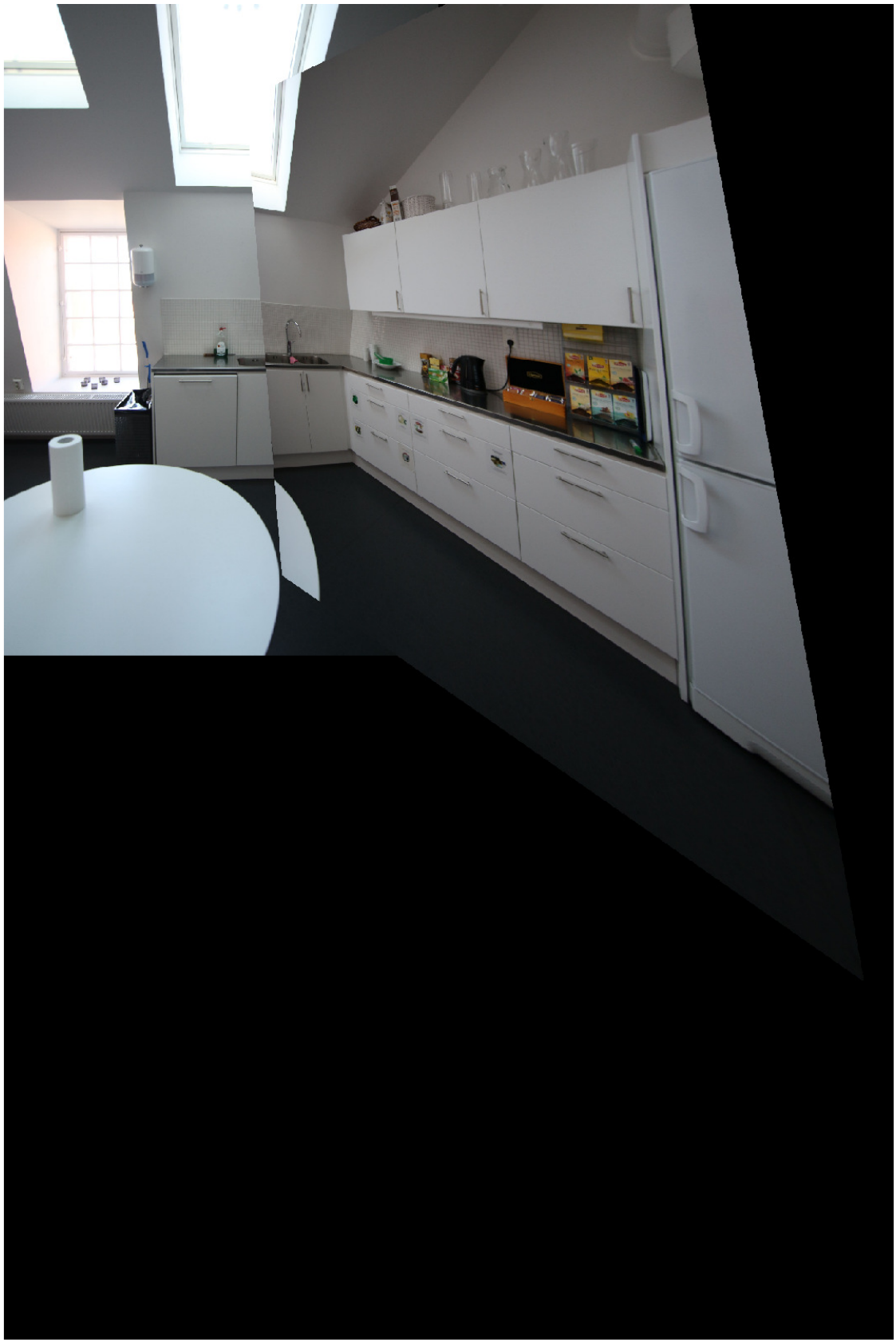}
&\includegraphics[width=0.19\linewidth]{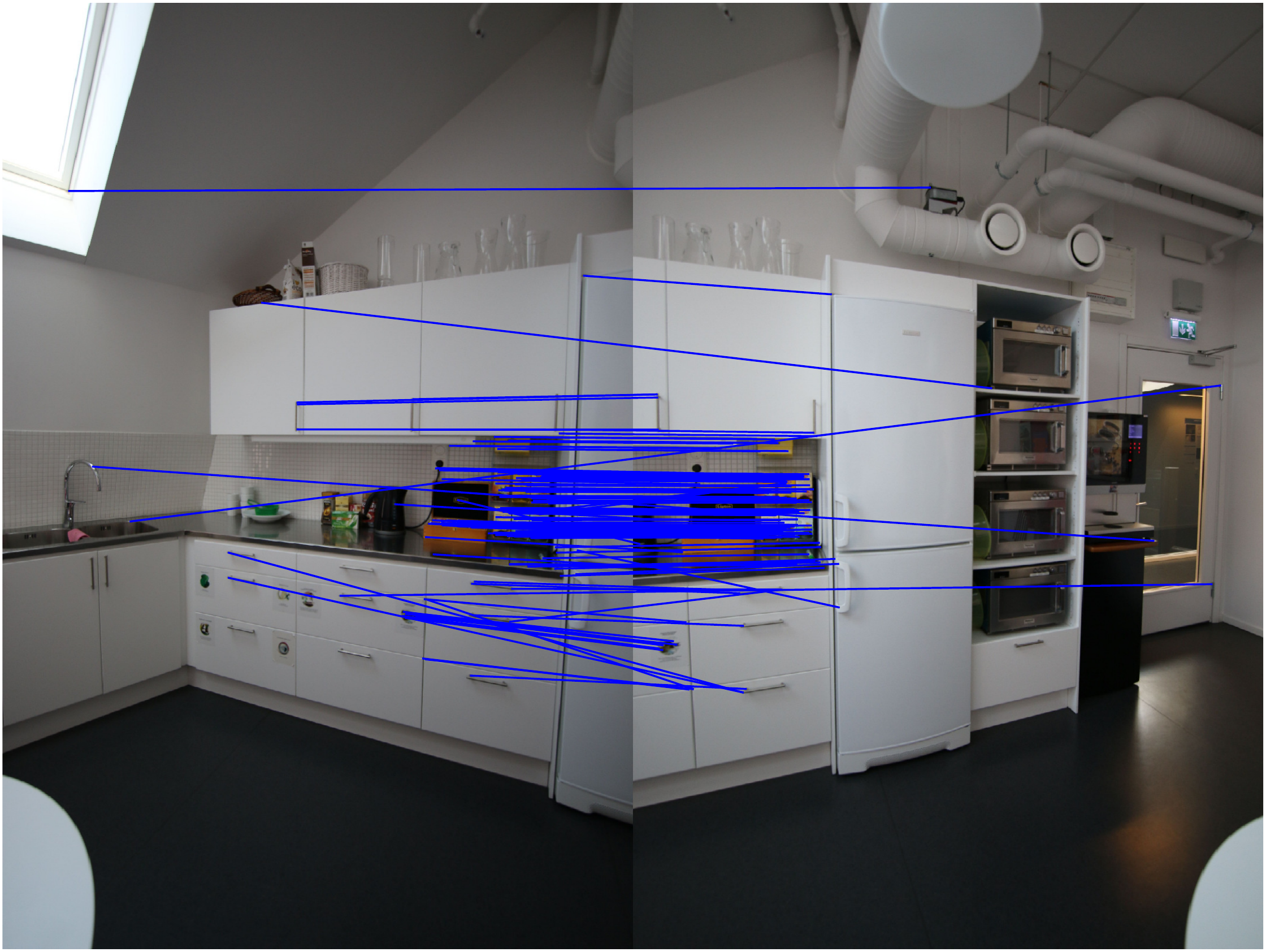}
&\includegraphics[width=0.19\linewidth]{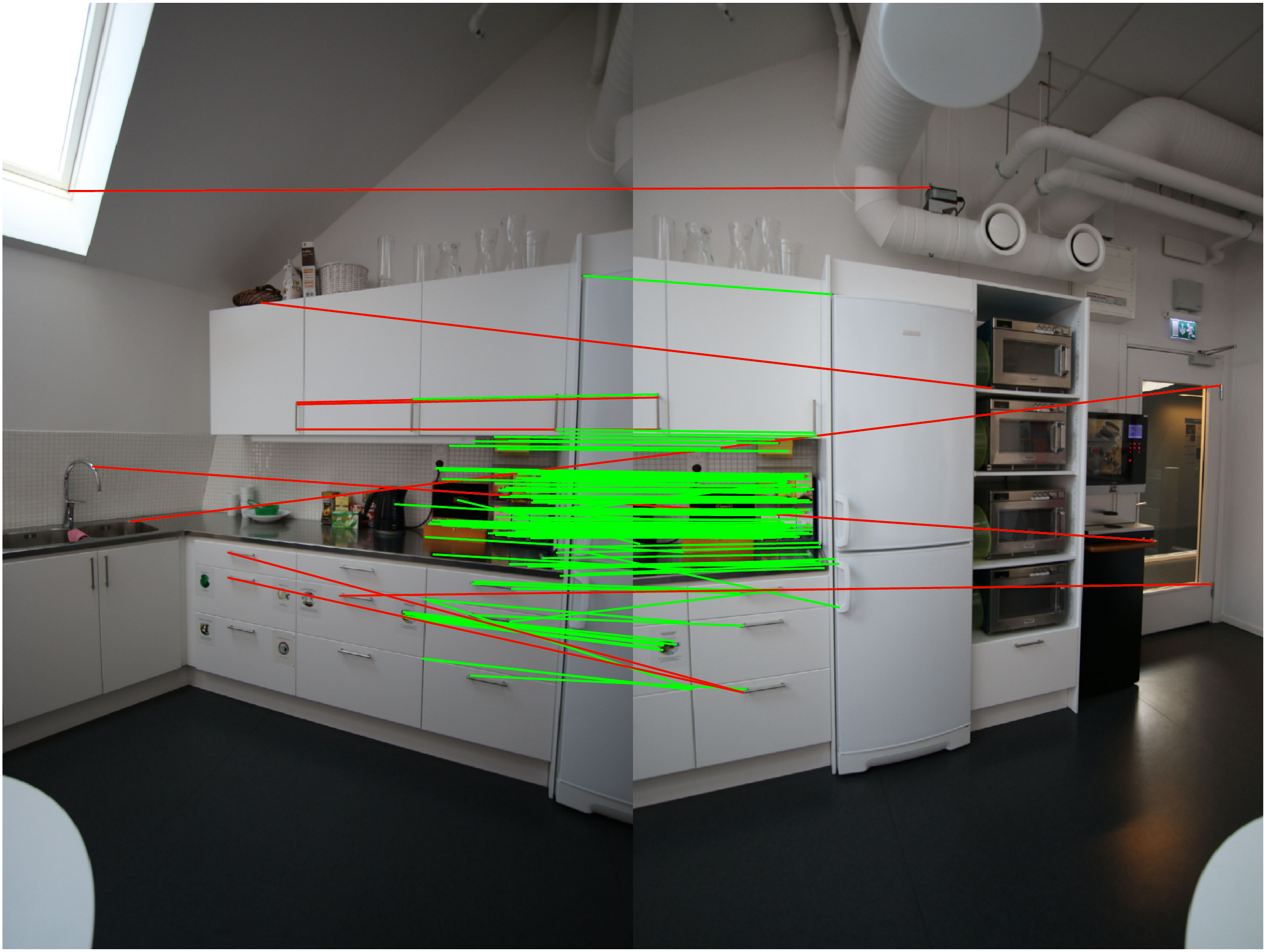}
&\includegraphics[width=0.096\linewidth]{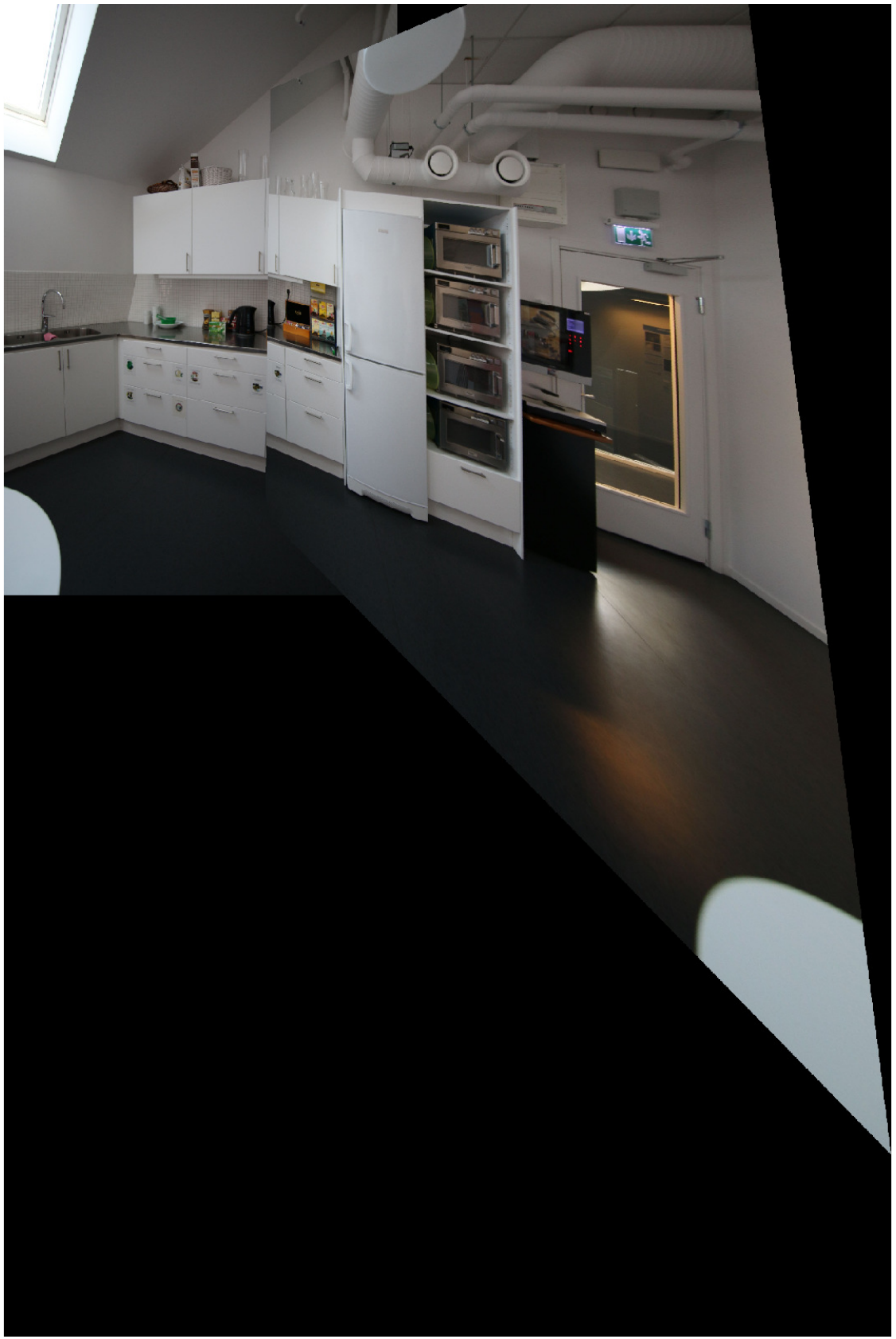} \\

\textit{Image 57\&66}, 13.29\% & Inliers Found by ICOS & Stitching&
\textit{Image 66\&72}, 13.04\% & Inliers Found by ICOS & Stitching \\

\includegraphics[width=0.19\linewidth]{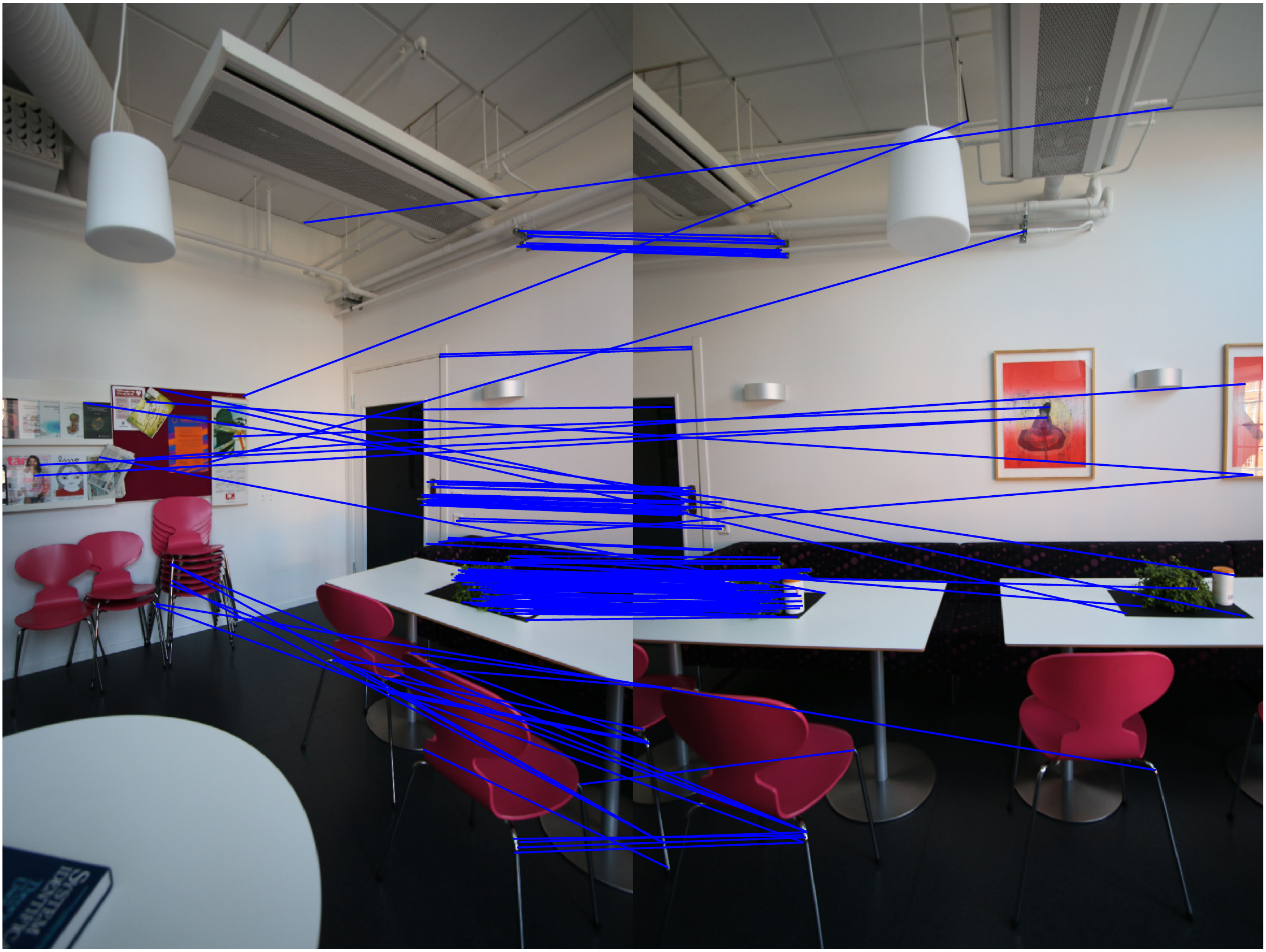}
&\includegraphics[width=0.19\linewidth]{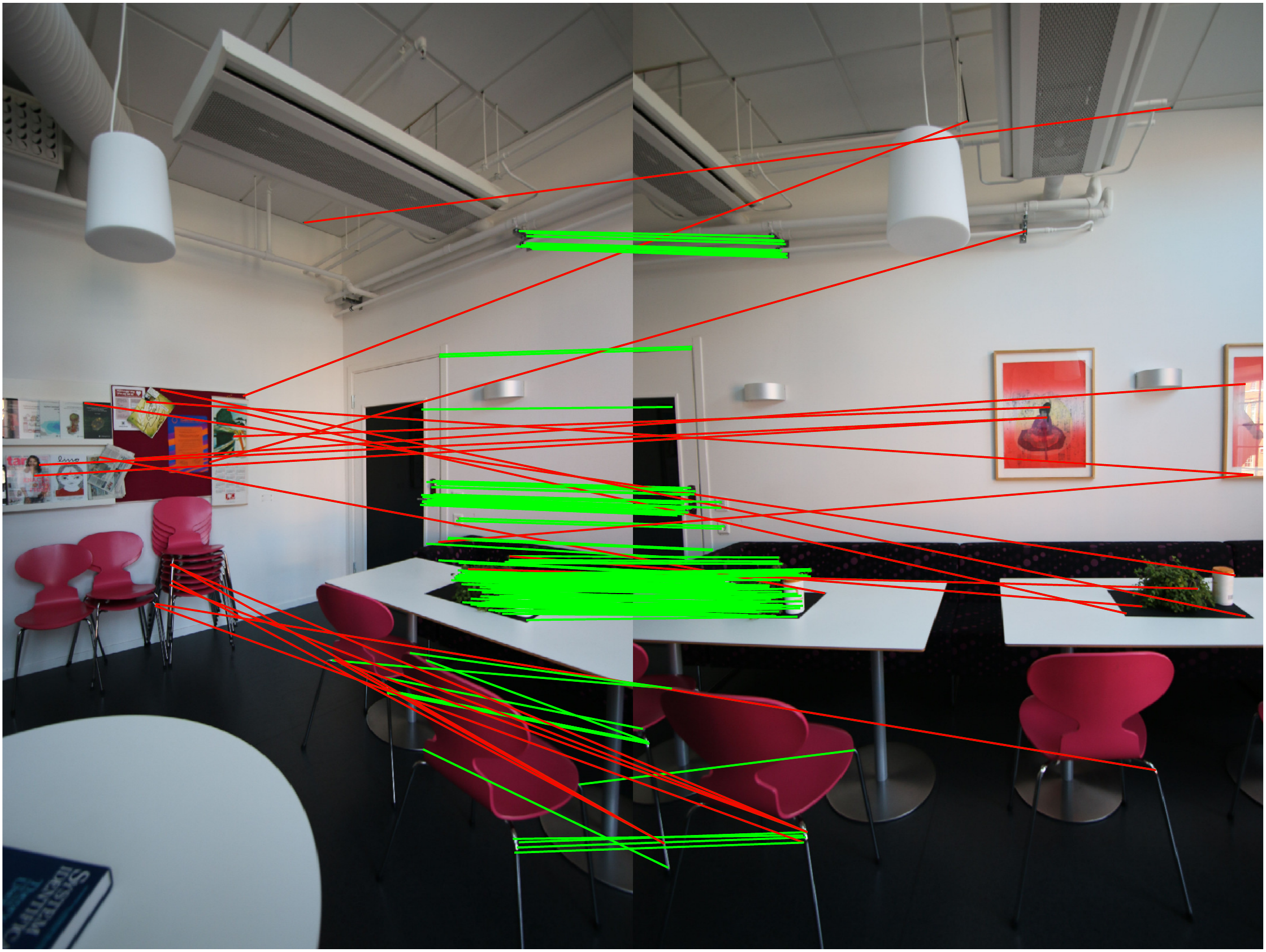}
&\includegraphics[width=0.096\linewidth]{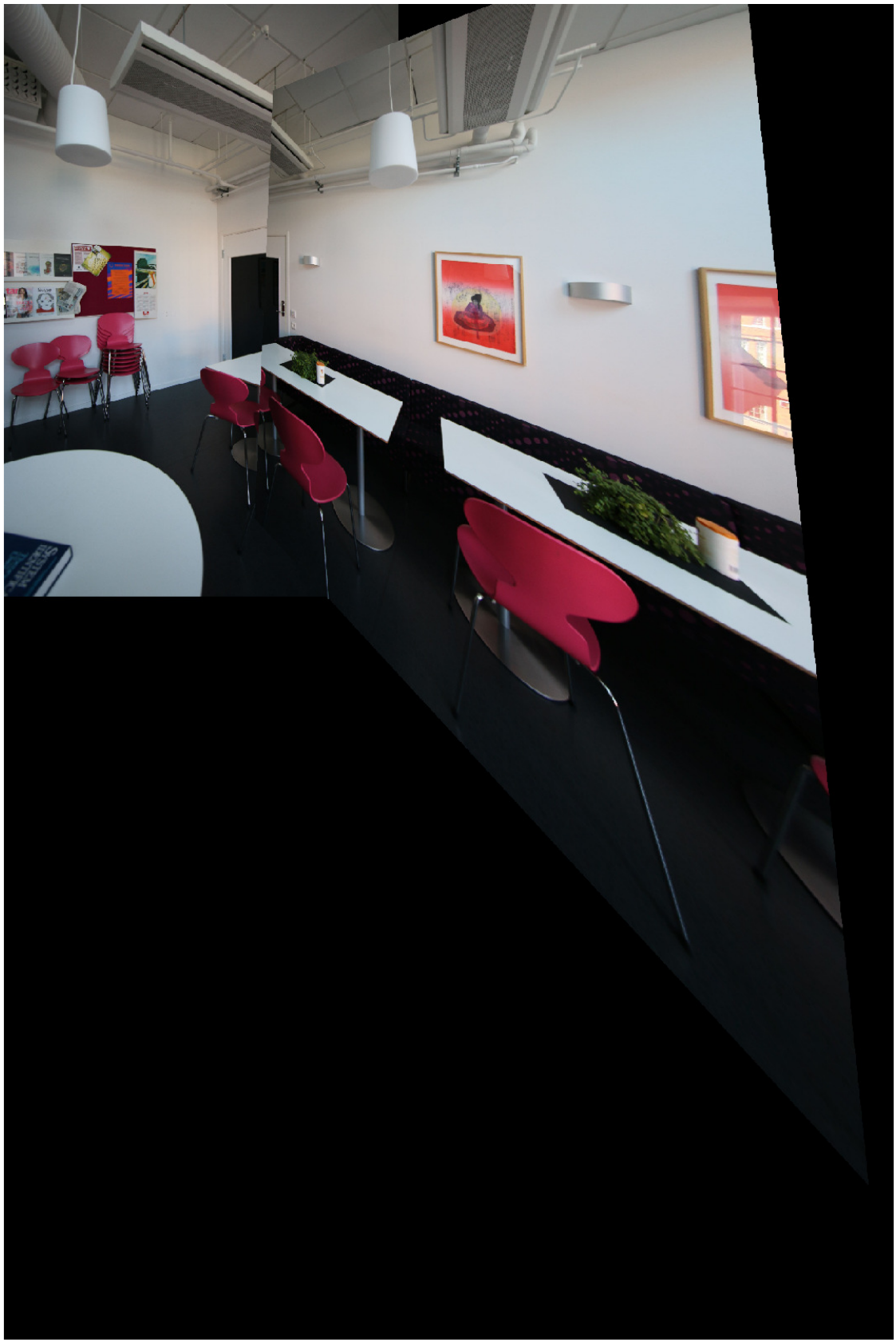}
&\includegraphics[width=0.19\linewidth]{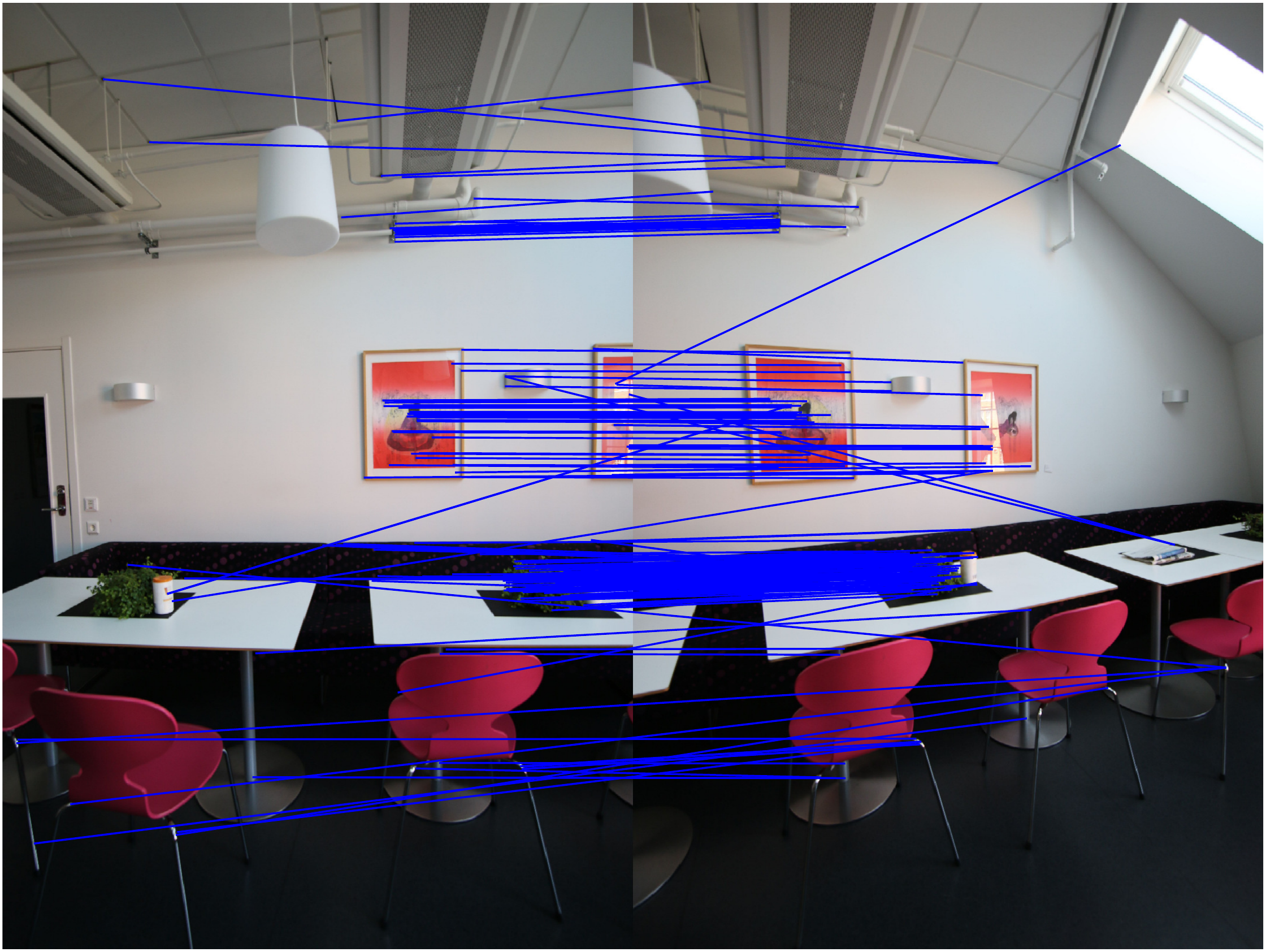}
&\includegraphics[width=0.19\linewidth]{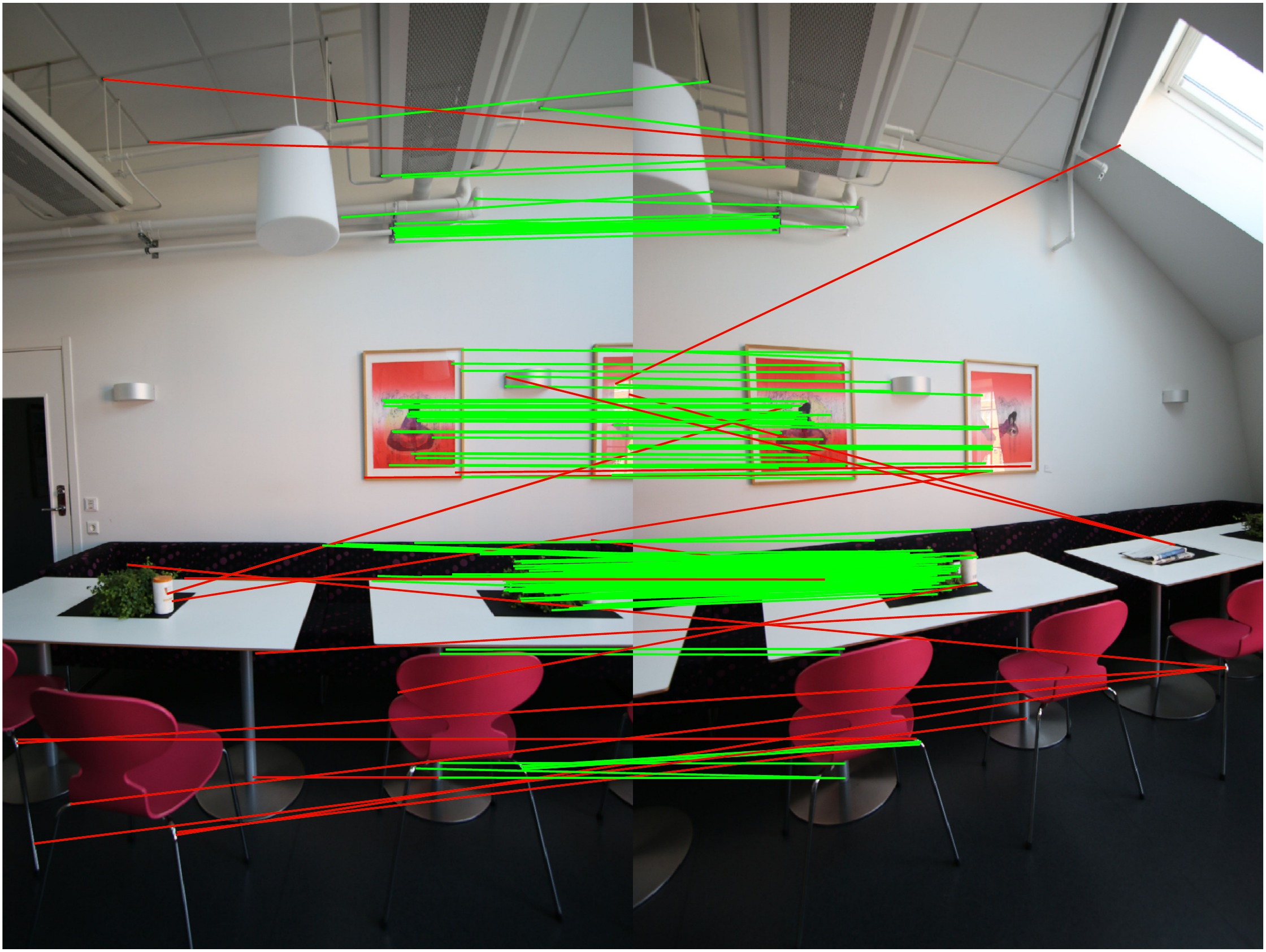}
&\includegraphics[width=0.096\linewidth]{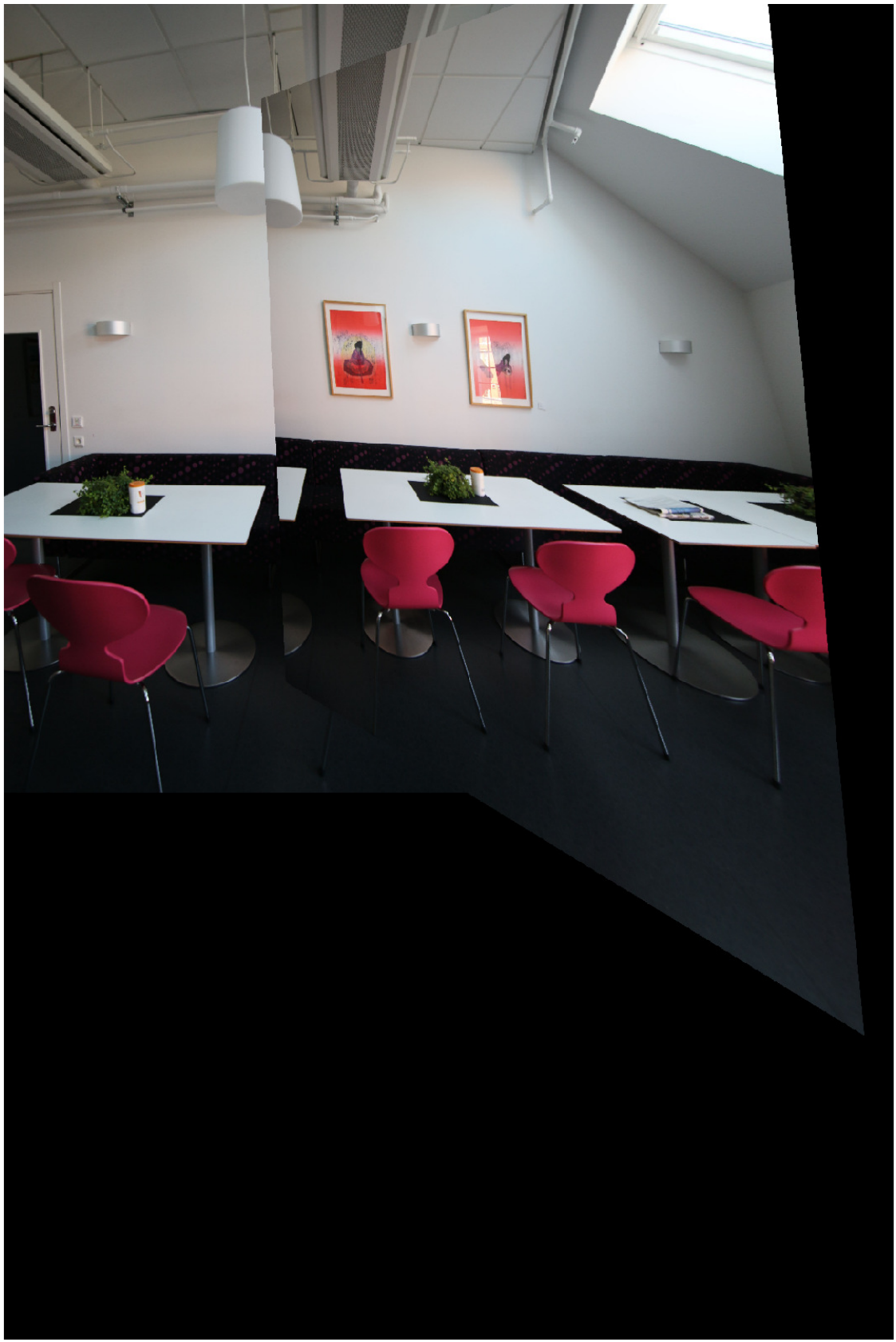}

\end{tabular}

\vspace{-1mm}

\centering
\caption{Image stitching results over the \textit{LunchRoom} dataset~\cite{meneghetti2015image}. Column 1 to column 3 display the raw correspondences matched by SURF~\cite{bay2008speeded} with the image indices and outlier ratios given on top, the inliers found by ICOS (the inliers found are labeled in green, the others are outliers, labeled in red), and the image stitching results using the rotation estimated by ICOS, respectively.}
\label{ImgStit}
\end{figure*}

\begin{figure*}[t]
\centering

\begin{tabular}{ccccc}

\textit{Scene-02}, $\mathit{s}=1$ & ICOS  & RANSAC(1000) & RANSAC(1min) \\

\includegraphics[width=0.20\linewidth]{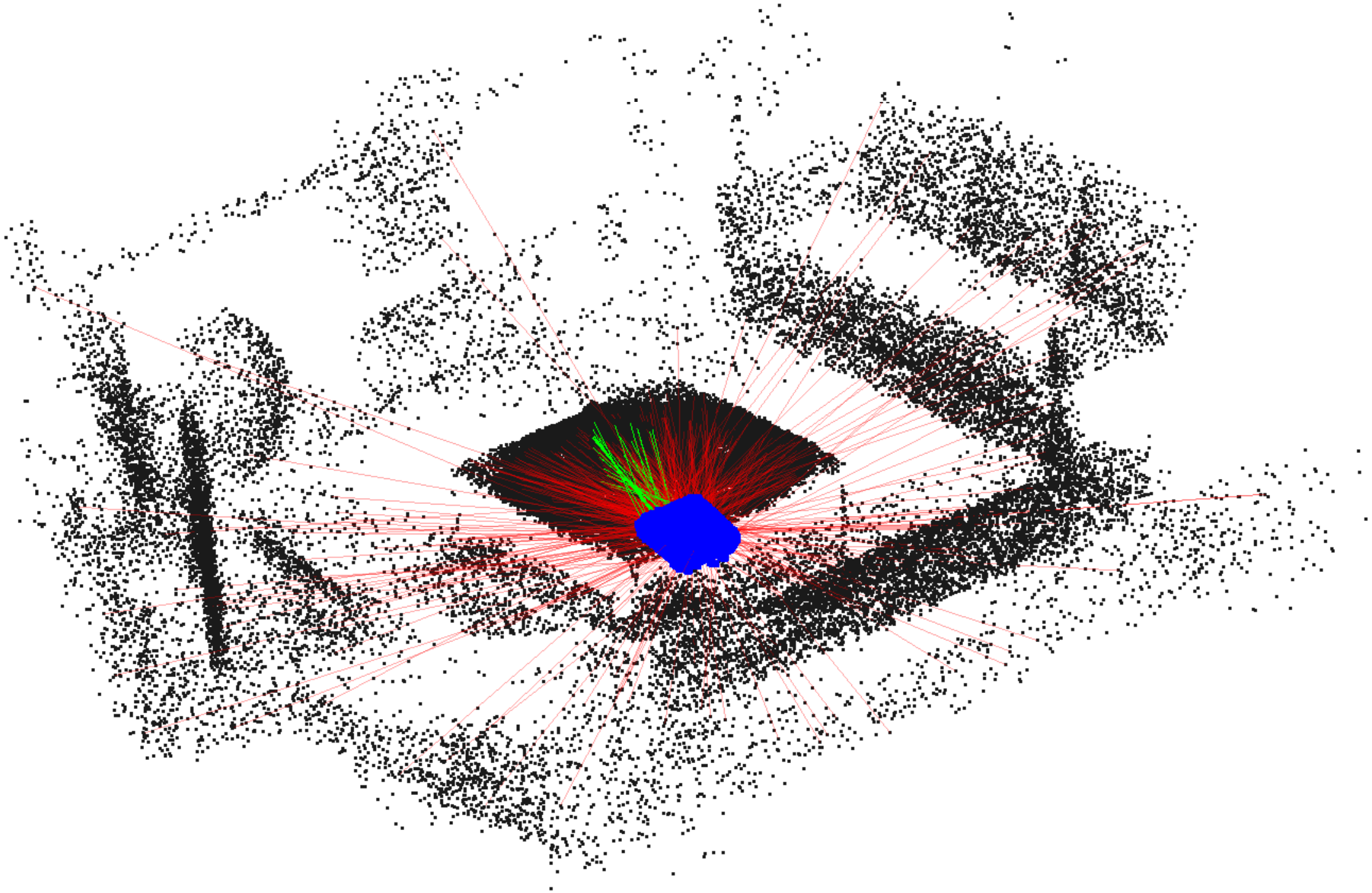}\,
&\includegraphics[width=0.20\linewidth]{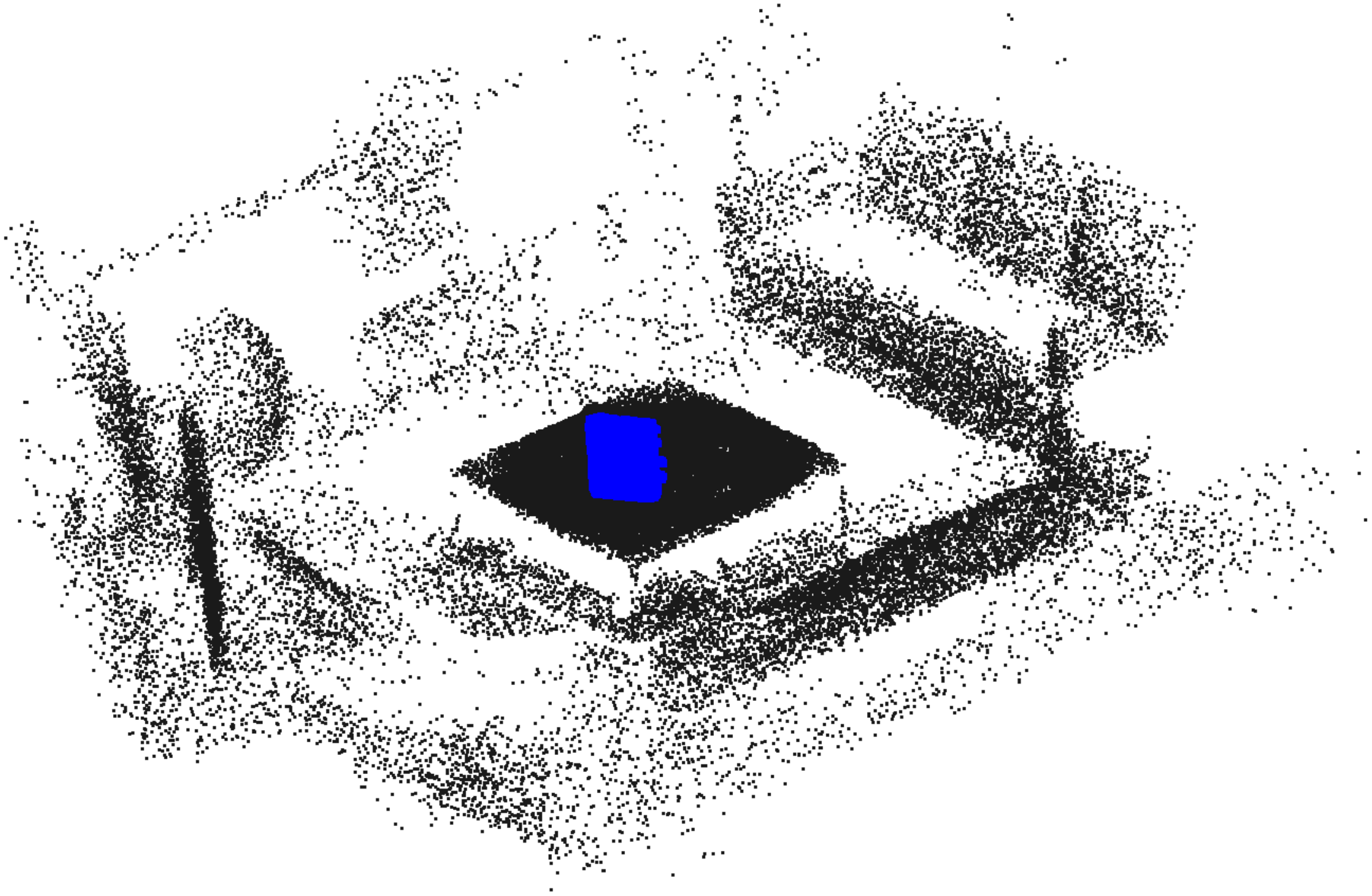}
&\includegraphics[width=0.20\linewidth]{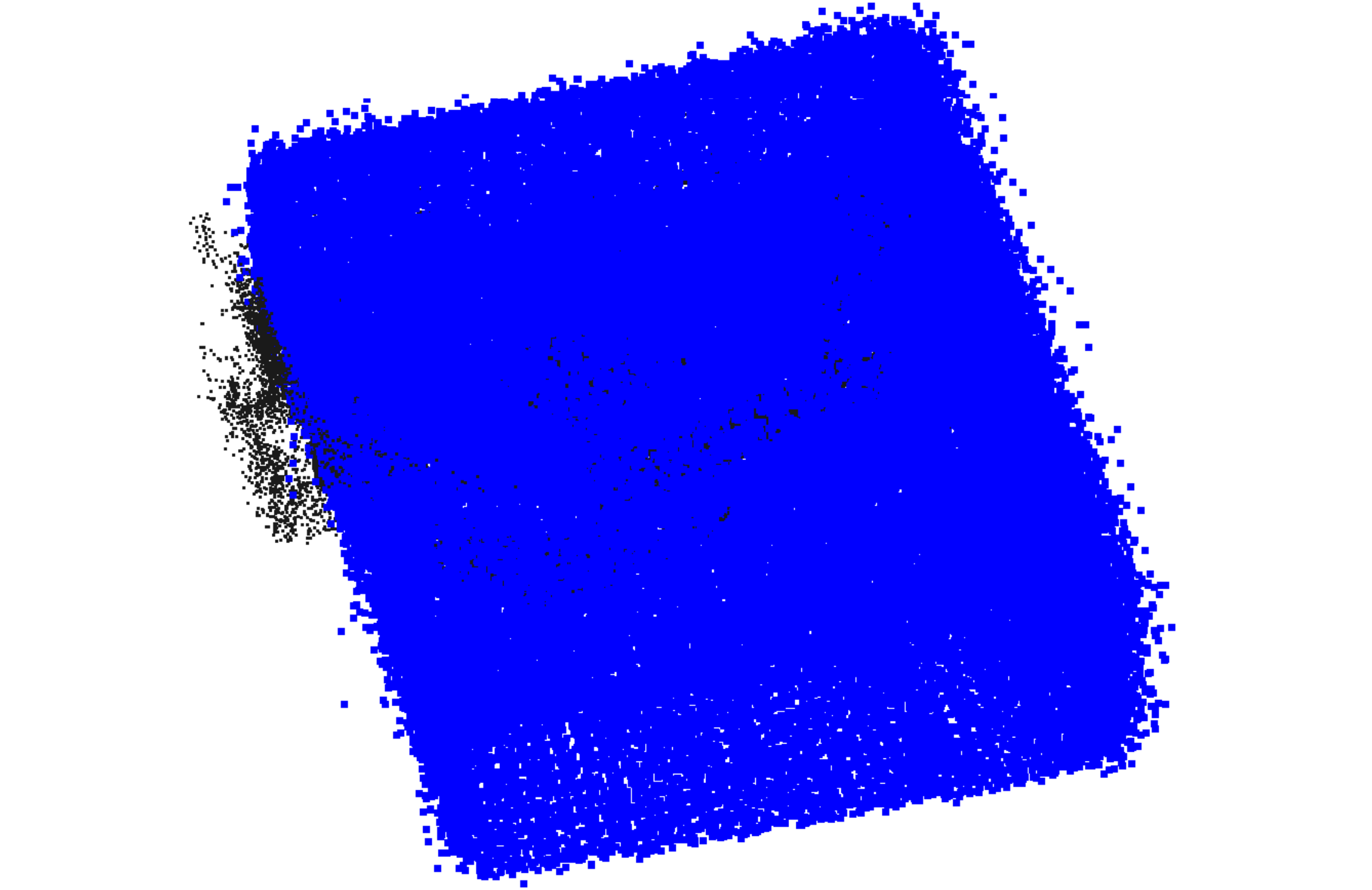}
&\includegraphics[width=0.20\linewidth]{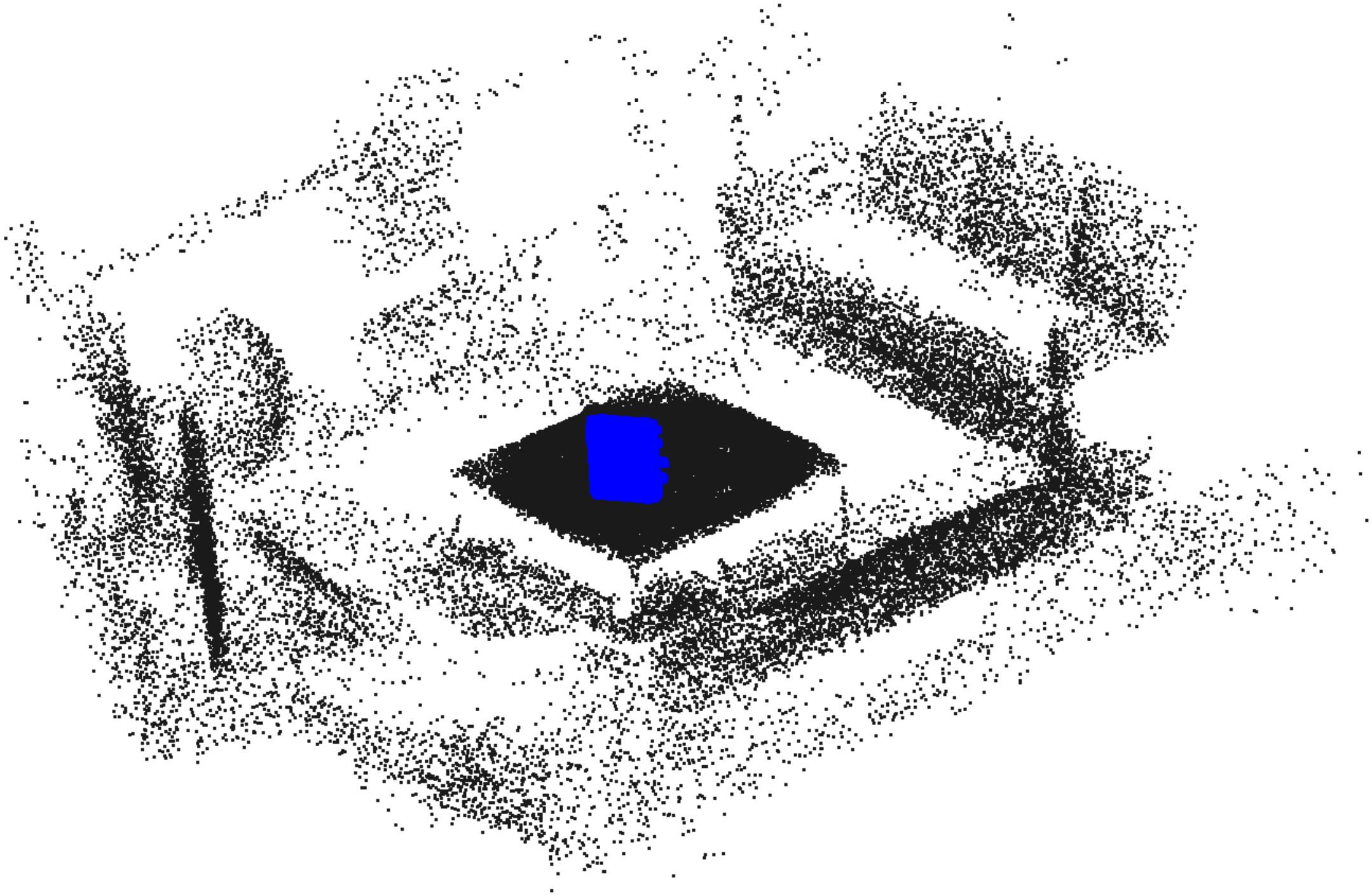} \\

\textit{Scene-05}, $\mathit{s}=1$ & ICOS  & RANSAC(1000) & RANSAC(1min) \\

\includegraphics[width=0.20\linewidth]{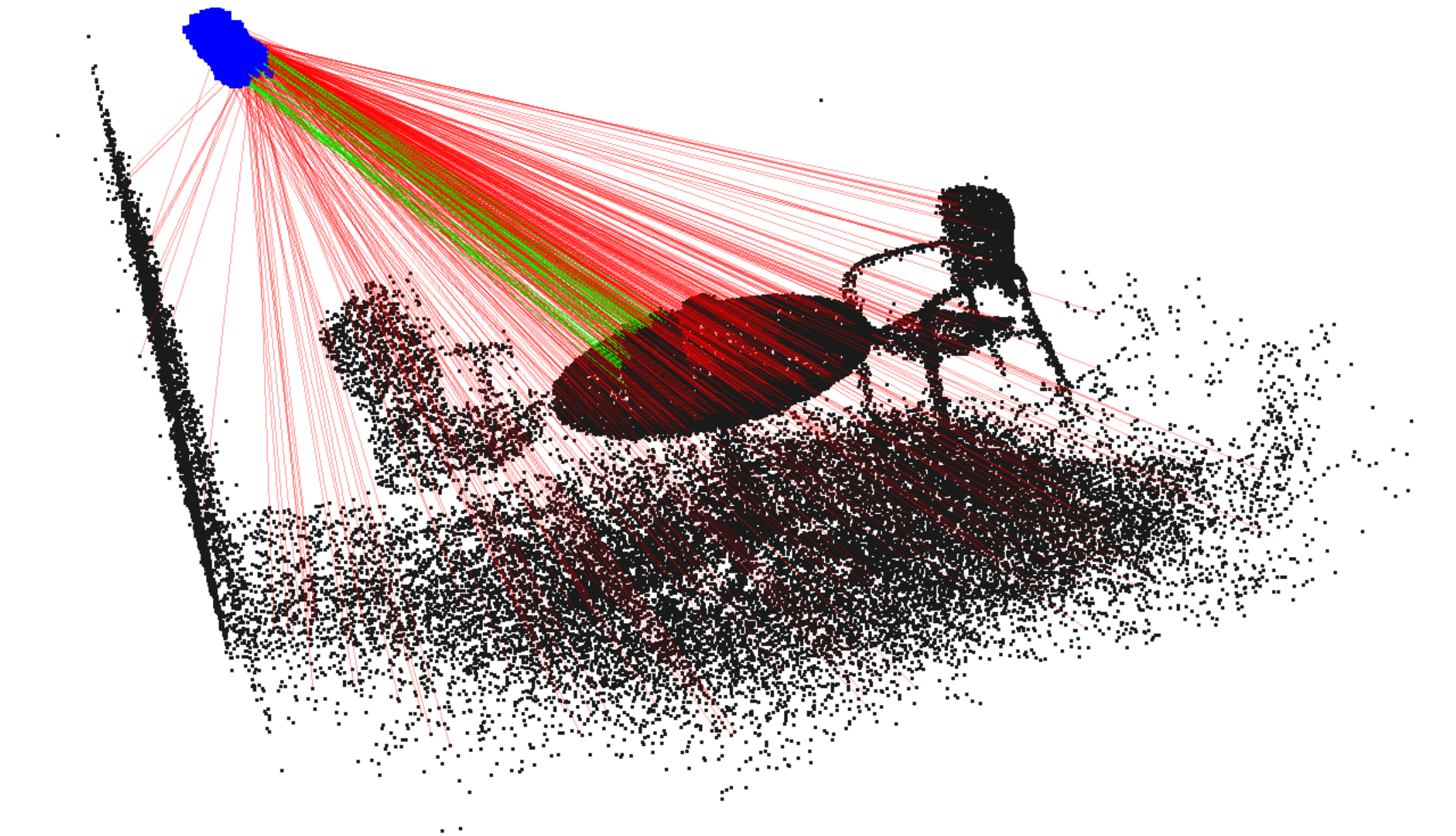}\,
&\includegraphics[width=0.20\linewidth]{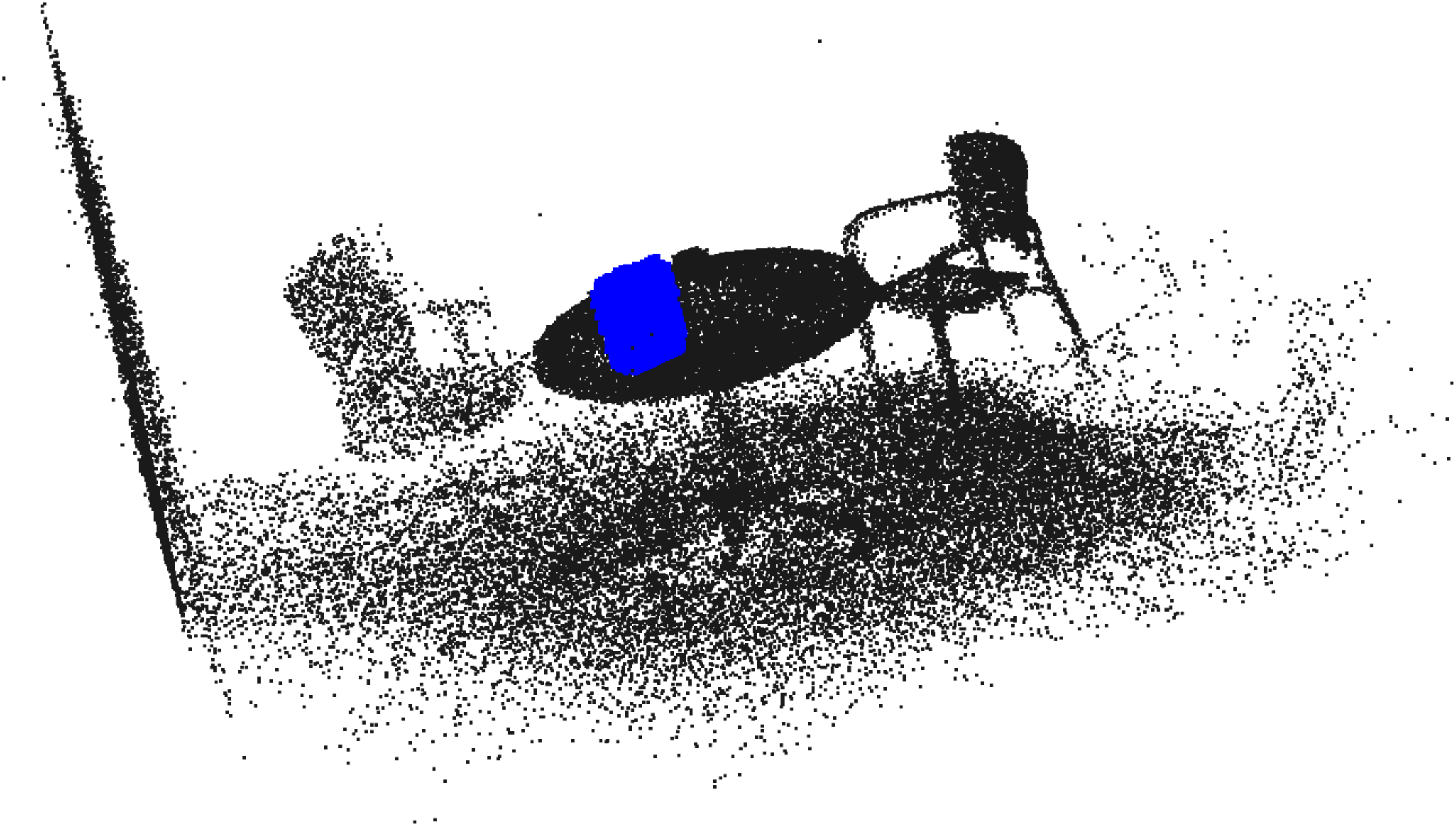}
&\includegraphics[width=0.20\linewidth]{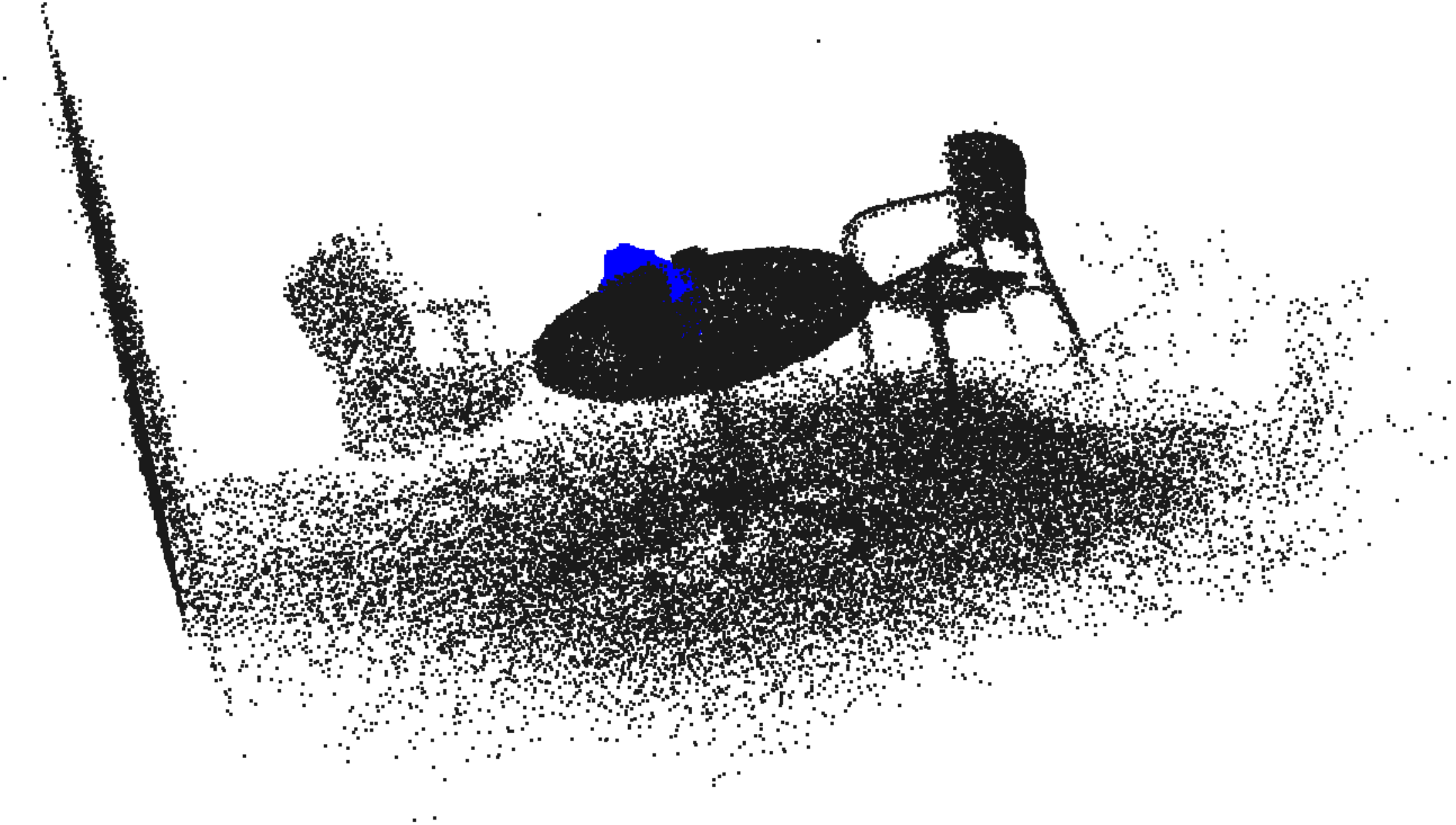}
&\includegraphics[width=0.20\linewidth]{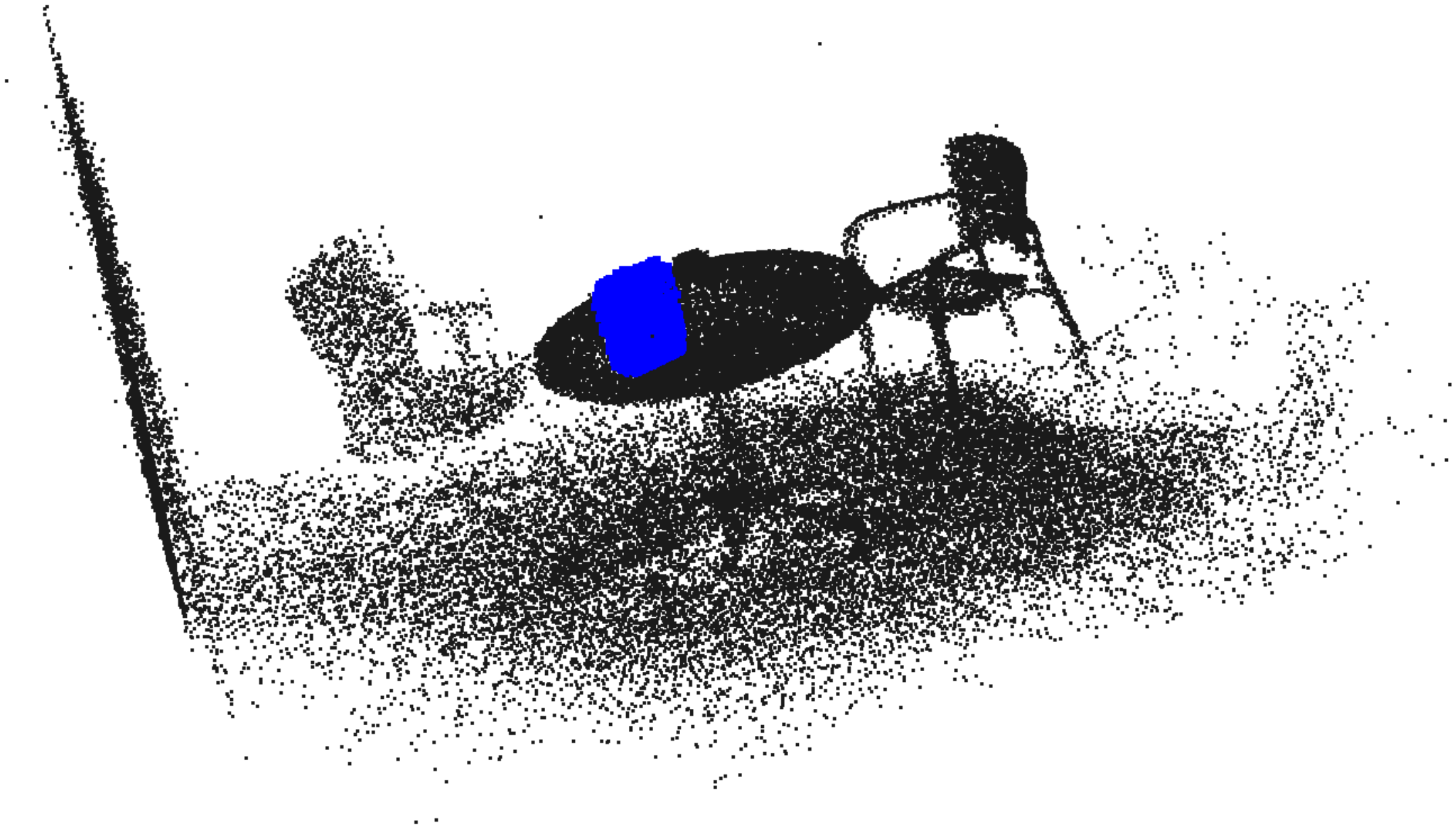} \\

\textit{Scene-07}, $\mathit{s}=1$ & ICOS  & RANSAC(1000) & RANSAC(1min) \\

\includegraphics[width=0.20\linewidth]{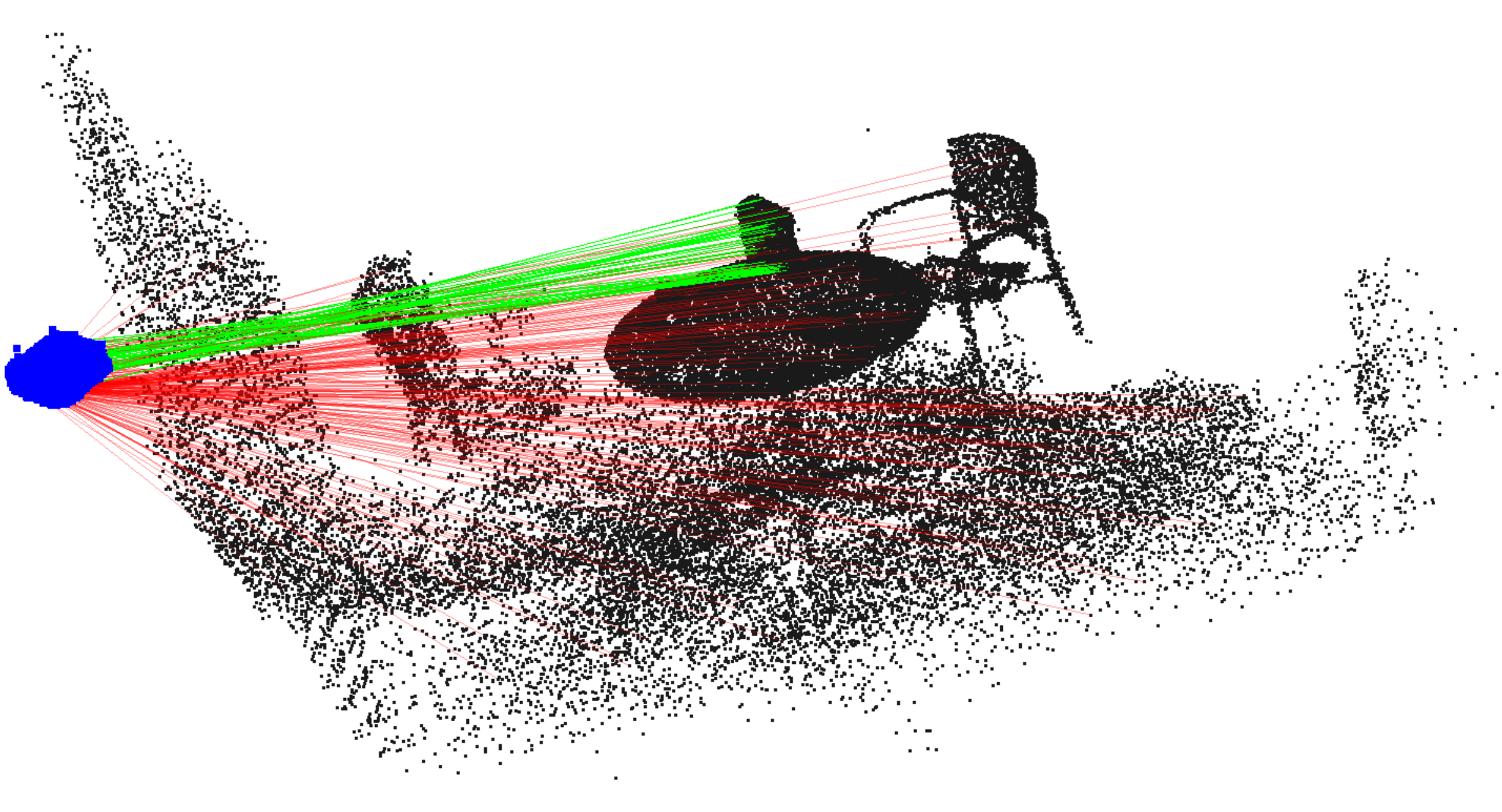}\,
&\includegraphics[width=0.20\linewidth]{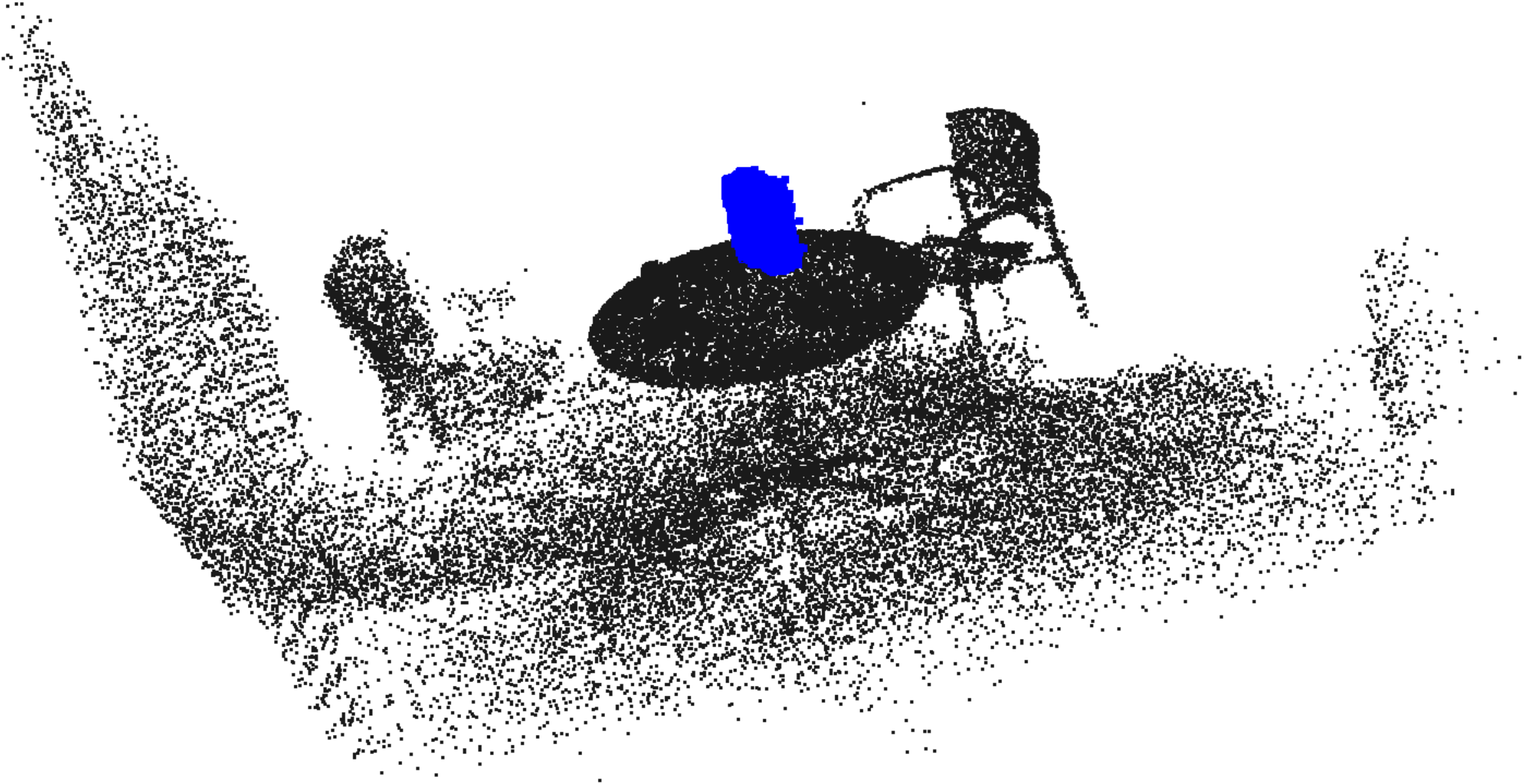}
&\includegraphics[width=0.20\linewidth]{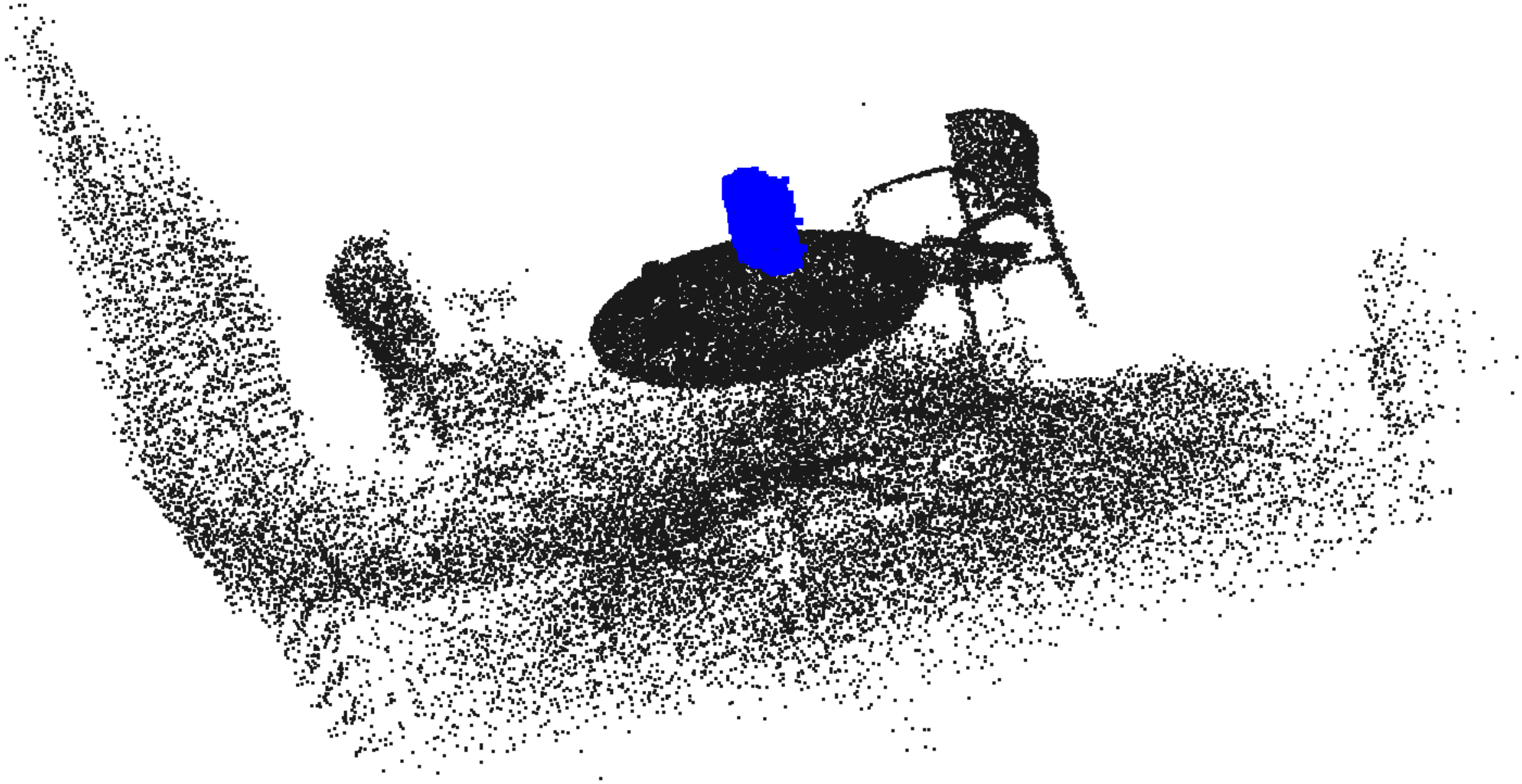}
&\includegraphics[width=0.20\linewidth]{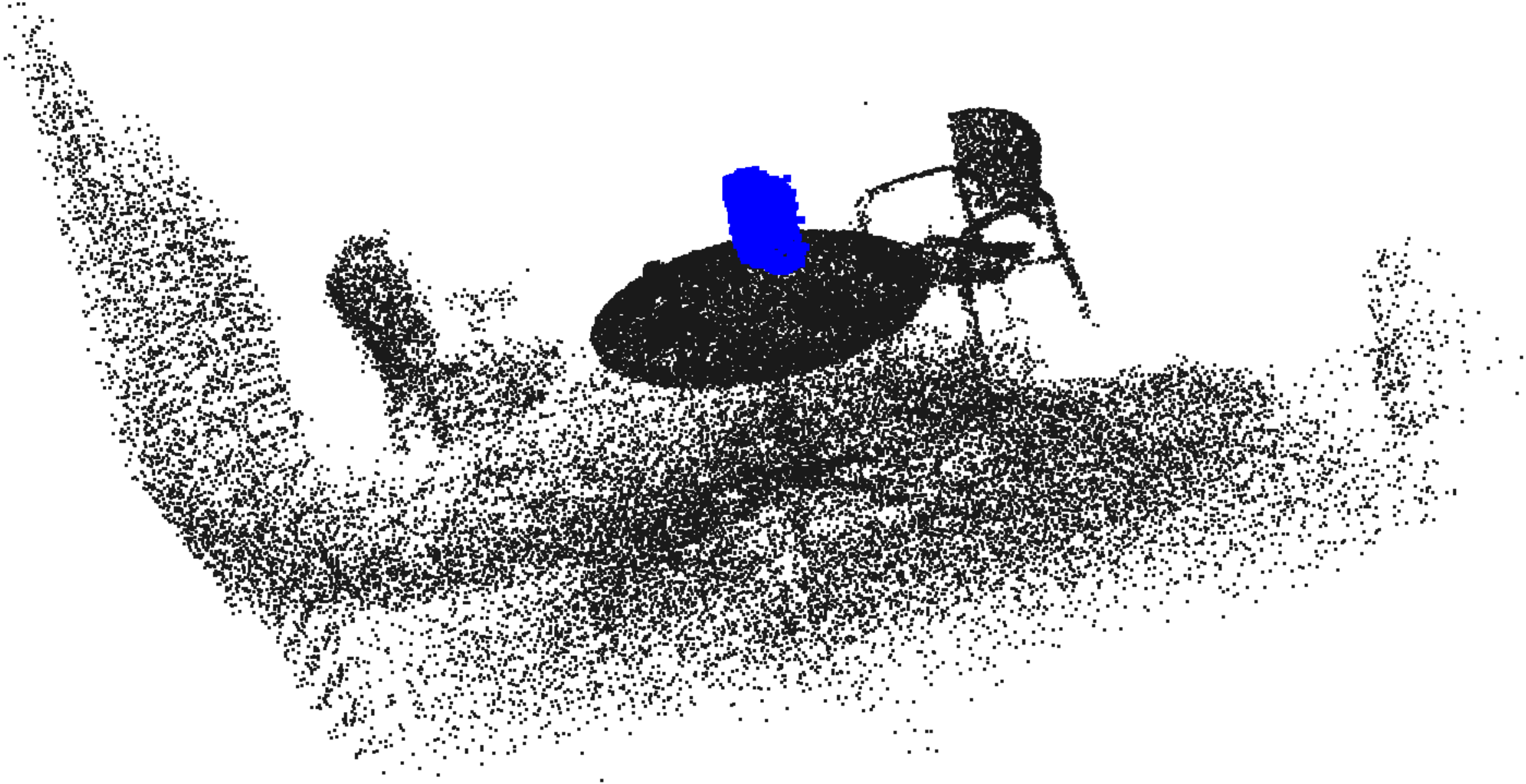} \\

\textit{Scene-09}, $\mathit{s}=1$ & ICOS  & RANSAC(1000) & RANSAC(1min) \\

\includegraphics[width=0.20\linewidth]{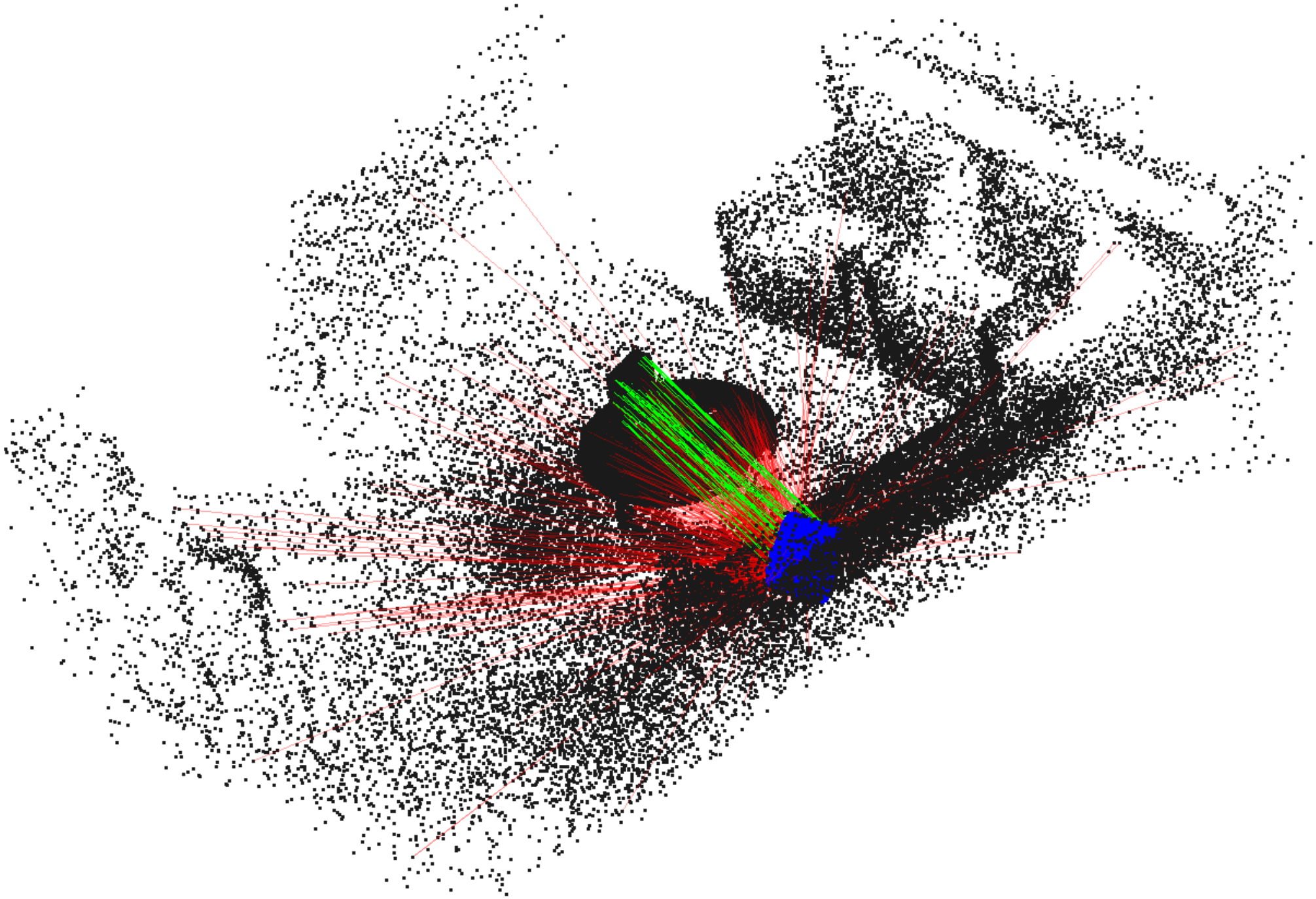}\,
&\includegraphics[width=0.20\linewidth]{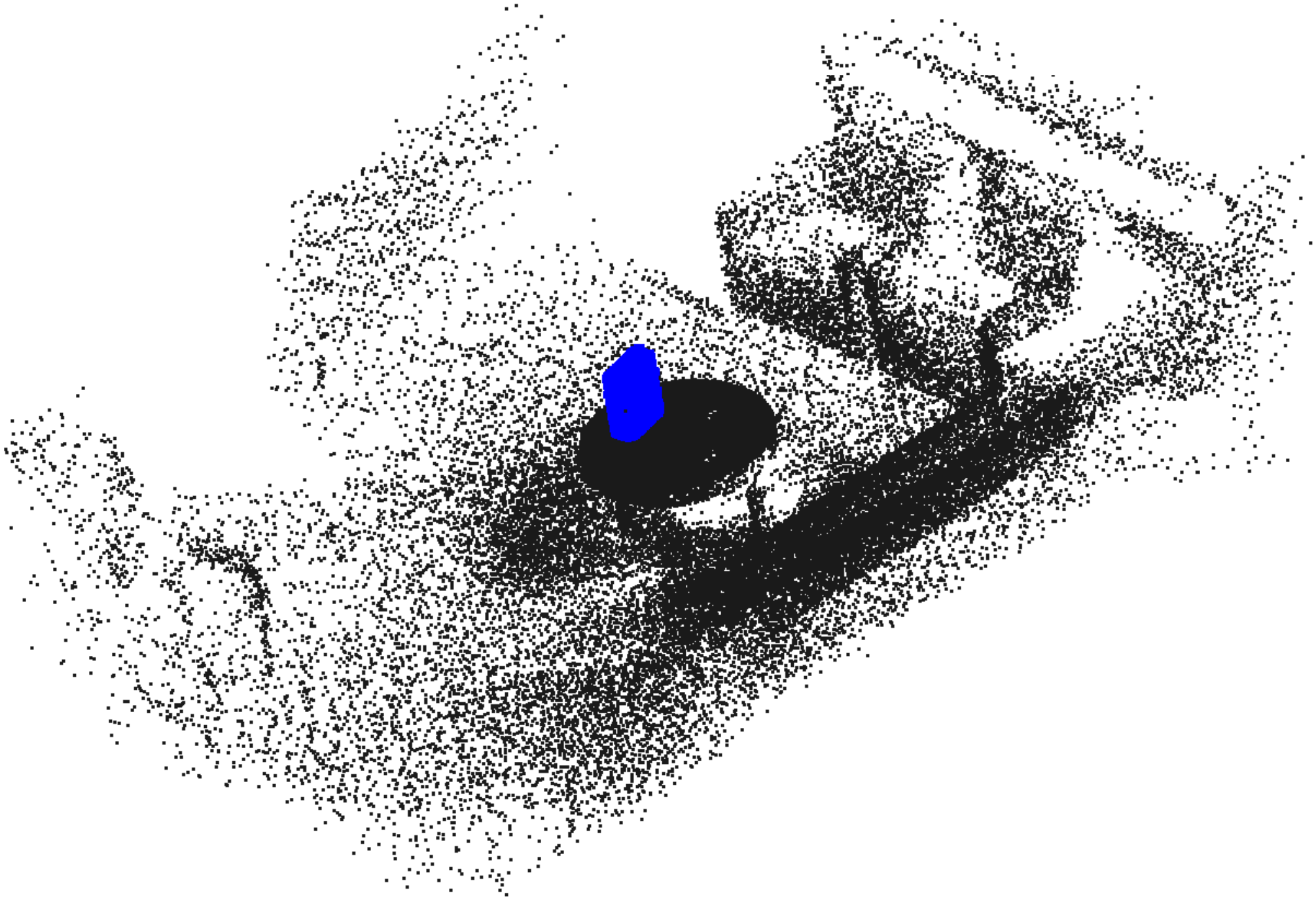}
&\includegraphics[width=0.20\linewidth]{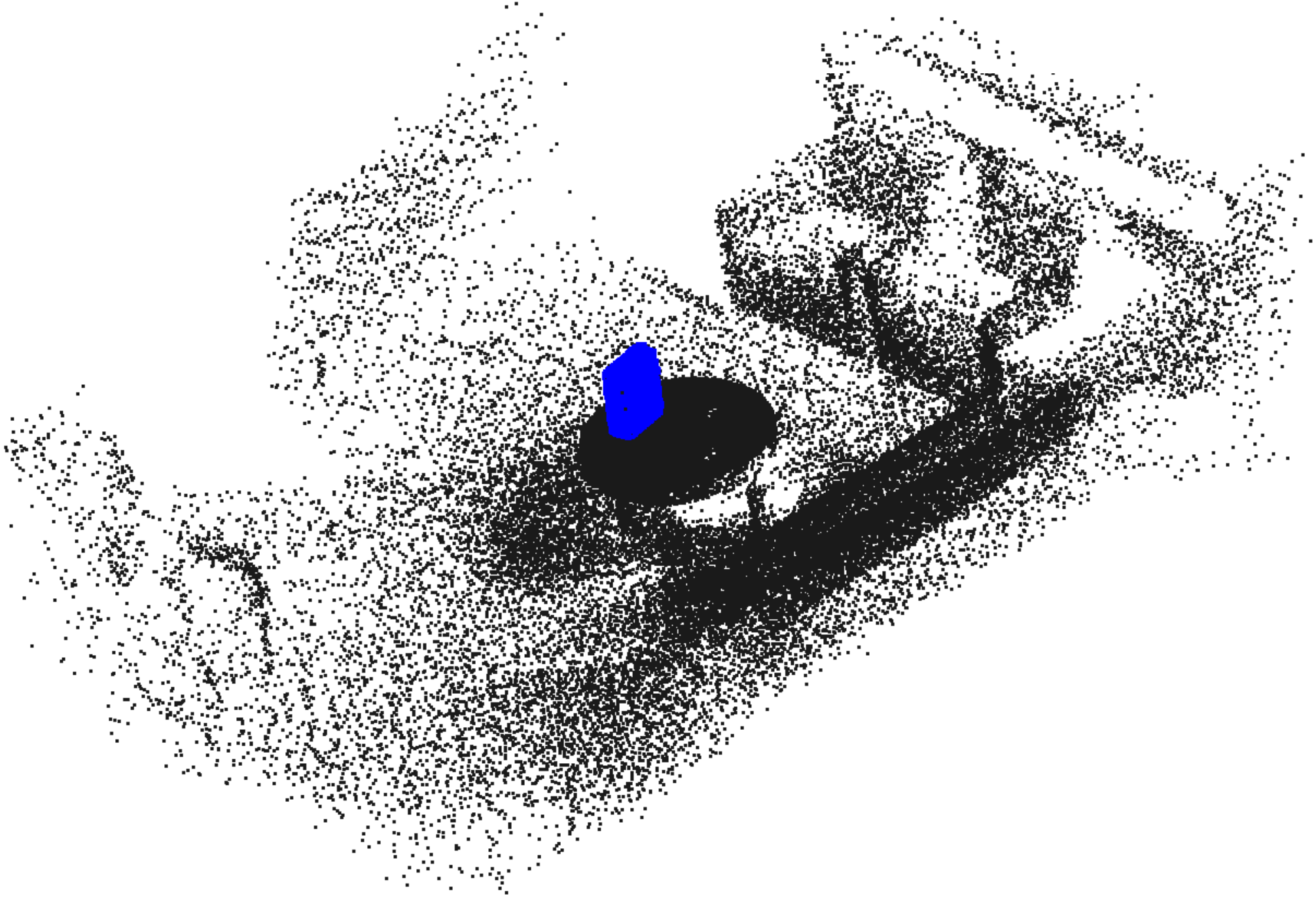}
&\includegraphics[width=0.20\linewidth]{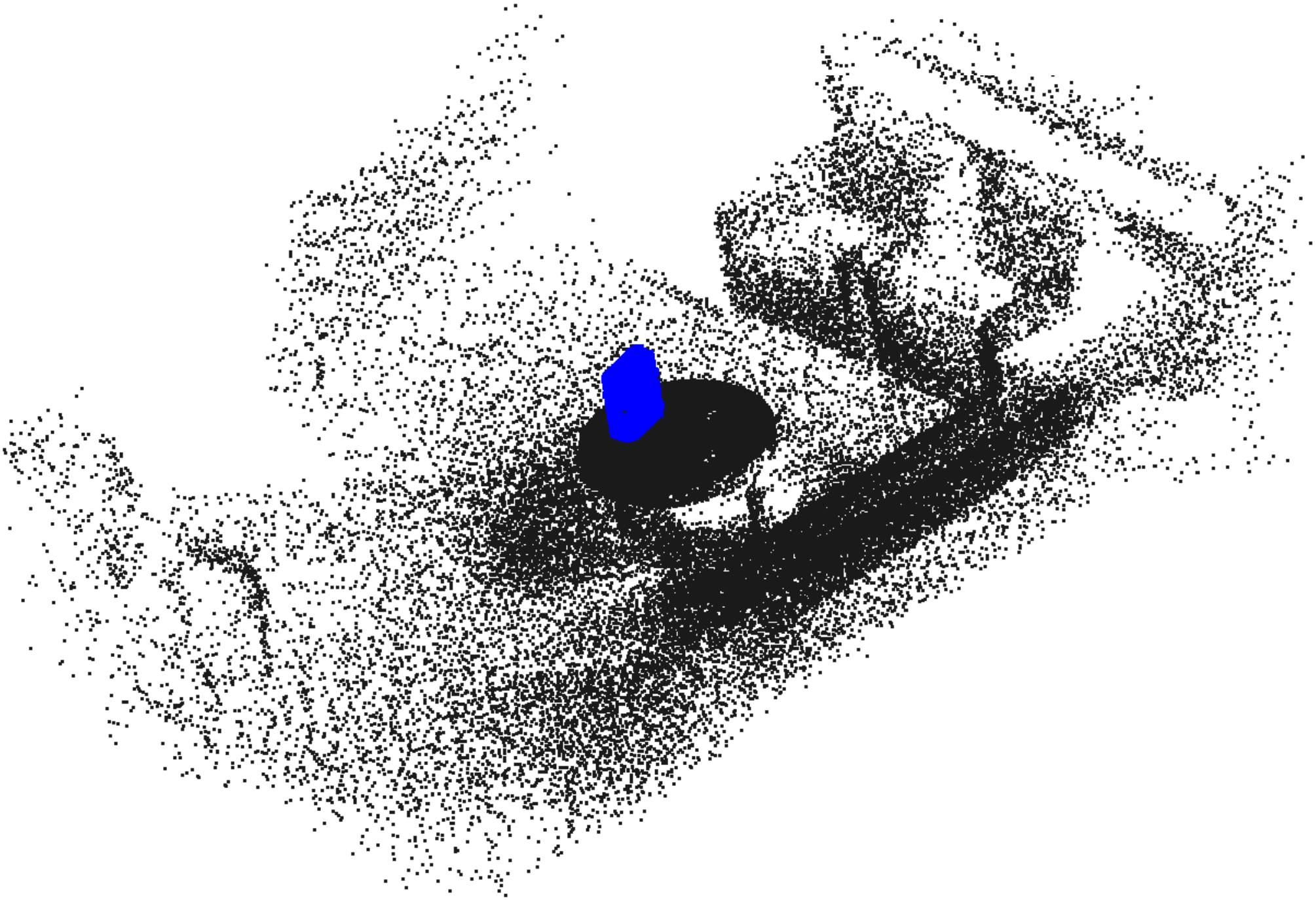} \\

\textit{Scene-02}, $\mathit{s}\in(1,5)$ & ICOS  & RANSAC(1000) & RANSAC(1min) \\

\includegraphics[width=0.20\linewidth]{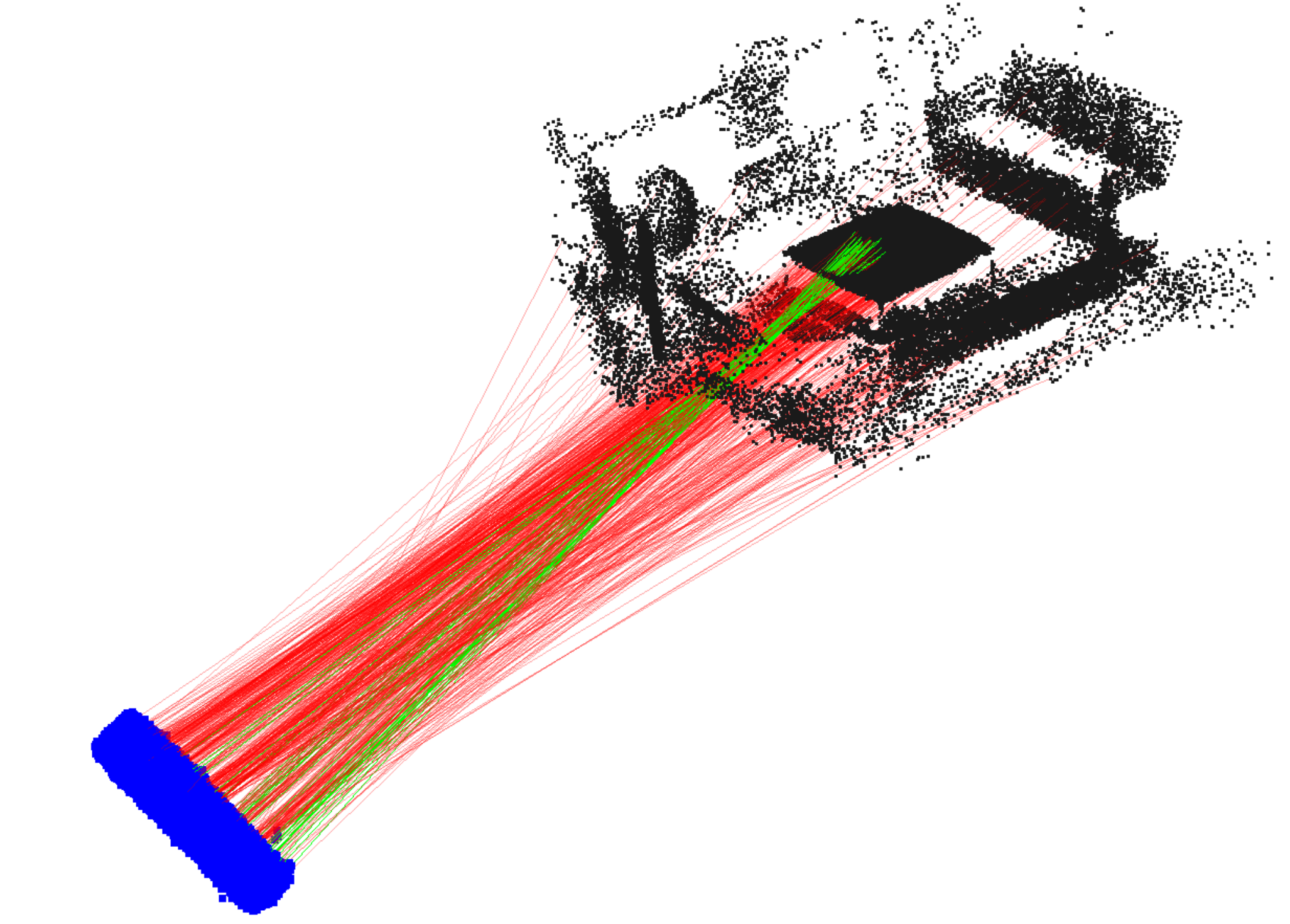}\,
&\includegraphics[width=0.20\linewidth]{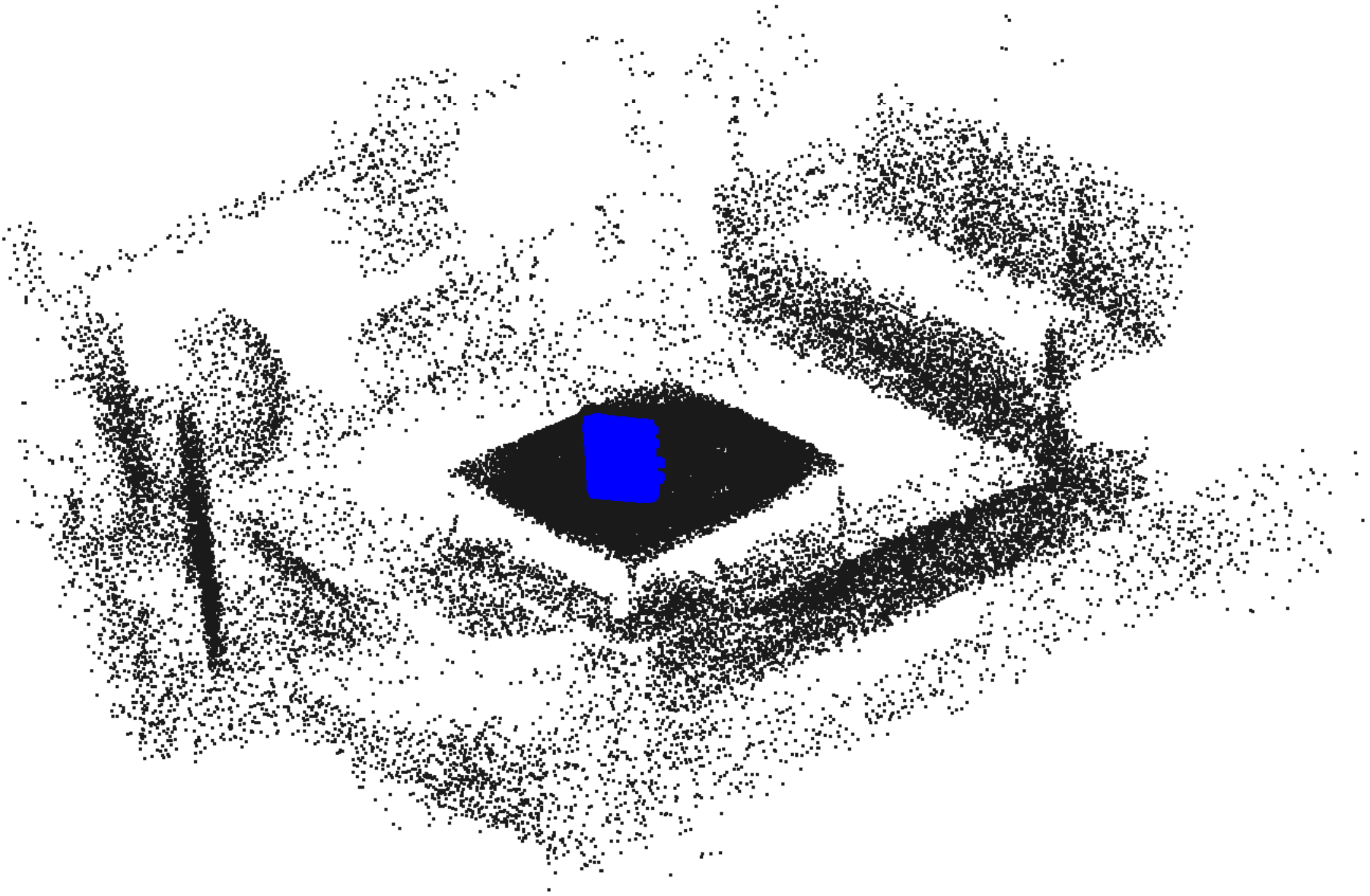}
&\includegraphics[width=0.20\linewidth]{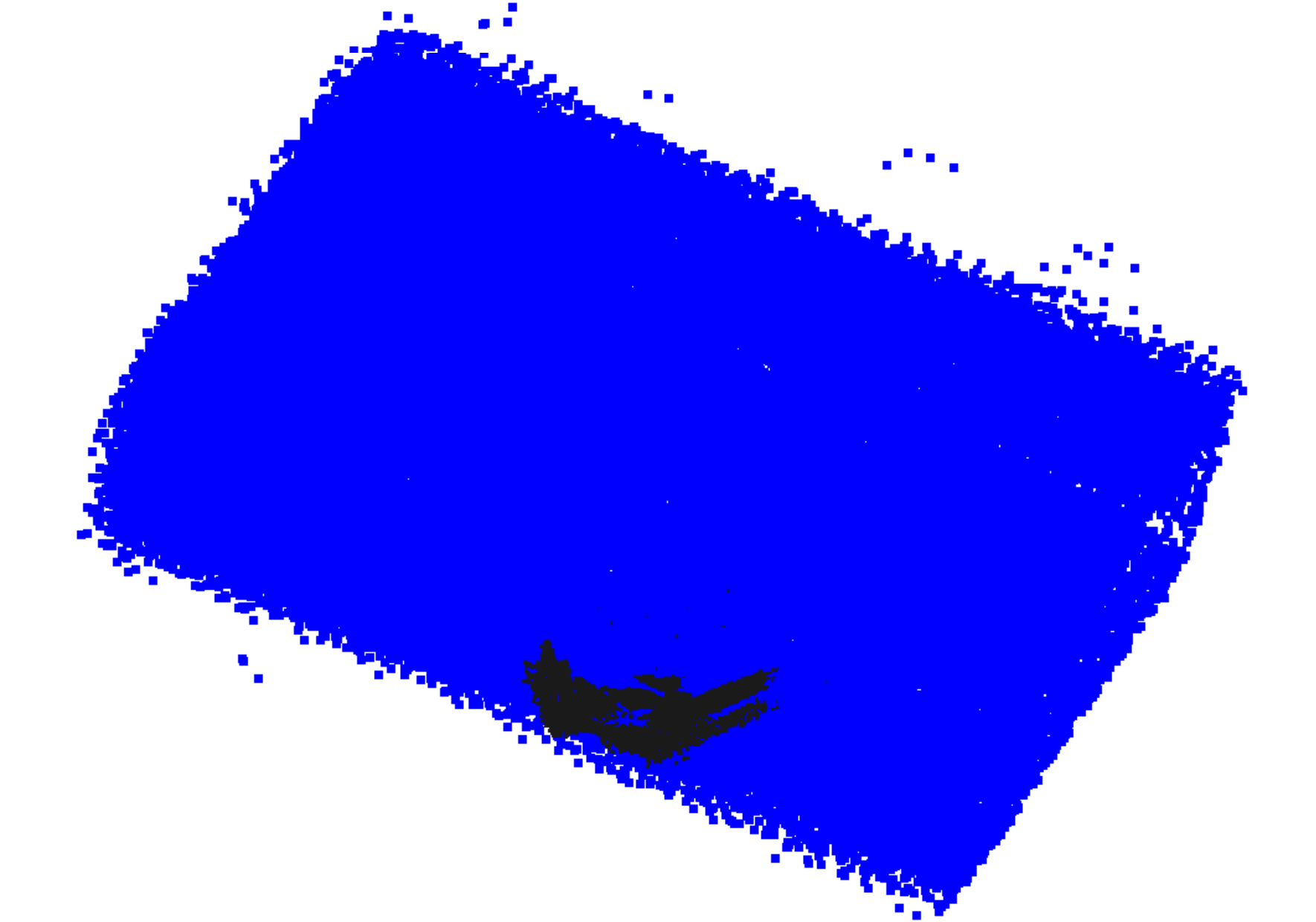}
&\includegraphics[width=0.20\linewidth]{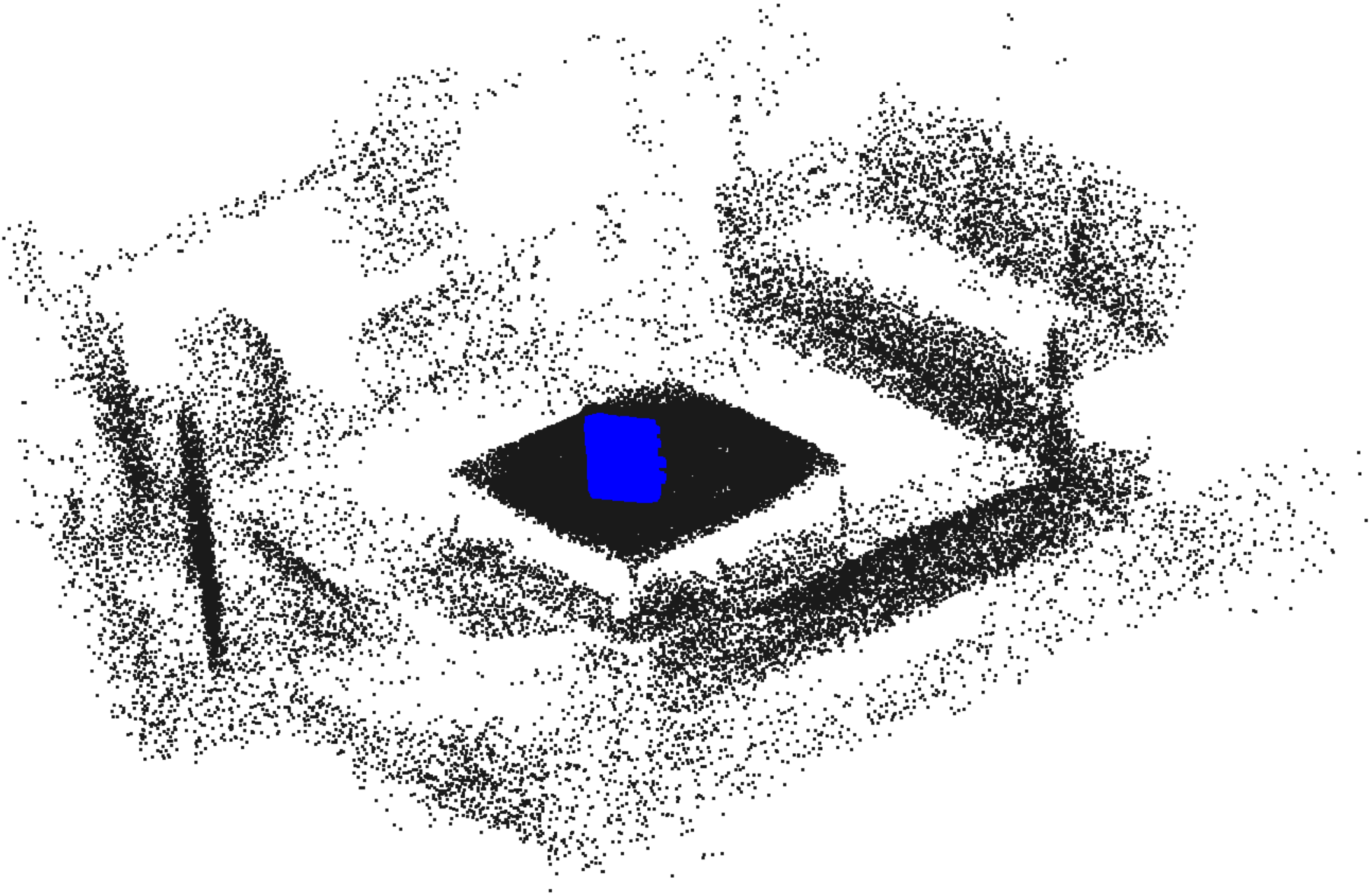} \\

\textit{Scene-05}, $\mathit{s}\in(1,5)$ & ICOS  & RANSAC(1000) & RANSAC(1min) \\

\includegraphics[width=0.20\linewidth]{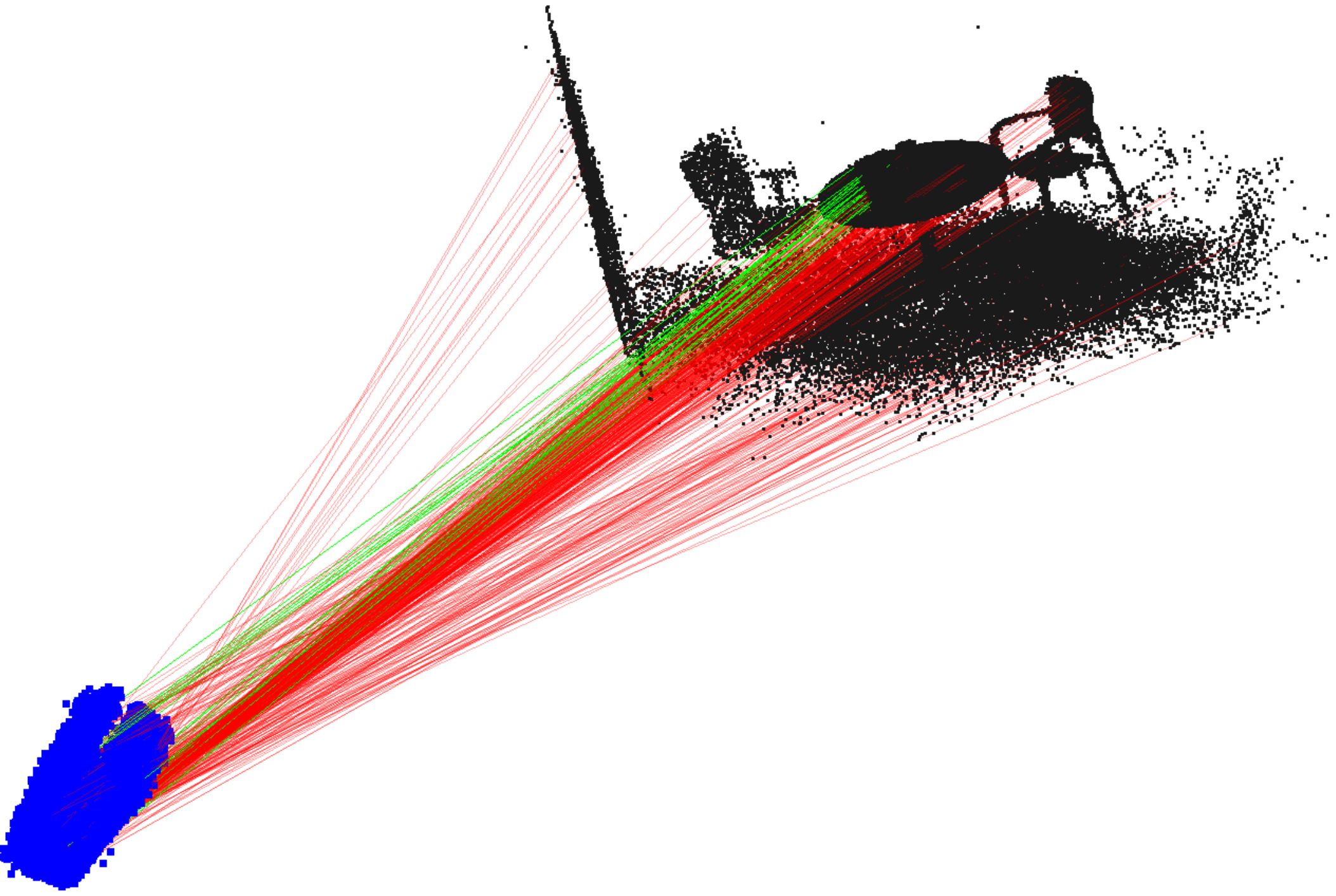}\,
&\includegraphics[width=0.20\linewidth]{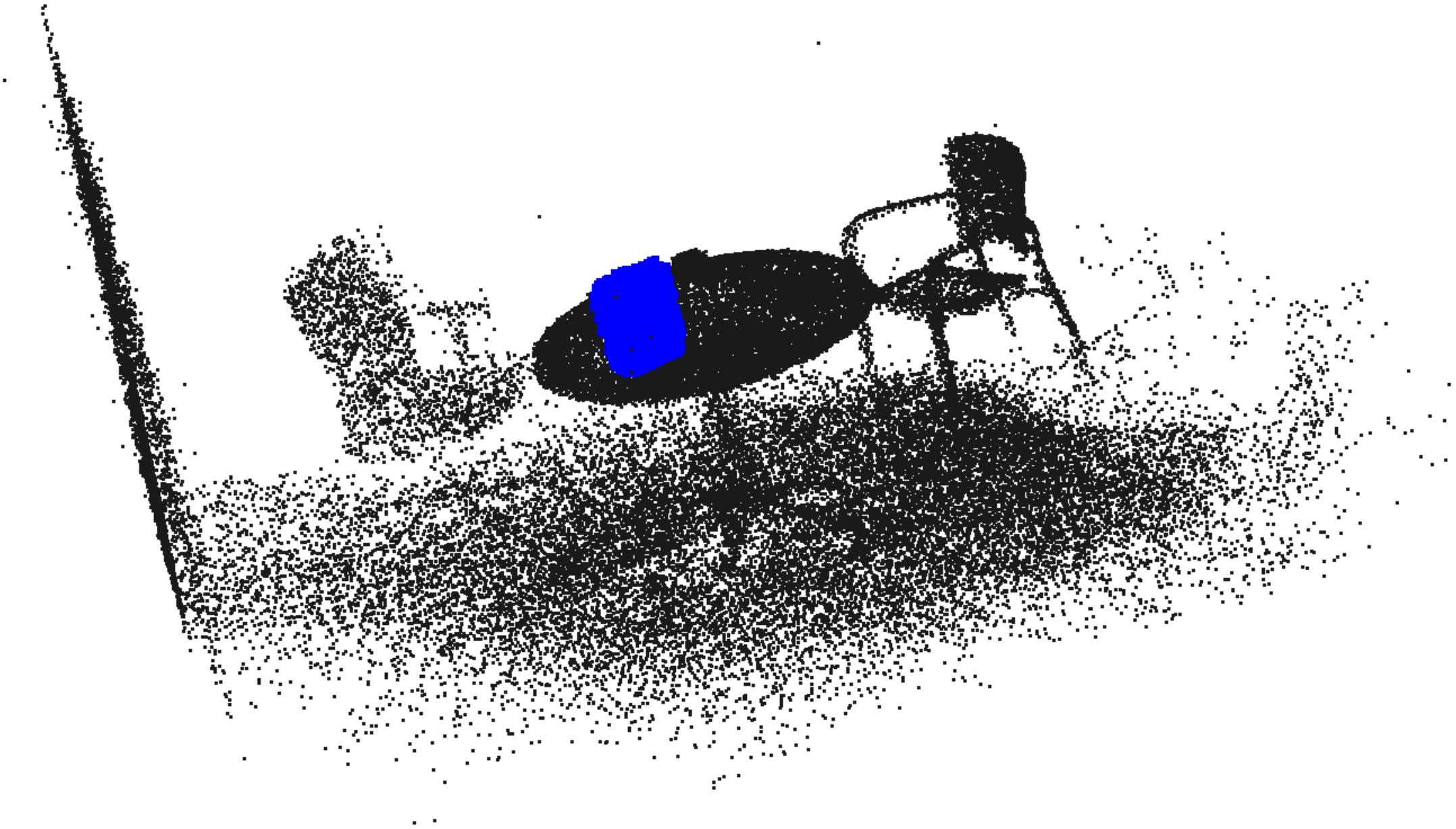}
&\includegraphics[width=0.20\linewidth]{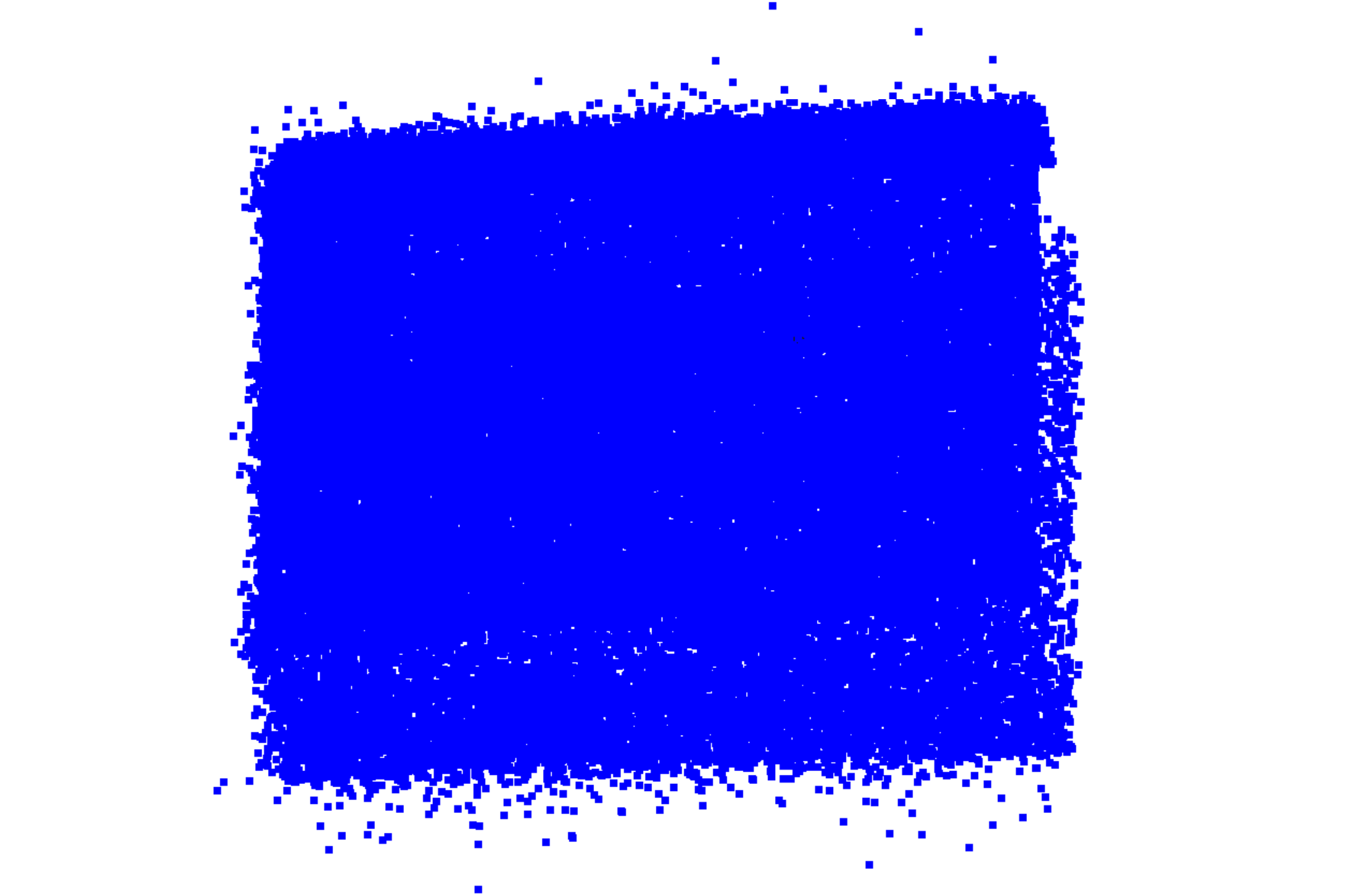}
&\includegraphics[width=0.20\linewidth]{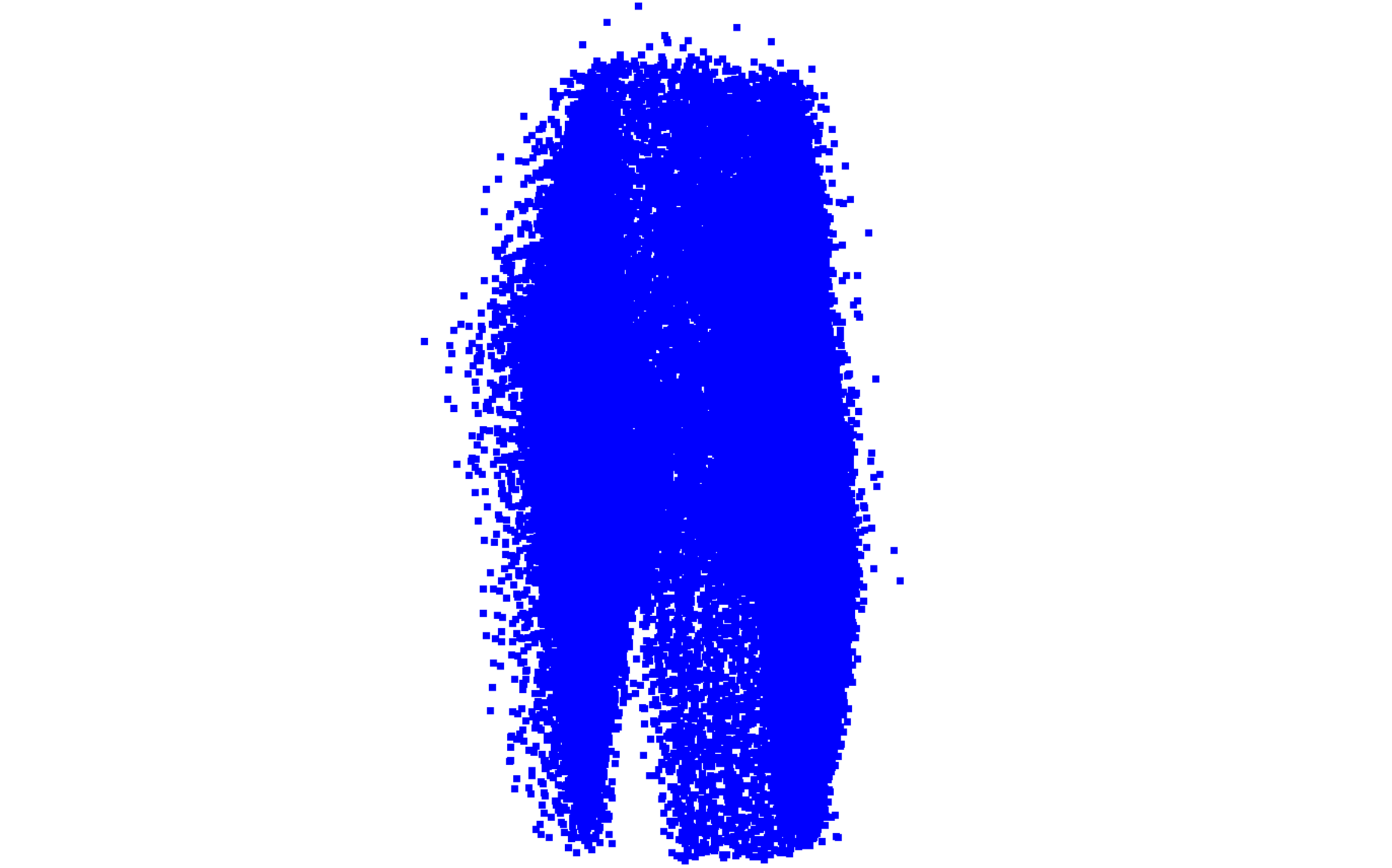} \\

\textit{Scene-07}, $\mathit{s}\in(1,5)$ & ICOS  & RANSAC(1000) & RANSAC(1min) \\

\includegraphics[width=0.20\linewidth]{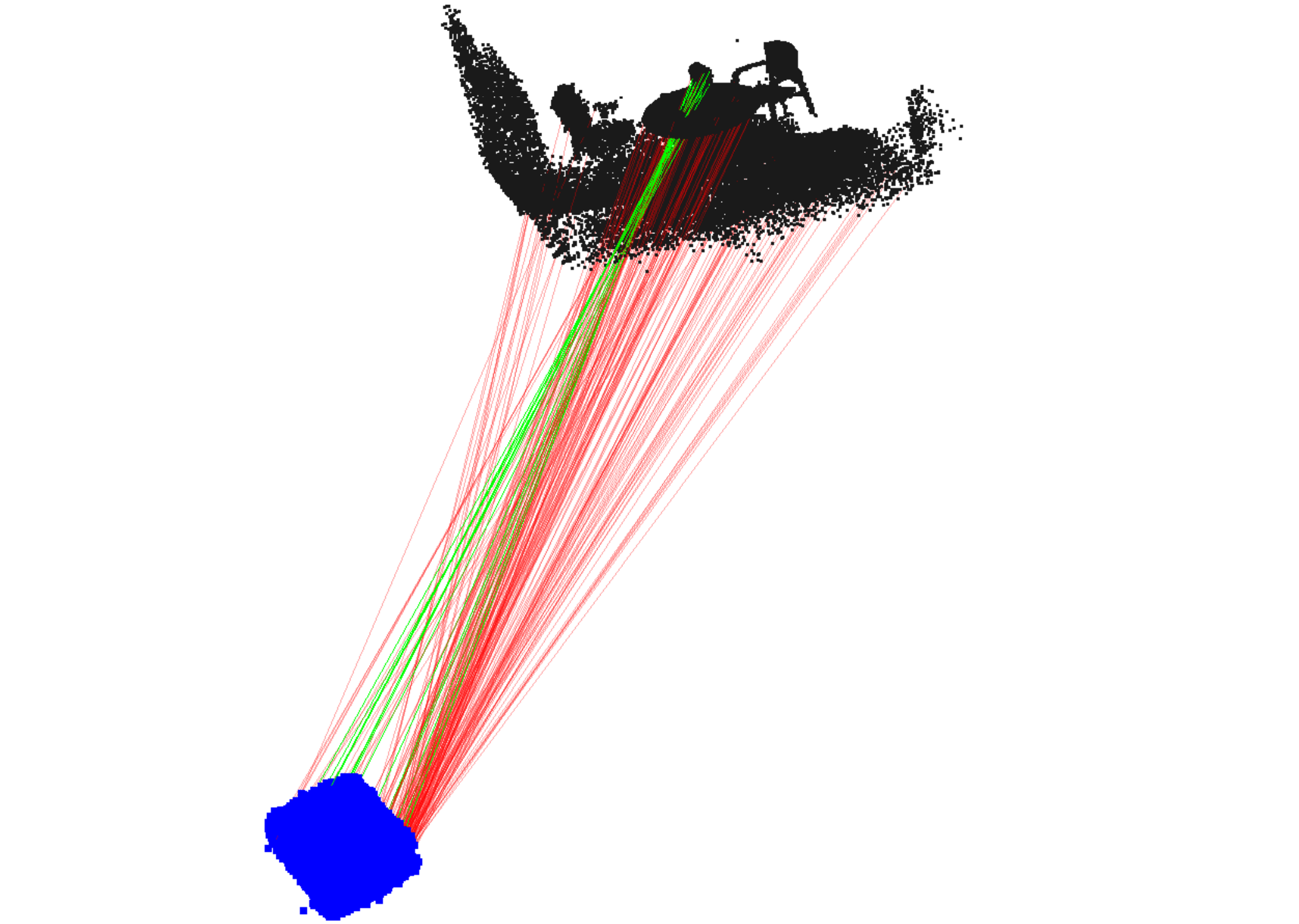}\,
&\includegraphics[width=0.20\linewidth]{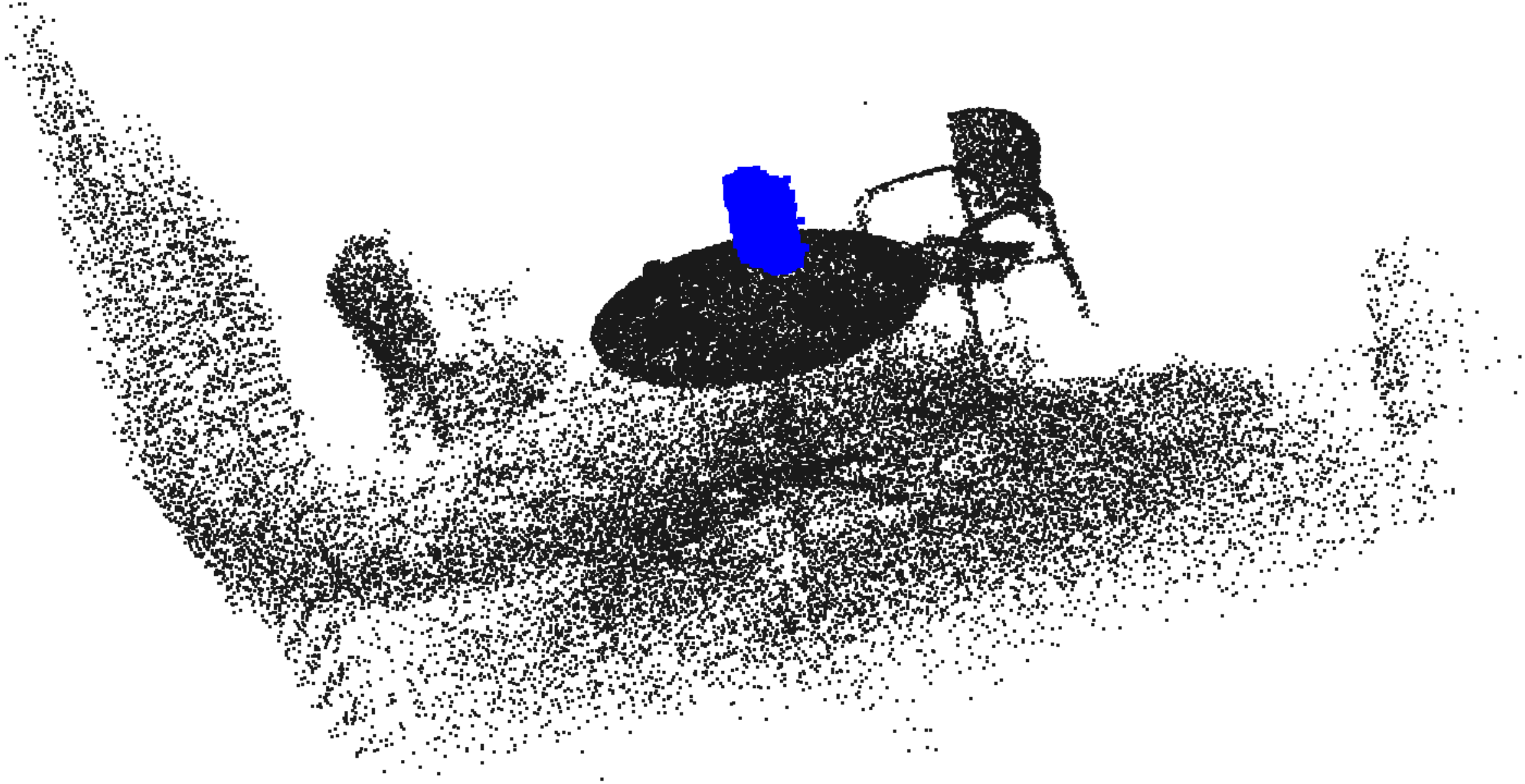}
&\includegraphics[width=0.20\linewidth]{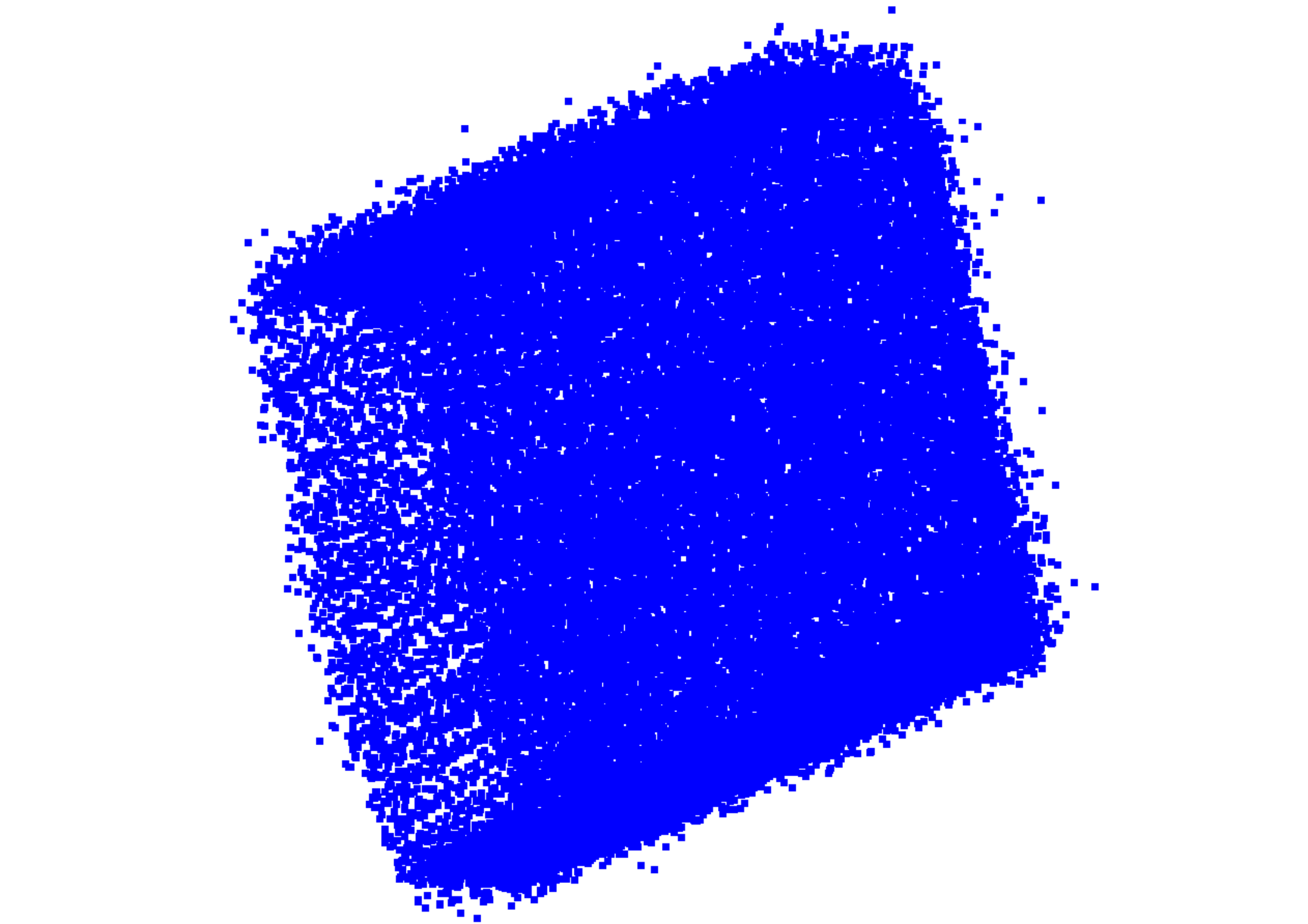}
&\includegraphics[width=0.20\linewidth]{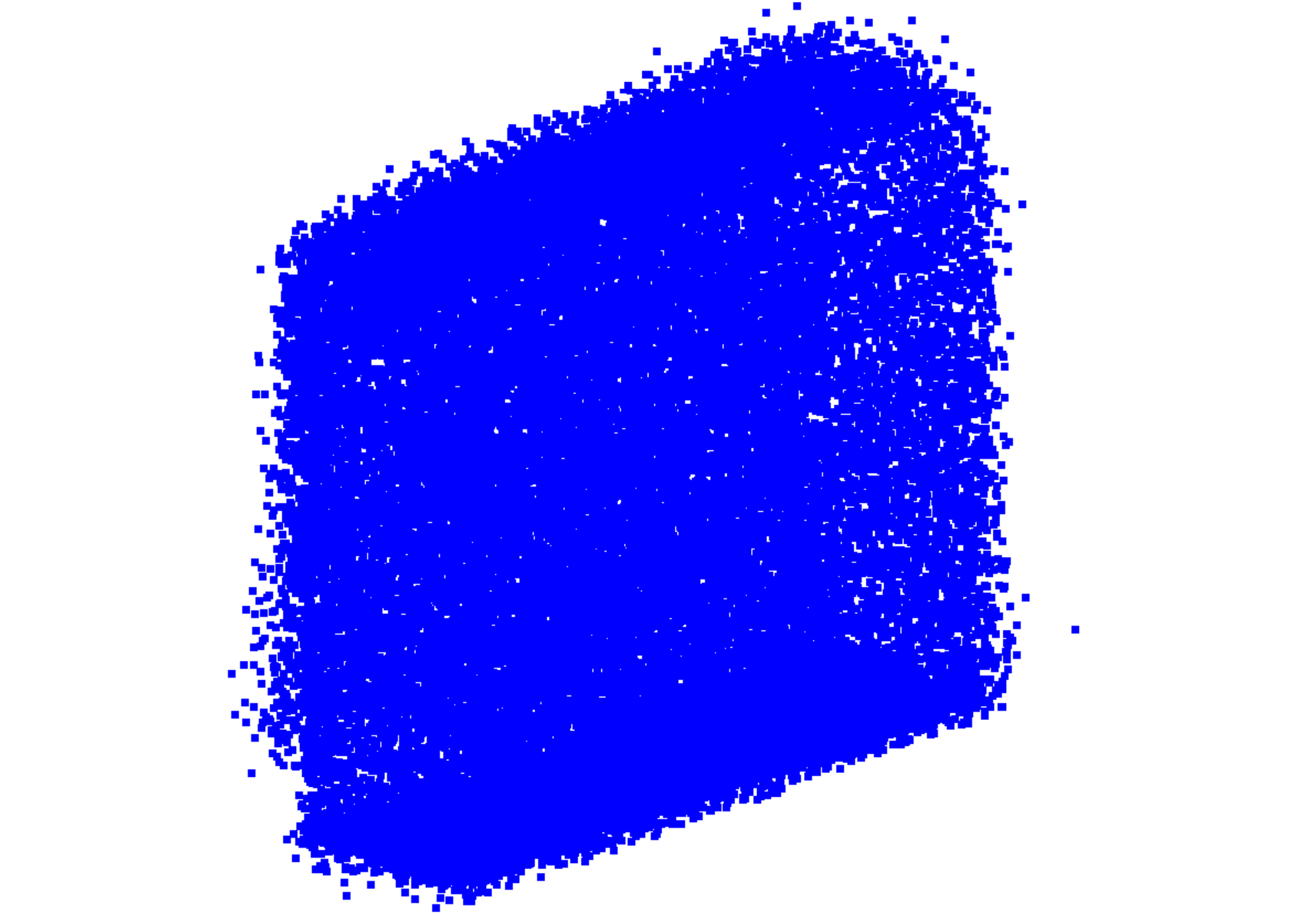} \\

\textit{Scene-09}, $\mathit{s}\in(1,5)$ & ICOS  & RANSAC(1000) & RANSAC(1min) \\

\includegraphics[width=0.20\linewidth]{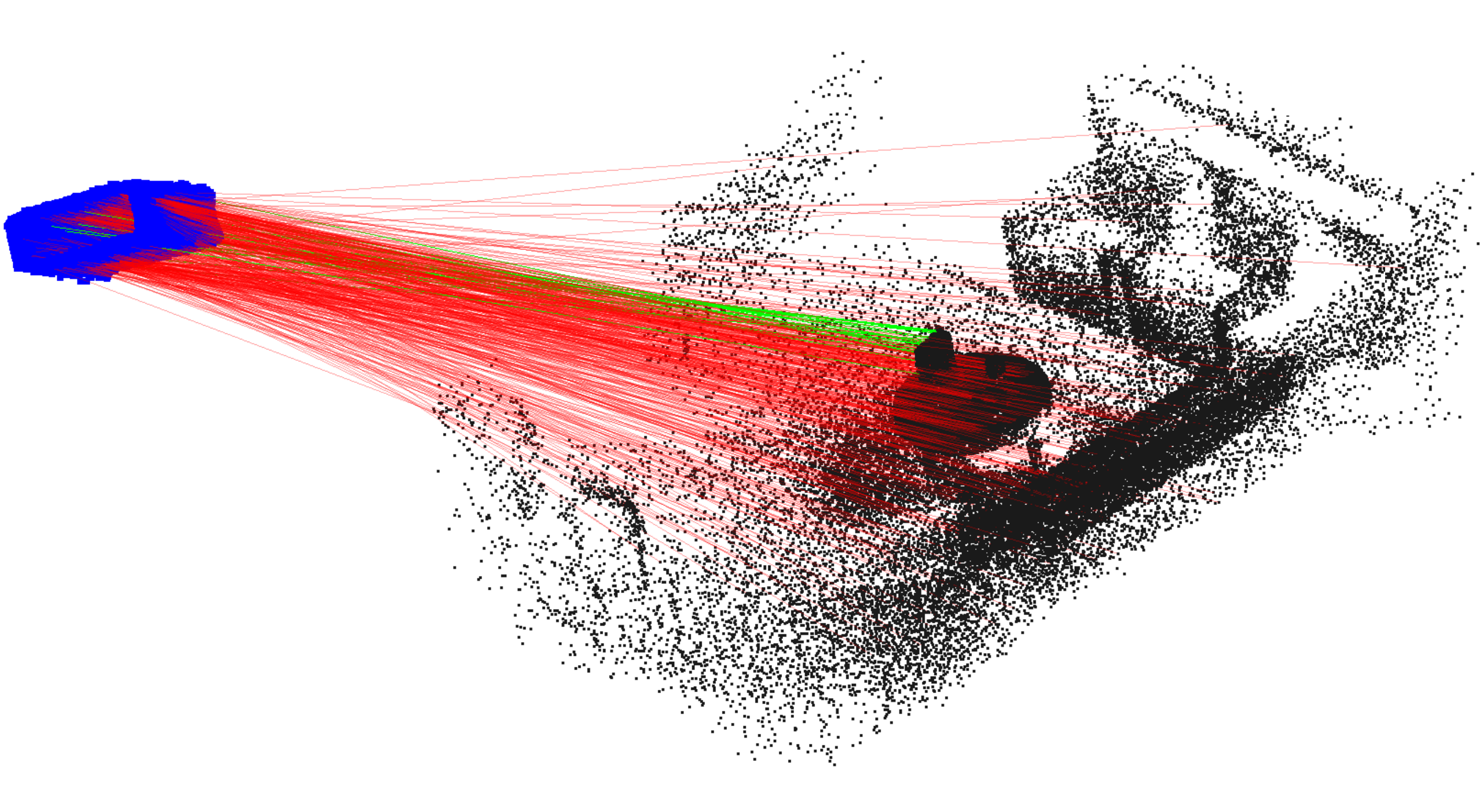}\,
&\includegraphics[width=0.20\linewidth]{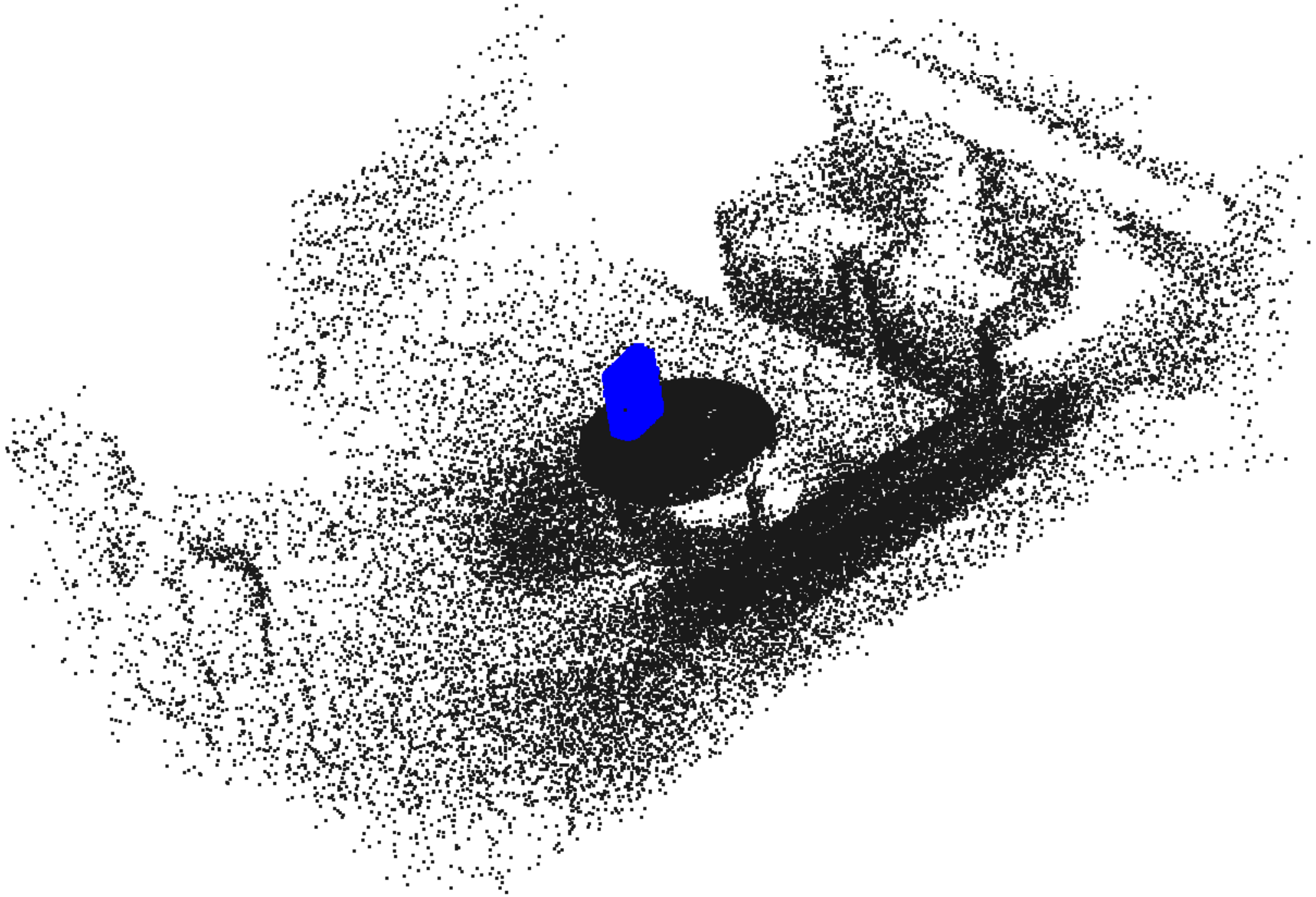}
&\includegraphics[width=0.20\linewidth]{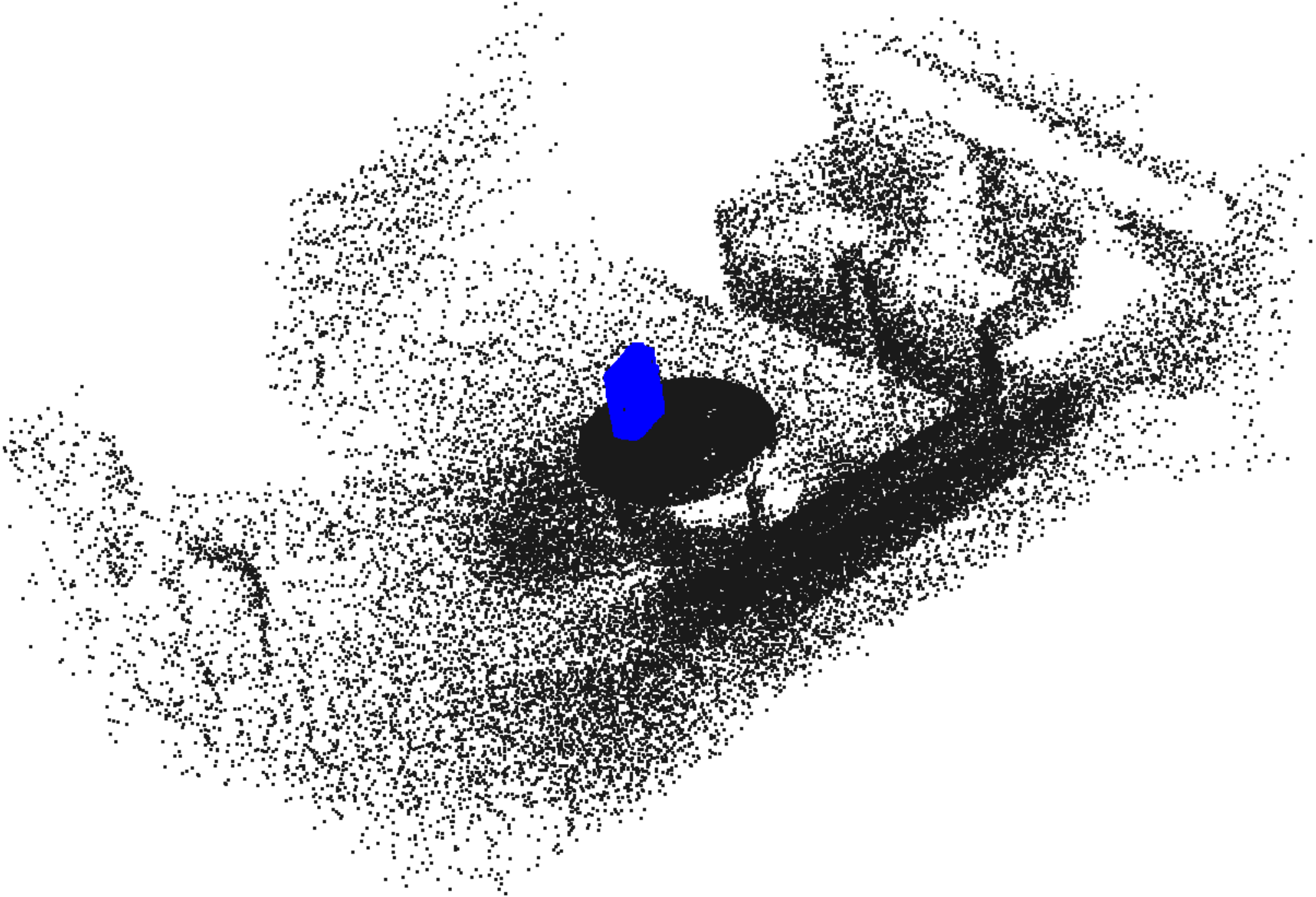}
&\includegraphics[width=0.20\linewidth]{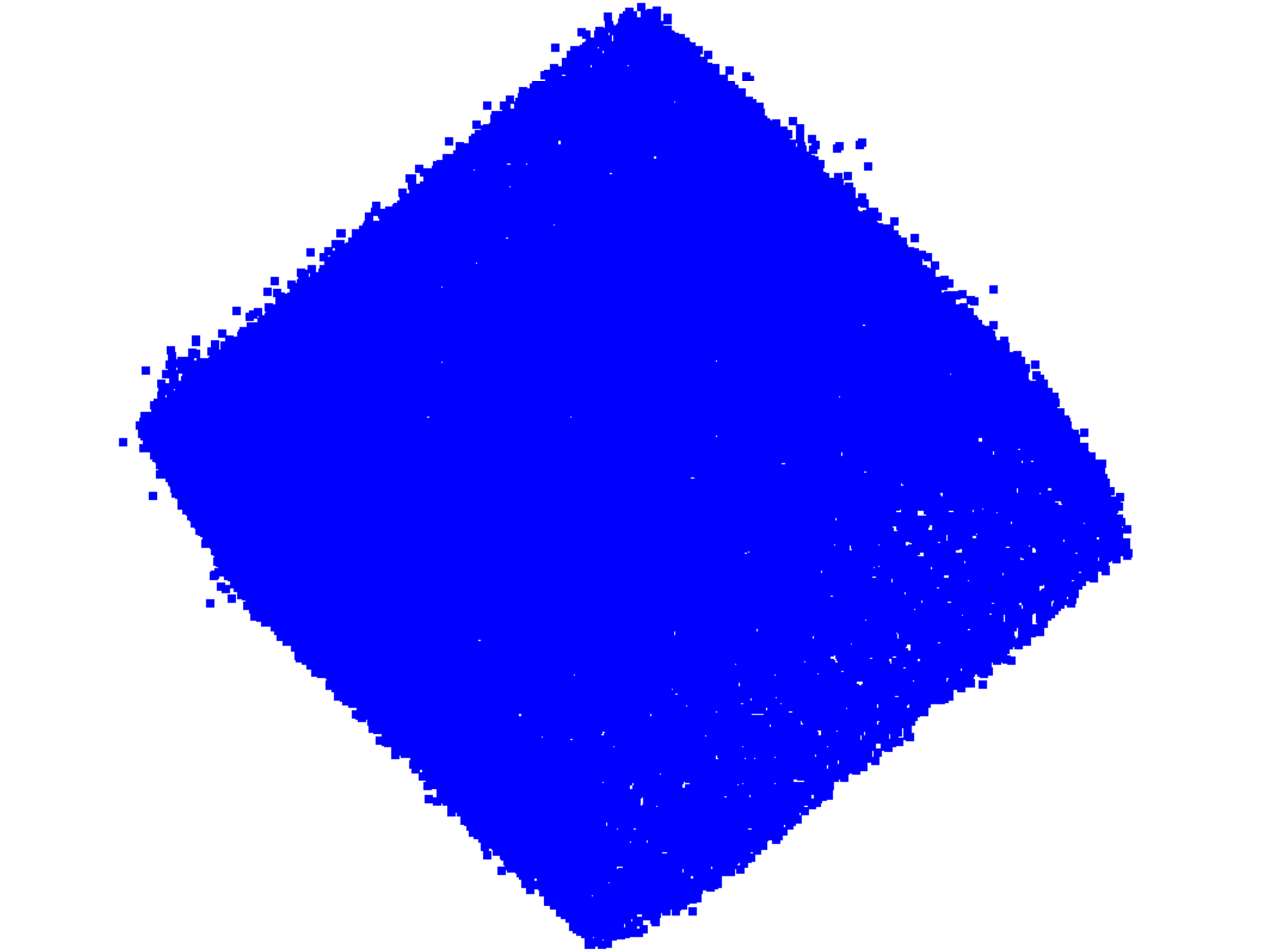}

\end{tabular}

\vspace{-1mm}

\centering
\caption{Qualitative results of 3D object localization (\textit{cereal box}) using ICOS, RANSAC(1000) and RANSAC(1min). From left to right, we show the correspondences matched by FPFH (inliers are in green lines and outliers are in red lines), and the registration (reprojection) results with ICOS, RANSAC(1000), and RANSAC(1min), respectively. Note that in all scenes the outlier ratios are over 80\%, and our ICOS can always yield very promising registration results.}
\label{Object-Lo}
\end{figure*}

\begin{figure*}[t]
\centering

\begin{tabular}{ccccc}

\textit{Scene-01}, $\mathit{s}=1$ & ICOS  & RANSAC(1000) & RANSAC(1min) \\

\includegraphics[width=0.20\linewidth]{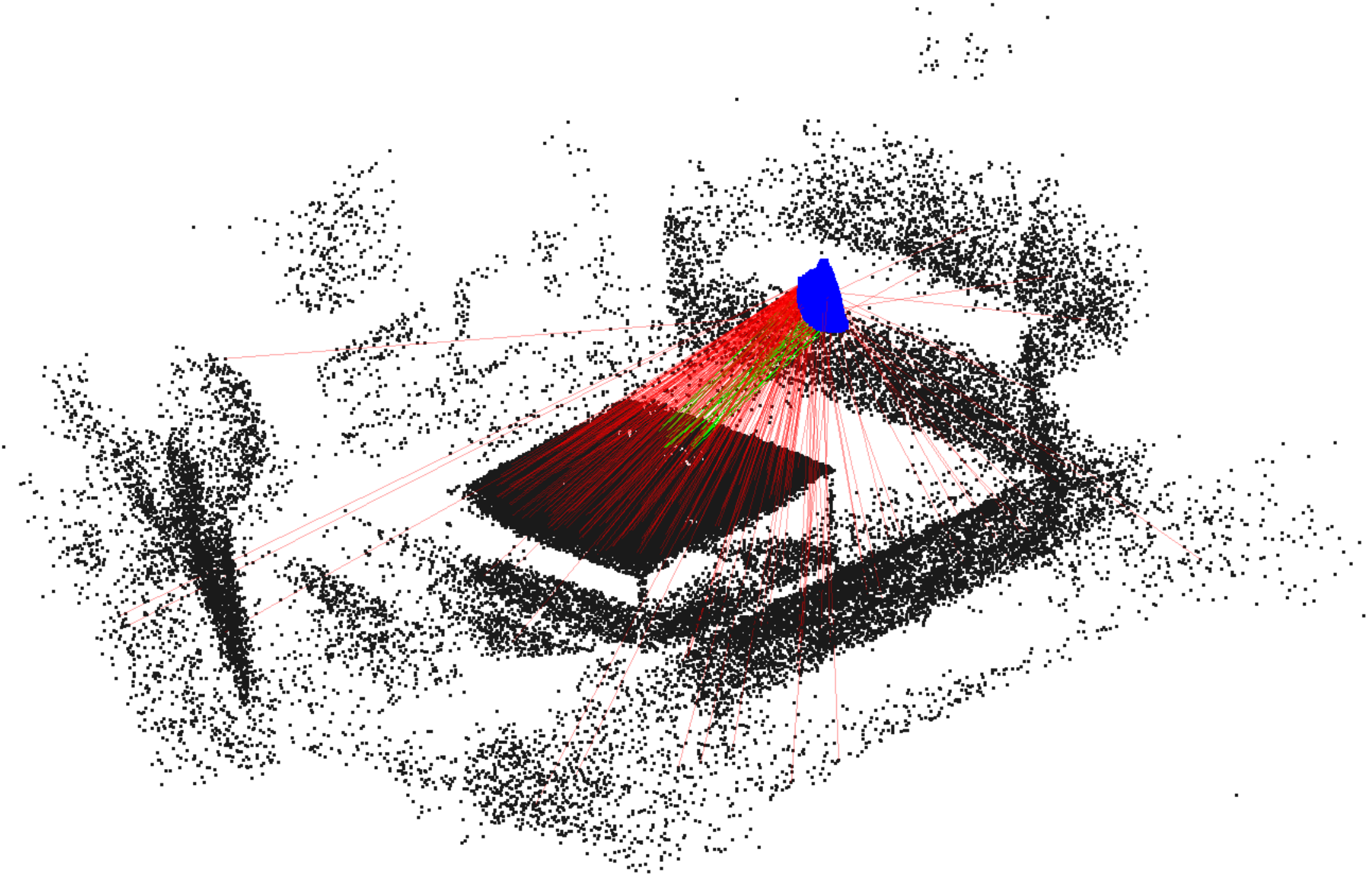}\,
&\includegraphics[width=0.20\linewidth]{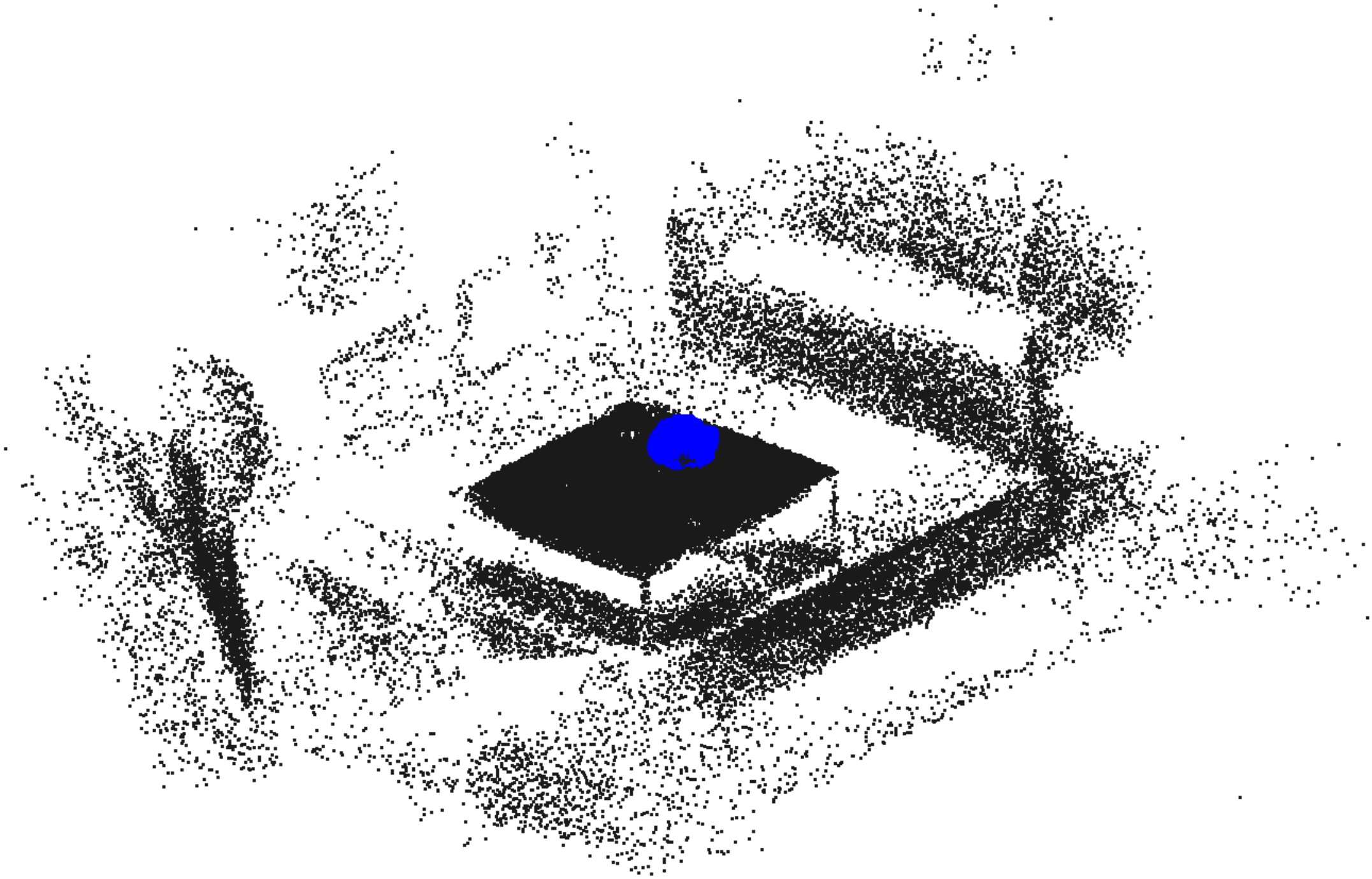}
&\includegraphics[width=0.20\linewidth]{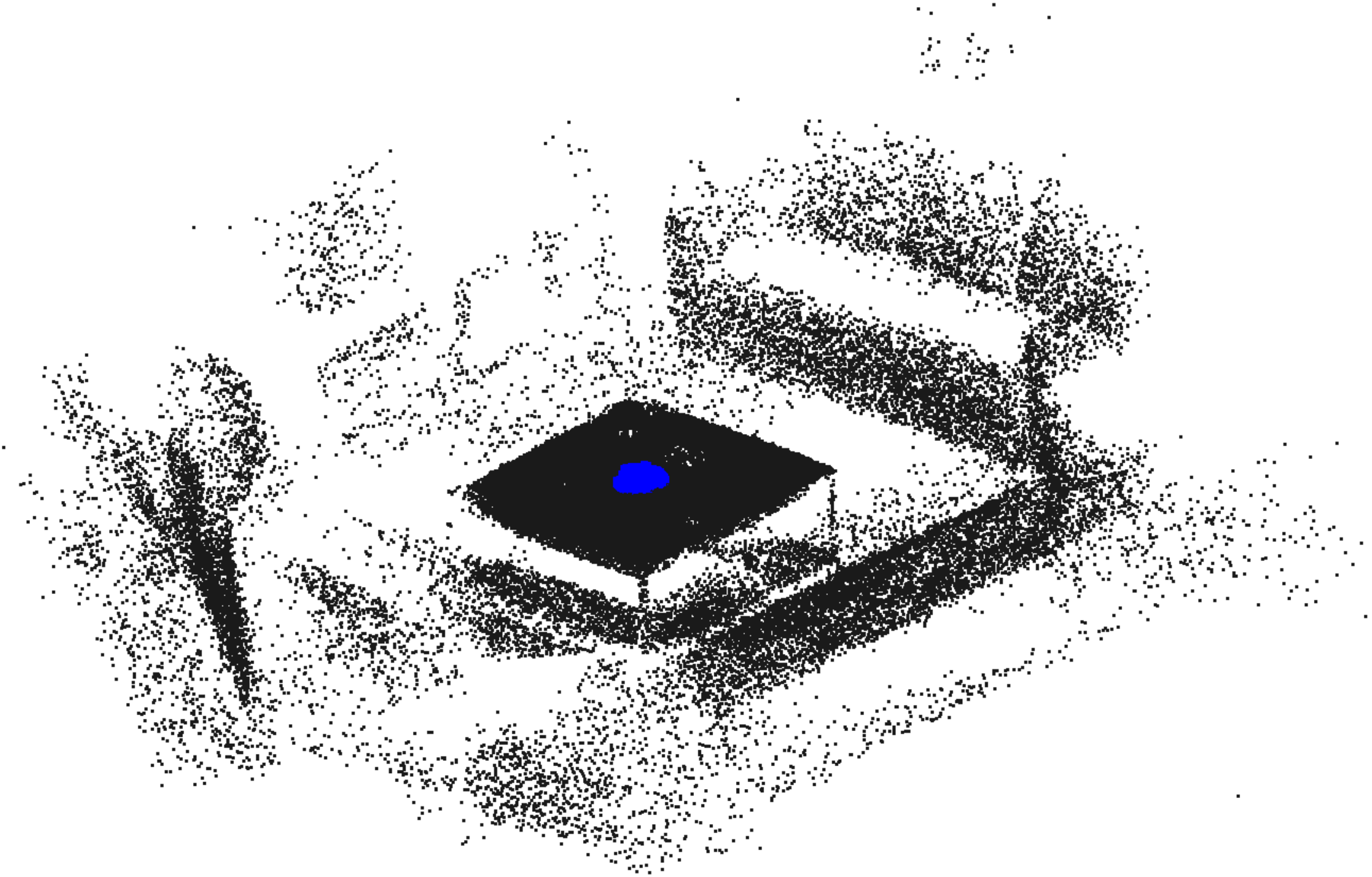}
&\includegraphics[width=0.20\linewidth]{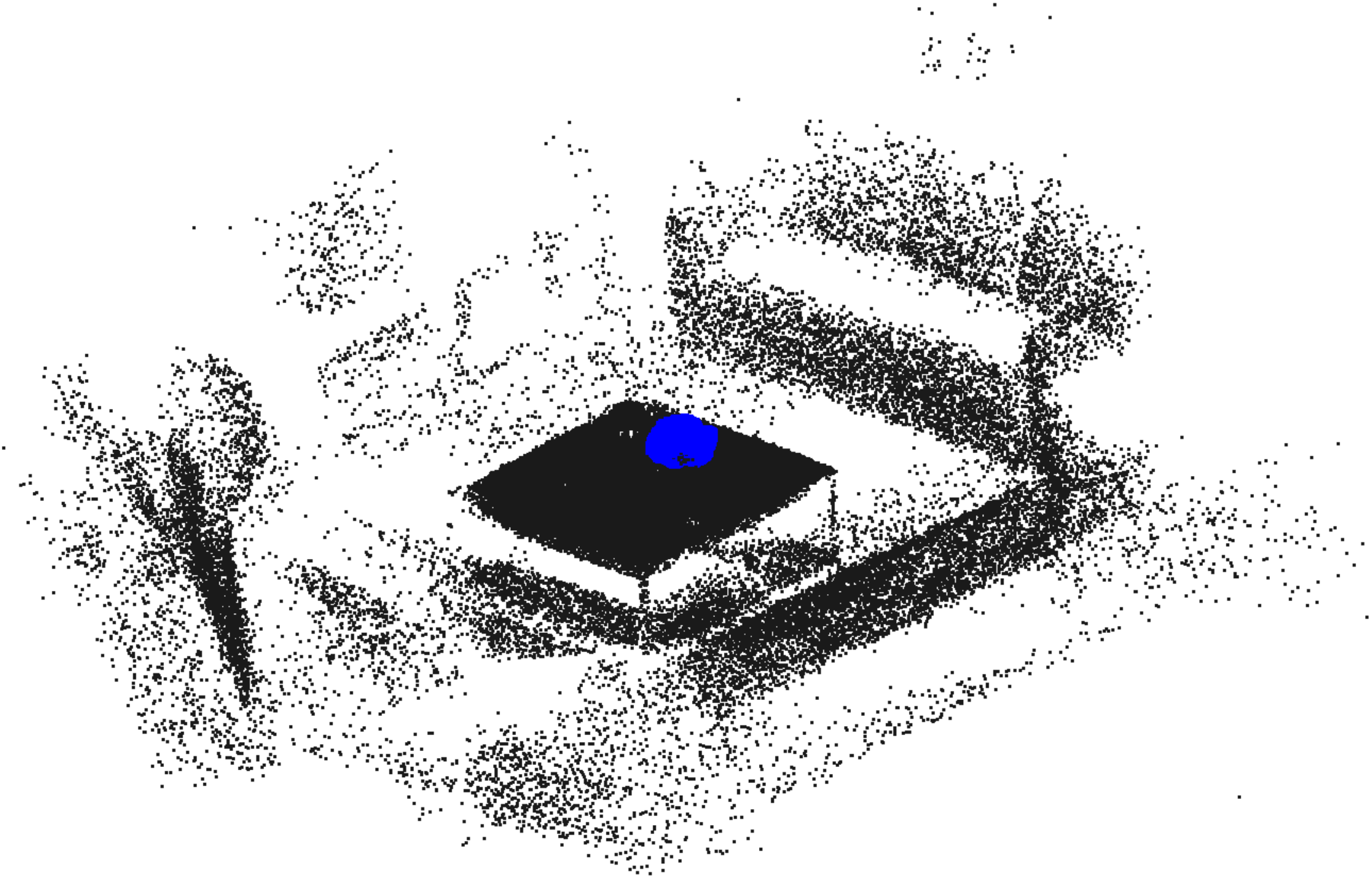} \\

\textit{Scene-03}, $\mathit{s}=1$ & ICOS  & RANSAC(1000) & RANSAC(1min) \\

\includegraphics[width=0.20\linewidth]{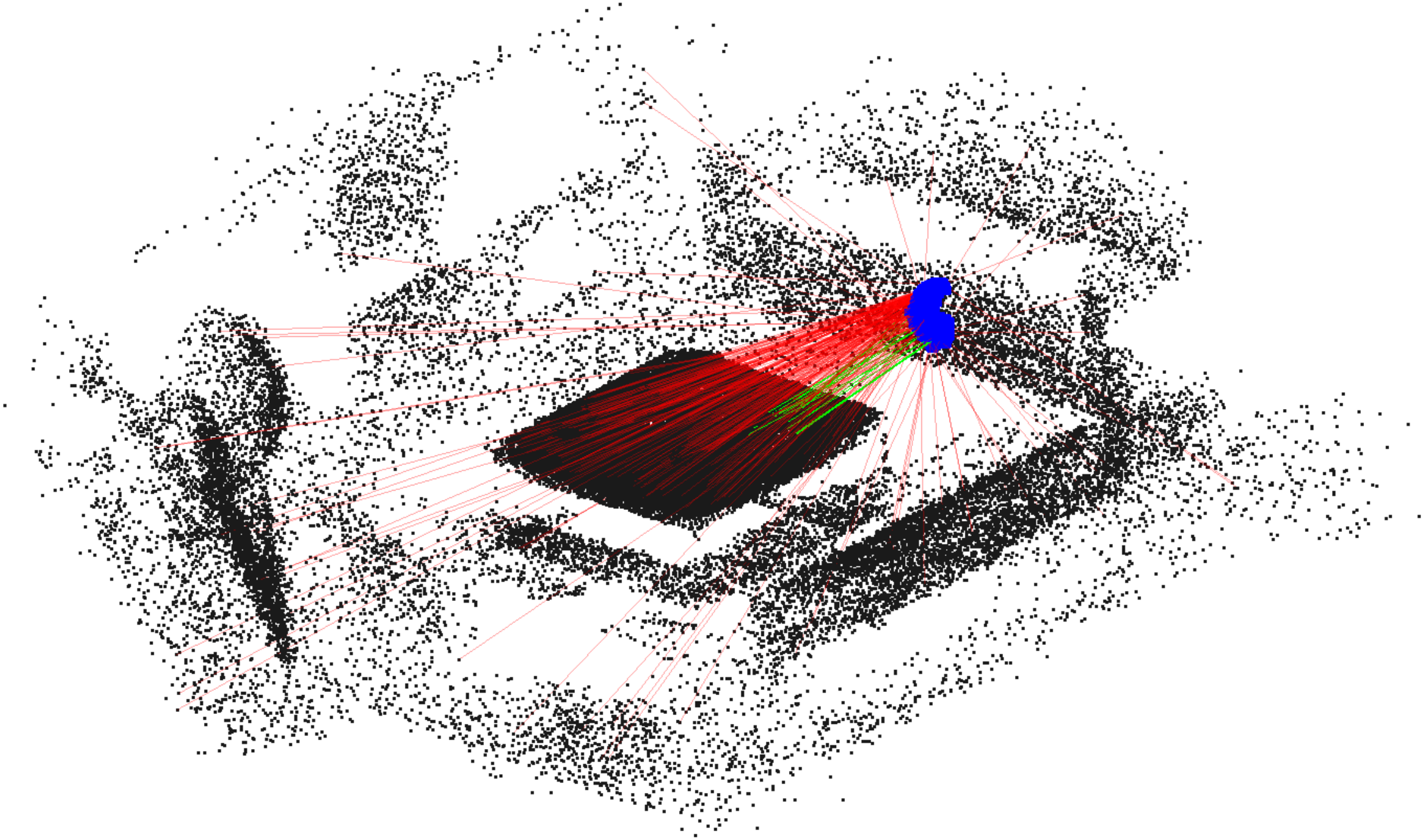}\,
&\includegraphics[width=0.20\linewidth]{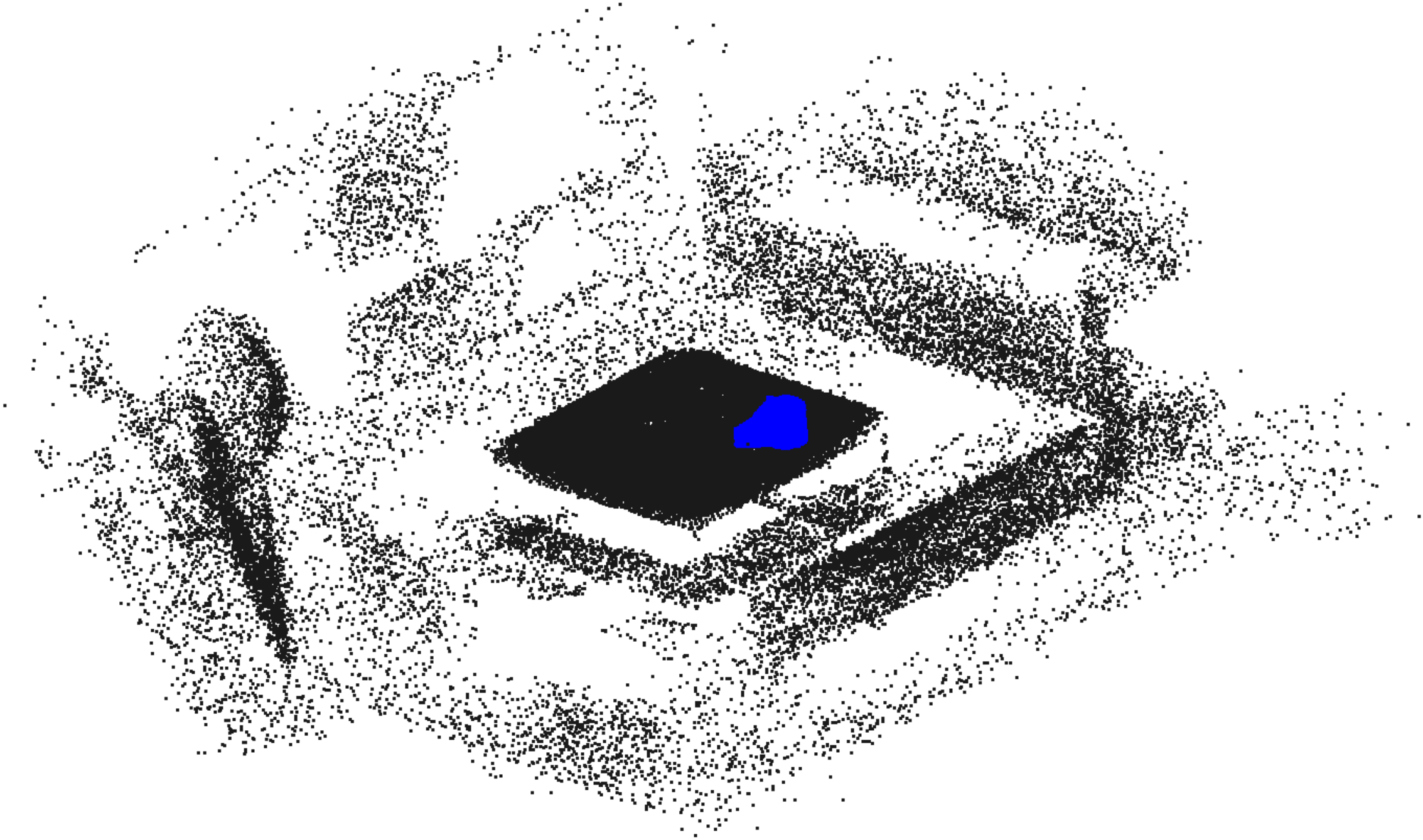}
&\includegraphics[width=0.20\linewidth]{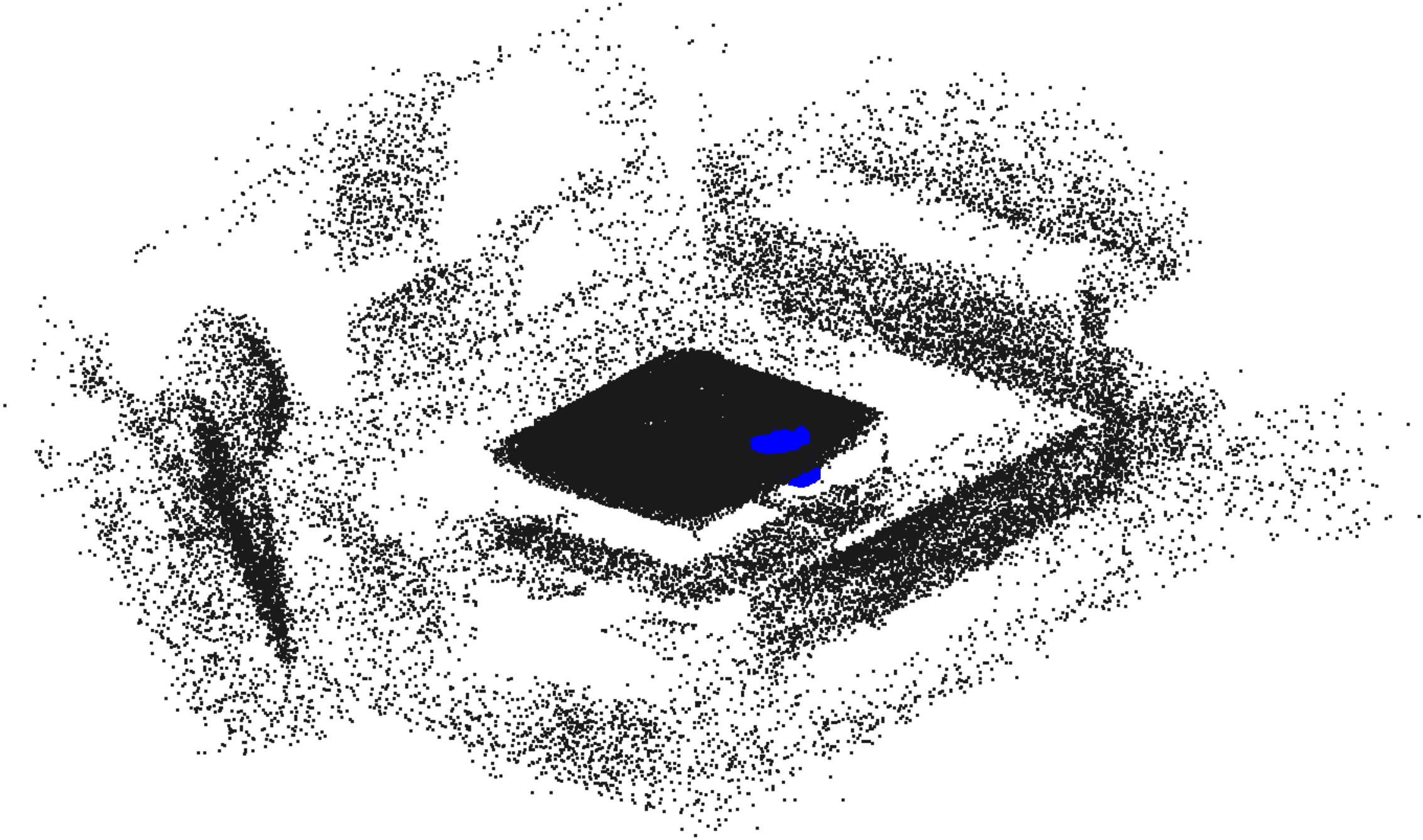}
&\includegraphics[width=0.20\linewidth]{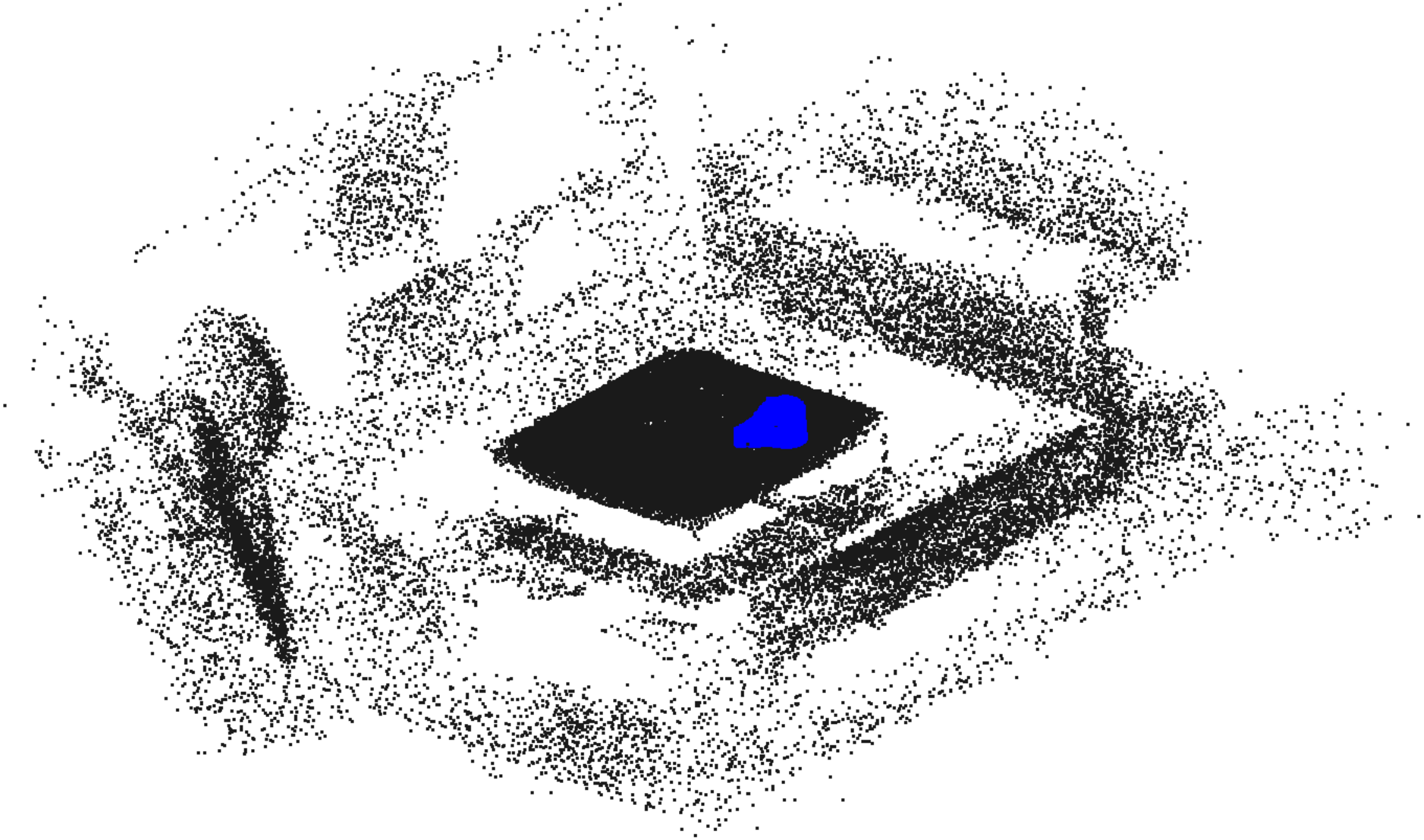} \\

\textit{Scene-09}, $\mathit{s}=1$ & ICOS  & RANSAC(1000) & RANSAC(1min) \\

\includegraphics[width=0.20\linewidth]{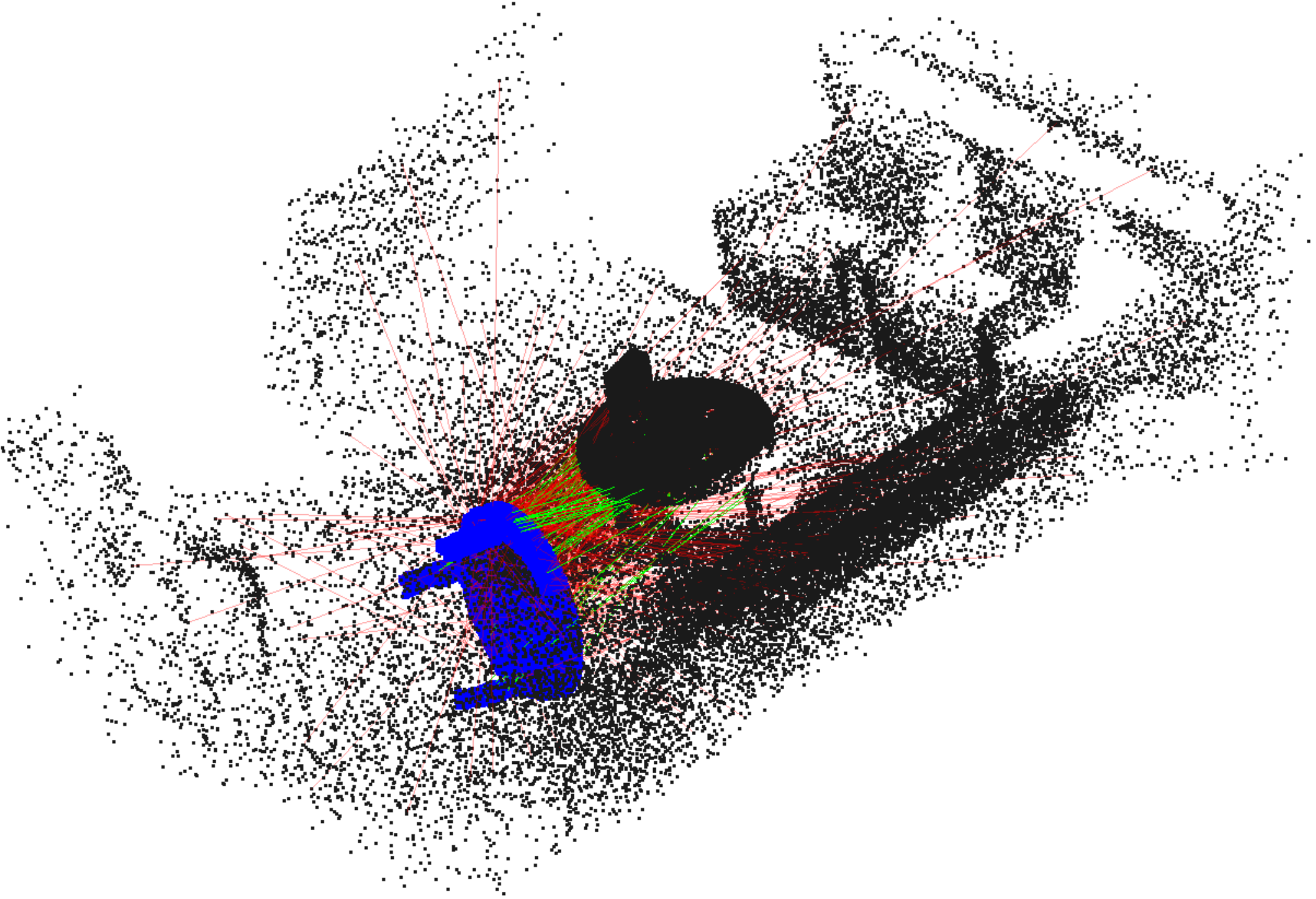}\,
&\includegraphics[width=0.20\linewidth]{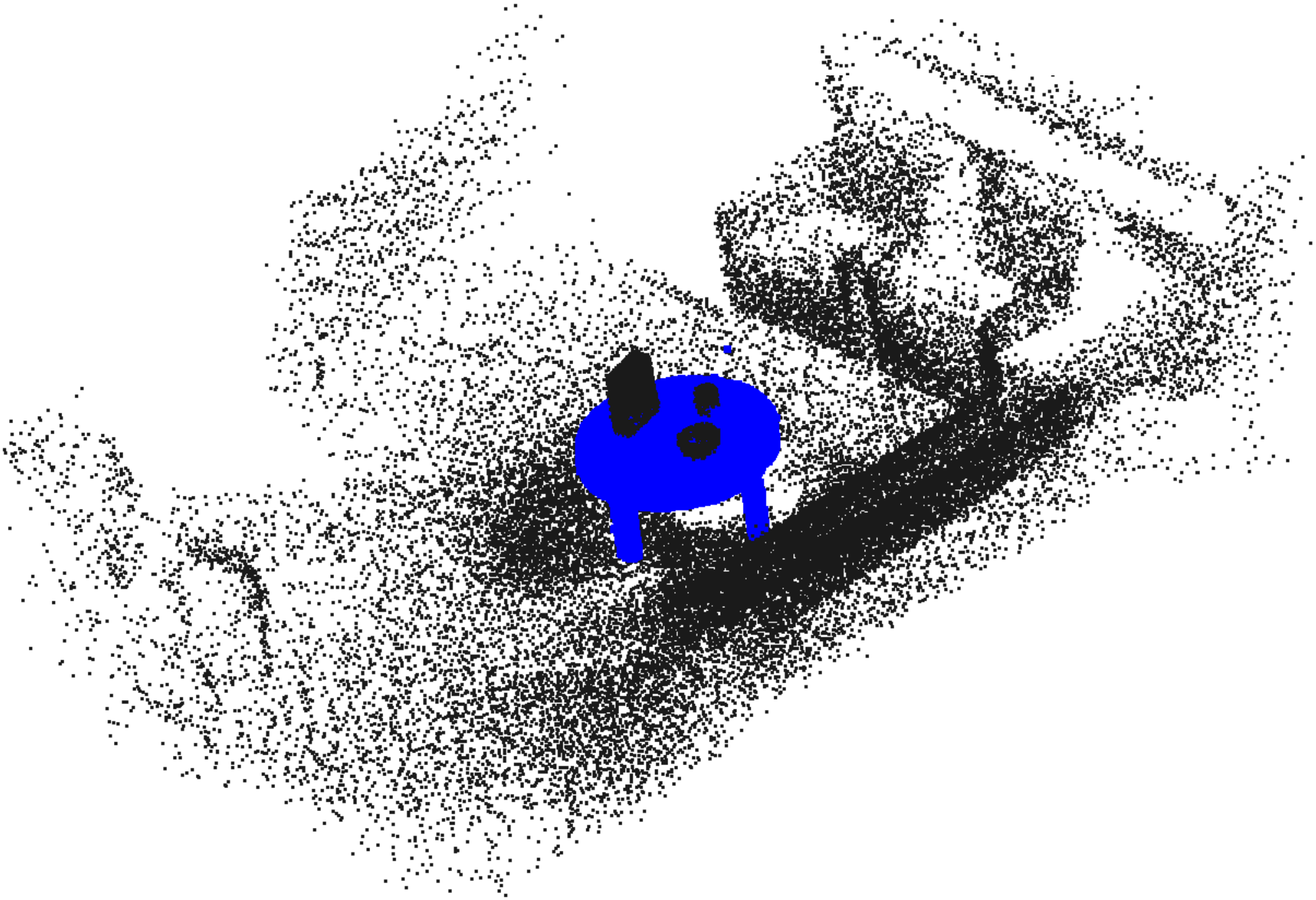}
&\includegraphics[width=0.20\linewidth]{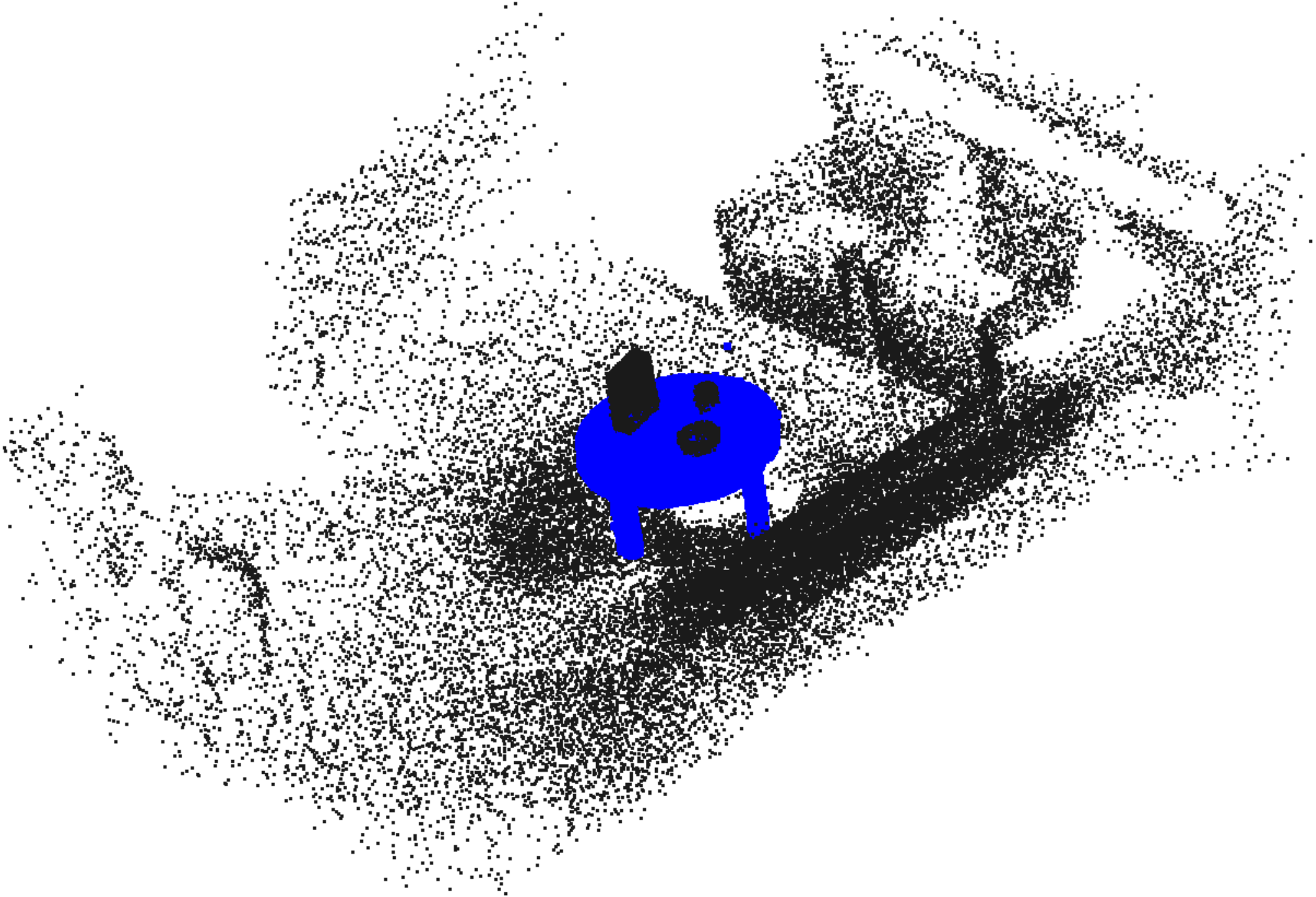}
&\includegraphics[width=0.20\linewidth]{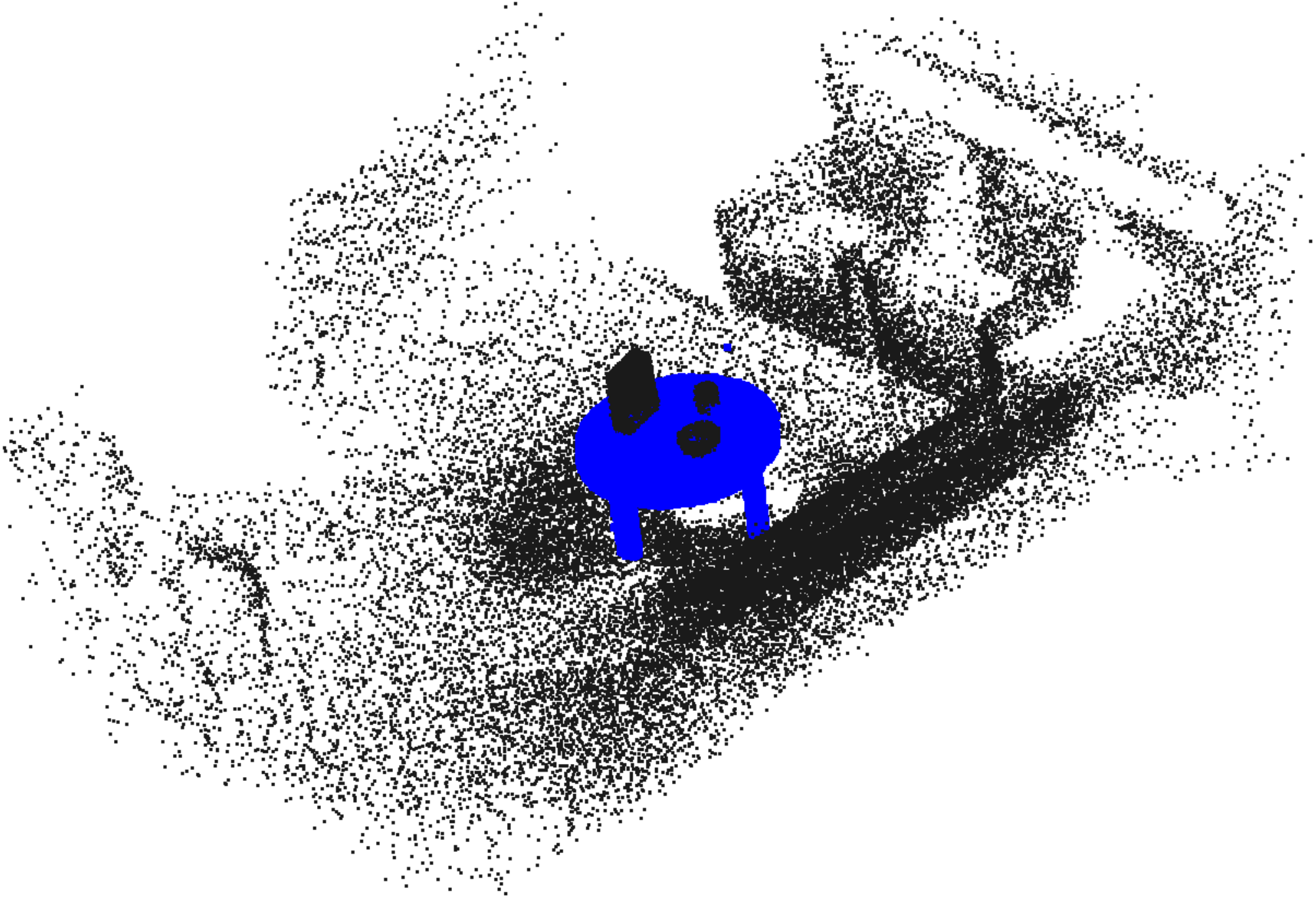} \\

\textit{Scene-12}, $\mathit{s}=1$ & ICOS  & RANSAC(1000) & RANSAC(1min) \\

\includegraphics[width=0.20\linewidth]{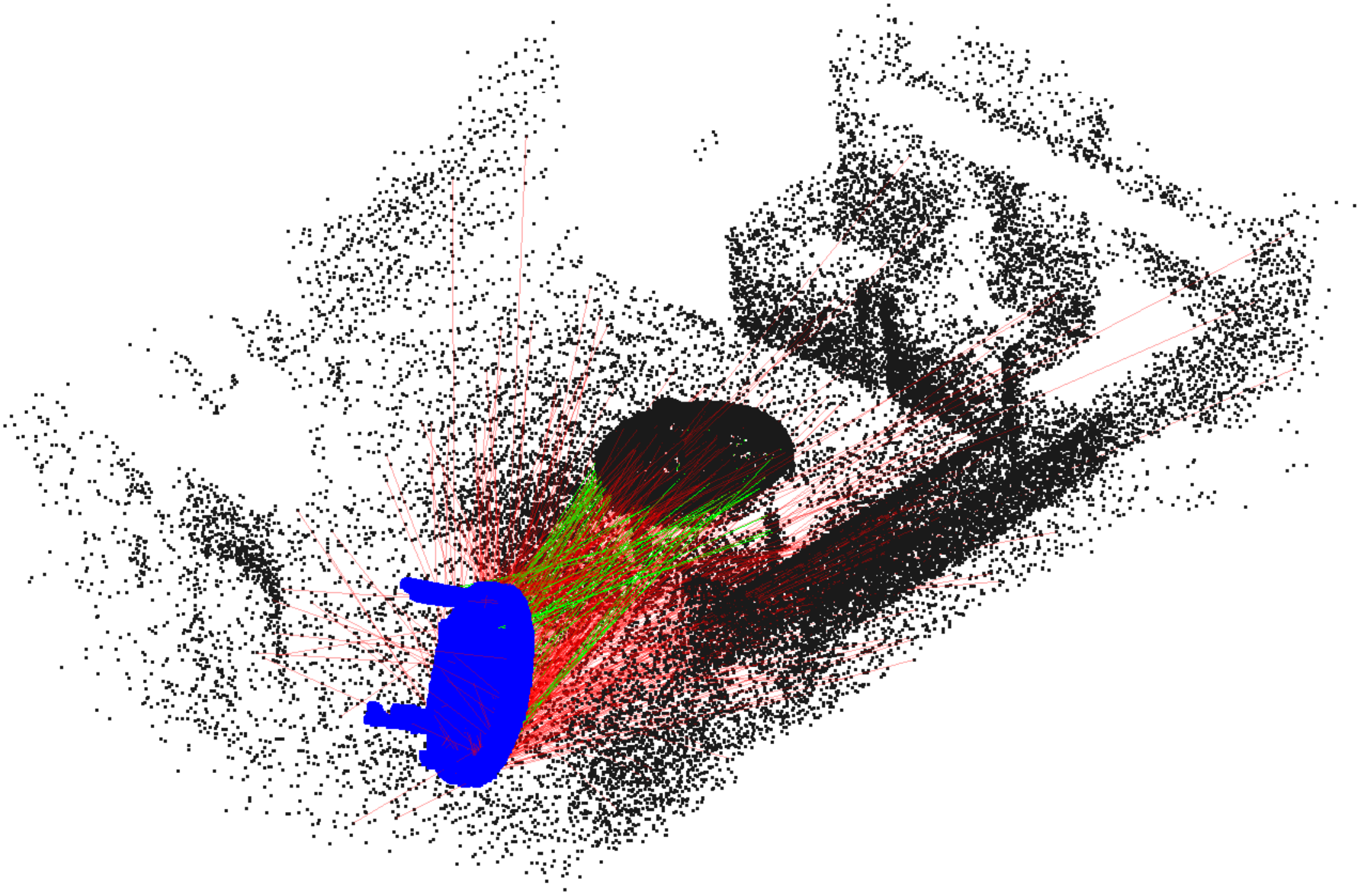}\,
&\includegraphics[width=0.20\linewidth]{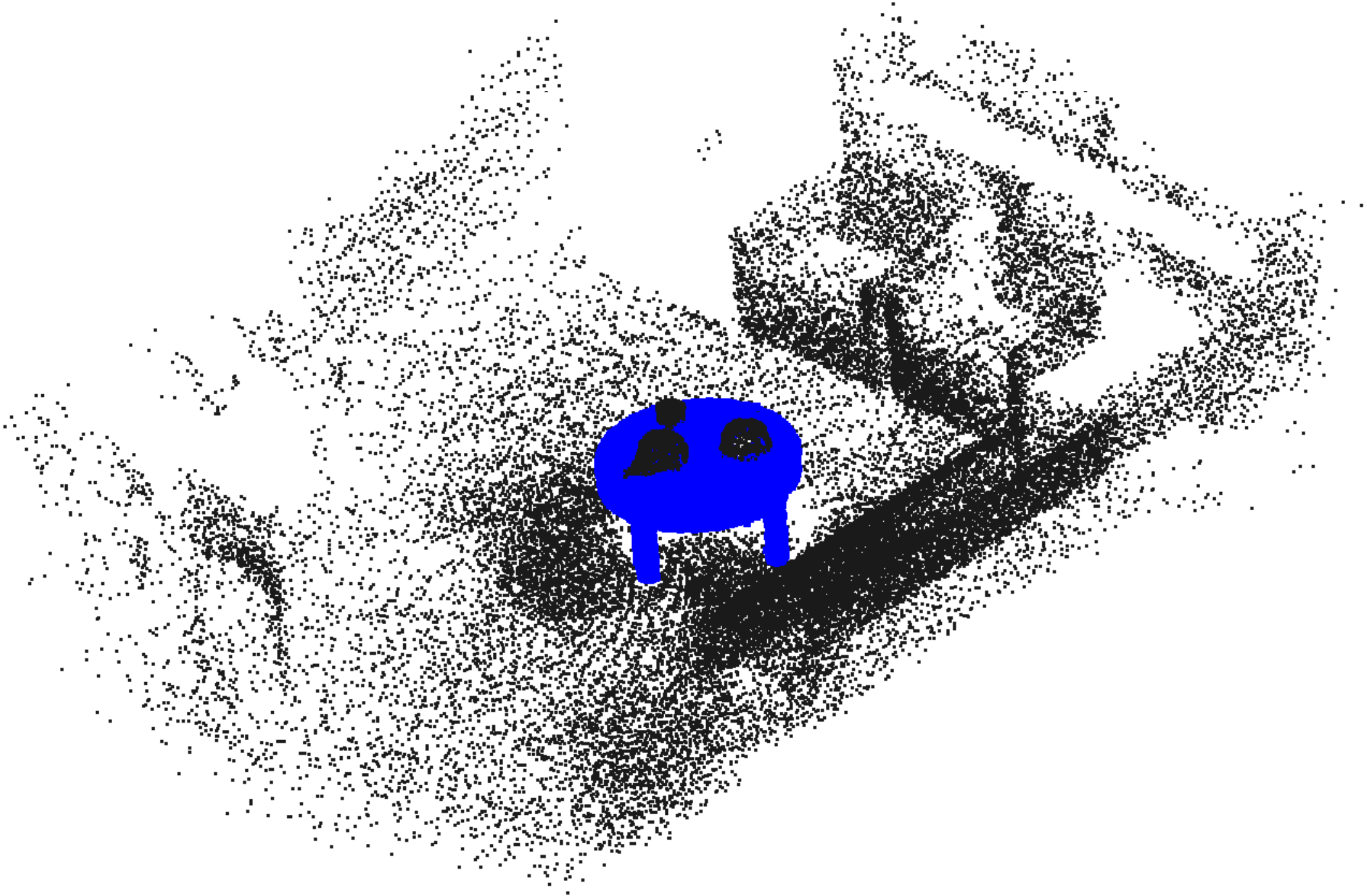}
&\includegraphics[width=0.20\linewidth]{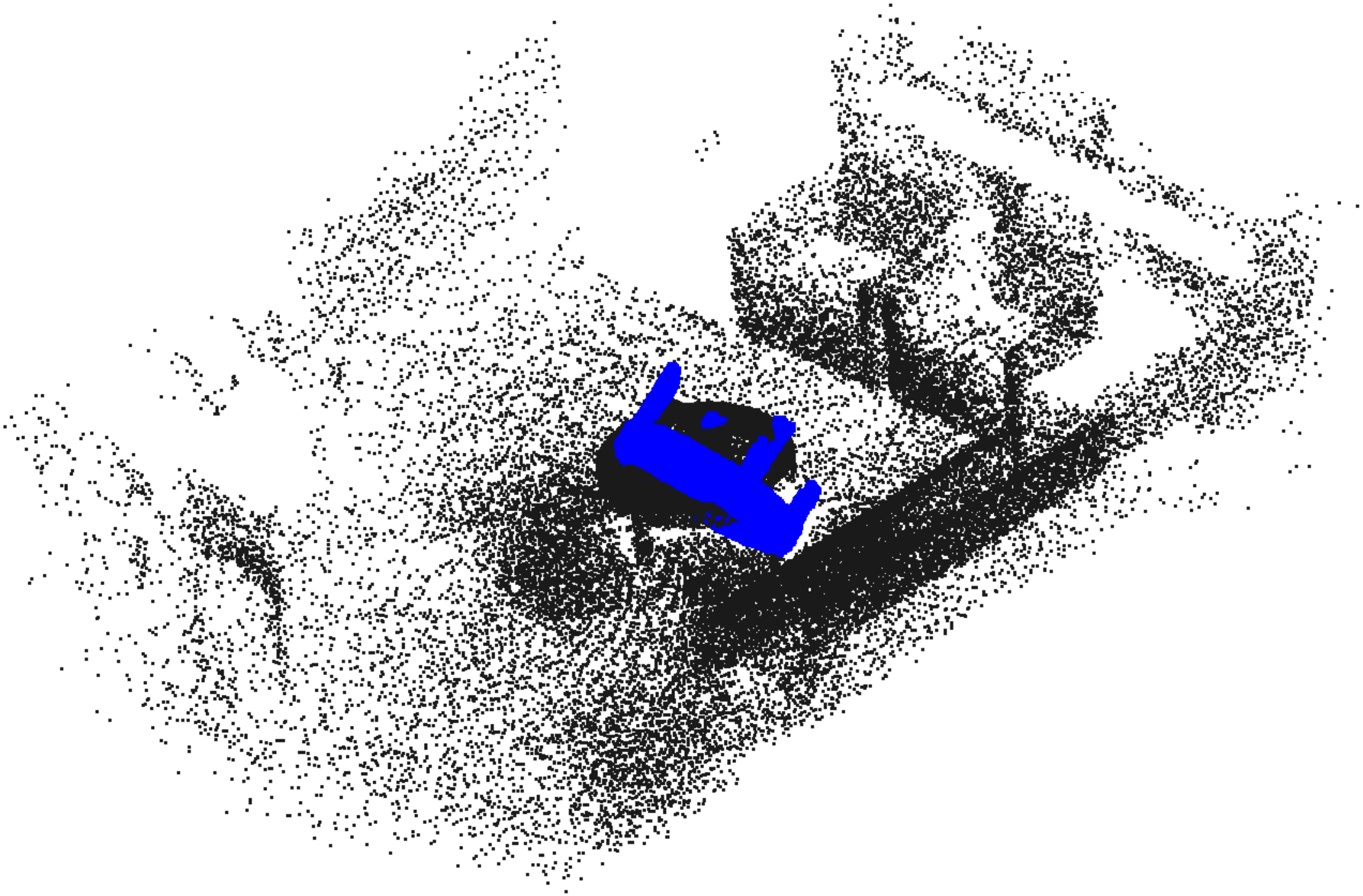}
&\includegraphics[width=0.20\linewidth]{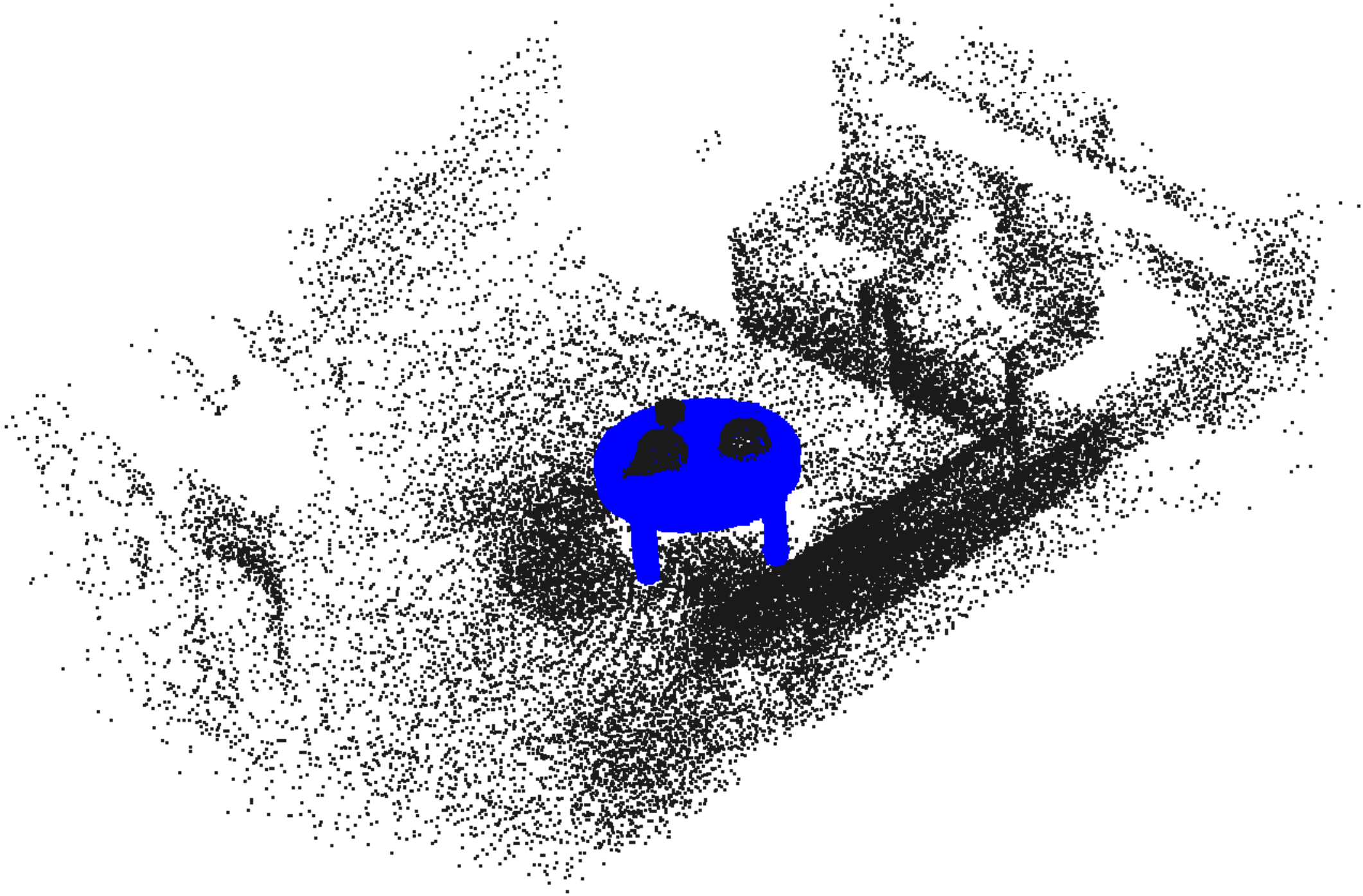} \\

\textit{Scene-01}, $\mathit{s}\in(1,5)$ & ICOS  & RANSAC(1000) & RANSAC(1min) \\

\includegraphics[width=0.20\linewidth]{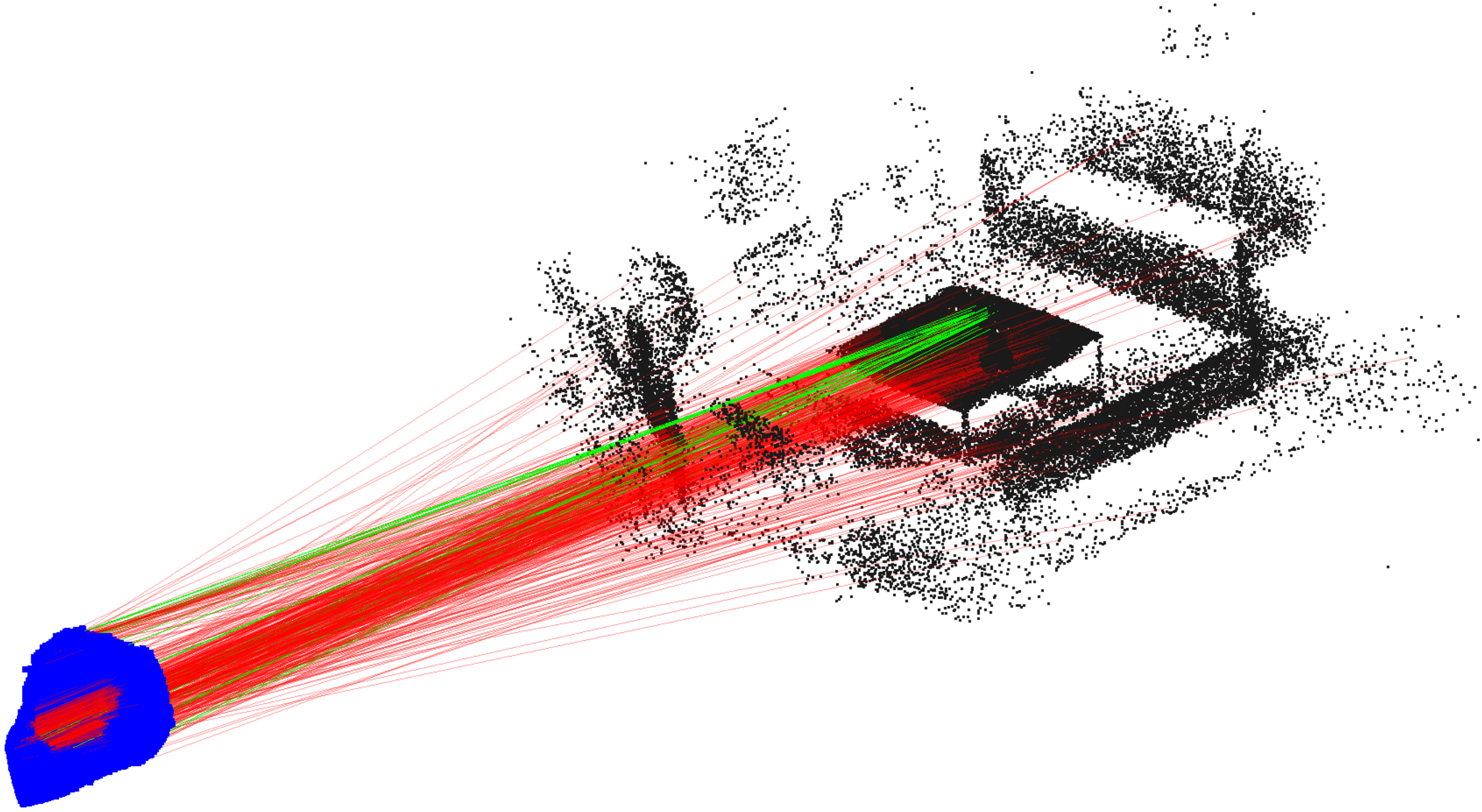}\,
&\includegraphics[width=0.20\linewidth]{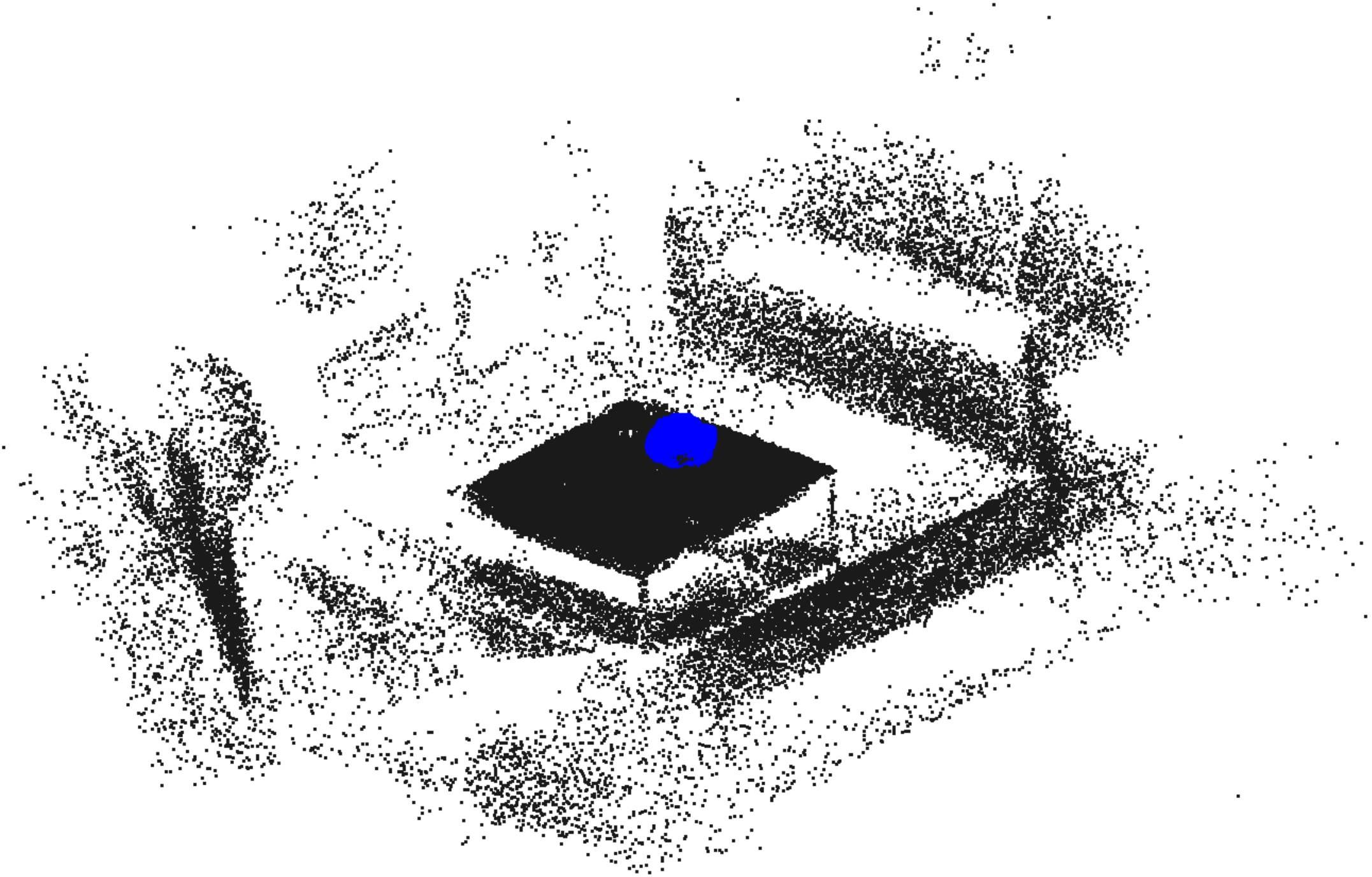}
&\includegraphics[width=0.20\linewidth]{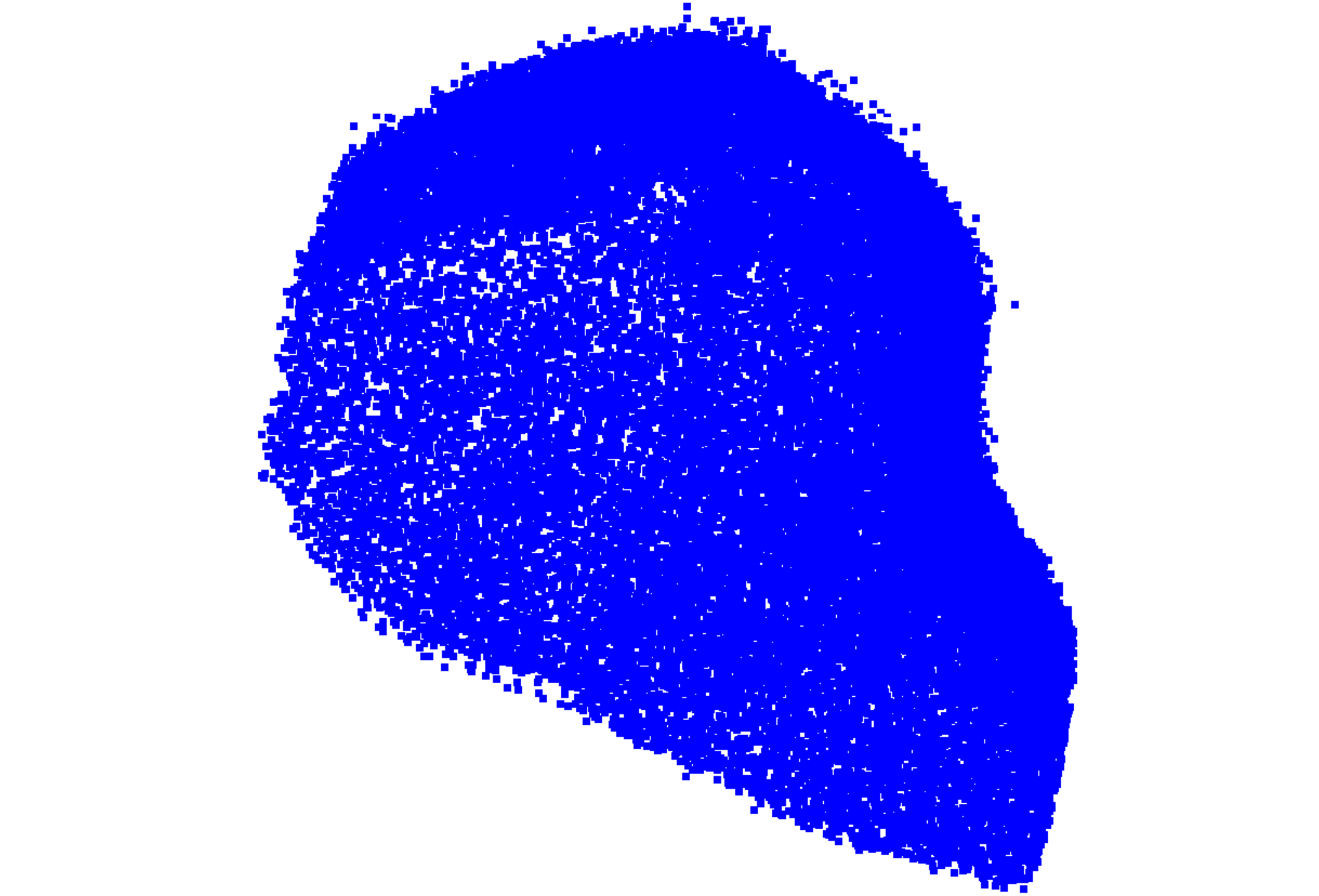}
&\includegraphics[width=0.20\linewidth]{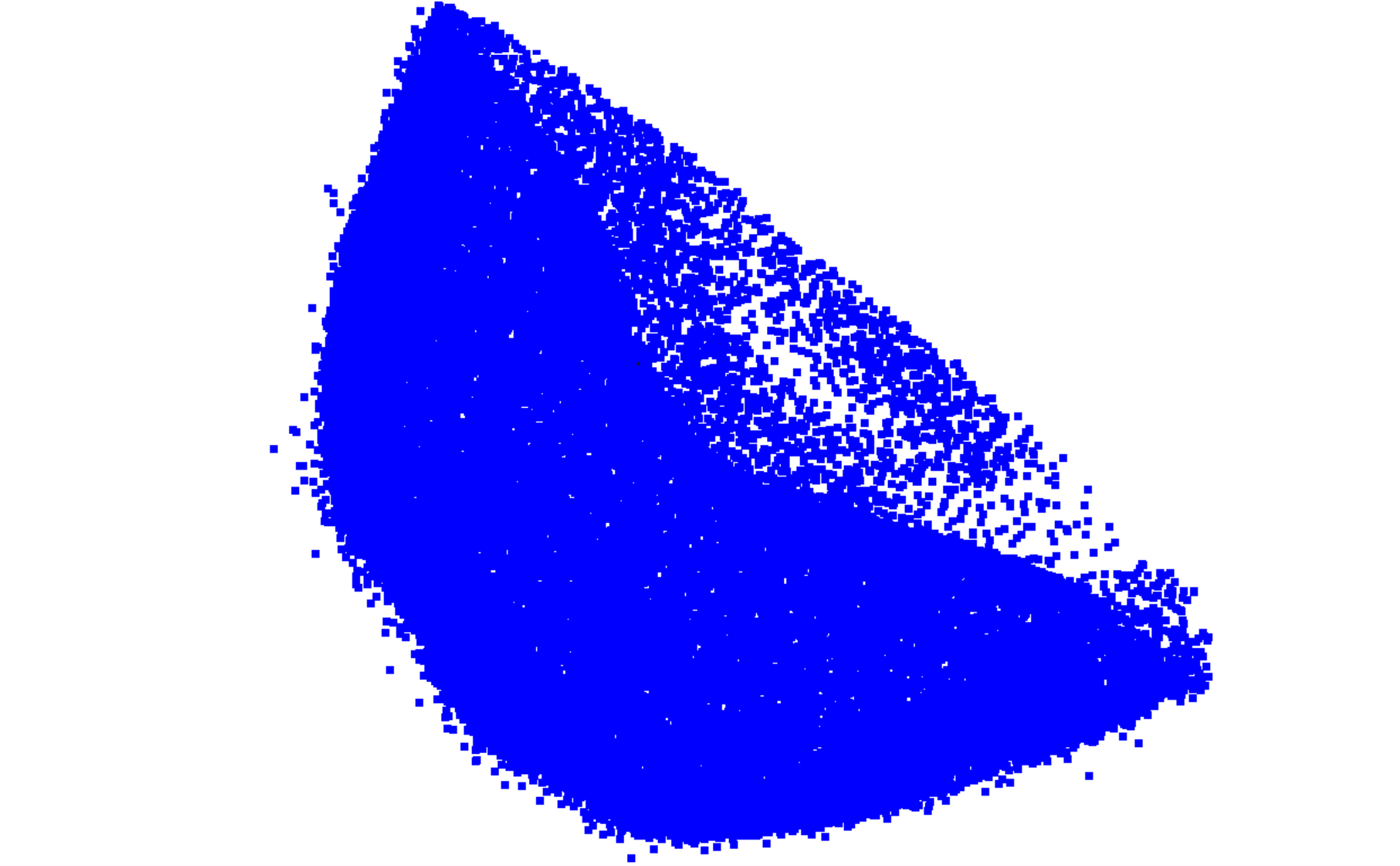} \\

\textit{Scene-03}, $\mathit{s}\in(1,5)$ & ICOS  & RANSAC(1000) & RANSAC(1min) \\

\includegraphics[width=0.20\linewidth]{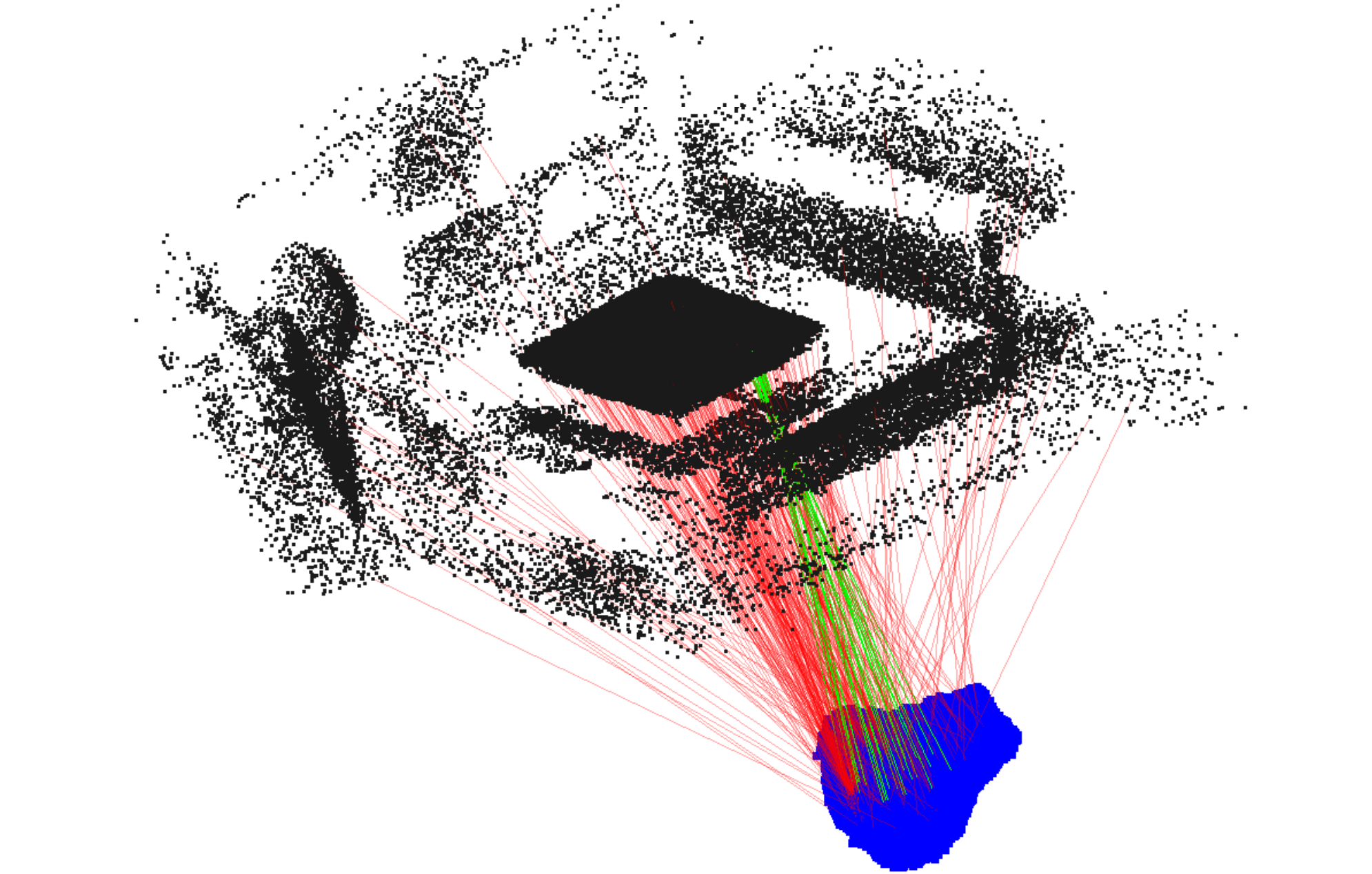}\,
&\includegraphics[width=0.20\linewidth]{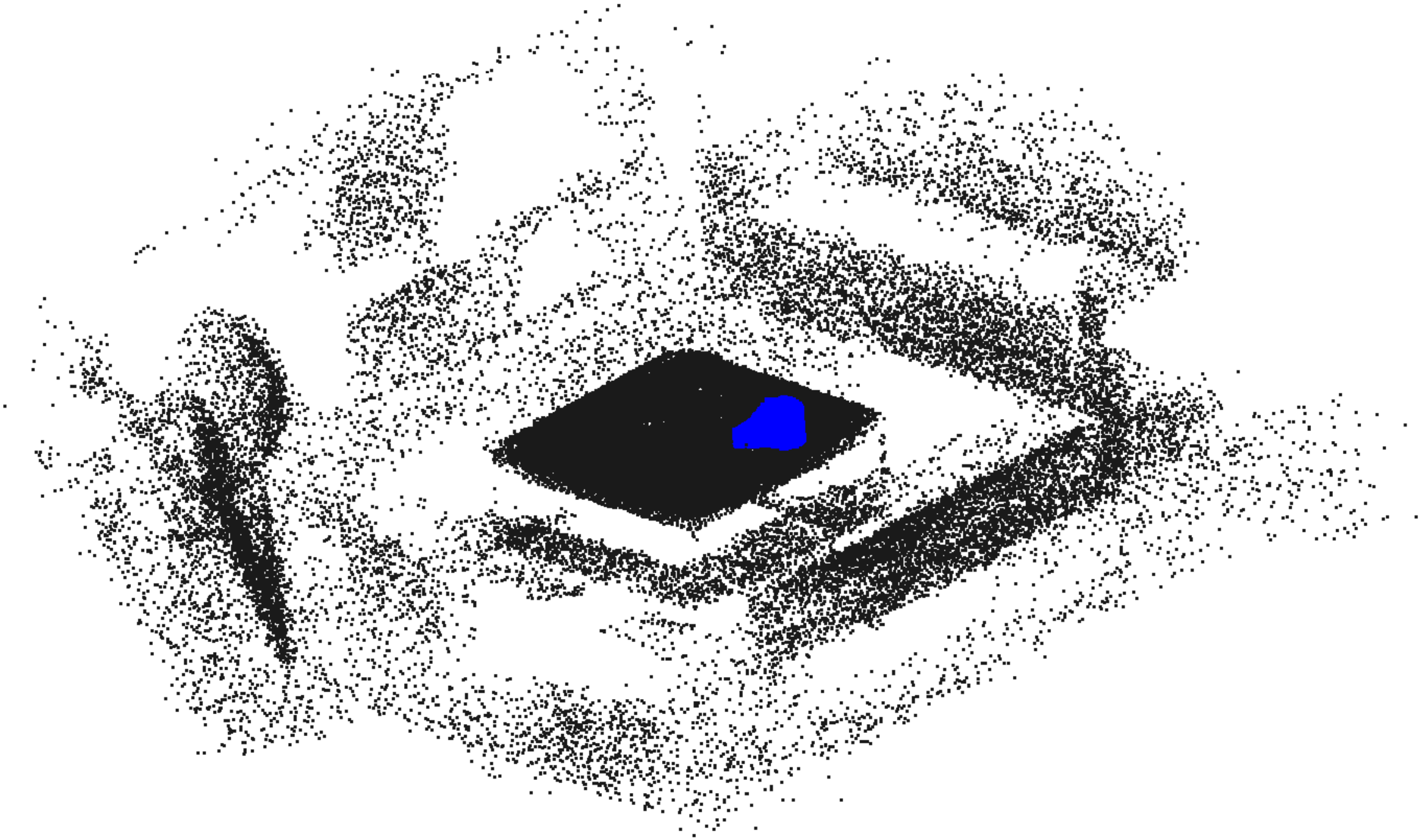}
&\includegraphics[width=0.20\linewidth]{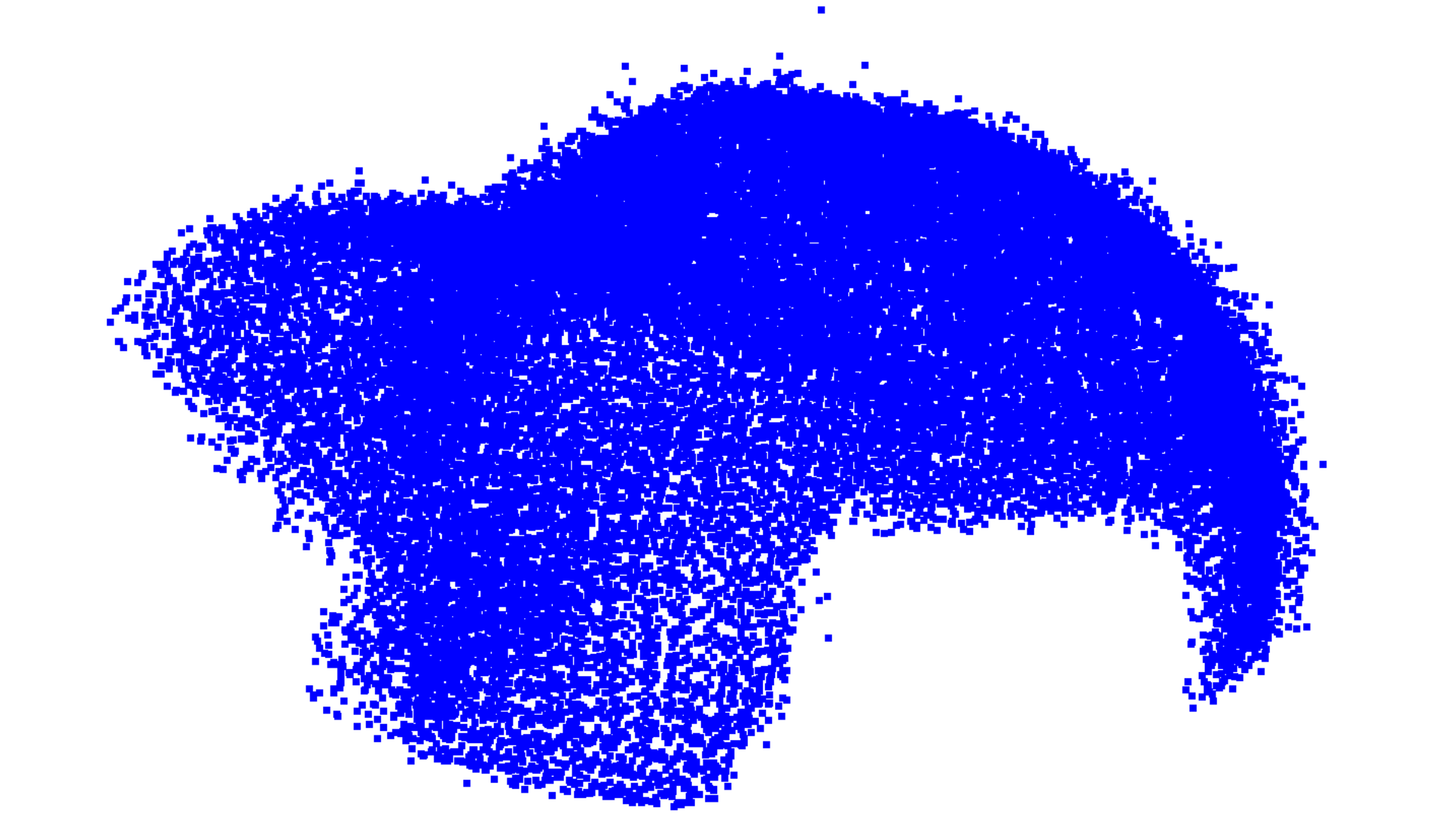}
&\includegraphics[width=0.20\linewidth]{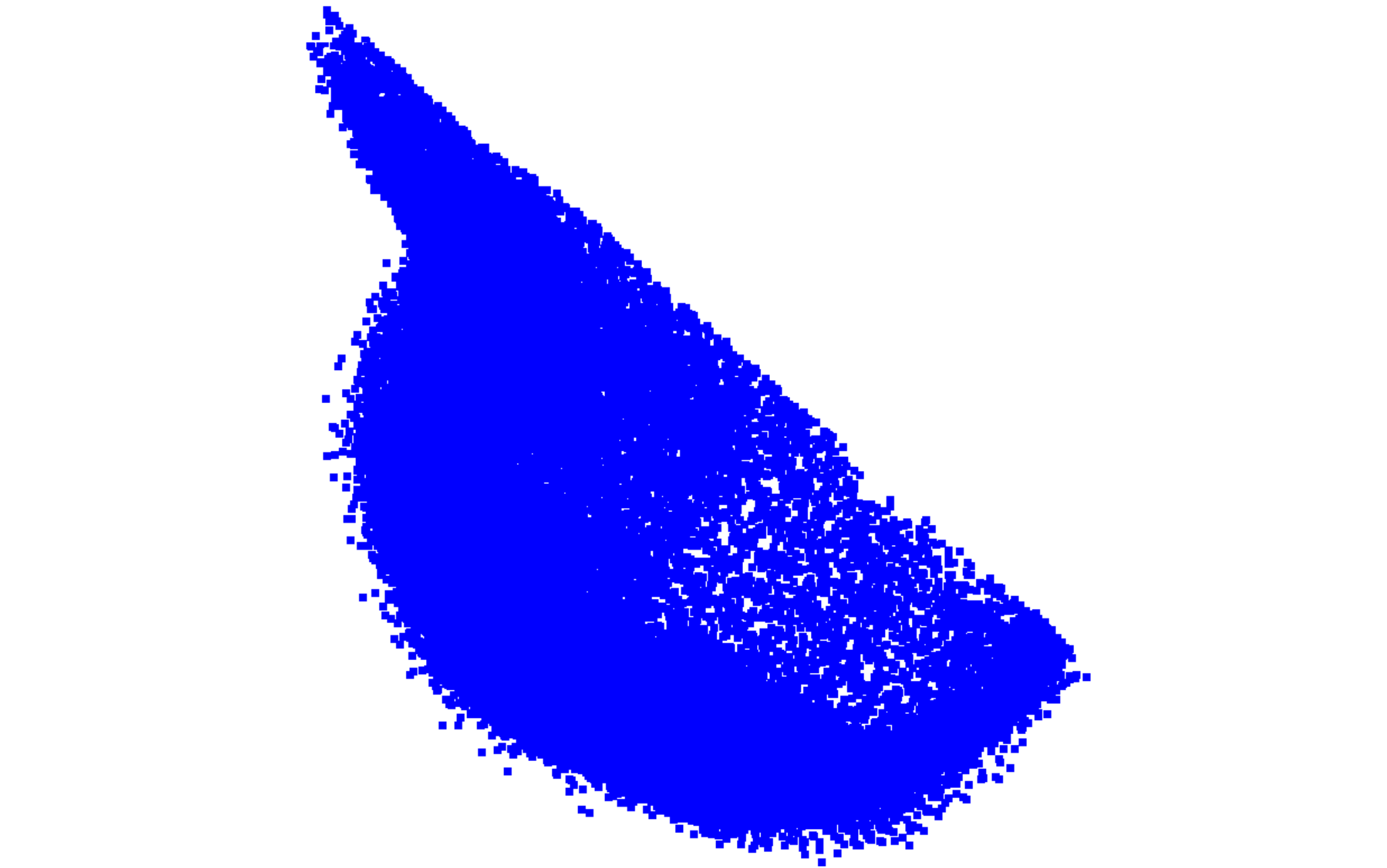} \\

\textit{Scene-09}, $\mathit{s}\in(1,5)$ & ICOS  & RANSAC(1000) & RANSAC(1min) \\

\includegraphics[width=0.20\linewidth]{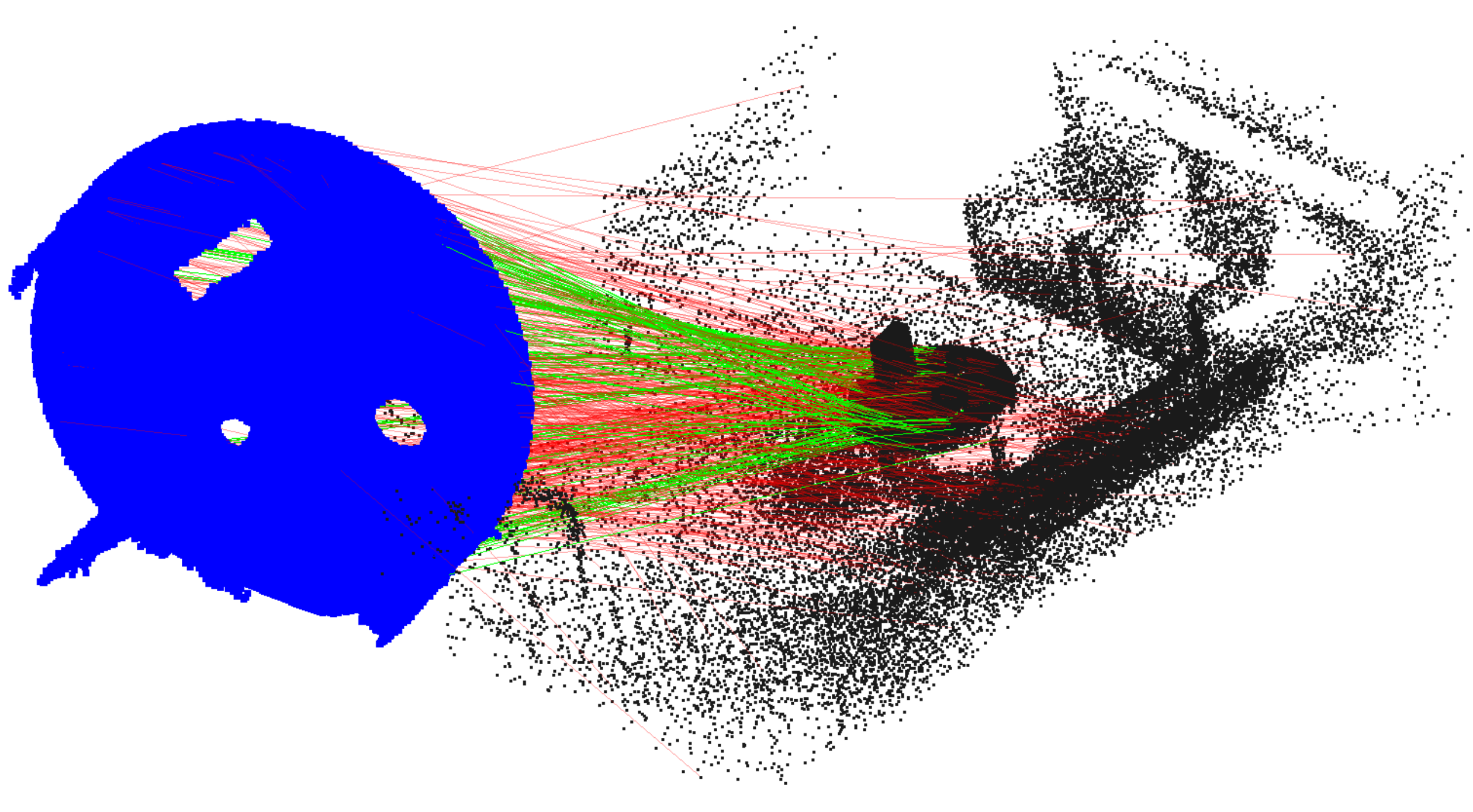}\,
&\includegraphics[width=0.20\linewidth]{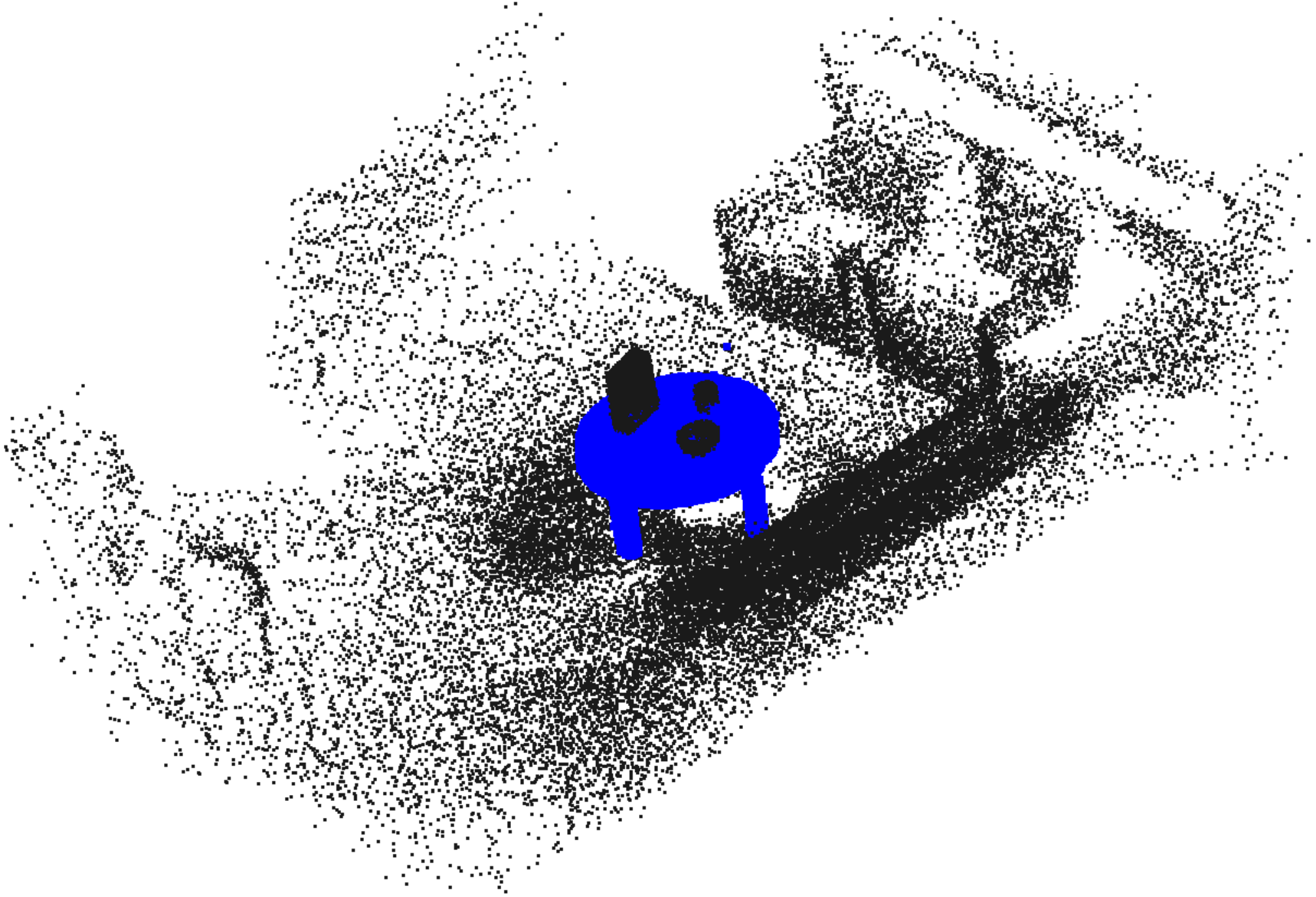}
&\includegraphics[width=0.20\linewidth]{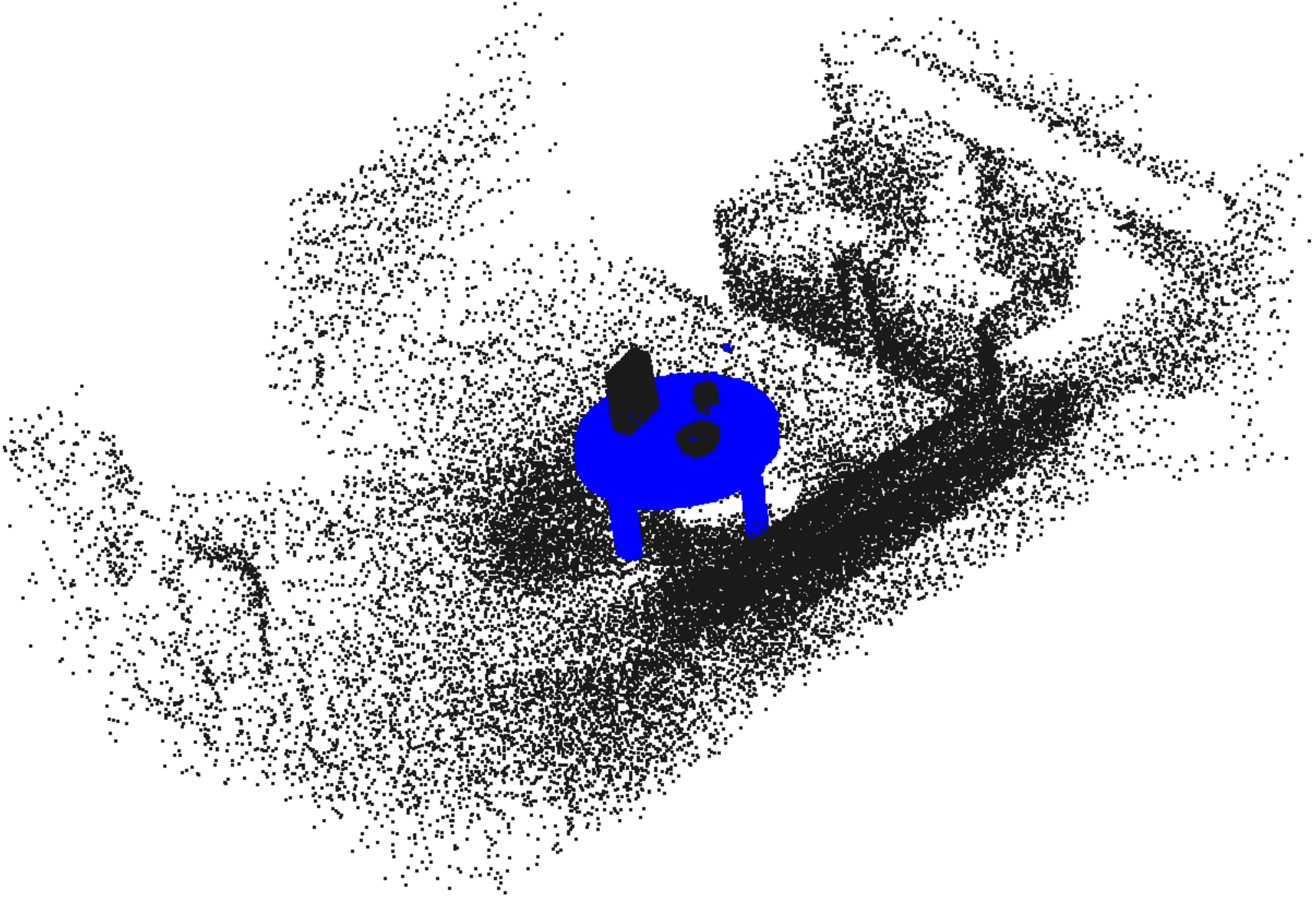}
&\includegraphics[width=0.20\linewidth]{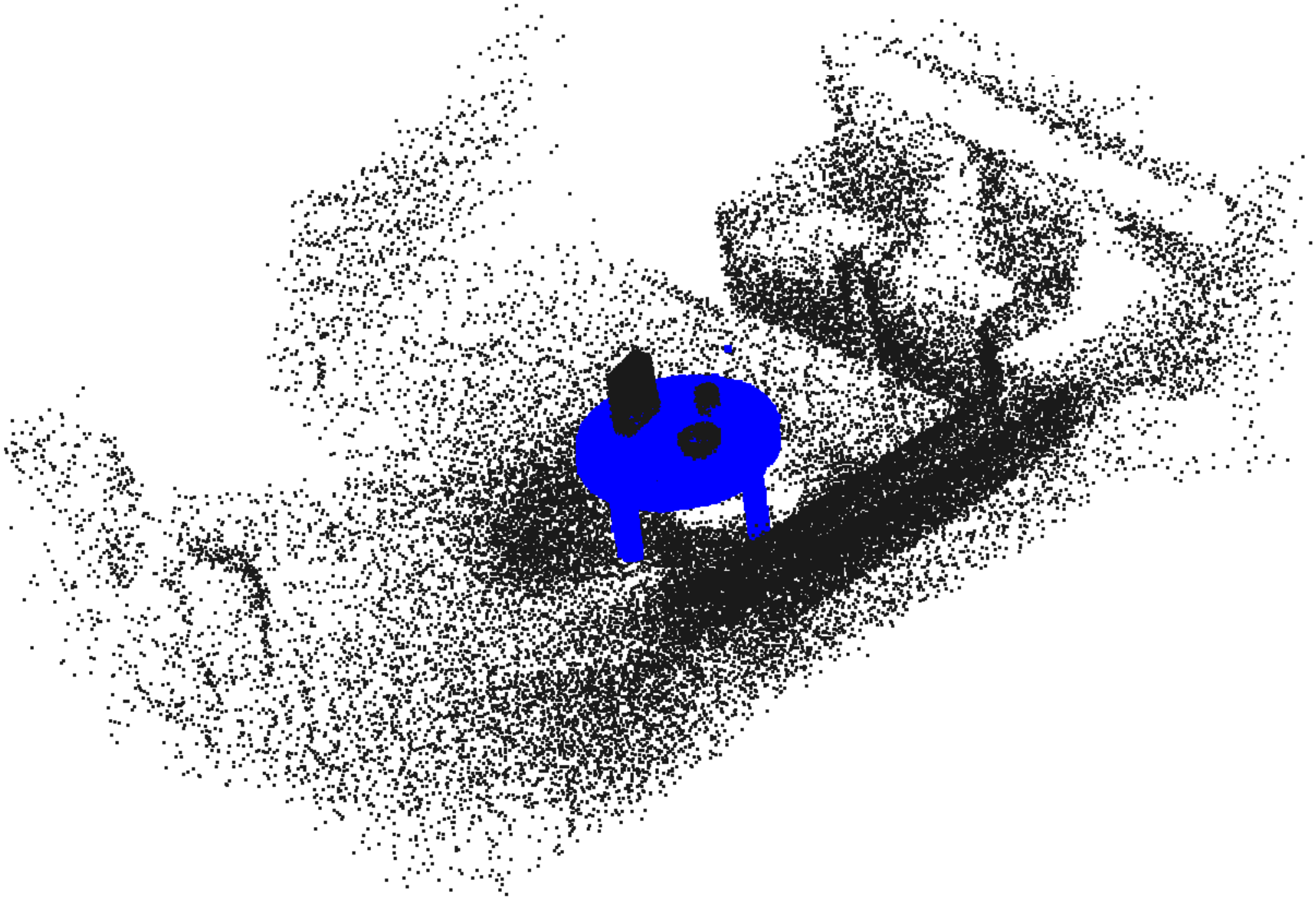} \\

\textit{Scene-12}, $\mathit{s}\in(1,5)$ & ICOS  & RANSAC(1000) & RANSAC(1min) \\

\includegraphics[width=0.20\linewidth]{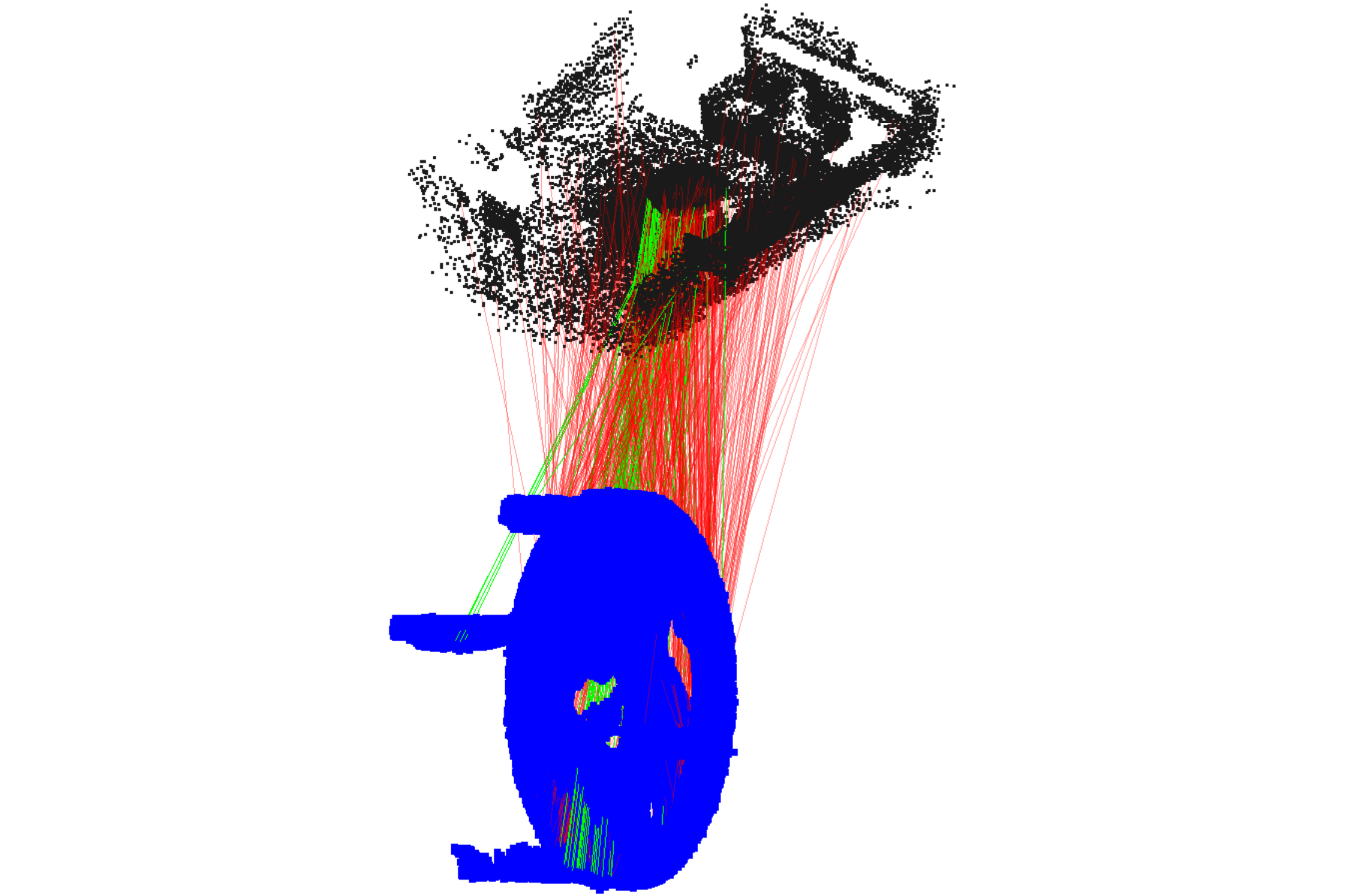}\,
&\includegraphics[width=0.20\linewidth]{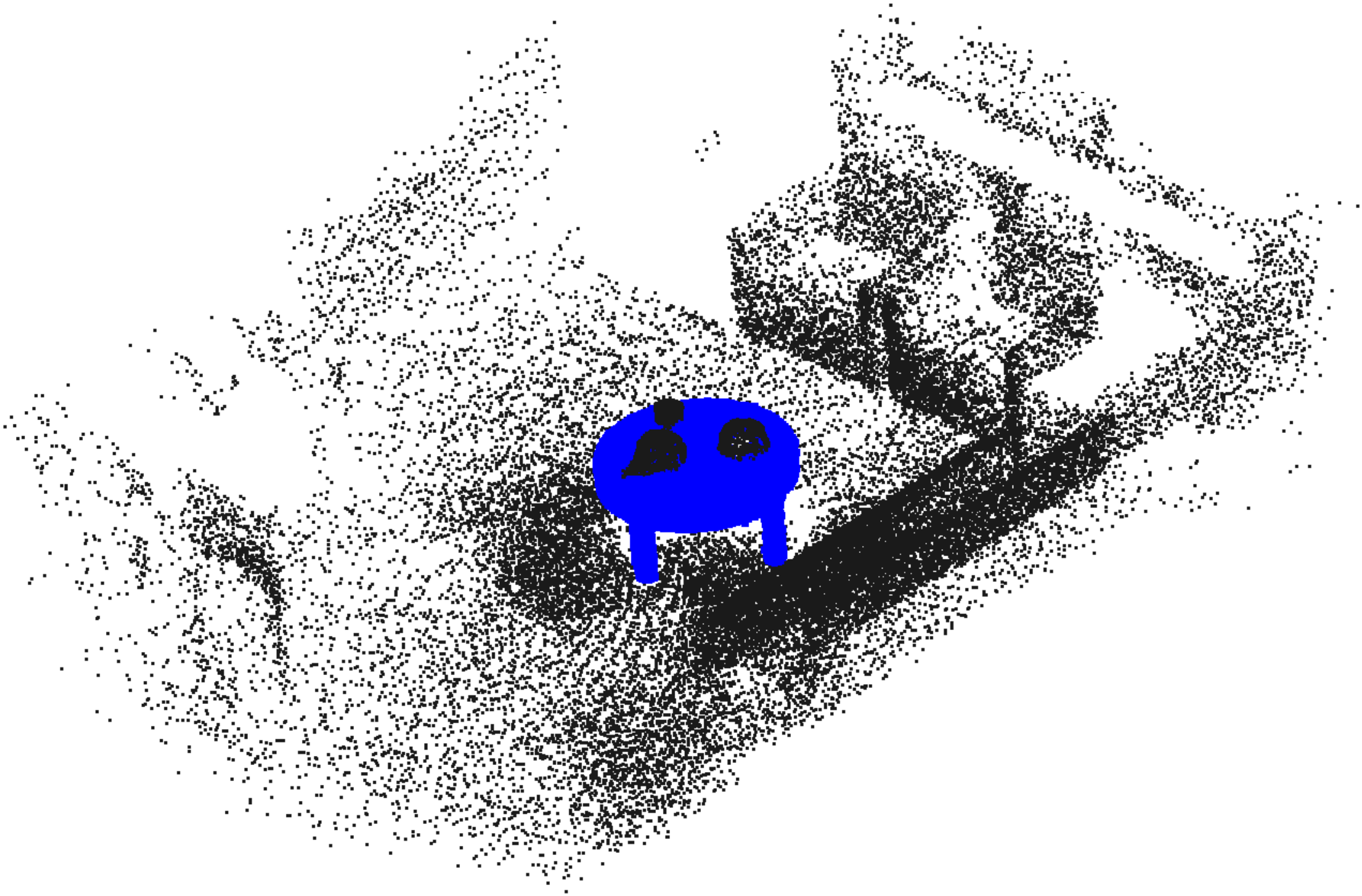}
&\includegraphics[width=0.20\linewidth]{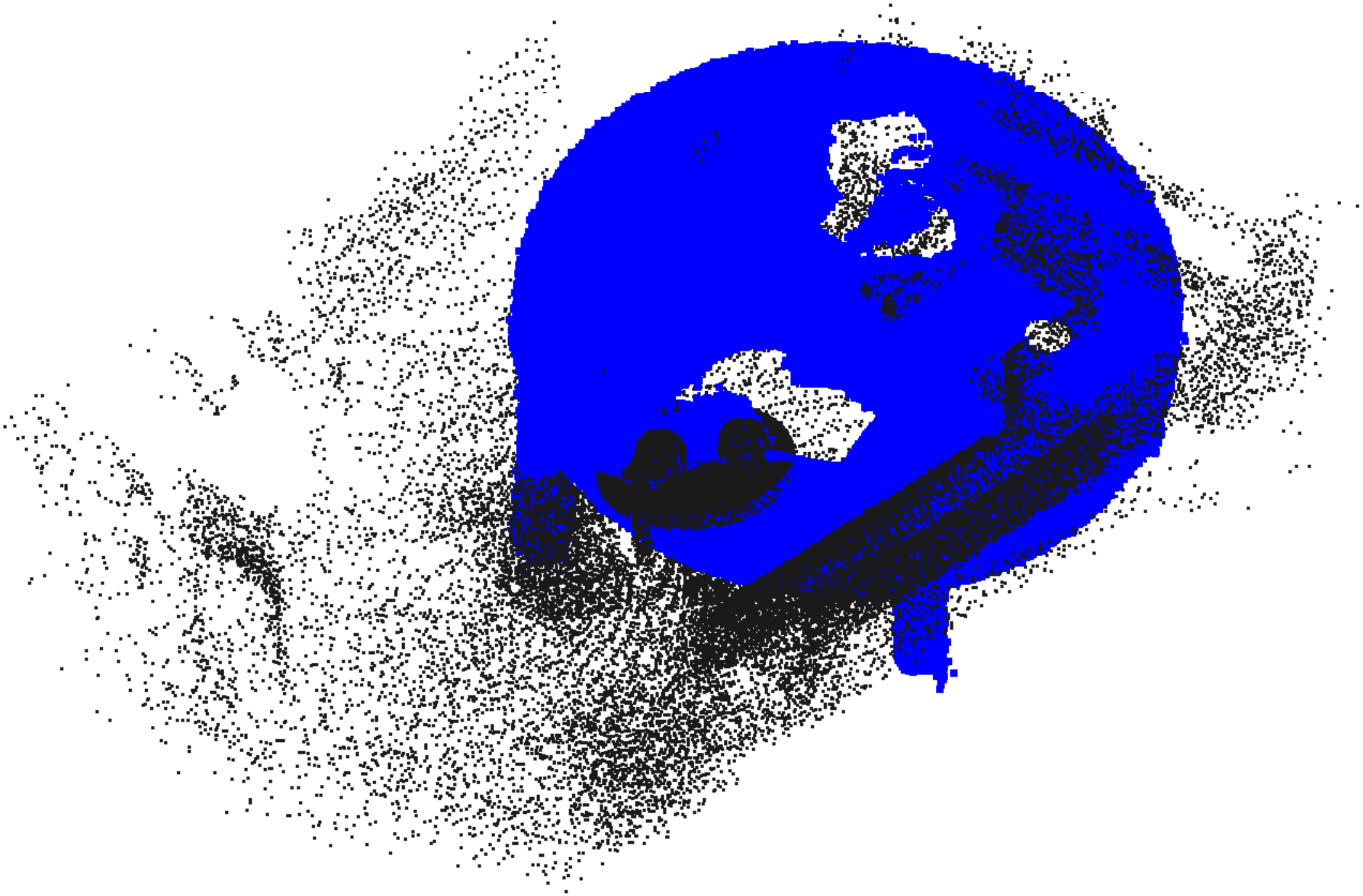}
&\includegraphics[width=0.20\linewidth]{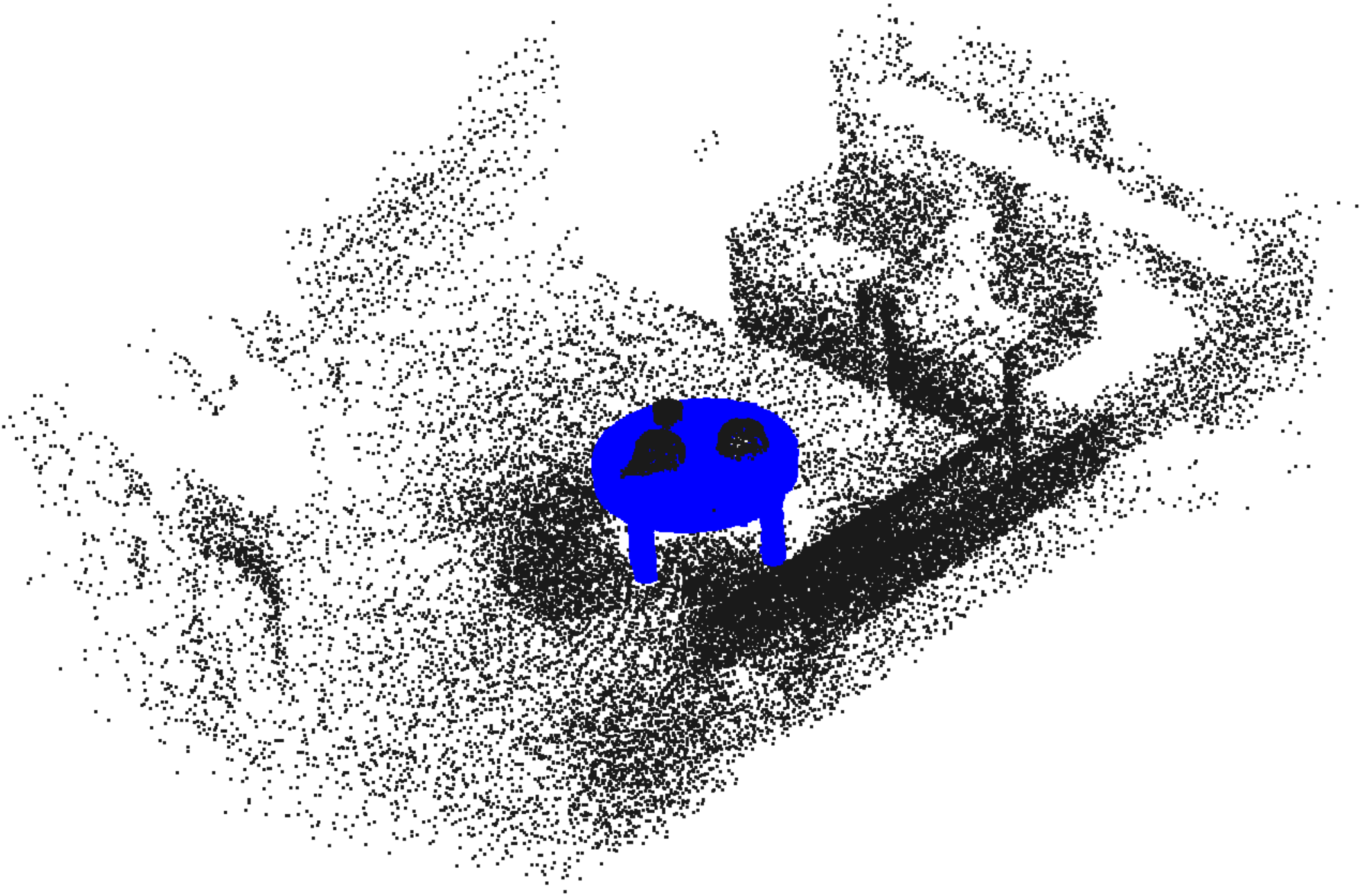} \\

\end{tabular}

\vspace{-1mm}

\centering
\caption{Qualitative results of 3D object localization (\textit{cap, table}) using ICOS, RANSAC(1000) and RANSAC(1min). From left to right, we show the correspondences matched by FPFH (inliers are in green lines and outliers are in red lines), and the registration (reprojection) results with ICOS, RANSAC(1000), and RANSAC(1min), respectively. Note that in all scenes the outlier ratios are over 80\%, and our ICOS can always yield very promising registration results.}
\label{Object-Lo2}
\end{figure*}

\clearpage

\begin{table}[h]
\setlength{\tabcolsep}{1mm}
\caption{Quantitative Object Localization Results on Cereal Box}\label{quantitative}
\centering
\vspace{-2mm}
\begin{tabular}{|c|ccc|}

\hline
Scene No.  & ICOS & RANSAC(1000) & RANSAC(1min) \\
\hline 

\textit{Scene-02}, $\mathit{s}=1$
&  $E_{\boldsymbol{R}}=\mathbf{0.196}$ 
& $E_{\boldsymbol{R}}=154.820$ 
&  $E_{\boldsymbol{R}}=0.727$\\
$N=597$, 93.80\%  
& $E_{\boldsymbol{t}}=\mathbf{0.004}$ 
& $E_{\boldsymbol{t}}=1.120$
& $E_{\boldsymbol{t}}=0.013$ \\
\quad  
& $t=\mathbf{0.071}$ 
& $t=1.559$
&  $t=34.308$\\
\hline

\textit{Scene-05}, $\mathit{s}=1$
&  $E_{\boldsymbol{R}}=\mathbf{0.202}$ 
&  $E_{\boldsymbol{R}}=61.278$
& $E_{\boldsymbol{R}}=0.991$ \\
$N=606$, 90.92\%  
& $E_{\boldsymbol{t}}=\mathbf{0.003}$ 
&  $E_{\boldsymbol{t}}=1.092$
& $E_{\boldsymbol{t}}=0.024$\\
\quad  
& $t=\mathbf{0.057}$ 
& $t=1.578$
&  $t=11.110$\\
\hline

\textit{Scene-07}, $\mathit{s}=1$
&  $E_{\boldsymbol{R}}=\mathbf{0.188}$ 
&  $E_{\boldsymbol{R}}=0.449$
& $E_{\boldsymbol{R}}=1.736$ \\
$N=249$, 81.53\%  
& $E_{\boldsymbol{t}}=\mathbf{0.005}$ 
&  $E_{\boldsymbol{t}}=0.013$
& $E_{\boldsymbol{t}}=0.056$\\
\quad  
& $t=\mathbf{0.055}$ 
& $t=0.579$
&  $t=0.582$\\
\hline

\textit{Scene-09}, $\mathit{s}=1$
&  $E_{\boldsymbol{R}}=\mathbf{0.144}$ 
&  $E_{\boldsymbol{R}}=3.319$
& $E_{\boldsymbol{R}}=0.546$ \\
$N=284$, 90.16\%  
& $E_{\boldsymbol{t}}=\mathbf{0.003}$ 
&  $E_{\boldsymbol{t}}=0.048$
& $E_{\boldsymbol{t}}=0.012$\\
\quad  
& $t=\mathbf{0.066}$ 
& $t={1.663}$
&  $t=21.465$\\
\hline

\quad
&  $E_{\boldsymbol{R}}=\mathbf{0.046}$ 
& $E_{\boldsymbol{R}}=63.040$ 
& $E_{\boldsymbol{R}}=0.257$ \\
\textit{Scene-02}, $\mathit{s}\in(1,5)$  
& $E_{\boldsymbol{t}}=\mathbf{0.004}$ 
& $E_{\boldsymbol{t}}=5.389$
&  $E_{\boldsymbol{t}}=0.233$\\
$N=618$, 94.82\%  
& $E_{\mathit{s}}=\mathbf{0.001}$ 
& $E_{\mathit{s}}=4.754$
&  $E_{\mathit{s}}=0.004$\\
\quad  
& $t=\mathbf{0.039}$ 
& $t=1.621$
& $t=59.095$ \\
\hline

\quad
&  $E_{\boldsymbol{R}}=\mathbf{0.030}$ 
& $E_{\boldsymbol{R}}=152.878$ 
& $E_{\boldsymbol{R}}=48.347$ \\
\textit{Scene-05}, $\mathit{s}\in(1,5)$  
& $E_{\boldsymbol{t}}=\mathbf{0.003}$ 
& $E_{\boldsymbol{t}}=5.094$
&  $E_{\boldsymbol{t}}=5.093$\\
 $N=555$, 95.14\% 
& $E_{\mathit{s}}=\mathbf{0.001}$ 
& $E_{\mathit{s}}=3.669$
&  $E_{\mathit{s}}=3.667$\\
\quad  
& $t=\mathbf{0.185}$ 
& $t=1.453$
& $t=60.001$ \\
\hline

\quad  
&  $E_{\boldsymbol{R}}=\mathbf{0.057}$ 
& $E_{\boldsymbol{R}}=123.266$ 
& $E_{\boldsymbol{R}}=122.522$ \\
\textit{Scene-07}, $\mathit{s}\in(1,5)$  
& $E_{\boldsymbol{t}}=\mathbf{0.004}$ 
& $E_{\boldsymbol{t}}=8.723$
&  $E_{\boldsymbol{t}}=8.721$\\
$N=185$, 92.43\%
& $E_{\mathit{s}}=\mathbf{0.001}$ 
& $E_{\mathit{s}}=4.475$
&  $E_{\mathit{s}}=4.475$\\
\quad  
& $t=\mathbf{0.066}$ 
& $t=0.535$
& $t=6.393$ \\
\hline

\quad  
&  $E_{\boldsymbol{R}}=\mathbf{0.067}$ 
& $E_{\boldsymbol{R}}=3.633$ 
& $E_{\boldsymbol{R}}=143.444$ \\
\textit{Scene-09}, $\mathit{s}\in(1,5)$  
& $E_{\boldsymbol{t}}=\mathbf{0.005}$ 
& $E_{\boldsymbol{t}}=0.241$
&  $E_{\boldsymbol{t}}=4.081$\\
$N=548$, 95.26\%
& $E_{\mathit{s}}=\mathbf{0.002}$ 
& $E_{\mathit{s}}=0.004$
&  $E_{\mathit{s}}=3.247$\\
\quad  
& $t=\mathbf{0.142}$ 
& $t=1.429$
& $t=60.001$ \\
\hline

\end{tabular}

\centering

\end{table}
\vspace{0mm}

\begin{table}[h]
\setlength{\tabcolsep}{1mm}
\caption{Quantitative Object Localization Results on Cap and Table}\label{quantitative2}
\centering
\vspace{-1mm}
\begin{tabular}{|c|ccc|}

\hline
Scene No.  & ICOS & RANSAC(1000) & RANSAC(1min) \\
\hline 

\textit{Scene-01}, $\mathit{s}=1$
&  $E_{\boldsymbol{R}}=\mathbf{0.151}$ 
& $E_{\boldsymbol{R}}=65.867$ 
&  $E_{\boldsymbol{R}}=0.662$\\
$N=341$, 90.91\%  
& $E_{\boldsymbol{t}}=\mathbf{0.005}$ 
& $E_{\boldsymbol{t}}=1.309$
& $E_{\boldsymbol{t}}=0.014$ \\
\quad  
& $t=\mathbf{0.060}$ 
& $t=0.903$
&  $t=6.245$\\
\hline

\textit{Scene-03}, $\mathit{s}=1$
&  $E_{\boldsymbol{R}}=\mathbf{0.337}$ 
&  $E_{\boldsymbol{R}}=128.182$
& $E_{\boldsymbol{R}}=0.924$ \\
$N=316$, 94.62\%  
& $E_{\boldsymbol{t}}=\mathbf{0.008}$ 
&  $E_{\boldsymbol{t}}={1.278}$
& $E_{\boldsymbol{t}}=0.2018$\\
\quad  
& $t=\mathbf{0.198}$ 
& $t=0.851$
&  $t=27.688$\\
\hline

\textit{Scene-09}, $\mathit{s}=1$
&  $E_{\boldsymbol{R}}=\mathbf{0.058}$ 
&  $E_{\boldsymbol{R}}=0.910$
& $E_{\boldsymbol{R}}=0.397$ \\
$N=579$, 86.18\%  
& $E_{\boldsymbol{t}}=\mathbf{0.001}$ 
&  $E_{\boldsymbol{t}}=0.022$
& $E_{\boldsymbol{t}}=0.010$\\
\quad  
& $t=\mathbf{0.043}$ 
& $t=1.514$
&  $t=3.013$\\
\hline

\textit{Scene-12}, $\mathit{s}=1$
&  $E_{\boldsymbol{R}}=\mathbf{0.068}$ 
&  $E_{\boldsymbol{R}}=165.928$
& $E_{\boldsymbol{R}}=0.191$ \\
$N=639$, 89.98\%  
& $E_{\boldsymbol{t}}=\mathbf{0.001}$ 
&  $E_{\boldsymbol{t}}=1.747$
& $E_{\boldsymbol{t}}=0.004$\\
\quad  
& $t=\mathbf{0.047}$ 
& $t=1.645$
&  $t=8.537$\\
\hline

\quad
&  $E_{\boldsymbol{R}}=\mathbf{0.037}$ 
& $E_{\boldsymbol{R}}=101.892$ 
& $E_{\boldsymbol{R}}=151.913$ \\
\textit{Scene-01}, $\mathit{s}\in(1,5)$  
& $E_{\boldsymbol{t}}=\mathbf{0.005}$ 
& $E_{\boldsymbol{t}}=5.779$
&  $E_{\boldsymbol{t}}=5.782$\\
$N=579$, 94.99\%  
& $E_{\mathit{s}}=\mathbf{0.002}$ 
& $E_{\mathit{s}}=3.428$
&  $E_{\mathit{s}}=3.429$\\
\quad  
& $t=\mathbf{0.069}$ 
& $t={1.508}$
& $t=60.001$ \\
\hline

\quad
&  $E_{\boldsymbol{R}}=\mathbf{0.139}$ 
& $E_{\boldsymbol{R}}=60.219$ 
& $E_{\boldsymbol{R}}=176.511$ \\
\textit{Scene-03}, $\mathit{s}\in(1,5)$  
& $E_{\boldsymbol{t}}=\mathbf{0.010}$ 
& $E_{\boldsymbol{t}}=4.718$
&  $E_{\boldsymbol{t}}=4.720$\\
 $N=345$, 91.88\% 
& $E_{\mathit{s}}=\mathbf{0.009}$ 
& $E_{\mathit{s}}=3.552$
&  $E_{\mathit{s}}=3.553$\\
\quad  
& $t=\mathbf{0.060}$ 
& $t=0.936$
& $t=9.052$ \\
\hline

\quad  
&  $E_{\boldsymbol{R}}=\mathbf{0.045}$ 
& $E_{\boldsymbol{R}}=0.065$ 
& $E_{\boldsymbol{R}}=0.096$ \\
\textit{Scene-09}, $\mathit{s}\in(1,5)$  
& $E_{\boldsymbol{t}}=\mathbf{0.001}$ 
& $E_{\boldsymbol{t}}=0.004$
&  $E_{\boldsymbol{t}}=0.007$\\
$N=619$, 80.94\%
& $E_{\mathit{s}}=\mathbf{0.001}$ 
& $E_{\mathit{s}}=\mathbf{0.001}$
&  $E_{\mathit{s}}=0.002$\\
\quad  
& $t=\mathbf{0.041}$
& $t=1.264$
& $t=1.237$ \\
\hline

\quad  
&  $E_{\boldsymbol{R}}=\mathbf{0.008}$ 
& $E_{\boldsymbol{R}}=128.166$ 
& $E_{\boldsymbol{R}}=0.079$ \\
\textit{Scene-12}, $\mathit{s}\in(1,5)$  
& $E_{\boldsymbol{t}}=\mathbf{0.001}$ 
& $E_{\boldsymbol{t}}=7.151$
&  $E_{\boldsymbol{t}}=0.005$\\
$N=631$, 89.54\%
& $E_{\mathit{s}}=\mathbf{0.001}$ 
& $E_{\mathit{s}}=3.417$
&  $E_{\mathit{s}}=\mathbf{0.001}$\\
\quad  
& $t=\mathbf{0.052}$ 
& $t=1.613$
& $t=7.560$ \\
\hline

\end{tabular}

\centering

\end{table}

According to the results, ICOS is capable of estimating the object poses robustly and accurately even with more than 95\% outliers among the 619 correspondences (e.g. \textit{Scene-09}, $\mathit{s}\in(1,5)$); however, the two RANSAC solvers may fail in nearly a half of the cases. Hence, in addition to its superior performance on point cloud registration problems as shown in standard benchmarking, ICOS also demonstrates strong practicality for tackling real-world problems.

\section{Conclusion}

In this paper, we present a novel, fast and highly robust paradigm, ICOS, applicable to the rotation search and both the known-scale and unknown-scale point cloud registration problems with correspondences. We define special compatible structures for the two problems above to generate invariants with mathematical constraints. Then, we are able to render a sequence of well-designed invariant-constrained random sampling and inlier seeking frameworks for these problems, which can eliminate outliers by means of fast Boolean conditions and search for qualified $a$-COS as well as their corresponding $b$-COS as the inliers, giving birth to our solver ICOS. Generally, ICOS can automatically stop sampling, and eventually return nearly 100\% of the inliers out of the putative correspondences.

We benchmark ICOS against other state-of-the-art rotation search and point cloud registration solvers in multiple experiments. We show that: (i) ICOS can be the most state-of-the-art, or at least one of the most stat-of-the-art, solver for the two problems, since it is robust against over 95\% outliers (at most up to 99\%), has the best estimation accuracy and runs fast in most cases, and (ii) ICOS demonstrates the similar robustness (robust against 99\% outliers) and accuracy regardless of the scale situation, suitable for generalized point cloud registration, and (iii) ICOS has proved to be practical and efficient for real-world applications problems including image stitching and 3D object localization.

Future works may include extending the framework of ICOS or its inspirations to other geometric vision or robotics problems, as a general-purpose solver.

\ifCLASSOPTIONcaptionsoff
  \newpage
\fi

{\small
\bibliographystyle{ieee_fullname}
\bibliography{egbib}
}





\end{document}